\documentclass{article}




    \usepackage[final]{neurips_2024}


\usepackage[utf8]{inputenc} 
\usepackage[T1]{fontenc}    
\usepackage{url}            
\usepackage{booktabs}       
\usepackage{amsfonts}       
\usepackage{nicefrac}       
\usepackage{microtype}      

\usepackage{amsmath}
\allowdisplaybreaks
\setcitestyle{authoryear,round,citesep={,},aysep={},yysep={;}}

\usepackage[dvipsnames]{xcolor}         
\usepackage{algorithm}
\usepackage{algorithmic}
\usepackage{amsmath}
\usepackage[colorinlistoftodos]{todonotes}
\usepackage[hidelinks]{hyperref}  
\usepackage{subcaption}
\usepackage{setspace}
\usepackage{amsthm}
\newtheorem{theorem}{Theorem}
\usepackage{wrapfig}
\usepackage{adjustbox}
\usepackage{bm} 

\newcommand\algorithmicprocedure{\textbf{procedure}}
\newcommand{\algorithmicendprocedure}{\algorithmicend\ \algorithmicprocedure}
\makeatletter
\newcommand\PROCEDURE[3][default]{%
  \ALC@it
  \algorithmicprocedure\ \textsc{#2}(#3)%
  \ALC@com{#1}%
  \begin{ALC@prc}%
}
\newcommand\ENDPROCEDURE{%
  \end{ALC@prc}%
  \ifthenelse{\boolean{ALC@noend}}{}{%
    \ALC@it\algorithmicendprocedure
  }%
}
\newenvironment{ALC@prc}{\begin{ALC@g}}{\end{ALC@g}}
\makeatother

\usepackage{notations_for_theory}

\usepackage{xr}
\externaldocument[A-]{appendix}



\title{Deep Policy Gradient Methods Without Batch Updates,\\Target Networks, or Replay Buffers}

%

\author{%
  Gautham Vasan$^{12}$
  \vspace{-0.75cm}
  \And
  Mohamed Elsayed$^{12}$
  \vspace{-0.75cm}
  \And
  Alireza Azimi$^{*12}$
  \vspace{-0.75cm}
  \And
  Jiamin He$^{*12}$
  \vspace{-0.75cm}
  \And
  Fahim Shariar$^{12}$
  \And
  Colin Bellinger$^{3}$
  \And
  Martha White$^{124}$
  \And
  A. Rupam Mahmood$^{124}$
}

\begin{document}

\maketitle

\vspace{-0.9cm}
\begin{center}
    \fontsize{9}{8}\selectfont
    \hspace{0.75em} $^1$University of Alberta \hspace{0.75em}  $^2$Amii\hspace{0.75em}
    $^3$ National Research Council of Canada \hspace{0.75em} $^4$ CIFAR Canada AI Chair \\
    \texttt{\{vasan, mohamedelsayed, sazimi, jiamin12, fshahri1\}@ualberta.ca} \\
    \texttt{colin.bellinger@nrc-cnrc.gc.ca} \quad
    \texttt{\{whitem, armahmood\}@ualberta.ca} \\
\end{center}

\begin{abstract}
Modern deep policy gradient methods achieve effective performance on simulated robotic tasks, but they all require large replay buffers or expensive batch updates, or both, making them incompatible for real systems with resource-limited computers.
We show that these methods fail catastrophically when limited to small replay buffers or during \emph{incremental learning}, where updates only use the most recent sample without batch updates or a replay buffer.
We propose a novel incremental deep policy gradient method --- \emph{Action Value Gradient (AVG)} and a set of normalization and scaling techniques to address the challenges of instability in incremental learning.
On robotic simulation benchmarks, we show that AVG is the only incremental method that learns effectively, often achieving final performance comparable to batch policy gradient methods.
This advancement enabled us to show for the first time effective deep reinforcement learning with real robots using only incremental updates, employing a robotic manipulator and a mobile robot.\footnote{Code: \url{https://github.com/gauthamvasan/avg} \quad \quad $^*$Equal Contributions. \\ \text{\quad \;} Video: \url{https://youtu.be/cwwuN6Hyew0} }
\end{abstract}

\section{Introduction}
Real-time or online learning is essential for intelligent agents to adapt to unforeseen changes in dynamic environments.
However, real-time learning faces substantial challenges in many real-world systems, such as robots, due to limited onboard computational resources and storage capacity \citep{hayes2022online, wang2023real, michieli2023online}.
The system must process observations, compute and execute actions, and learn from experience, all while adhering to strict computational and time constraints \citep{yuan2022asynchronous}.
For example, the Mars rover faces stringent limitations on its computational capabilities and storage capacity \citep{verma2023autonomous}, constraining the system's ability to run computationally intensive algorithms onboard. 

Deep policy gradient methods have risen to prominence for their effectiveness in real-world control tasks, such as dexterous manipulation of a Rubik's cube \citep{akkaya2019solving}, quadruped dribbling of a soccer ball \citep{ji2023dribblebot}, and magnetic control of tokamak plasmas \citep{degrave2022magnetic}. 
These methods are typically used offline, such as in simulations, as they have steep resource requirements due to their use of large storage of past experience in a replay buffer, target networks and computationally intensive batch updates for learning.
As a result, these methods are ill-suited for on-device learning and generally challenging to use for real-time learning.
To make these methods applicable to resource-limited computers such as edge devices, a natural approach is to reduce the replay buffer size, eliminate target networks, and use smaller batch updates that meet the resource constraints.

In Figure \ref{fig:1}, we demonstrate using four MuJoCo tasks \citep{todorov2012mujoco} that the learning performance of batch policy gradient methods degrades substantially when the replay buffer size is reduced from their large default values. 
Specifically, Proximal Policy Optimization (PPO, \citeauthor{schulman2017proximal}, \citeyear{schulman2017proximal}), Soft Actor-Critic (SAC, \citeauthor{haarnoja2018soft}, \citeyear{haarnoja2018soft}), and Twin Delayed Deep Deterministic Policy Gradient (TD3, \citeauthor{fujimoto2018addressing}, \citeyear{fujimoto2018addressing}) fail catastrophically when their buffer size is reduced to $1$. This case corresponds to \textit{incremental learning}, also known as \emph{streaming learning}, where learning relies solely on the most recent sample, thus precluding the use of a replay buffer or batch updates.

Incremental learning methods \citep{vijayakumar2005incremental, mahmood2017incremental} are computationally cheap and commonly used for real-time learning with linear function approximation \citep{degris2012model, modayil2014multi, vasan2017learning}.
However, incremental policy gradient methods, such as the incremental one-step actor-critic (IAC, \citeauthor{sutton2018reinforcement}  \citeyear{sutton2018reinforcement}), are rarely used in applications of deep reinforcement learning (RL), except for a few works (e.g., \citeauthor{young2019minatar} \citeyear{young2019minatar}) that work in limited settings.
The results in Fig.\ \ref{fig:1} indicate that their absence is due to their difficulty in learning effectively when used with deep neural networks.
A robust incremental method that can leverage deep neural networks for learning in real-time remains an important open challenge.

\begin{figure}[t]
    \centering
    \begin{subfigure}{.24\textwidth}
        \includegraphics[height=2.75cm,width=\columnwidth]{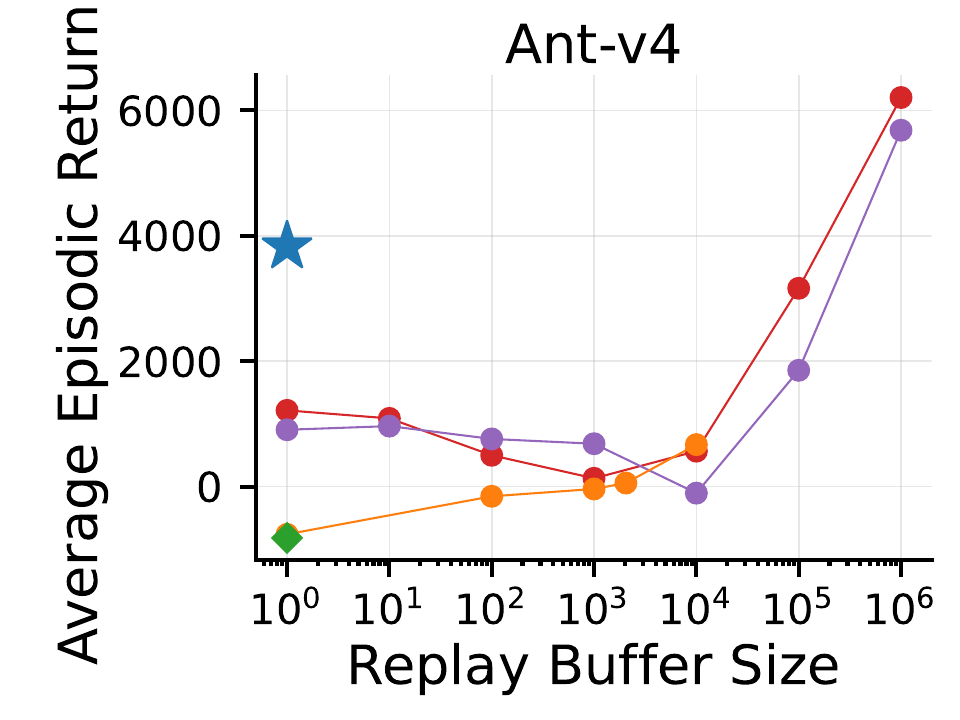}
    \end{subfigure}
    \begin{subfigure}{.24\textwidth}
        \includegraphics[height=2.75cm,width=\columnwidth]{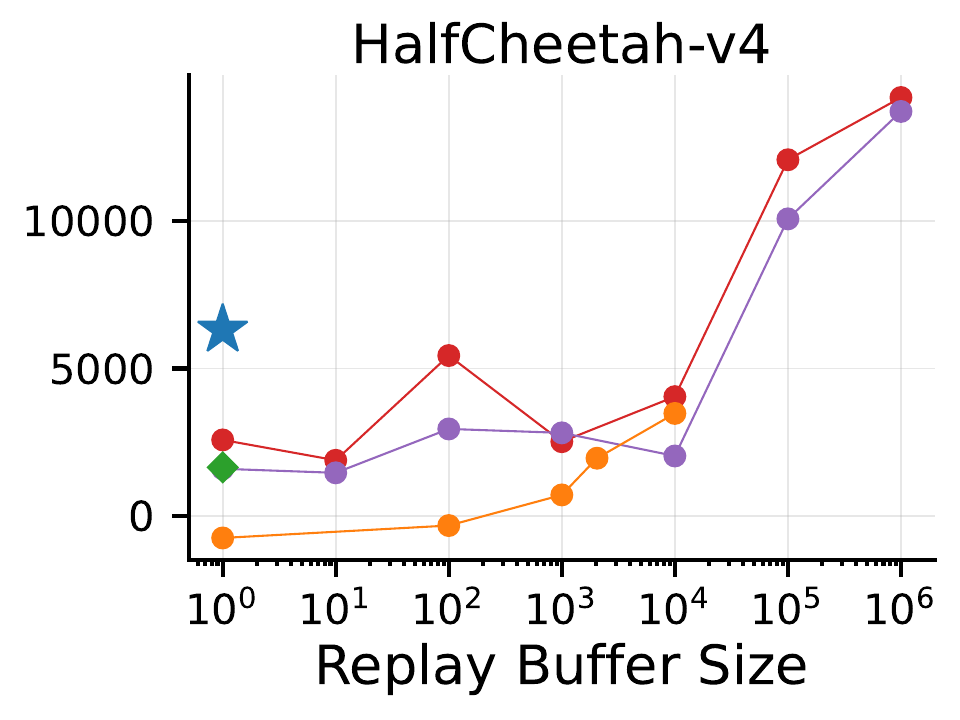}
    \end{subfigure}
    \begin{subfigure}{.24\textwidth}
        \includegraphics[height=2.75cm,width=\columnwidth]{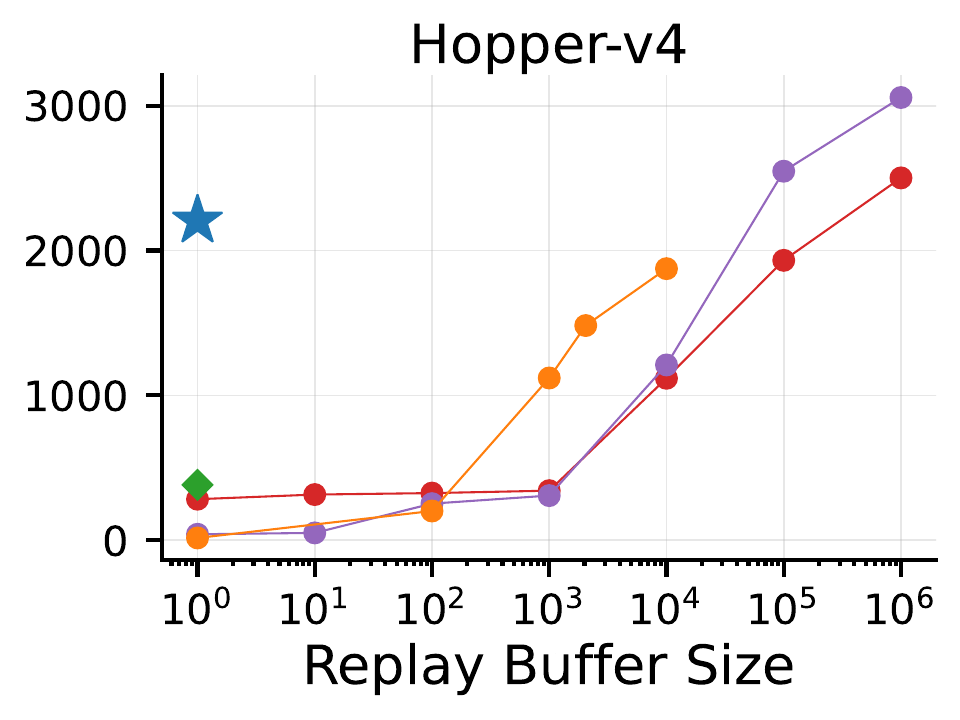}
    \end{subfigure}
    \begin{subfigure}{.24\textwidth}
        \includegraphics[height=2.75cm,width=\columnwidth]{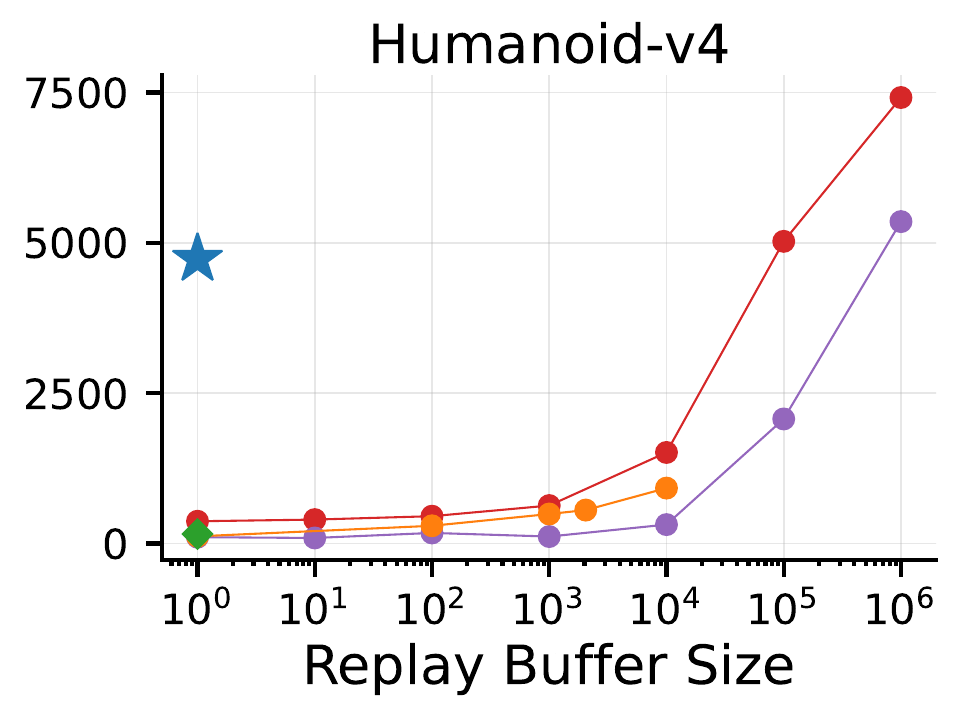}
    \end{subfigure}
    \newline
    \begin{subfigure}{0.5\textwidth}
        \includegraphics[width=\columnwidth]{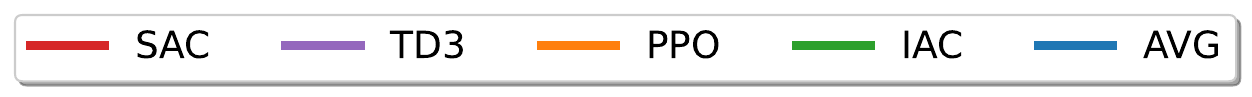}
    \end{subfigure}
    \caption{Impact of reducing replay buffer size on SAC, PPO, and TD3: Decreasing the replay buffer size adversely affects learning. In contrast, AVG succeeds despite learning without a replay buffer, as shown by a ``buffer size'' of 1 in the plots. Each data point represents the mean episodic return over the final 100K steps, averaged across 30 runs. All methods were trained for 10M timesteps.}
    \label{fig:1}
    \vspace{-15pt}
\end{figure}

Incremental policy gradient methods, such as IAC, employ the likelihood ratio gradient (LG) estimator to estimate the gradient.
An alternative approach to estimating the gradient, the reparameterization gradient (RG) estimator or the pathwise gradient estimator, has been observed to demonstrate lower variance in practice and can effectively handle continuous state and action spaces \citep{greensmith2004variance, fan2015fast, lan2022model}.
RG estimators have recently gained interest in RL due to their use in deep policy gradient methods such as TD3 and SAC.
However, we currently lack incremental policy gradient methods that use the RG estimator.

We present a novel incremental algorithm, called \emph{Action Value Gradient (AVG)}, which leverages deep neural networks and utilizes the RG estimator.
While batch updates, replay buffers, and target networks are required to stabilize deep RL \citep{doro2022sample, schwarzer2023bigger}, AVG instead incorporates normalization and scaling techniques to learn stably in the incremental setting (see Sec.\ \ref{sec:avg}). 
In Sec.\ \ref{sec:avg_comparison}, we demonstrate that AVG achieves strong results across a wide range of benchmarks, being the only incremental algorithm to avoid catastrophic failure and learn effectively.
In Sec.\ \ref{sec:norm_study}, we highlight the key challenges of incremental learning stemming from the large and noisy gradients inherent to the process.
Through an ablation study, we discuss how normalization and scaling techniques help mitigate these issues for AVG and how they may salvage the performance of other methods, including IAC and an incremental variant of SAC.
We also show that target networks hinder the learning performance of AVG in the incremental setting, with only aggressive updates of the target network towards the critic achieving results comparable to AVG, while their removal reduces memory demands and simplifies our algorithm.
Finally, we apply AVG to real-time robot learning tasks, showcasing the first successful demonstration of an incremental deep RL method on real robots. 

\section{Background}
We consider the reinforcement learning setting where an agent-environment interaction is modeled as a continuous state and action space Markov Decision Process (MDP) \citep{sutton2018reinforcement}.
The state, action, and reward at timestep $t \in (0, 1, 2, \dots )$ is denoted by $S_t \in \mathcal{S}$, $A_t \in \mathcal{A}$ and $R_{t+1} \in \mathbb{R}$ respectively. We focus on the episodic setting where the goal of the agent is to maximize the discounted return $G_t = \sum_{k=0}^{T-t-1} \gamma^k R_{t+k+1} $, where $\gamma \in [0, 1]$ is a discount factor and $T$ is the episode horizon.
The agent selects an action $A_t$ according to a policy $\pi(\cdot|S_t)$ where $\pi(a|s)$ gives the probability of sampling an action $a$ in state $s$. 
Value functions are defined to be expected total discounted rewards from timestep $t$: $v_\pi(s) = \mathbb{E}_{\pi} \left[\sum_{k=0}^{T-t-1} \gamma^k R_{t+k+1} | S_t = s \right]$ and $q_\pi(s, a) = \mathbb{E}_{\pi} \left[\sum_{k=0}^{T-t-1} \gamma^k R_{t+k+1} | S_t = s, A_t = a \right]$.
Our goal is to find the weights $\theta$ of a parameterized policy $\pi_\theta$ such that it maximizes the expected return starting from initial states: $J(\theta) \doteq \mathbb{E}_{S \sim d_0} [ v_{\pi_\theta} (S)].$

Parameterized policies are typically learned based on the gradients of $J(\theta)$. 
Since the true gradients $\nabla_\theta J(\theta)$ are typically not available, sample-based methods are commonly used for gradient estimation \citep{greensmith2004variance}.
Two existing theorems, known as \emph{policy gradient theorem} and \emph{reparameterization gradient theorem} provide ways of computing unbiased estimates of the gradient based on likelihood gradient (LG) estimators and reparameterization gradient (RG) estimators, respectively.

LG estimators use the log-derivative technique to provide an unbiased gradient estimate \citep{glynn1990likelihood, williams1991function}: 
$\nabla_\theta \mathbb{E}_{p_\theta}[\phi(X)] =  \mathbb{E}_{X \sim {p_\theta}} [\phi(X) \nabla_\theta \log p_\theta (X)],$
where $p_\theta(x)$ is the probability density of $x$ with parameters $\theta$, and $\phi(x)$ is a scalar-valued function. 
In the context of the policy gradient theorem \citep{sutton1999policy}, the LG estimator is utilized to adjust the parameters $\theta$ of a policy $\pi$, in expectation, in the direction of the gradient of the expected return:
$\nabla_\theta J(\theta) \propto \mathbb{E}_{S \sim d_{\pi, \gamma}, A \sim \pi_\theta}[\nabla_\theta \log{\pi_\theta(A|S)} q_{\pi_\theta}(S,A)]$\vspace{2pt}, where $d_{\pi, \gamma}$ is the discounted stationary state distribution \citep{tosatto2022temporal, che2023correcting}. 
Many algorithms, including incremental ones like one-step actor-critic (IAC) and batch methods like A2C \citep{mnih2016asynchronous}, ACER \citep{wang2017sample} and PPO, are based on the policy gradient theorem and use the LG estimator.

RG estimators, also known as pathwise gradient estimators \citep{greensmith2004variance, parmas2021unified}, leverage the knowledge of the underlying density $p_\theta(x)$ by introducing a simpler, equivalent sampling procedure: $X \sim p_\theta(\cdot) = f_\theta (\xi), \xi \sim g(\cdot) $, where $\xi$ is sampled from a base distribution $g(\xi)$ independent of $\theta$, and $f_\theta$ is a function that maps $\xi$ to $X$.
RG estimation can be written as
$\nabla_\theta \mathbb{E}_{p_\theta}[\phi(X)] = \mathbb{E}_{\xi \sim g}[\nabla_\theta \phi(f_\theta(\xi))]$\vspace{2pt}.
RG estimators form the foundation of several batch RL algorithms, including Reward Policy Gradient (\citeauthor{lan2022model} \citeyear{lan2022model}), SAC and TD3.
\cite{lan2022model} showed how RG estimation can be used to provide an alternative approach to unbiased estimation of the policy gradient through the reparametrization gradient theorem: 
$\nabla_\theta J(\theta) \propto \mathbb{E}_{S \sim d_{\pi, \gamma}, A \sim \pi_\theta} \left[ \nabla_\theta f_\theta(\xi; S) \vert_{\xi=h_\theta (A;S)} \nabla_A q_{\pi_\theta} (S, A) \right]$\vspace{2pt}, where $h$ is a inverse function of $f$. 

Deep reinforcement learning (RL) methods that use LG or RG estimators can often converge prematurely to sub-optimal policies \citep{mnih2016asynchronous, garg2022alternate} or settle on a single output choice when multiple options could maximize the expected return \citep{williams1991function}.
This issue can be mitigated through \emph{entropy regularization}, which promotes exploration and smoothens the optimization landscape under certain scenarios \citep{ahmed2019understanding}.
This is accomplished by augmenting the reward function with an entropy term (i.e., $\mathbb{E}[-\log {p_\theta(X)}]$), encouraging the policy to maintain randomness in action selection.
In this approach, the value functions are redefined as follows \citep{ziebart2010modeling}:
$ v_\pi^{\text{Ent}}(s) = \mathbb{E}_\pi \left[ \sum_{k=0}^{T-t-1} \gamma^k \left ( R_{t+k+1} + \eta \mathcal{H}(\pi(\cdot|S_{t+k})) \right) \vert S_t = s \right ], $ and
$ q_\pi^{\text{Ent}}(s, a) = \mathbb{E}_\pi \left[R_{t+1} + \gamma v_\pi^{\text{Ent}}(S_{t+1}) \vert S_t = s, A_t = a \right ],$\vspace{2pt} 
where $\eta$ is the entropy coefficient and entropy $\mathcal{H}(\pi(\cdot|s)) = - \int_{\mathcal{A}} \pi(a|s) \log{\pi(a|s)} da$.\vspace{2pt}

\section{The Action Value Gradient Method}
\label{sec:avg}
In this section, we introduce a novel algorithm called Action Value Gradient (AVG, see Alg. \ref{algo:avg})\footnote{We also share a quick and easy-to-use implementation in the form of a python notebook on \textcolor{blue}{\href{https://colab.research.google.com/drive/1j4ONR062WQ_Fqs0yt07uBdm9zz7HGbte?usp=sharing}{Google Colab}}}, outlining its key components and functionality and briefly discussing its theoretical foundations.
We also discuss additional design choices that are crucial for robust and effective policy learning.
AVG uses RG estimation, extended to incorporate entropy-augmented value functions:
\begin{align}
    \nabla_\theta J(\theta) &\propto \mathbb{E}_{S \sim d_{\pi,\gamma}, A \sim \pi_\theta} \left[\nabla_\theta f_\theta(\xi; S)\vert_{\xi=h_\theta(A; S)} \nabla_{A} \left(q_{\pi_\theta} (S, A) - \eta \log{(\pi_\theta (A|S)} \right) \right].
    \label{eq:sac}
\end{align}
A brief derivation of this statement is provided in  Appendix \ref{app:avg_theory}.

The AVG algorithm maintains a parameterized policy or actor $\pi_\theta (A|S)$ to sample actions from a continuous distribution and critic $Q_\phi(S, A)$ that estimates the entropy-augmented action-value function.
Both networks are parameterized using deep neural networks.
AVG samples actions using the reparameterization technique \citep{kingma2013auto},
which allows the gradient to flow
\begin{wrapfigure}{r}{0.54\textwidth}
\vspace{-0.4cm}
\begin{minipage}{0.54\textwidth}
\begin{algorithm}[H]
    \caption{Action Value Gradient (AVG)}
    \label{algo:avg}
    \begin{spacing}{1.2}
    \begin{algorithmic}
        \STATE {\bfseries Initialize} $\gamma$, $\eta$, $\alpha_{\pi}$, $\alpha_Q$ \\
        $\theta$, $\phi$ \textcolor{Bittersweet}{with penultimate normalization} \\
        \textcolor{MidnightBlue}{$n \gets 0, \mu \gets 0, \overline{\mu} \gets 0$} \\
        \textcolor{PineGreen}{$\bm{n}_\delta \gets [0, 0, 0], \bm{\mu}_\delta \gets [0, 0, 0], \bm{\overline{\mu}}_\delta \gets [0, 0, 0]$}
        \FOR{however many episodes}
            \STATE  Initialize S (first state of the episode)
            \STATE \textcolor{MidnightBlue}{$S, n, \mu, \overline{\mu}, \_ \gets$ \texttt{Normalize}($S, n, \mu, \overline{\mu}$)} \\
            \STATE \textcolor{PineGreen}{ $G \gets 0$}
            \WHILE{S is not terminal}
                \STATE $A_{\theta} = f_\theta (\epsilon; S) \text{ where } \epsilon \sim \mathcal{N}(0,1)$
                \STATE Take action $A_\theta$, observe $S', R$
                \STATE \textcolor{MidnightBlue}{$S', n, \mu, \overline{\mu}, \_ \gets$ \texttt{Normalize}($S', n, \mu, \overline{\mu}$)}
                \STATE \textcolor{PineGreen}{$\sigma_\delta, \bm{n}_\delta, \bm{\mu}_\delta, \bm{\overline{\mu}}_\delta\gets $} \\
                \text{\hspace{1.5em}} \textcolor{PineGreen}{$\texttt{ScaleTDError}(R, \gamma, \emptyset, \bm{n}_\delta, \bm{\mu}_\delta, \bm{\overline{\mu}}_\delta)$}
                \STATE \textcolor{PineGreen}{ $G \gets G + R$}
                \STATE $A' \sim \pi_\theta (\cdot \vert S')$
                \STATE $\delta \leftarrow  R + \gamma (Q_{\phi}(S', A') - \eta \log \pi_\theta (A' | S'))$ \newline \text{\hspace{1.5em}} $- Q_{\phi}(S, A_{\theta})$
                \STATE \textcolor{PineGreen}{$\delta \gets \delta / \sigma_\delta$}
                \STATE $\phi  \leftarrow \phi + \alpha_Q \delta \; \nabla_{\phi} \; Q_{\phi}(S, a) \vert_{a=A_{\theta}}$
                \STATE $\theta \leftarrow \theta + \alpha_\pi \nabla_\theta (Q_{\phi}(S, A_{\theta}) - \eta\log{\pi_{\theta}(A_{\theta}| S)})$\\
                \STATE $S\leftarrow S'$
        \ENDWHILE
        \STATE \textcolor{PineGreen}{$\sigma_\delta, \bm{n}_\delta, \bm{\mu}_\delta, \bm{\overline{\mu}}_\delta  \gets $ \\
        \text{\hspace{1.5em}} $\texttt{ScaleTDError}(R, 0, G, \bm{n}_\delta, \bm{\mu}_\delta, \bm{\overline{\mu}}_\delta)$}
        \ENDFOR
    \end{algorithmic}
    \end{spacing}
\end{algorithm}
\vspace{-1cm}
\end{minipage}
\end{wrapfigure}
through the sampled action $A_\theta$ to the critic $Q_\phi(S, A_\theta)$, enabling the policy parameters $\theta$ to be updated smoothly based on the critic.

We use the same action $A_\theta$ to update both the actor and critic networks.
First, the critic weights $\phi$ are updated using the temporal difference error; $\alpha_Q > 0$ is its step size.
This step also involves sampling another action $A'$ that is used to estimate the bootstrap target.
Then, the actor updates its weights $\theta$ based on  $Q_\phi(S,A_\theta)$ and the sample entropy $-\log{(\pi_\theta(A_\theta|S))}$; $\alpha_\pi > 0$ is the step size of the actor, and $\eta \geq 0$ is used to weight the sample entropy term.

A careful reader may notice the similarity between the learning updates of SAC and AVG.
However, SAC is an off-policy batch method, while AVG is an incremental on-policy method.
SAC samples actions and stores them in a replay buffer.
Unlike AVG, SAC does not reuse the same action to backpropagate gradients for the actor.
Additionally, AVG is simpler than SAC, as it avoids the use of double Q-learning or target Q-networks \citep{van2016deep} for stability.
For comparison, we provide the pseudocode of an incremental variant of SAC, termed \emph{SAC-1} (Alg. \ref{algo:sac}).

\begin{wrapfigure}{r}{0.29\textwidth}
    \centering
    \vspace{-0.5cm}
    \begin{subfigure}{.29\textwidth}
        \includegraphics[width=\columnwidth]{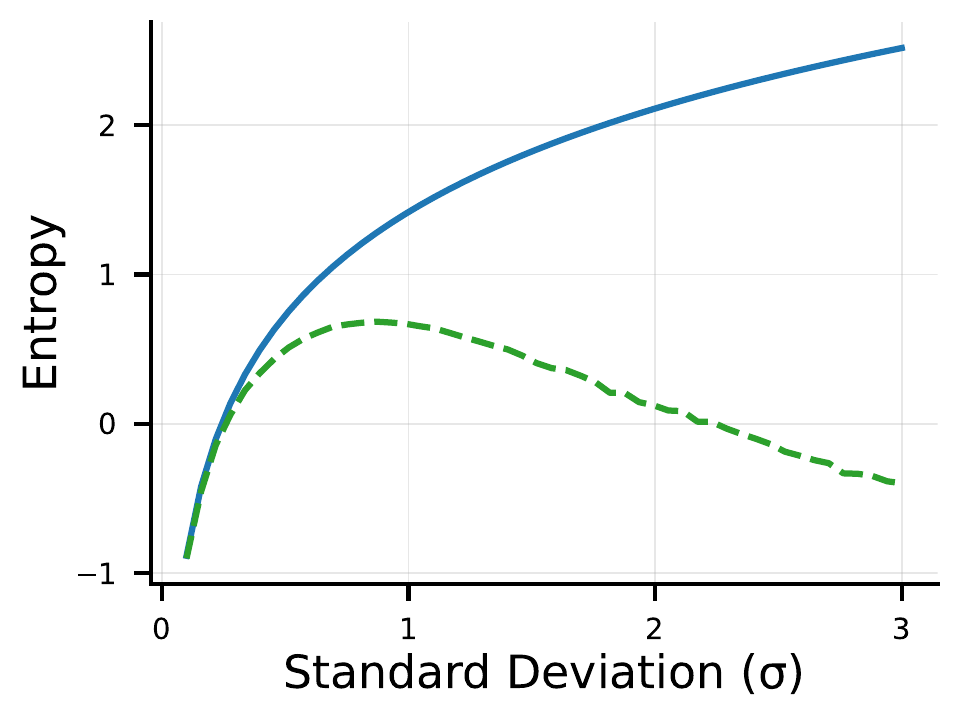}
    \end{subfigure}
    \begin{subfigure}{.29\textwidth}
        \includegraphics[width=\columnwidth]{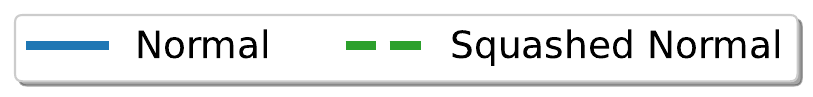}
    \end{subfigure}
    \caption{Effect of $\sigma$ on entropy of normal and squashed normal distribution}
    \label{fig:entropy}
    \vspace{-0.5cm}
\end{wrapfigure}
We also use \emph{orthogonal initialization} \citep{saxe2013exact}, \emph{entropy regularization}, a \emph{squashed normal policy}, as is standard in off-policy actor-critic methods like DDPG, TD3, and SAC. 
To enforce action bounds, a squashed normal policy passes the sampled action  from a normal distribution through the \texttt{tanh} function to obtain actions in the range $[-1, 1]$:
$A_{\theta} = f_\theta (\xi; S) = \texttt{tanh} (\mu_\theta(S) + \xi\sigma_\theta(S) )$ where $\xi \sim \mathcal{N}(0,1)$.
This parameterization is particularly useful for entropy-regularized RL objectives. 
In an unbounded normal policy, the standard deviation $\sigma$ has a monotonic relationship with entropy, 
such that maximizing the entropy often drives $\sigma$ to large values, approximating a uniform random policy. 
Conversely, for a squashed univariate normal distribution, entropy increases with $\sigma$ only up to a certain threshold, beyond which it begins to decrease (see Fig. \ref{fig:entropy}).

Incremental methods can be particularly prone to issues stemming from large and noisy gradients. 
While off-policy batch methods such as SAC and TD3 benefit from many compute-intensive gradient updates, which effectively smooth out noisy gradients, incremental methods require alternative strategies to manage large gradient updates.
Hence, we focus on additional incremental normalization and scaling methods that help stabilize the learning process.
These techniques can be seamlessly incorporated into our algorithm with minimal computational overhead.
Sec. \ref{sec:norm_study} provides an in-depth discussion that motivates and comprehensively analyzes the impact of the normalization and scaling techniques used in our proposed algorithm.

Stable learning in AVG is achieved by normalizing inputs and hidden unit activations, as well as scaling the temporal difference error.
Below, we outline three normalization and scaling techniques used in AVG (more details in Sec. \ref{sec:norm_study}).

\textbf{Observation normalization} We normalize the observation, which is a commonly used technique in on-policy RL algorithms such as PPO to attain good learning performance. 
We use an online
\begin{wrapfigure}{r}{0.4\textwidth}
\vspace{-0.35cm}
\begin{minipage}{0.4\textwidth}
\begin{algorithm}[H]
\caption{\textcolor{MidnightBlue}{Normalize} \citep{welford1962note}}
\begin{spacing}{1.15}       
\begin{algorithmic}
    \REQUIRE Input $X, n, \mu, \overline{\mu}$
    \STATE $n \gets n + 1$
    \STATE $\delta \gets X - \mu$
    \STATE $\mu \gets \mu + \delta / n$
    \STATE $\delta_2 \gets X - \mu$
    \STATE $\overline{\mu} \gets \overline{\mu}  + \delta \cdot \delta_2$
    \STATE $\sigma \gets \overline{\mu} / n$
    \STATE $X_{norm} \gets \delta_2 / \sigma$
    \RETURN $X_{norm}, n, \mu, \overline{\mu}, \sigma $
\end{algorithmic}
\label{alg:normalize}
\end{spacing}
\end{algorithm}
\end{minipage}
\vspace{-0.4cm}
\end{wrapfigure}
algorithm to estimate the sample mean and variance (\citeauthor{welford1962note} \citeyear{welford1962note}, See Alg. \ \ref{alg:normalize}).
Sample running mean and variance are effective for stationary and transient distributions, enabling continuous updates that adapt to time-varying characteristics efficiently.
In contrast, weighted means emphasize recent observations, making them ideal when recent data points hold greater importance.
We use the sample running mean since standard continuous control benchmarks exhibit transient distributions for policies.

\textbf{Penultimate Normalization} \cite{bjorck2022high} suggest normalizing features ($\psi_\theta (S)$) of the penultimate layer of a neural network.
These features are normalized into a unit vector $\hat{\psi}_\theta (S) = \psi_\theta (S) / \| \psi_\theta (S) \|_2$, with gradients computed through the feature normalization. Unlike layer normalization \citep{ba2016layer}, no mean subtraction is performed.

\begin{wrapfigure}{r}{0.55\textwidth}
\vspace{-0.8cm}
\begin{minipage}{0.55\textwidth}
\begin{algorithm}[H]
\caption{\textcolor{PineGreen}{ScaleTDError}}
\begin{spacing}{1.2}       
\begin{algorithmic}
    \REQUIRE Input $R, \gamma, G, \bm{n}_\delta, \bm{\mu}_\delta, \bm{\overline{\mu}}_\delta$
    \STATE $n_R, n_\gamma, n_{G} \gets \bm{n}_\delta$; \quad $\mu_R, \mu_\gamma, \mu_G  \gets \bm{\mu}_\delta$ \\
    $\overline{\mu}_{R}, \overline{\mu}_{\gamma}, \overline{\mu}_{G} \gets \bm{\overline{\mu}}_\delta$   \hfill \textcolor{Gray}{$\triangleright$ $\mu_G$: Sample mean of $G^2$}
    \STATE \textcolor{MidnightBlue}{$\_, n_R, \mu_R, \overline{\mu}_{R}, \sigma_R \gets$ \texttt{Normalize}($R, n_R, \mu_R, \overline{\mu}_{R}$)}
    \STATE \textcolor{MidnightBlue}{$\_, n_\gamma, \mu_\gamma, \overline{\mu}_{\gamma}, \sigma_\gamma \gets$ \texttt{Normalize}($\gamma, n_\gamma, \mu_\gamma, \overline{\mu}_{\gamma}$)}
    \IF{G is not $\emptyset$}
        \STATE \textcolor{MidnightBlue}{$\_, n_G, \mu_G, \overline{\mu}_{G}, \_ \gets$ \texttt{Normalize}($G^2, n_G, \mu_G, \overline{\mu}_{G}$)}
    \ENDIF
    \IF{$n_G > 1$}
        \STATE $\sigma_{\delta} \gets \sqrt{\sigma^2_R + \mu_{G}\sigma^2_\gamma} $
    \ELSE
        \STATE $\sigma_{\delta} \gets 1$
    \ENDIF
    \STATE $\bm{n}_\delta \gets [n_R, n_\gamma, n_{G}]$; \quad $\bm{\mu}_\delta \gets [\mu_R, \mu_\gamma, \mu_G]$ \\
    $\bm{\overline{\mu}}_\delta \gets [\overline{\mu}_{R}, \overline{\mu}_{\gamma}, \overline{\mu}_{G}]$
    \RETURN $\sigma_{\delta}, \bm{n}_\delta, \bm{\mu}_\delta, \bm{\overline{\mu}}_\delta$
\end{algorithmic}
\label{alg:scaleTDerror}
\end{spacing}
\end{algorithm}
\end{minipage}
\vspace{-0.6cm}
\end{wrapfigure}
\textbf{Scaling Temporal Difference Errors} \cite{schaul2021return} proposed replacing raw temporal difference (TD) errors $\delta_t$ by a scaled version: $\bar{\delta}_t := \delta_t / \sigma_\delta$ where $\sigma_\delta^2 := \mathbb{V}[R] + \mathbb{V}[\gamma] \mathbb{E}[G^2]$.
This technique can handle varying episodic return scales across domains, tasks, and stages of learning.
It is also algorithm-agnostic and does not require access to the internal states of an agent.
In batch RL methods with a replay buffer, $\sigma_\delta$ can be computed offline by aggregating the discounted return from each state across stored episodes. However, in the incremental setting, where past data cannot be reused, this approach is infeasible. Consequently, we only use the cumulative return starting from the episode's initial state (See Alg.\ \ref{alg:scaleTDerror}).
We also use sample mean and variance of $R, \gamma$ and $G^2$ to calculate $\sigma_\delta$. 

\textbf{On the Theory of AVG} In Appendix \ref{app:convergence_proof}, we provide a convergence analysis for the reparameterization gradient estimator, which the AVG estimator \eqref{eq:sac} builds upon. The analysis fixes errors in the convergence result for deterministic policies from \citet{xiong2022deterministic} and extends it to the general case of reparameterized policies. To the best of our knowledge, this is the first convergence result for model-free methods that use the reparameterization gradient estimator. Furthermore, a detailed discussion of related theoretical results is also included in Appendix \ref{app:avg_theory}.

\vspace{-0.05cm}
\section{AVG on Simulated Benchmark Tasks}
\label{sec:avg_comparison}
\vspace{-0.05cm}

In this section, we demonstrate the superior performance of AVG compared to existing incremental learning methods.
Specifically, we compare AVG against an existing incremental method --- IAC, which has demonstrated strong performance with linear function approximation in real-time learning across both simulated and real-world robot tasks \citep{degris2012model, vasan2017teaching, vasan2018context}.
The implementation details can be found in Appendix \ref{app:iac}.
Additionally, we evaluate AVG against incremental adaptations of SAC and TD3, both of which, like AVG, use RG estimation.

SAC and TD3 rely on large replay buffers to store and replay past experiences, a crucial feature for tackling challenging benchmark tasks.
To adapt these batch-based methods to an incremental setting, we set the minibatch and replay buffer size to 1, allowing them to process each experience as it is encountered.
We refer to these incremental variants as \emph{SAC-1} and \emph{TD3-1}, respectively.
We use off-the-shelf implementations of TD3 and SAC provided by CleanRL \citep{huang2022cleanrl}.
The choice of hyper-parameters and full learning curves can be found in the Appendix \ref{app:deep_rl_params}.

\begin{figure}[htp]
    \vspace{-0.5cm}
    \centering
    \begin{subfigure}{.32\textwidth}
        \includegraphics[width=\columnwidth]{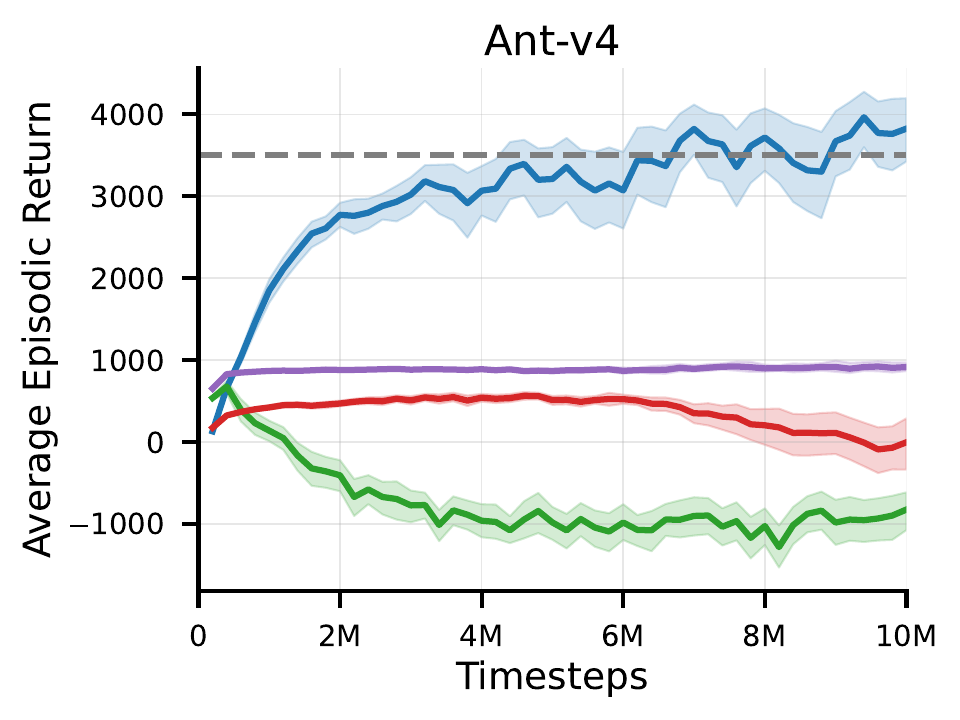}
    \end{subfigure}
    \begin{subfigure}{.32\textwidth}
        \includegraphics[width=\columnwidth]{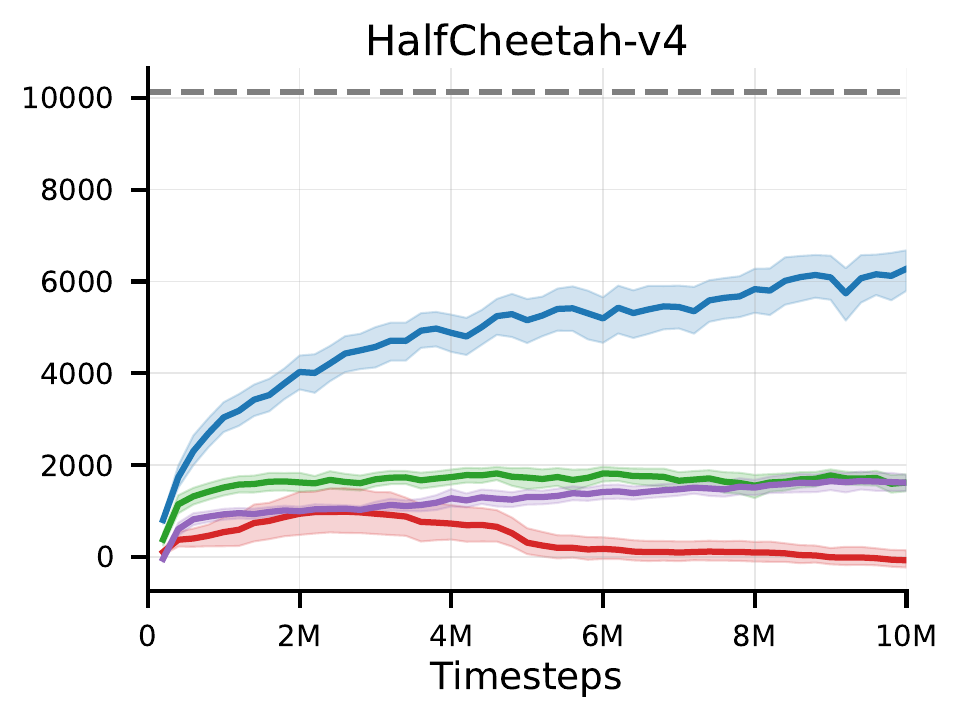}
    \end{subfigure}
    \begin{subfigure}{.32\textwidth}
        \includegraphics[width=\columnwidth]{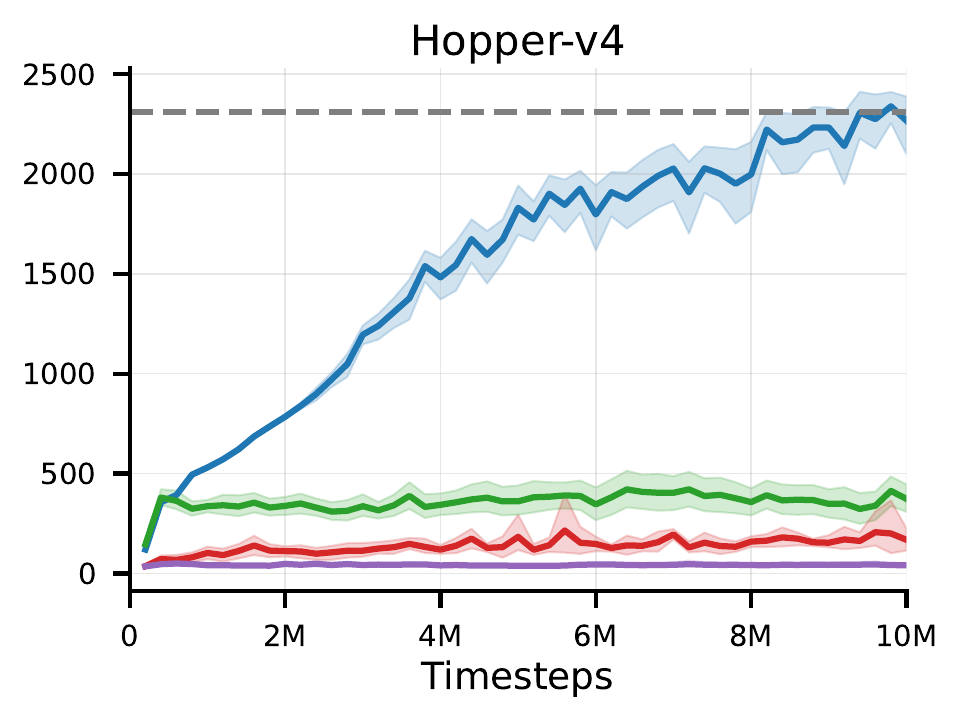}
    \end{subfigure}
    \newline
    \begin{subfigure}{.32\textwidth}
        \includegraphics[width=\columnwidth]{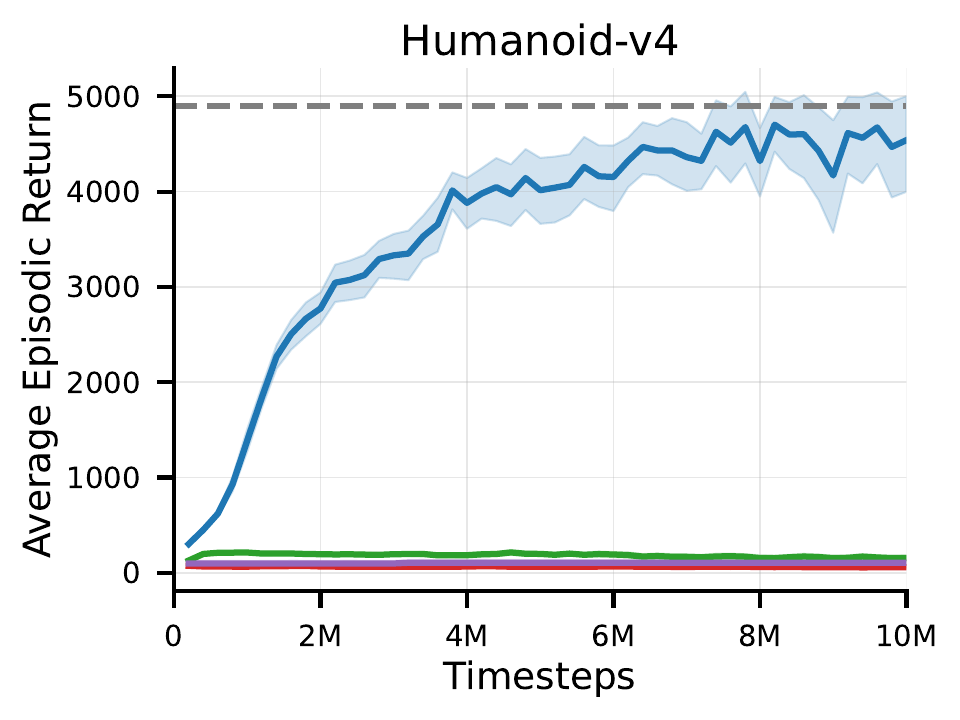}
    \end{subfigure}
    \begin{subfigure}{.32\textwidth}
        \includegraphics[width=\columnwidth]{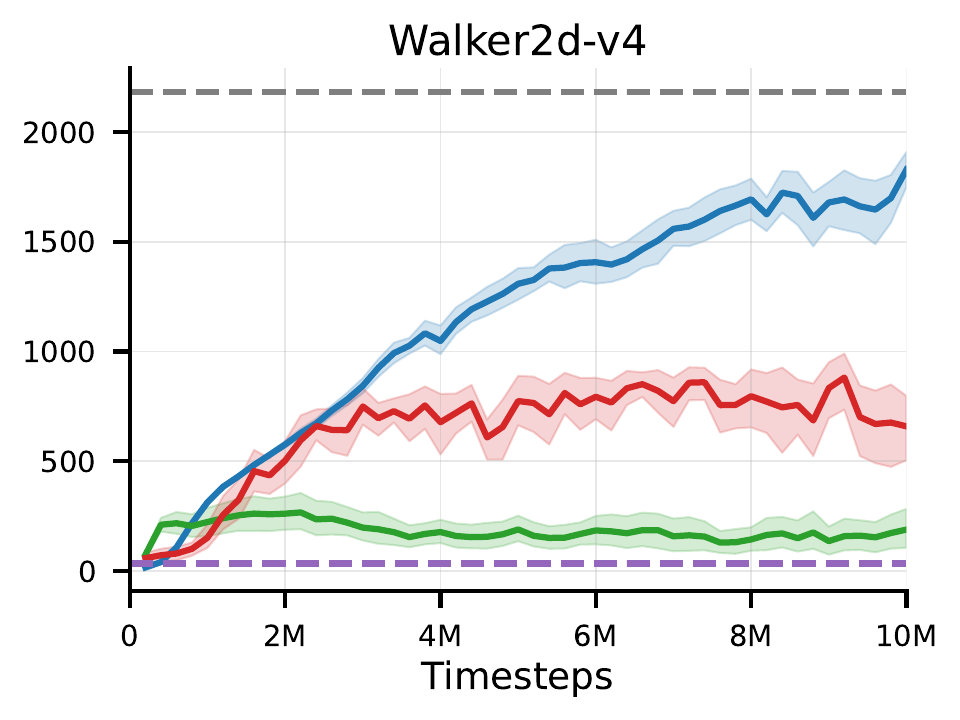}
    \end{subfigure}
    \begin{subfigure}{.32\textwidth}
        \includegraphics[width=\columnwidth]{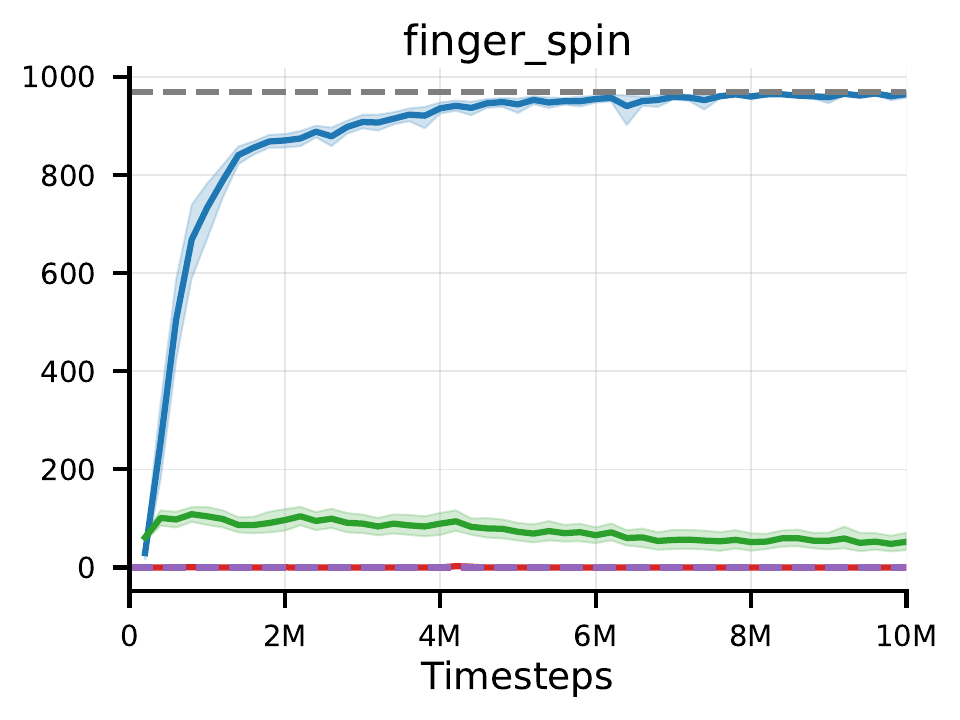}
    \end{subfigure}
    \newline
    \begin{subfigure}{.32\textwidth}
        \includegraphics[width=\columnwidth]{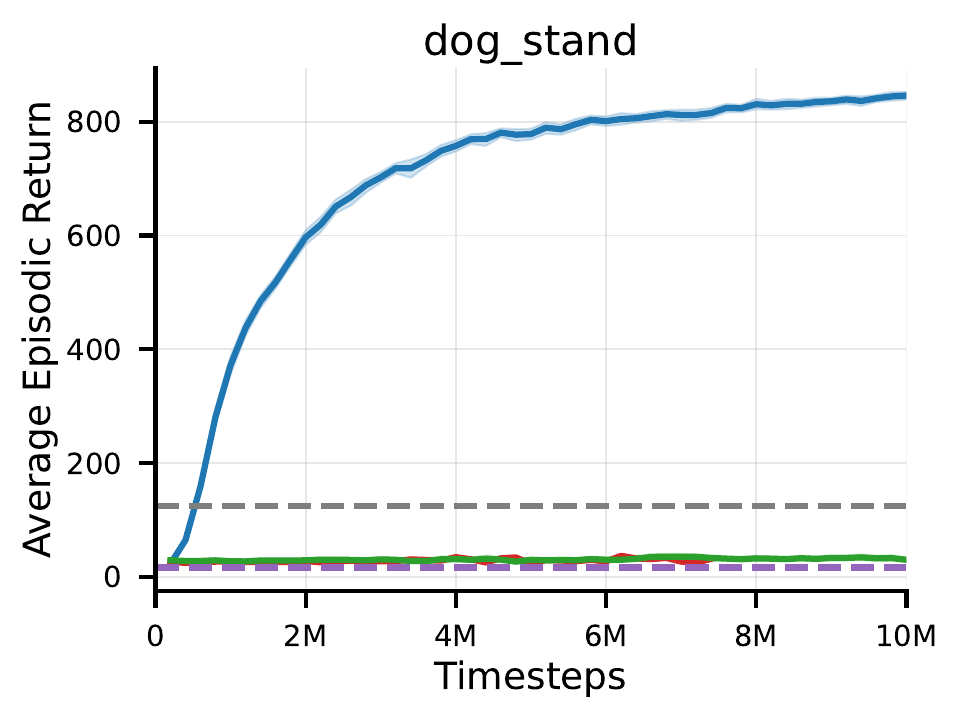}
    \end{subfigure}
    \begin{subfigure}{.32\textwidth}
        \includegraphics[width=\columnwidth]{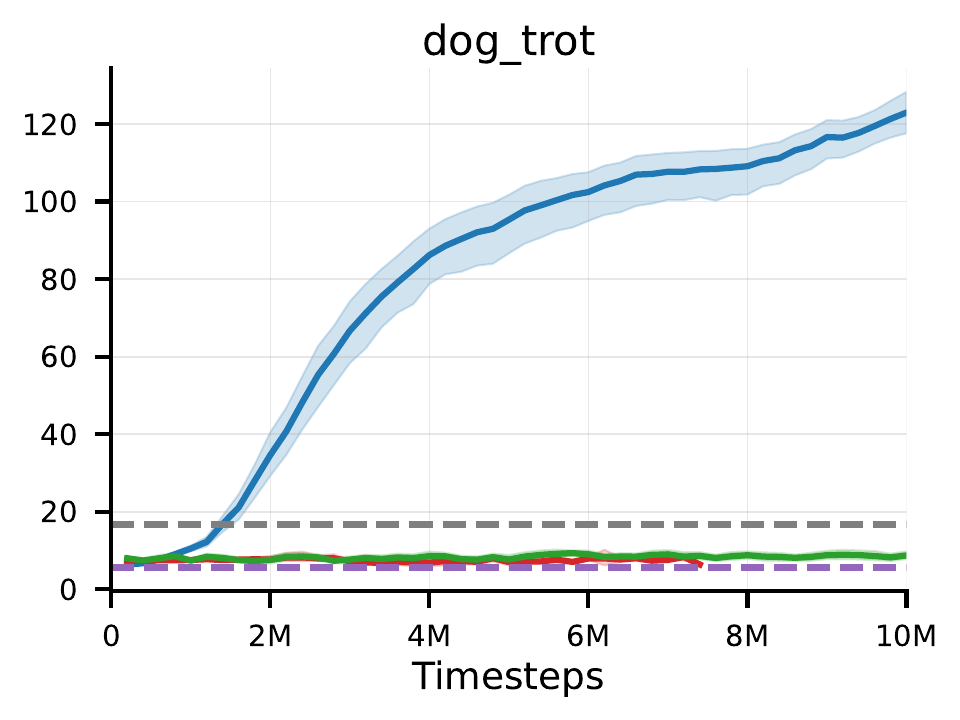}
    \end{subfigure}
    \begin{subfigure}{.32\textwidth}
        \includegraphics[width=\columnwidth]{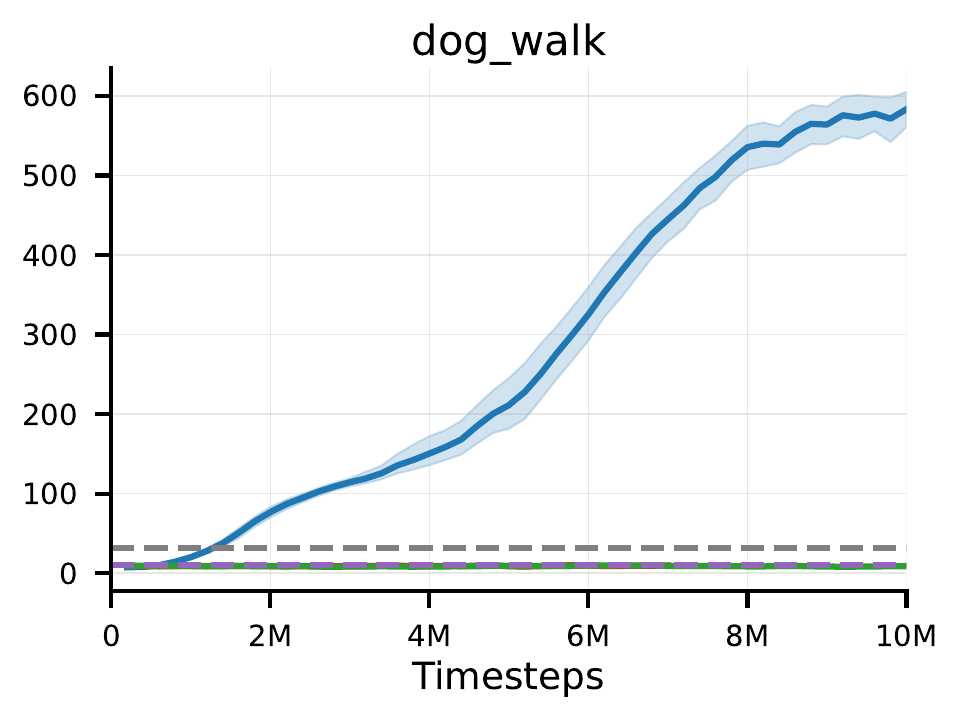}
    \end{subfigure}
    \newline
     \begin{subfigure}{.55\textwidth}
        \includegraphics[width=\columnwidth]{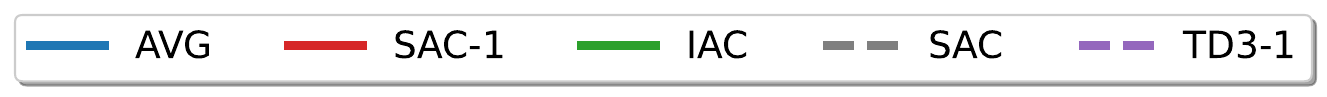}
    \end{subfigure}   
    \caption{AVG on Gymnasium and DeepMind Control Suite tasks. Each solid learning curve is an average of 30 independent runs. The shaded regions represent a 95\% confidence interval of the bootstrap distribution. Note that \emph{SAC} refers to SAC with a replay buffer size of $1M$. The corresponding dashed line represents the mean performance over the final 10K steps of training.} 
    \label{fig:best_avg}
    \vspace{-0.6cm}
\end{figure}

In Figure \ref{fig:best_avg}, we present the learning performance of AVG in comparison to IAC, SAC-1, and TD3-1. For reference, we also include the final performance of SAC with large replay buffers and default parameters, trained for $1M$ timesteps, indicated by the gray dashed line (referred to as \emph{SAC}). 
Notably, AVG is the only incremental algorithm that learns effectively, achieving performance comparable to SAC in Gymnasium \citep{towers_gymnasium_2023} environments and surpassing it in the Dog benchmarks from DeepMind Control Suite \citep{tassa2018deepmind}. \cite{nauman2024overestimation} suggests that non-default regularization, such as layer normalization is essential for SAC to perform well in the Dog domain.

To optimize the hyperparameters for each method---AVG, IAC, SAC-1, and TD3-1, we conducted a random search, which is more efficient for high-dimensional search spaces than grid search \citep{bergstra2012random}. We evaluated 300 different hyperparameter configurations, each trained with 10 random seeds for $2M$ timesteps on five challenging continuous control environments: \textit{Ant-v4, Hopper-v4, HalfCheetah-v4, Humanoid-v4} and \textit{Walker2d-v4}.
Each configuration was ranked based on its average undiscounted return per run, with the top-performing configuration selected for each environment. Using the best configuration, we then conducted longer training runs of 10 million timesteps with 30 random seeds.

\begin{wrapfigure}{r}{0.3\textwidth}
    \centering
    \vspace{-0.6cm}
    \begin{subfigure}{.3\textwidth}
        \includegraphics[width=\columnwidth]{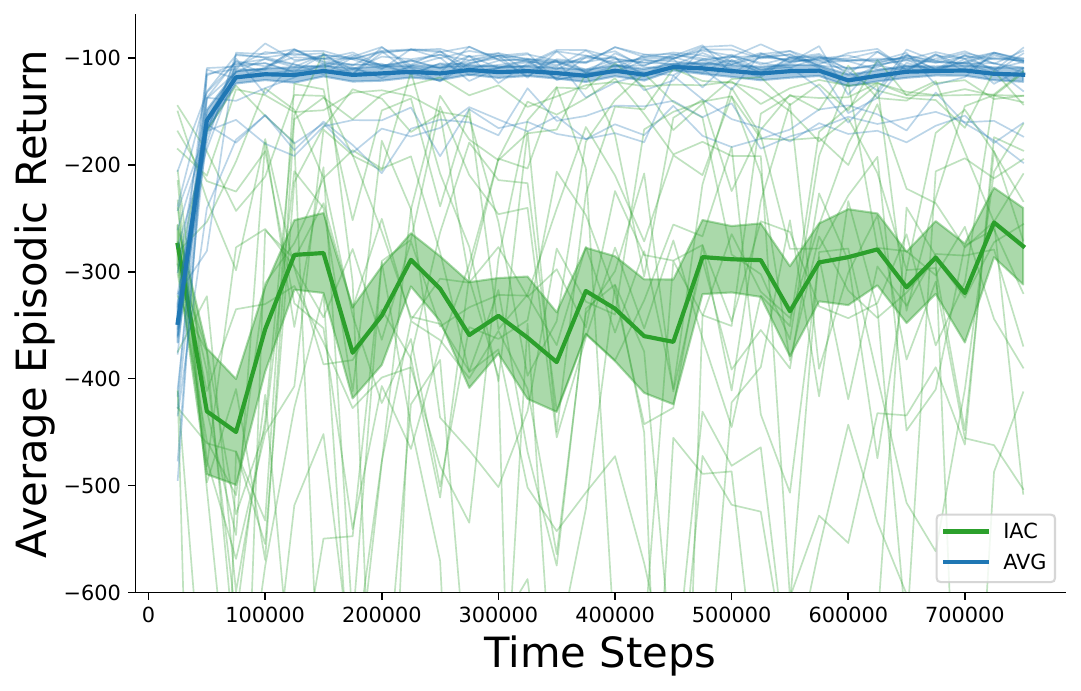}
    \end{subfigure}
    \caption{AVG and IAC on the Visual Reacher task}
    \label{fig:visual_reacher}
    \vspace{-0.6cm}
\end{wrapfigure}
Sparse reward environments can present additional challenges, often increasing both the difficulty and the time required for 
learning \citep{vasan2024revisiting}. 
Hence, we also evaluate our algorithms on sparse reward environments from the DeepMind Control Suite.
We use one unique hyper-parameter configuration per algorithm across four environments: \textit{finger\_spin, dog\_stand, dog\_walk, dog\_trot} (see Fig. \ref{fig:best_avg}). Further details are provided in Appendix \ref{app:random_search}.

\textbf{Learning From Pixels} We use the visual reacher task to ensure that AVG can be used with visual RL. In this task, the agent uses vision and proprioception to reach a goal. As shown in Fig.\ \ref{fig:visual_reacher}, AVG consistently outperforms IAC, which exhibits high variance and struggles to learn. Task details are provided in Appendix \ref{app:visual_learning}.

\section{Stabilizing Incremental Policy Gradient Methods}
\label{sec:norm_study}

In this section, we first highlight some issues with incremental policy gradient methods, which arise from the large and noisy gradients inherent to the setting.
We perform a comprehensive ablation study to assess the effects of observation normalization, penultimate normalization, and TD error scaling---individually and in combination---on the performance of  AVG.
Additionally, we demonstrate how other incremental methods, such as IAC and SAC-1, may also benefit from normalization and scaling.

\subsection{Instability Without Normalization}
Deep RL can suffer from instability, often manifesting as high variance \citep{bjorck2022high}, reduced expressivity of neural networks over time \citep{nikishin2022primacy, sokar2023dormant}, or even a gradual drop in performance (\citealt{dohare2023overcoming}, \citeyear{dohare2024loss}, \citealt{elsayed2024continual, elsayed2024weight}), primarily due to the non-stationarity of data streams. Recently, \cite{lyle2024disentangling} identified another common challenge that may induce difficulty in learning: large regression target scales. 
For instance, while training on Humanoid-v4, bootstrapped targets can range from $-20$ to $8000$.
Consequently, the critic faces the difficult task of accurately representing values that fluctuate widely across different stages of training.
This can lead to excessively large TD errors, destabilizing the learning process.

\begin{figure}[H]
    \centering
    \begin{subfigure}{.32\textwidth}
        \includegraphics[width=\columnwidth]{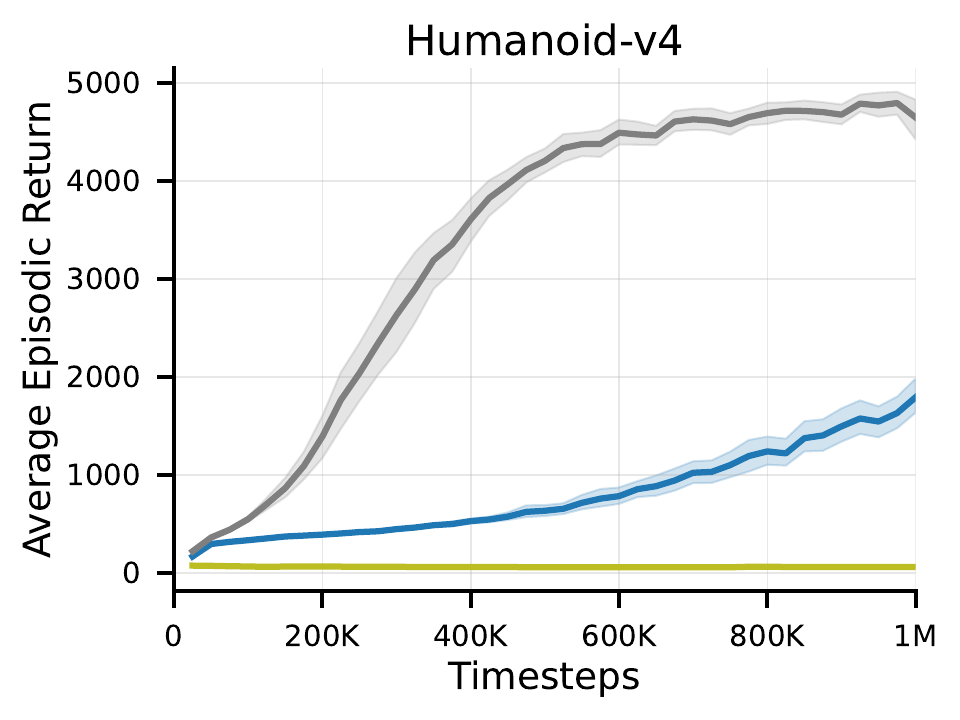}
    \end{subfigure}
    \begin{subfigure}{.32\textwidth}
        \includegraphics[width=\columnwidth]{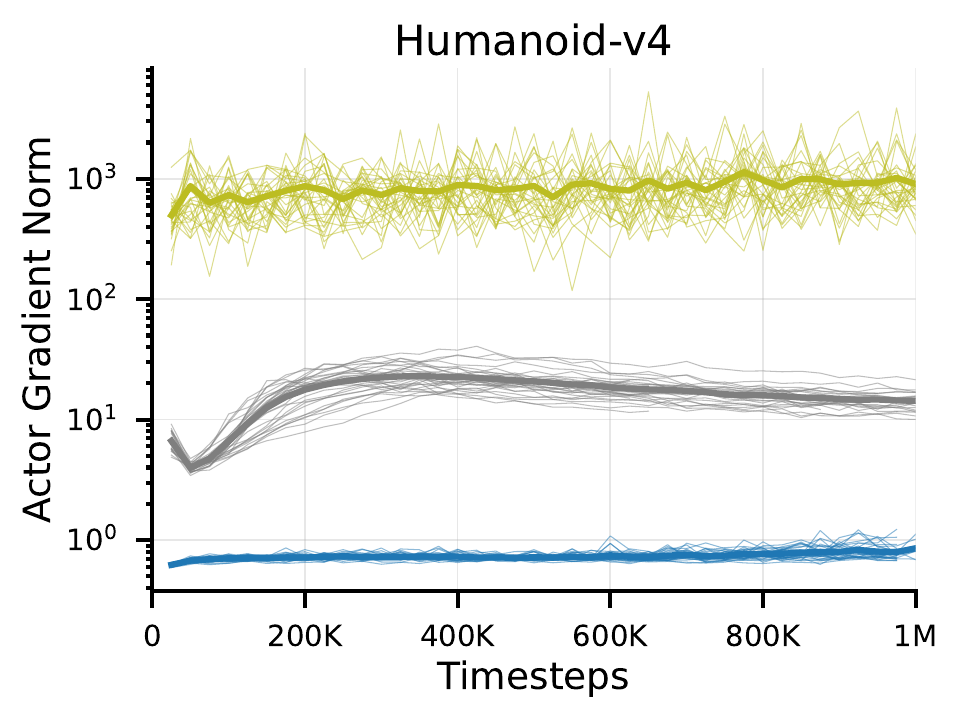}
    \end{subfigure}
    \begin{subfigure}{.32\textwidth}
        \includegraphics[width=\columnwidth]{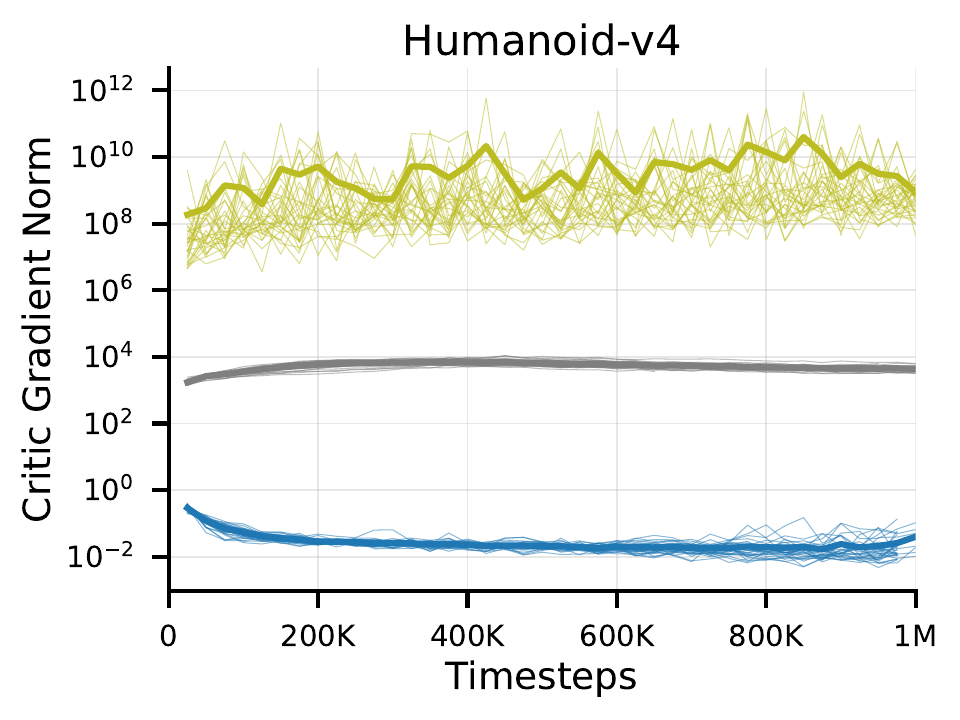}
    \end{subfigure}
    \newline
     \begin{subfigure}{.35\textwidth}
        \includegraphics[width=\columnwidth]{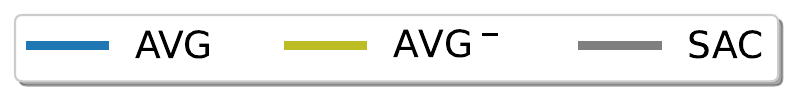}
    \end{subfigure}  
    \caption{The gradient norm of the critic and actor networks for AVG and SAC, along with their average episodic returns. \emph{AVG$^-$} denotes AVG without any normalization or scaling applied. The solid lines represent the average, whereas the light lines represent the values for the individual runs. Note that the y-axis in the plots for actor and critic gradient norms is displayed on a logarithmic scale.}   
    \label{fig:diagnostic-stats}
    \vspace{-0.4cm}
\end{figure}
Figure \ref{fig:diagnostic-stats} illustrates a failure condition that can arise due to large regression target scale, high variance, and reduced expressivity --- challenges that are particularly problematic for incremental methods.
Here, we compare a successful SAC training run to a failed AVG run without normalization or scaling techniques (termed $AVG^-$).
While batch RL methods like SAC manage large, noisy gradients by smoothing them out through batch updates and improving stability with target Q-networks, incremental methods like AVG are more susceptible to numerically unstable updates, which can lead to failure or divergence in learning. 
AVG$^-$ exemplifies this issue by demonstrating excessively large gradient norms, particularly in the critic network, resulting in erratic gradients that hinder learning.

Building on these insights, we hypothesize that stable learning in AVG can be achieved by balancing update magnitudes across time steps and episodes, reducing the influence of outlier experiences.
This can be partly accomplished by centering and scaling the inputs, normalizing the hidden unit activations, and scaling the TD errors.
\cite{andrychowicz2021matters} show that appropriately scaling the observations can help improve performance, likely since it helps improve learning dynamics \citep{sutton1988nadaline, schraudolph2002centering, lecun2002efficient}. Scaling both the targets (e.g., by scaling the rewards, \citealt{engstrom2019implementation}) and the observations (e.g., normalization, \citealt{andrychowicz2021matters}) is a well-established strategy that has shown success in widely used algorithms such as PPO \citep{schulman2017proximal}, helping improve its performance and stability \citep{rao2020makeRLwork, huang202237}. 

\subsection{Disentangling the Effects of Normalization and Scaling}
\begin{figure}[htp]
    \centering
    \vspace{-0.5cm}
    \begin{subfigure}{.32\textwidth}
        \includegraphics[width=\columnwidth]{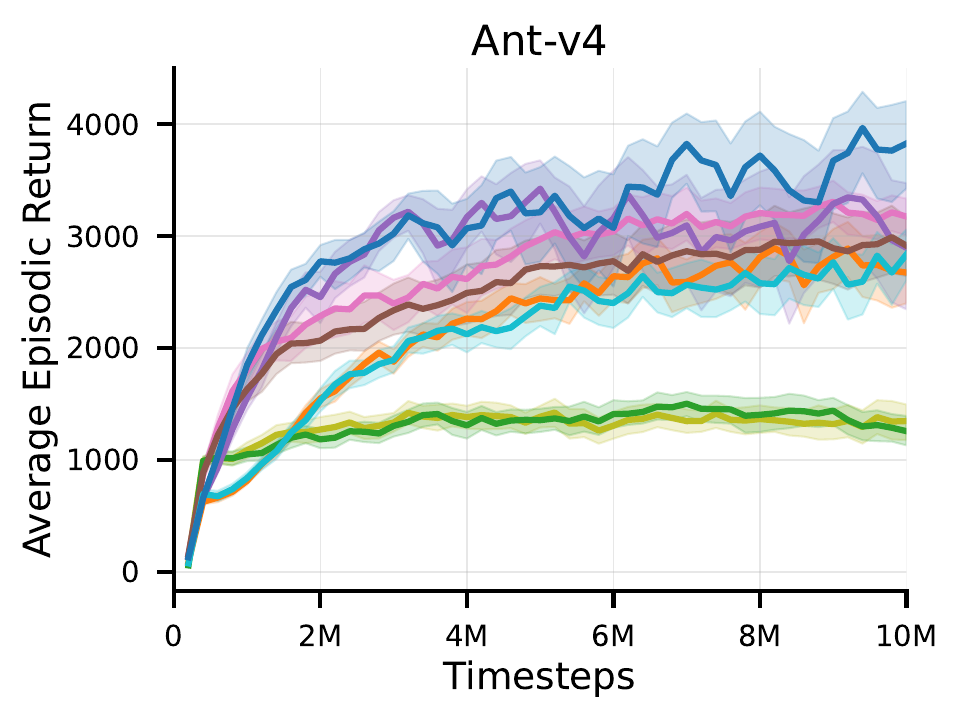}
    \end{subfigure}
    \begin{subfigure}{.32\textwidth}
        \includegraphics[width=\columnwidth]{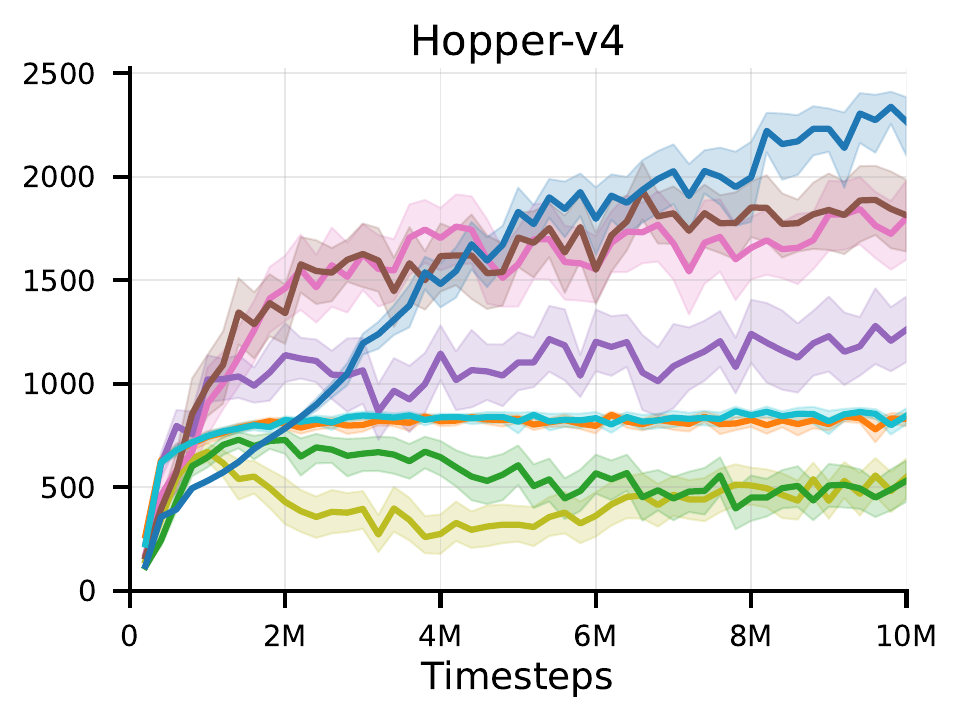}
    \end{subfigure}
    \begin{subfigure}{.32\textwidth}
        \includegraphics[width=\columnwidth]{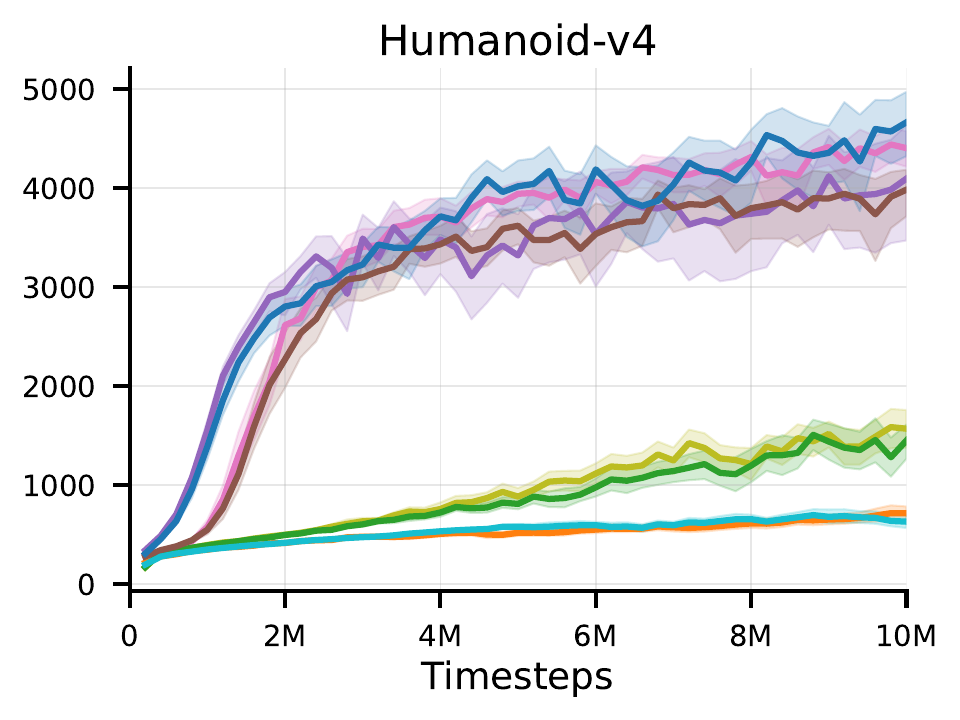}
    \end{subfigure}
    \newline
    \begin{subfigure}{.75\textwidth}
        \includegraphics[width=\columnwidth]{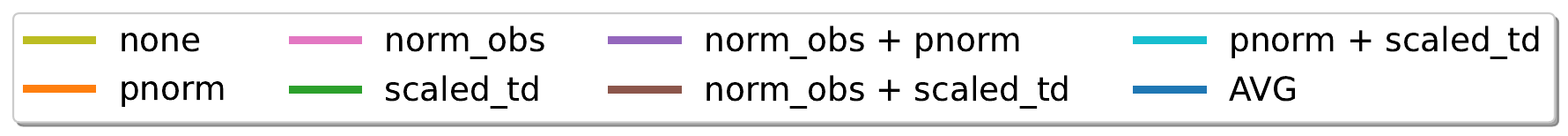}
    \end{subfigure}
    \caption{Ablation study of normalization and scaling techniques used with AVG. We plot the learning curves of the best hyperparameter configurations for each task variant. Each solid learning curve is an average of 30 independent runs. The shaded regions represent a 95\% confidence interval.}   
    \label{fig:norm_ablation_lc}
    \vspace{-0.5cm}
\end{figure}

A combination of three techniques consistently achieves good performance for AVG: 1) TD error scaling \citep{schaul2021return} to resolve the issue of large bootstrapped target scale (termed \emph{scaled\_td}, 2) observation normalization to maintain good learning dynamics (termed \emph{norm\_obs}, and 3) penultimate normalization to reduce instability and improve plasticity (termed \emph{pnorm}, \citealt{bjorck2022high}), similar to layer normalization \citep{lyle2023understandingplasticity}.
We selected Welford's online algorithm for normalizing observations due to its unbiased nature and its ability to maintain statistics across the entire data stream.
In preliminary experiments, weighted methods that favored more recent samples did not perform well.
\cite{schaul2021return} illustrate the risks associated with clipping or normalizing rewards, which led us to adopt their straightforward approach of scaling the temporal difference error with a multiplicative factor. Additionally, we favored pnorm over layer normalization since it performed better empirically in our experiments (see Fig. \ref{fig:other_norms}, App.\ \ref{app:layer_norm}).
It is worth noting that alternative normalization techniques could potentially achieve similar, if not superior, outcomes.
Our focus here is to emphasize the importance of normalization and scaling issues and propose easy-to-use solutions.

We conduct an ablation study to evaluate the impact of the three techniques---norm\_obs, pnorm and scaled\_td---on the performance of AVG. We assess these techniques both individually and in combination, resulting in a total of 8 variants. 
The learning curves for the best seed obtained via our random search procedure (detailed in App.\ \ref{app:random_search}) for each variant are shown in Fig. \ref{fig:norm_ablation_lc}. 
The combination of all three techniques achieves the best overall performance. 

\begin{figure}[H]
    \vspace{-0.2cm}
    \centering
    \begin{subfigure}{.32\textwidth}
        \includegraphics[width=\columnwidth]{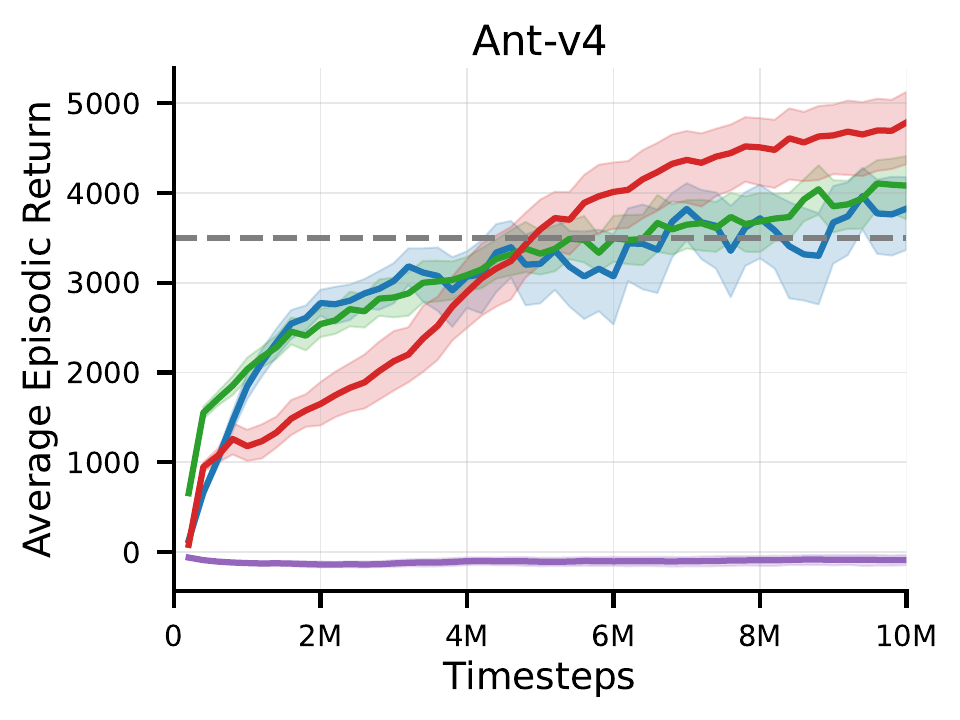}
    \end{subfigure}
    \begin{subfigure}{.32\textwidth}
        \includegraphics[width=\columnwidth]{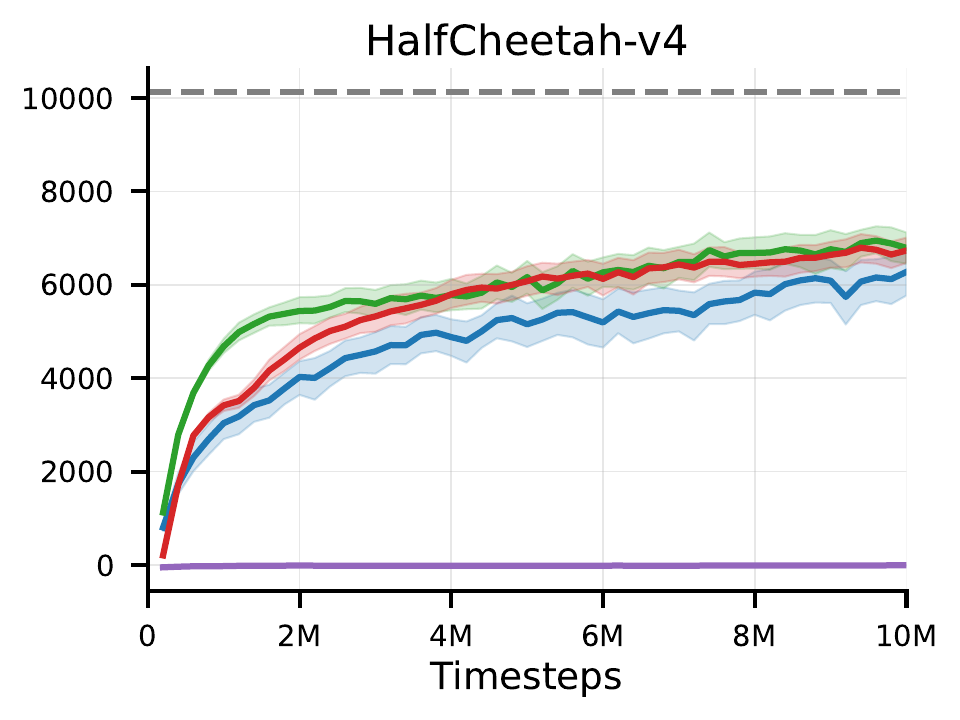}
    \end{subfigure}
    \begin{subfigure}{.32\textwidth}
        \includegraphics[width=\columnwidth]{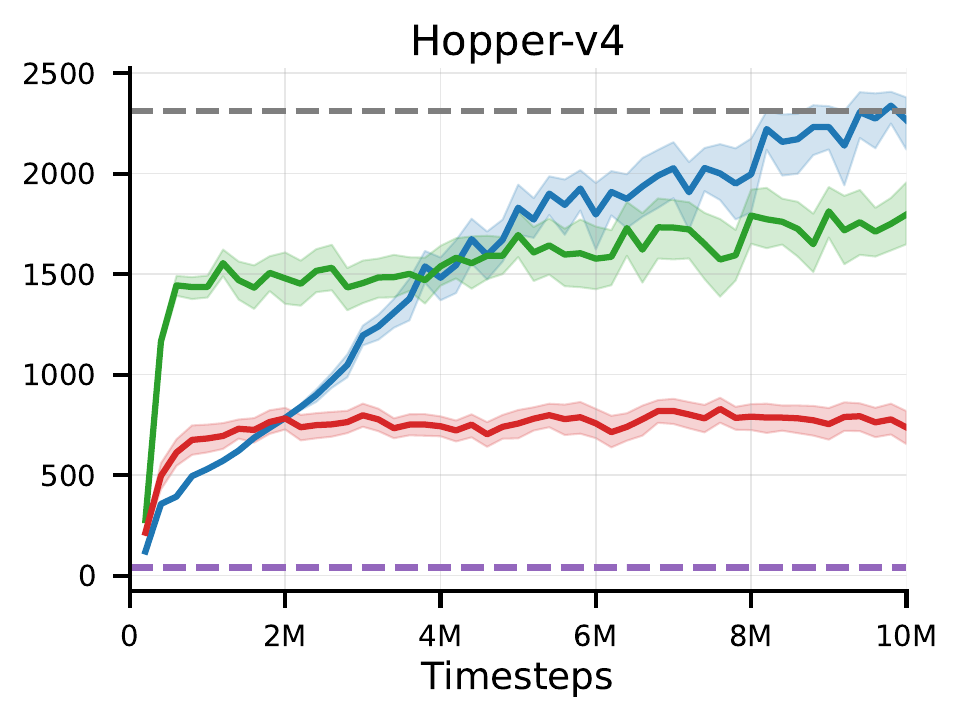}
    \end{subfigure}
    \newline
    \begin{subfigure}{.32\textwidth}
        \includegraphics[width=\columnwidth]{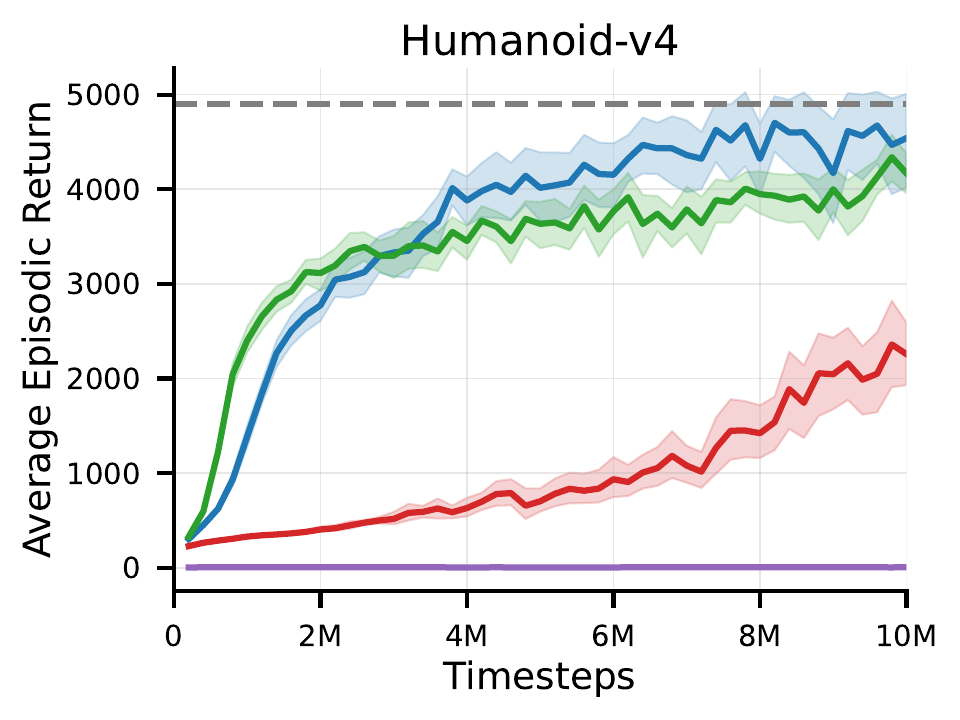}
    \end{subfigure}
    \begin{subfigure}{.32\textwidth}
        \includegraphics[width=\columnwidth]{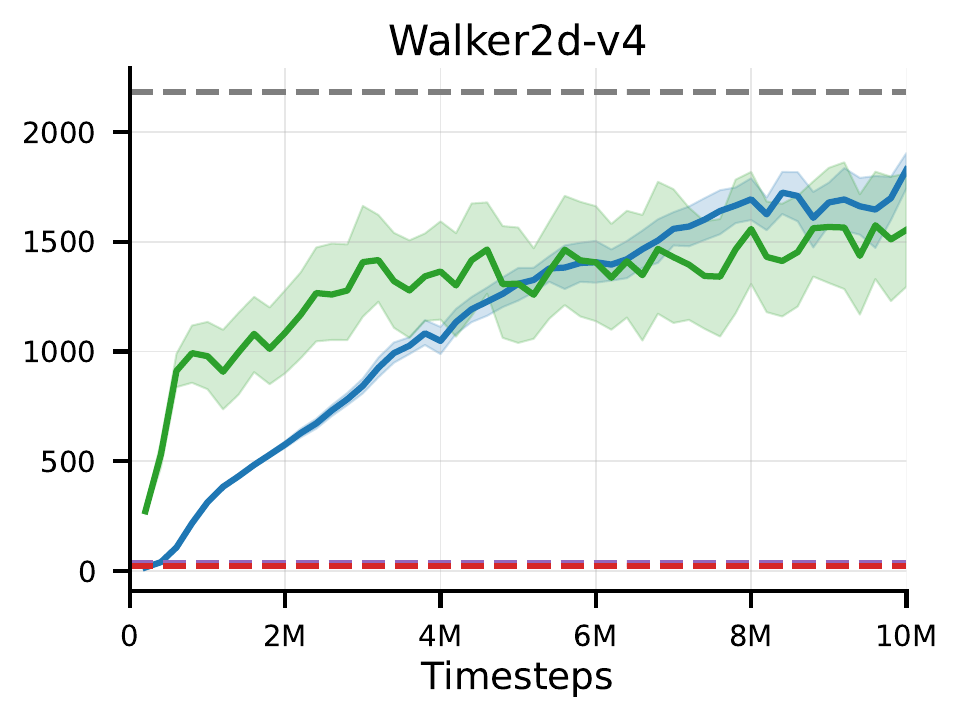}
    \end{subfigure}
    \begin{subfigure}{.32\textwidth}
        \includegraphics[width=\columnwidth]{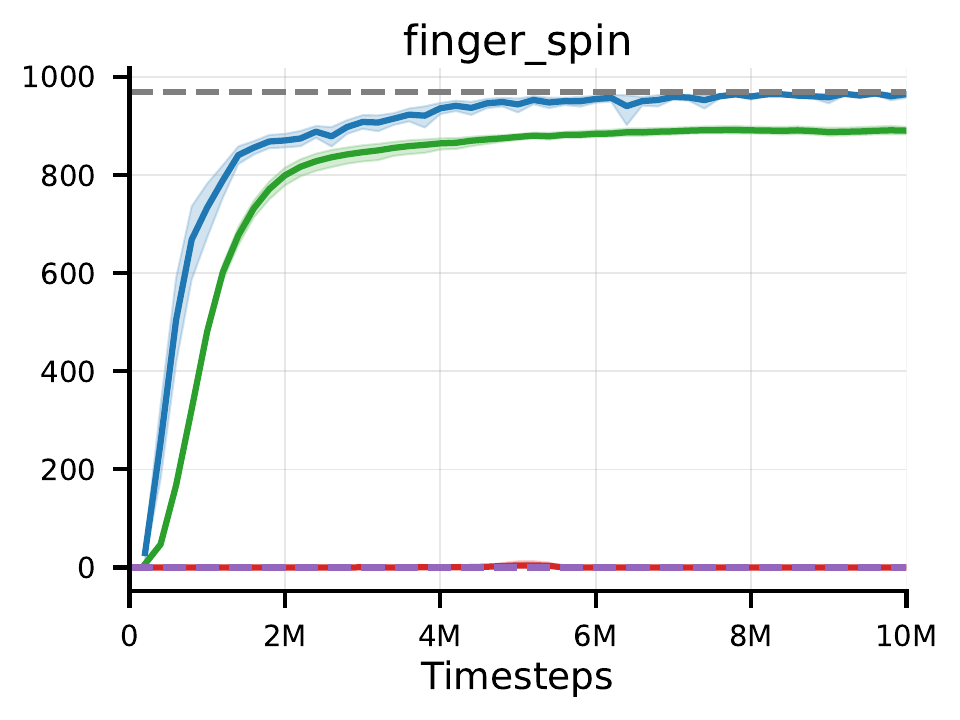}
    \end{subfigure}
    \newline
    \begin{subfigure}{.32\textwidth}
        \includegraphics[width=\columnwidth]{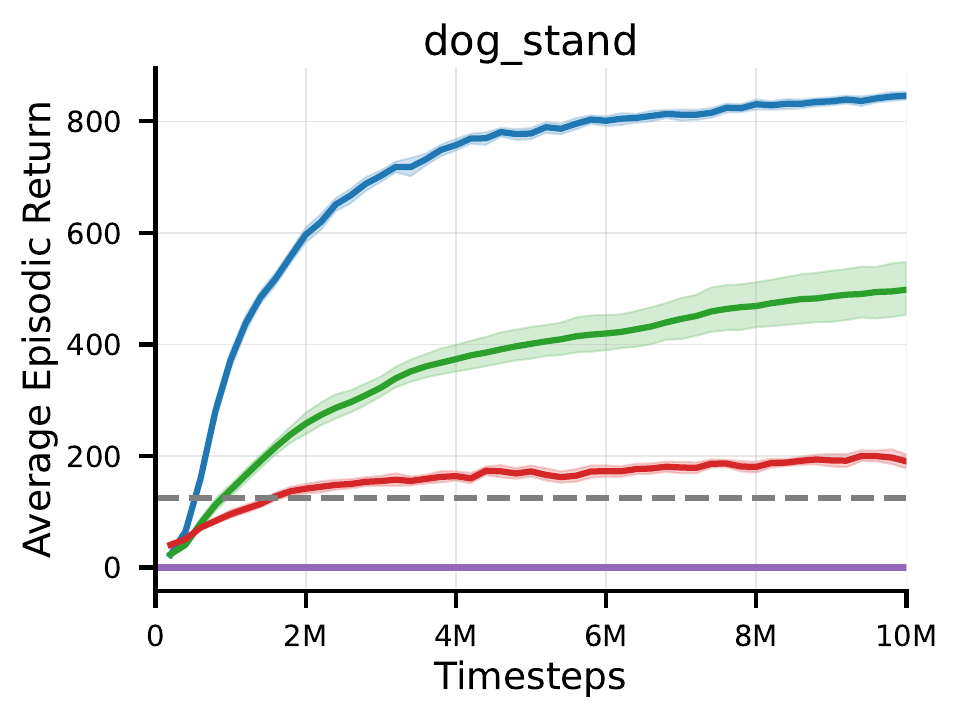}
    \end{subfigure}
    \begin{subfigure}{.32\textwidth}
        \includegraphics[width=\columnwidth]{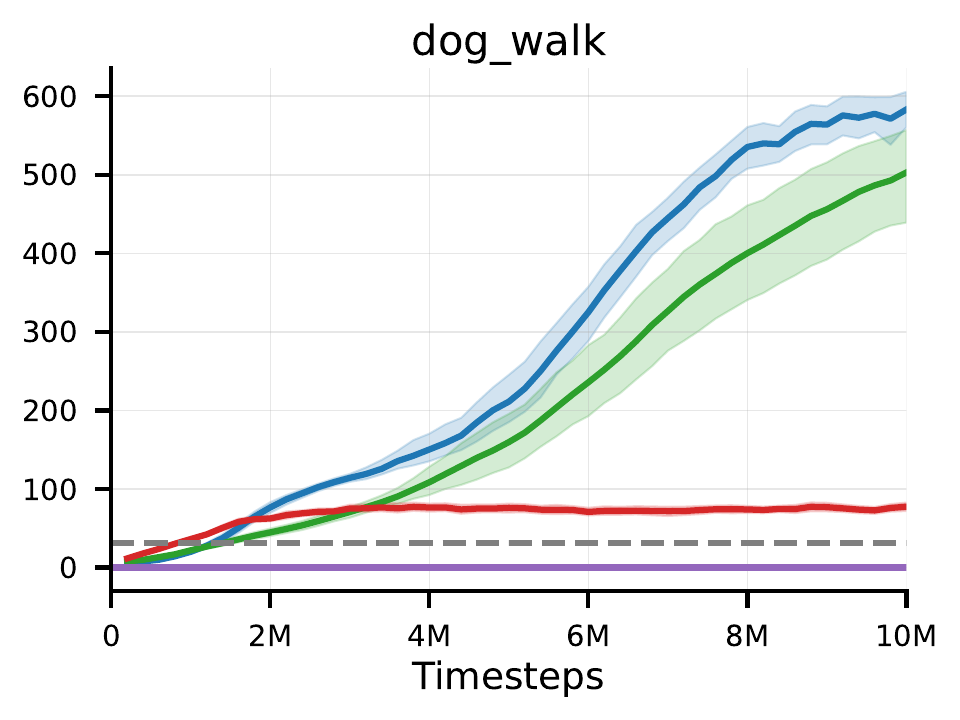}
    \end{subfigure}
    \begin{subfigure}{.32\textwidth}
        \includegraphics[width=\columnwidth]{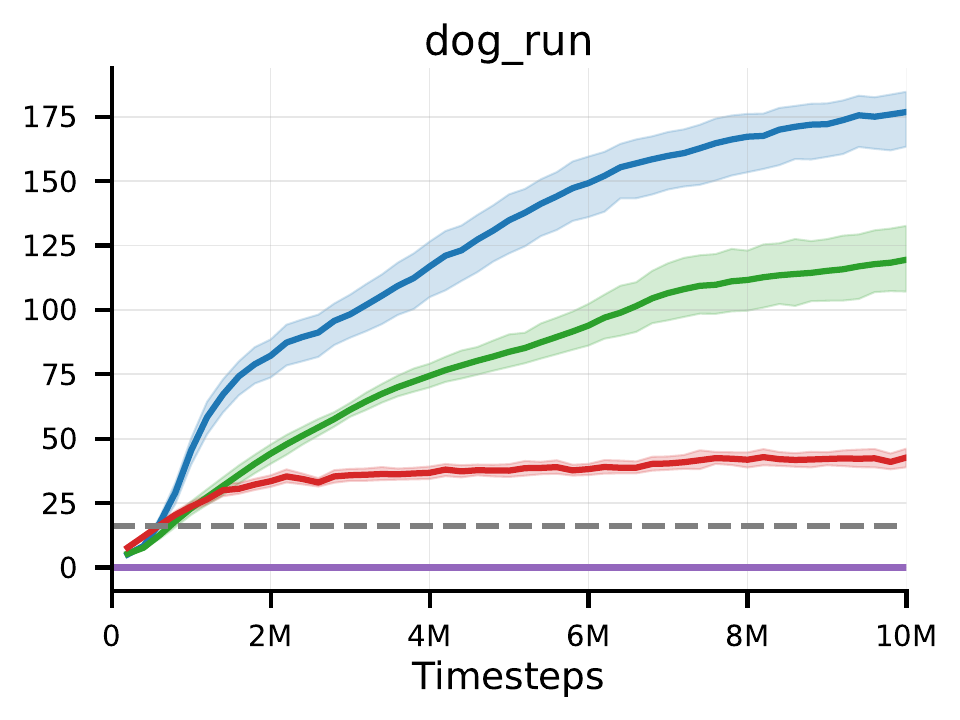}
    \end{subfigure}
    \newline
     \begin{subfigure}{.6\textwidth}
        \includegraphics[width=\columnwidth]{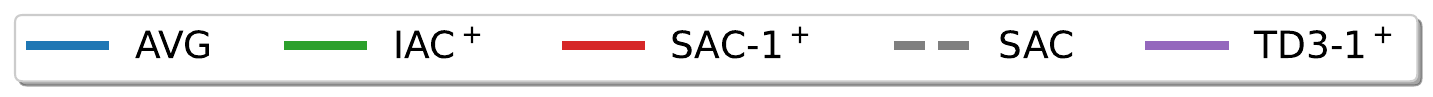}
    \end{subfigure}  
    \caption{Impact of normalization and scaling on IAC, SAC-1 and TD3-1. Suffix “$+$” denotes each algorithm plus normalization and scaling.
    }   
    \label{fig:tricks_competitors}
    \vspace{-0.5cm}
\end{figure}

In Fig. \ref{fig:tricks_competitors}, we assess the impact of our proposed normalization and scaling techniques on IAC, SAC-1 and TD3-1.
While IAC$^+$ performs in a mostly comparable manner to AVG, SAC-1$^+$ shows inconsistent performance, performing well in only two tasks but failing or even diverging in environments such as Hopper-v4 and Walker2d-v4. TD3-1$^+$ fails to learn in all environments.

\subsection{AVG with Target Q-Networks}

\begin{figure}[H]
    \centering
    \begin{subfigure}{.32\textwidth}
        \includegraphics[width=\columnwidth]{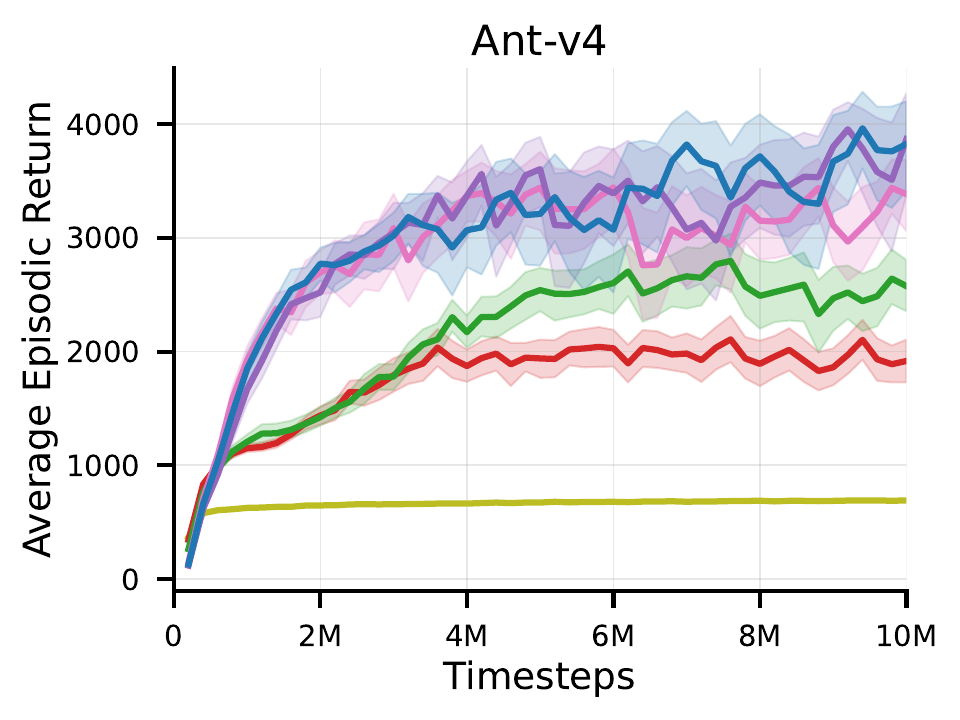}
    \end{subfigure}
    \begin{subfigure}{.32\textwidth}
        \includegraphics[width=\columnwidth]{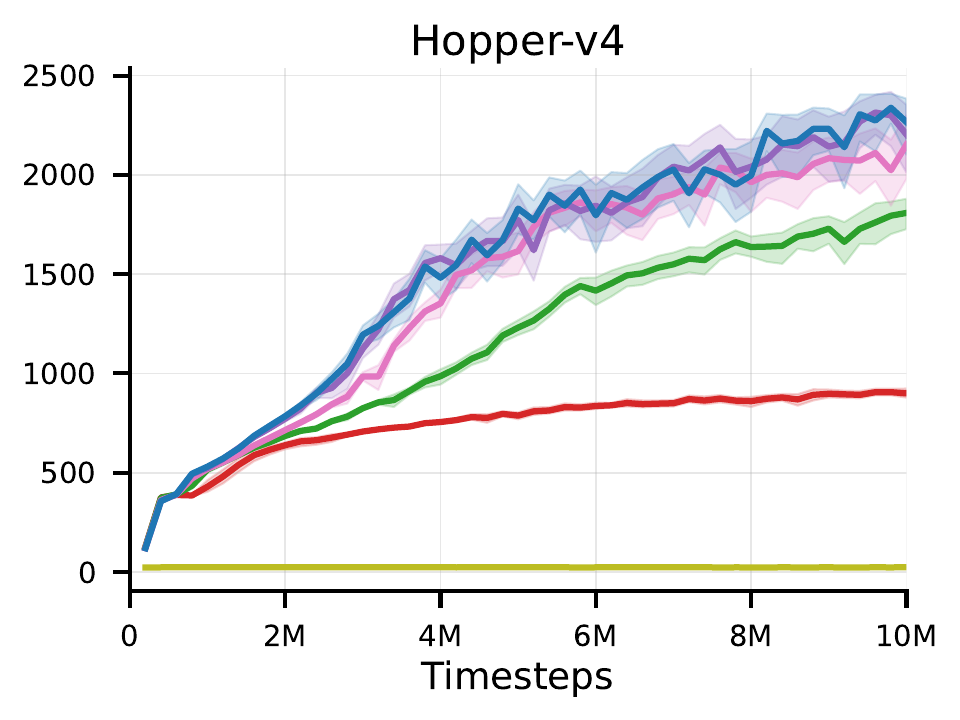}
    \end{subfigure}
    \begin{subfigure}{.32\textwidth}
        \includegraphics[width=\columnwidth]{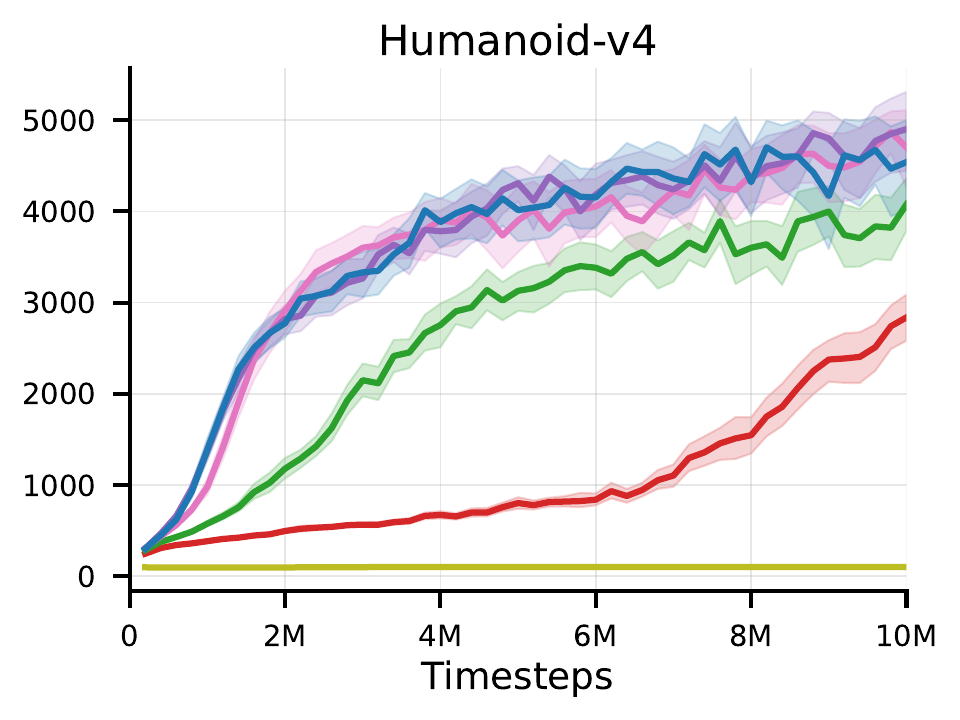}
    \end{subfigure}
     \begin{subfigure}{.55\textwidth}
        \includegraphics[width=\columnwidth]{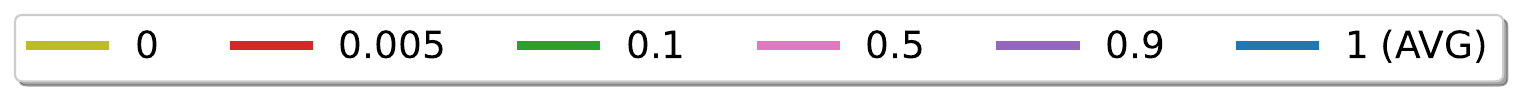}
    \end{subfigure}   
    \caption{Impact of target Q network on AVG for different values of $\tau$, which represents the Polyak averaging coefficient. Here, $\tau = 0$ corresponds to a fixed target network, and $\tau = 1$ indicates that the current Q-network and the target network are identical, that is, not using a target network.}
    \label{fig:avg_targetQ}
    \vspace{-0.2cm}
\end{figure}

Target networks are commonly used in off-policy batch methods to stabilize learning \citep{mnih2015human}.
By using a separate network that is updated less frequently, target networks introduce a delay in the propagation of value estimates.
This delay can be advantageous in batch methods with large replay buffers, as it helps maintain a more stable target \citep{lillicrap2016continuous, fujimoto2018addressing}.
However, this delayed update can slow down learning in online RL \citep{kim2019deepmellow}.

In Figure \ref{fig:avg_targetQ}, we evaluate the impact of using target Q-networks with AVG. 
Similar to SAC, we use Polyak averaging to update the target Q-network: $\phi_{\text{target}} = (1 - \tau) \cdot \phi_{\text{target}} + \tau \cdot \phi$.
We run an experiment varying $\tau$ between $[0, 1]$, where $\tau=0$ denotes a fixed target network and $ \tau = 1$ implies the target network is identical to the current Q-network.
We detail the pseudocode in Appendix \ref{app:avg_targetQ} (see Alg. \ref{algo:avg_targetQ}).
The results show no benefit to using target networks, with only large values of $\tau$ performing comparably to AVG. Additionally, removing target networks reduces memory usage and simplifies the implementation of our algorithm.

\section{AVG with Resource-Constrained Robot Learning}

\begin{wrapfigure}{r}{0.57\textwidth}
    \centering
    \vspace{-0.4cm}
    \begin{subfigure}{.2\columnwidth}
        \includegraphics[height=3.7cm,width=\columnwidth]{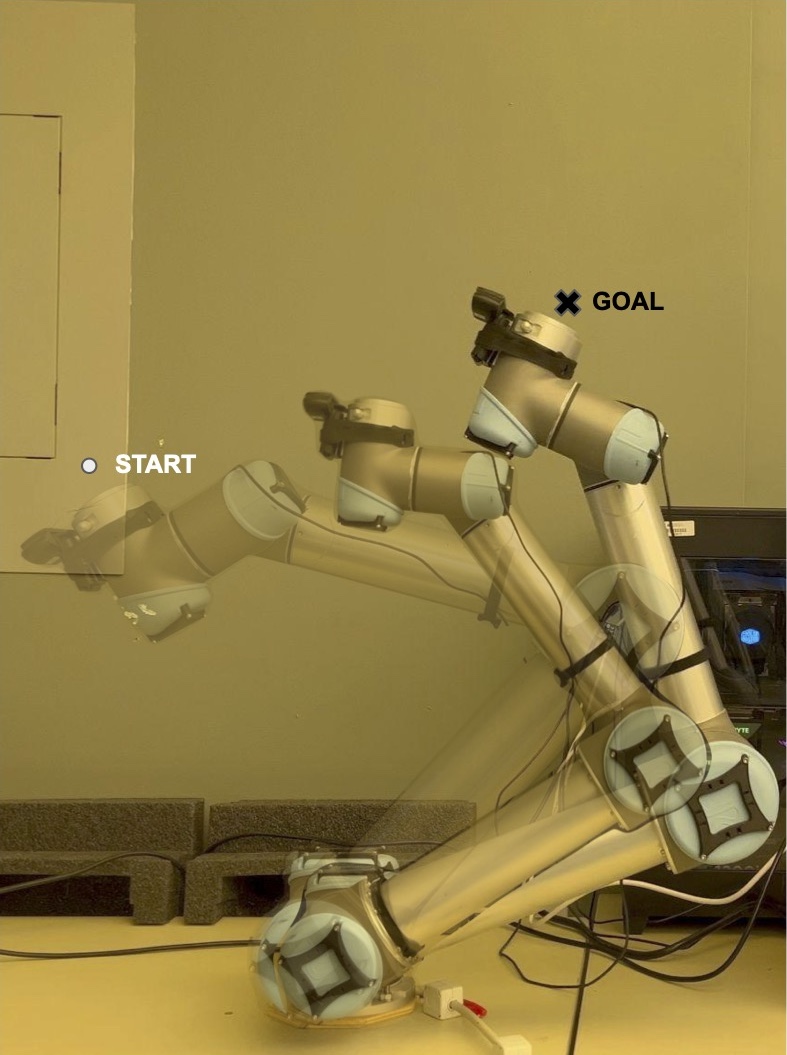}
        \caption{UR-Reacher-2}
        \label{fig:ur_reacher_task}
    \end{subfigure}
    \begin{subfigure}{.35\columnwidth}
    \includegraphics[height=3.7cm,width=\columnwidth]{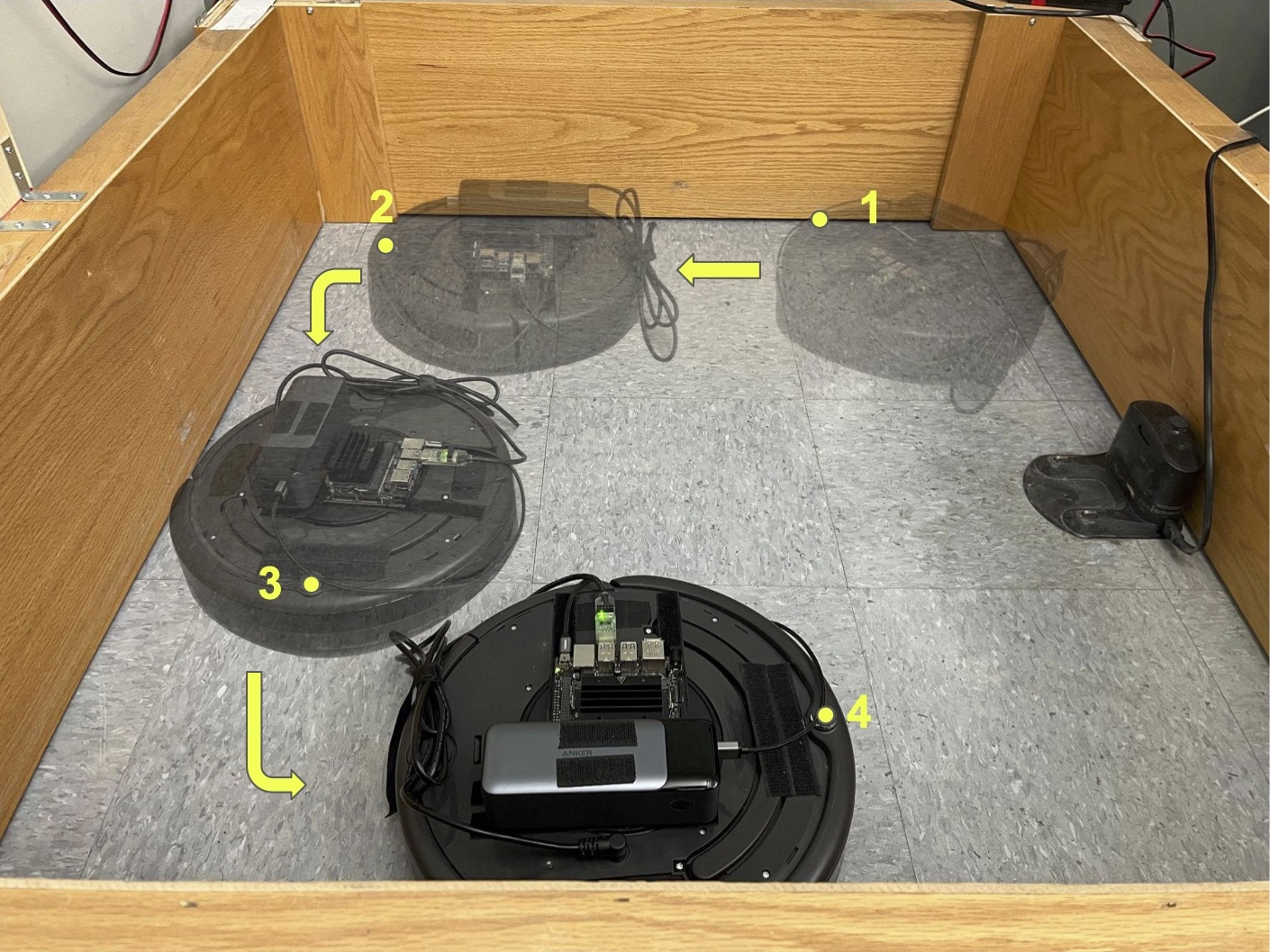}
        \caption{Create-Mover}
        \label{fig:create_mover_task}
    \end{subfigure}
    \caption{Robot Tasks}
    \vspace{-0.25cm}
\end{wrapfigure}
On-device learning enables mobile robots to continuously improve, adapt to new data, and handle unforeseen situations, which is crucial for tasks like autonomous navigation and object recognition. Commercial robots, such as the iRobot Roomba, often use onboard devices with limited memory, ranging from microcontrollers with kilobytes of memory to more powerful edge devices like the Jetson Nano 4GB. 
Leveraging these onboard edge devices can reduce the need for constant server communication, enhancing reliability in areas with limited connectivity.
Storing large replay buffers on these devices is infeasible, necessitating computationally efficient, incremental algorithms. 

To demonstrate the effectiveness of our proposed AVG algorithm for on-device incremental deep RL, we utilize the \emph{UR-Reacher-2} and \emph{Create-Mover} tasks, as developed by \cite{mahmood2018benchmarking}. We use two robots: UR5 robotic arm and iRobot Create 2, a hobbyist version of Roomba. 
In the UR-Reacher-2 task, the agent aims to reach arbitrary target positions on a 2D plane (Fig.\ \ref{fig:ur_reacher_task}).
This task is a real-world adaptation of the Mujoco Reacher task.
In the Create-Mover task, the agent's goal is to move the robot forward as fast as possible within an enclosed arena. A representative image of the desired behavior is shown in Fig. \ref{fig:create_mover_task}. 
Each run requires slightly over two hours of robot experience time on both robots. In our learning curves (see Fig. \ref{fig:robot_lc}), the dark lines represent the average over five runs for AVG, whereas the light lines represent the values for the individual runs.
Details of the setup can be found in Appendix \ref{app:robot}.

\begin{wrapfigure}{r}{0.33\textwidth}
    \centering
    \vspace{-0.5cm}
    \begin{subfigure}{.33\textwidth}
        \includegraphics[width=\columnwidth]{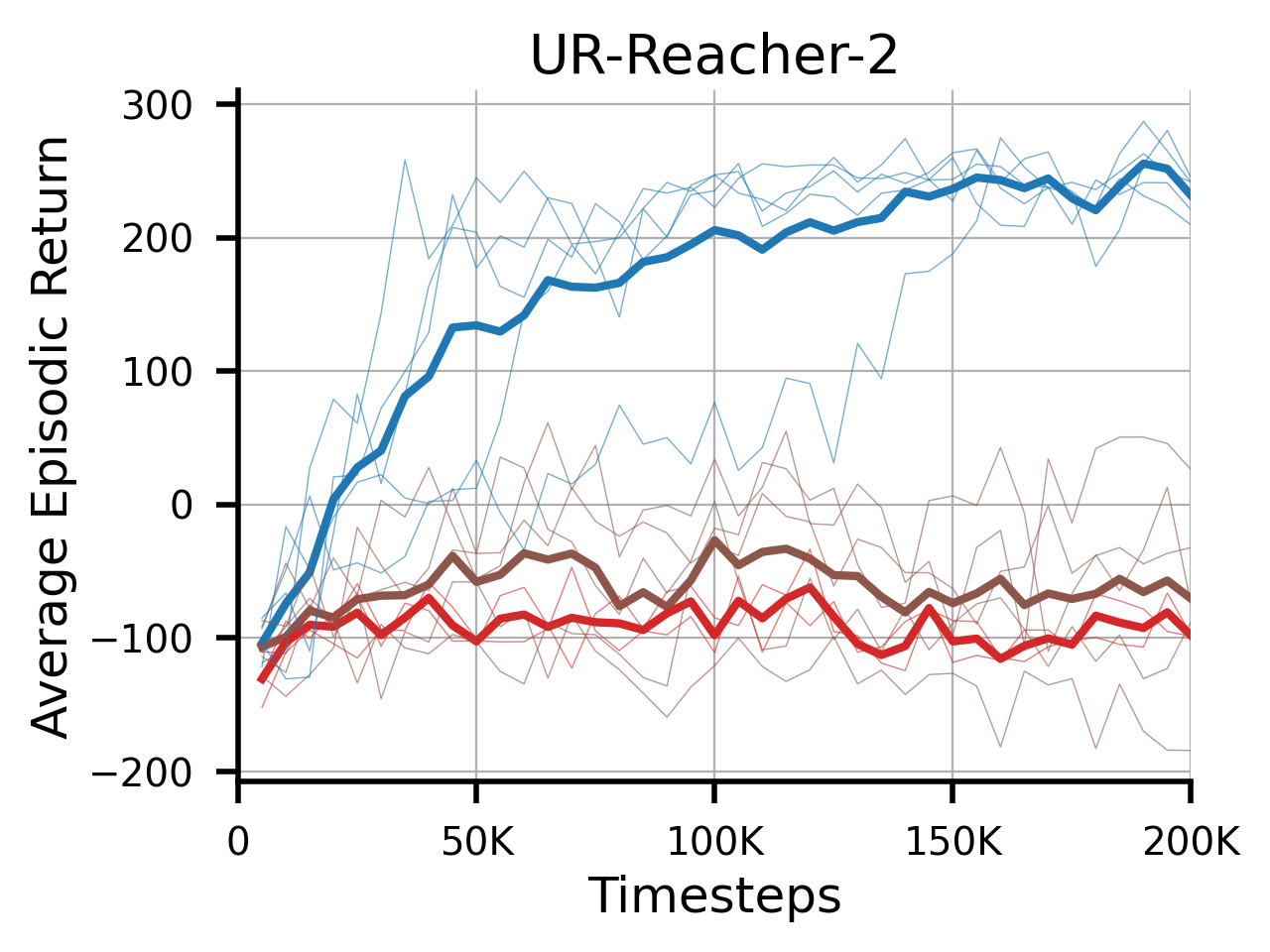}
    \end{subfigure}
    \begin{subfigure}{.33\textwidth}
        \includegraphics[width=\columnwidth]{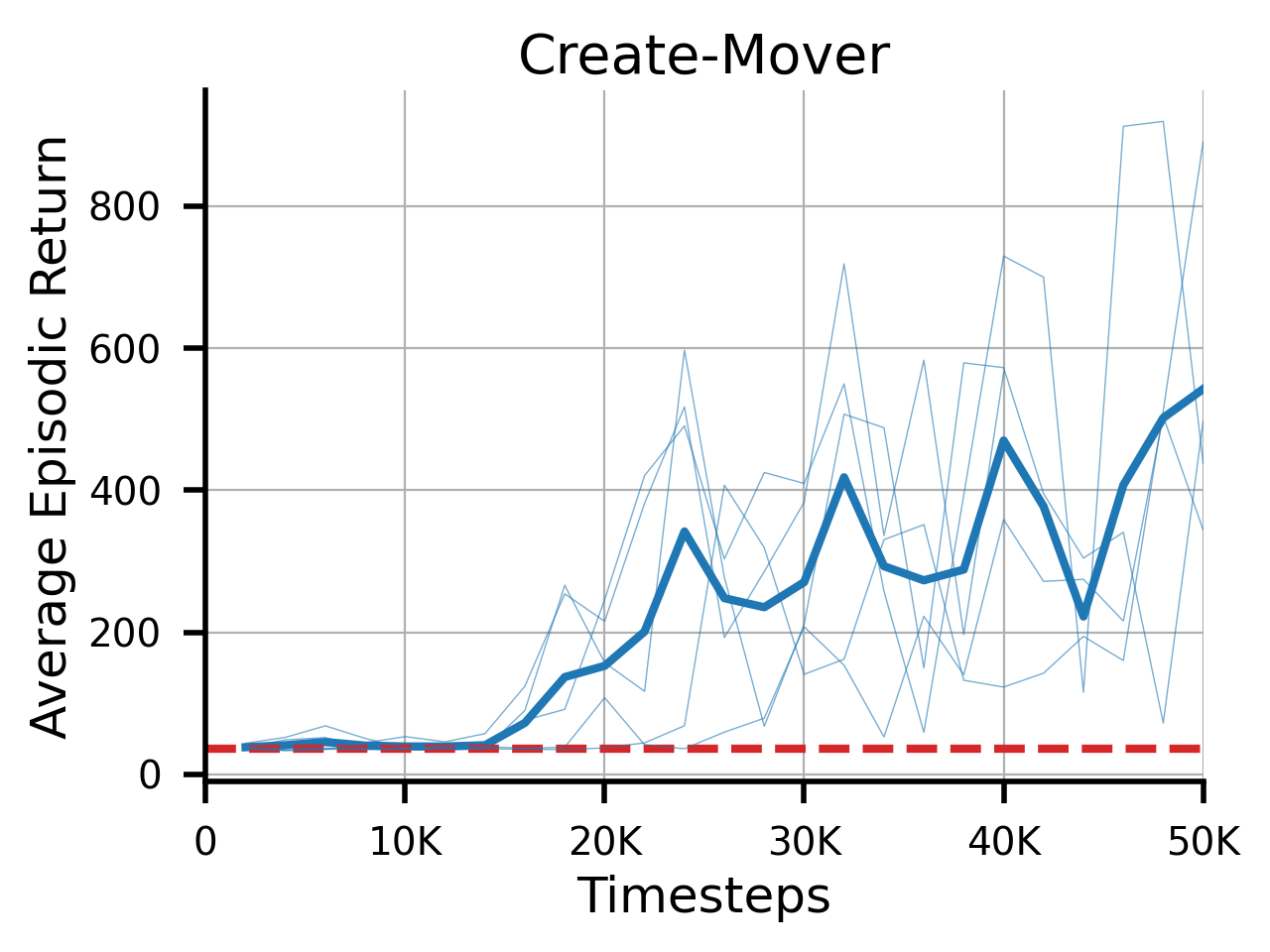}
    \end{subfigure}
    \begin{subfigure}{.33\textwidth}
        \includegraphics[width=\columnwidth]{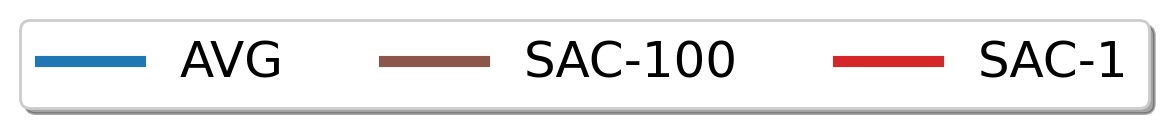}
    \end{subfigure}
    \caption{Learning curves on Real Robot Tasks}
    \label{fig:robot_lc}
    \vspace{-0.4cm}
\end{wrapfigure}

The performance of AVG and resource-constrained SAC on UR-Reacher-2 is shown in Fig. \ref{fig:robot_lc} (top).
We term the resource-constrained variants of SAC as \emph{SAC-1} and \emph{SAC-100}, where the suffix indicates both the replay buffer capacity and mini-batch size used during training.
Note that SAC-1 is incremental, but SAC-100 is still a batch method with limited memory resources.
In these experiments, both SAC-100 and SAC-1 struggle significantly, failing to learn under the imposed memory limitations.
In contrast, AVG demonstrates robust performance, efficiently utilizing limited memory to achieve fast and superior learning. 

On the mobile robot task Create-Mover, the learning system is limited to onboard computation using a Jetson Nano 4GB. This introduces additional compute constraints in terms of action sampling time and learning update time. Our implementation requires $~5ms$  to sample an action for both AVG and SAC-1. On the other hand, for learning updates, AVG requires only about $~37ms$ per update, compared to SAC-1's $~67ms$. A batch update for SAC would exceed the action cycle time ($150ms$) for Create-Mover. Hence, we compare AVG only against SAC-1.
The learning curves on the Create-Mover task in Fig. \ref{fig:robot_lc} (bottom) clearly show AVG's superior performance, while SAC-1 fails to learn any meaningful policy. This highlights AVG's efficiency and suitability for real-time learning in resource-constrained environments.
Our work demonstrates for the first time effective real-robot learning with incremental deep reinforcement learning methods\footnote{Video Demo: \url{https://youtu.be/cwwuN6Hyew0}}.

\vspace{-0.1cm}
\section{Conclusion}
\vspace{-0.1cm}
This work revives incremental policy gradient methods for deep RL and offers significant computational advantages over standard batch methods for onboard robotic applications. We introduced a novel incremental algorithm called Action Value Gradient (AVG) and demonstrated its ability to consistently outperform other incremental and resource-constrained batch methods across a range of benchmark tasks. Crucially, we showed how normalization and scaling techniques enable AVG to achieve robust learning performance even on challenging high-dimensional control problems. Finally, we presented the first successful application of an incremental deep RL method learning control policies from scratch directly on physical robots---a robotic manipulator and a mobile robot. Overall, our proposed AVG algorithm opens up new possibilities for deploying deep RL with limited onboard computational resources of robots, enabling lifelong learning and adaptation in the real world.

\textbf{Limitations and Future Work} The main limitation of our approach is low sample efficiency compared to batch methods. Developing AVG with eligibility traces \citep{singh1996eligibility, van2021expected} is a natural future direction to generalize our one-step AVG and possibly improve its sample efficiency. 
We also find that AVG can be sensitive to the choice of hyper-parameters. A valuable extension would be stabilizing the algorithm to perform well across environments using the same hyper-parameters.
Our work is limited to continuous action space, but it can also be extended to discrete action spaces following \cite{jang2017categorical}, which we leave to future work.
Additionally, AVG omits discounting in the state distribution, which is common and further biases the update but can be addressed with the correction proposed by \cite{che2023correcting}.
Finally, we acknowledge a concurrent work by \cite{elsayed2024streaming}, which stabilizes existing incremental methods like AC$(\lambda)$ and Q$(\lambda)$, except for reparameterization policy gradient methods. The robustness of AVG may potentially improve by replacing Adam with an optimizer for adaptive step sizes proposed in that work.

\textbf{Societal Impact} Our paper presents academic findings, but the proposed algorithm offers new opportunities for deploying deep reinforcement learning on robots with limited computational resources. This enables lifelong learning and real-world adaptation, advancing the development of more capable autonomous agents. While our contributions themselves do not cause negative societal effects, we advise the community to reflect on possible consequences as they expand upon our research.

\textbf{Acknowledgements} We thank all reviewers for their insightful comments and suggested experiments, which strengthened both the content and presentation of our paper. We would also like to thank Shibhansh Dohare, Kris De Asis, Homayoon Farrahi, Varshini Prakash, and Shivam Garg for their helpful discussions. We are also appreciative of the computing resources provided by the Digital Research Alliance of Canada and the financial support from the CCAI Chairs program, the RLAI laboratory, Amii, and NSERC of Canada. 


\bibliographystyle{apalike} 
\bibliography{references}

\appendix
\onecolumn
\clearpage
\newpage

\section{Theoretical Foundations}
\label{app:avg_theory}

\subsection{Reparameterization Policy Gradient Theorem}
Please refer to the Theorems and Proofs section of \cite{lan2022model} for detailed proofs. We only provide a short proof sketch for reference.

\begin{theorem}[Reparameterization Policy Gradient Theorem]
\label{theroem:rpg}
Given an MDP and a policy objective $J(\theta) \doteq \int d_0(s) v_{\pi_\theta} (s) ds$. The reparameterization policy gradient is given as\\
\[\nabla_\theta J(\theta) = \mathbb{E}_{S \sim d_{\pi, \gamma}, A \sim \pi_\theta} \left[ \nabla_\theta f_\theta(\xi; S) \vert_{\xi=h_\theta (A;S)} \nabla_A q_{\pi_\theta} (S, A) \right].\]
\end{theorem}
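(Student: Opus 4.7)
The plan is to mimic the classical derivation of Sutton's policy gradient theorem, but to push the $\theta$-dependence of the action distribution into the sampling function $f_\theta$ via reparameterization, so that differentiation under the expectation becomes immediate. The starting point is to rewrite the state-value function as an expectation over the base distribution:
\begin{equation*}
v_{\pi_\theta}(s) \;=\; \int \pi_\theta(a|s)\, q_{\pi_\theta}(s,a)\, da \;=\; \mathbb{E}_{\xi \sim g}\!\left[q_{\pi_\theta}\bigl(s, f_\theta(\xi;s)\bigr)\right].
\end{equation*}
Because $g$ does not depend on $\theta$, we can bring $\nabla_\theta$ inside the expectation (assuming the usual dominated-convergence regularity on $f_\theta$ and $q_{\pi_\theta}$) and apply the chain rule, separating the explicit $\theta$-dependence through the sampler $f_\theta$ from the implicit $\theta$-dependence through $q_{\pi_\theta}$.

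Next I would use the Bellman equation $q_{\pi_\theta}(s,a) = r(s,a) + \gamma \int p(s'|s,a)\, v_{\pi_\theta}(s')\, ds'$ to substitute for the implicit term. Since $r$ and $p$ do not depend on $\theta$, differentiating $q_{\pi_\theta}$ with $a$ held fixed yields $\gamma\, \mathbb{E}_{S' \sim p(\cdot|s,a)}[\nabla_\theta v_{\pi_\theta}(S')]$. Combining with the previous step, I obtain the recursion
\begin{equation*}
\nabla_\theta v_{\pi_\theta}(s) \;=\; \mathbb{E}_{\xi \sim g}\!\left[\nabla_\theta f_\theta(\xi;s)\, \nabla_A q_{\pi_\theta}(s,A)\big|_{A=f_\theta(\xi;s)}\right] + \gamma\, \mathbb{E}_{A \sim \pi_\theta,\, S' \sim p(\cdot|s,A)}\!\left[\nabla_\theta v_{\pi_\theta}(S')\right].
\end{equation*}
Unrolling this recursion $k$ steps and invoking the definition of the discounted state occupancy $d_{\pi,\gamma}(s') \propto \sum_{k \geq 0} \gamma^k \Pr(S_k = s' \mid S_0 \sim d_0)$, integration against $d_0$ collapses the infinite sum into a single expectation over $S \sim d_{\pi,\gamma}$. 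Finally, using the deterministic bijection $\xi = h_\theta(A;S)$ between the base variable and the sampled action, I change variables so the outer expectation is written directly as $A \sim \pi_\theta$, producing the stated form.

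The bulk of the argument is routine once the reparameterization is in place; the main subtlety is the chain-rule decomposition in the first step, where one must carefully distinguish (i) the gradient of $f_\theta$ passed through $\nabla_A q_{\pi_\theta}$, which is the quantity of interest, from (ii) the gradient of $q_{\pi_\theta}$ taken at a fixed action, which fuels the recursion. A second, more technical obstacle is justifying the interchange of $\nabla_\theta$ with the expectation over $\xi$ and with the Bellman integral; this requires that $f_\theta$ and $q_{\pi_\theta}$ be sufficiently smooth in $\theta$ and that the unrolled series converge, which holds under mild boundedness of rewards together with $\gamma < 1$. Once these regularity assumptions are stated, each step is a direct application of differentiation under the integral sign and the Bellman recursion, so I expect the proof to be short, closely paralleling the classical likelihood-ratio derivation while replacing the log-derivative step with the reparameterization step.
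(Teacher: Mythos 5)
Your proposal is correct, but it follows a genuinely different route from the paper's own argument. The paper's proof (Appendix A, which defers full details to Lan et al., 2022) begins with the objective already rewritten against the discounted occupancy measure $d_{\pi,\gamma}$, reparameterizes the inner action integral as an expectation over $\xi \sim g$, differentiates only the explicit $\theta$-dependence through $f_\theta$ via the chain rule, and then back-substitutes $\xi = h_\theta(a;s)$; the implicit $\theta$-dependence of $q_{\pi_\theta}$ (and of $d_{\pi,\gamma}$ itself) is silently dropped, which is legitimate only because those discarded terms are exactly what the occupancy-measure form has already absorbed --- this is also where the $\propto$ appears mid-derivation. You instead derive the occupancy measure from first principles: the chain-rule split into the explicit $f_\theta$ term and the implicit fixed-action term, the Bellman substitution turning the latter into $\gamma\,\mathbb{E}_{S'}[\nabla_\theta v_{\pi_\theta}(S')]$, and the unrolled recursion collapsing against $d_0$ into $d_{\pi,\gamma}$. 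This is essentially the rigorous derivation behind the cited result (and parallels the deterministic policy gradient theorem of Silver et al.), and it buys you precisely what the paper's sketch omits: an explicit account of why dropping $\nabla_\theta q_{\pi_\theta}$ at a fixed action is not a gap but the engine that generates the discounted state distribution. What the paper's version buys is brevity --- a few formal lines once the occupancy form is granted. Two shared features are worth noting: your final change of variables is the same back-substitution step as in the paper, and both arguments inherit the same constant-factor blemish, since the unnormalized occupancy $\sum_{k\ge 0}\gamma^k \Pr(S_k=\cdot)$ has total mass $1/(1-\gamma)$; your $\propto$ in the definition of $d_{\pi,\gamma}$ means your final identity, like the theorem as stated, holds with equality only for the unnormalized measure and otherwise up to that constant.
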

\begin{proof}[Proof]
\begin{align*}
    \nabla_\theta J(\theta) &= \nabla_\theta \int d_0(s) v_{\pi_\theta} (s) ds \\
    &= \nabla_\theta \int d_0(s) \left(\int d_{\pi, \gamma}(s) \pi_\theta (a|s) q_{\pi_\theta} (s, a) da\; ds\right) ds\ \\
    &= \nabla_\theta \int d_0(s) \left(\int d_{\pi, \gamma}(s)  p(\xi) q_{\pi_\theta} (s, f_\theta(\xi; s)) d\xi \; ds \right) ds && \text{(by reparameterization)} \\    
    &= \int d_0(s) \left( \int  d_{\pi, \gamma}(s) p(\xi) \nabla_\theta q_{\pi_\theta} (s, f_\theta(\xi; s)) d\xi \; ds \right) ds \\
    &= \int d_0(s) \left( \int d_{\pi, \gamma}(s)  p(\xi) \nabla_\theta f_\theta(\xi; s) \nabla_a q_{\pi_\theta} (s, a) \vert_{a=f_\theta (\xi; s)} d\xi \; ds \right) ds && \text{(using chain rule)} \\
    & \propto  \int d_{\pi, \gamma}(s)  p(\xi) \nabla_\theta f_\theta(\xi; s) \nabla_a q_{\pi_\theta} (s, a) \vert_{a=f_\theta (\xi; s)} d\xi \; ds\ \\
    &= \int  d_{\pi, \gamma}(s) \pi_\theta(a|s) \nabla_\theta f_\theta(\xi; s) \vert_{\xi=h_\theta (a;s)} \nabla_a q_{\pi_\theta} (s, a) da \; ds && \text{(by back substitution)} \\ 
    &= \mathbb{E}_{S \sim d_{\pi, \gamma}, A \sim \pi_\theta} \left[ \nabla_\theta f_\theta(\xi; S) \vert_{\xi=h_\theta (A;S)} \nabla_A q_{\pi_\theta} (S, A) \right].
\end{align*}
\label{thm:updated_rpg_proof}
\end{proof}

\subsection{Action Value Gradient Theorem}

\begin{theorem}[Action Value Gradient Theorem]
\label{theroem:avg}
Given an MDP and a policy objective $J(\theta) \doteq \int d_0(s) v^{\text{Ent}}_{\pi_\theta} (s) ds$. The action value gradient is given as\\
\[\nabla_\theta J(\theta) = \mathbb{E}_{S \sim d_{\pi, \gamma}, A \sim \pi_\theta} \left[ \nabla_\theta f_\theta(\xi; S) \vert_{\xi=h_\theta (A;S)} \nabla_A (q_{\pi_\theta} (S, A) - \eta \log\pi_\theta (A|S)) \right].\]
\end{theorem}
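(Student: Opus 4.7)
The plan is to adapt the proof of Theorem \ref{theroem:rpg} to the entropy-regularized setting by leveraging the identity $v^{\text{Ent}}_{\pi_\theta}(s) = \int \pi_\theta(a\mid s)\bigl(q^{\text{Ent}}_{\pi_\theta}(s,a) - \eta\log\pi_\theta(a\mid s)\bigr)\,da$, which follows from the definition of the entropy-augmented value functions introduced in the Background section together with $\mathcal{H}(\pi_\theta(\cdot\mid s)) = -\int \pi_\theta(a\mid s)\log\pi_\theta(a\mid s)\,da$. Substituting this into $J(\theta)=\int d_0(s)\,v^{\text{Ent}}_{\pi_\theta}(s)\,ds$ and mirroring the unrolling step used in Theorem \ref{theroem:rpg} gives $\nabla_\theta J(\theta)\propto \int d_{\pi,\gamma}(s)\,\nabla_\theta\!\left[\int \pi_\theta(a\mid s)\bigl(q_{\pi_\theta}(s,a)-\eta\log\pi_\theta(a\mid s)\bigr)\,da\right]ds$, where the recursive $\theta$-dependence of $q_{\pi_\theta}$ has been absorbed into $d_{\pi,\gamma}$ exactly as in the standard argument.

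Next I would apply the reparameterization $a = f_\theta(\xi;s)$ with $\xi\sim p$ to both integrands inside the gradient. The $q_{\pi_\theta}$ piece is handled identically to Theorem \ref{theroem:rpg}: interchanging differentiation and the $\xi$-integral and applying the chain rule produces $\nabla_\theta f_\theta(\xi;s)\,\nabla_a q_{\pi_\theta}(s,a)\big|_{a=f_\theta(\xi;s)}$. For the entropy piece $-\eta\int p(\xi)\log\pi_\theta(f_\theta(\xi;s)\mid s)\,d\xi$, differentiation produces two contributions because $\theta$ appears both in the evaluation point and in the log-density itself: a direct score-function term $[\nabla_\theta\log\pi_\theta(a\mid s)]\big|_{a=f_\theta(\xi;s)}$ and a pathwise term $\nabla_\theta f_\theta(\xi;s)\,[\nabla_a\log\pi_\theta(a\mid s)]\big|_{a=f_\theta(\xi;s)}$.

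The crucial step is killing the direct score-function contribution. Taking the $\xi$-expectation and changing variables back to $A\sim\pi_\theta$, that term becomes $\mathbb{E}_{A\sim\pi_\theta}[\nabla_\theta\log\pi_\theta(A\mid s)]$, which vanishes by the standard score-function identity $\int\nabla_\theta\pi_\theta(a\mid s)\,da = \nabla_\theta\!\int \pi_\theta(a\mid s)\,da = 0$. Only the pathwise term survives, so the entropy contribution collapses to $-\eta\int p(\xi)\,\nabla_\theta f_\theta(\xi;s)\,[\nabla_a\log\pi_\theta(a\mid s)]\big|_{a=f_\theta(\xi;s)}\,d\xi$, which can be factored together with the $q$ gradient to give $\nabla_\theta f_\theta(\xi;s)\,\nabla_a\bigl(q_{\pi_\theta}(s,a)-\eta\log\pi_\theta(a\mid s)\bigr)\big|_{a=f_\theta(\xi;s)}$.

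Finally, I would perform the back-substitution $\xi=h_\theta(A;s)$ (valid because $h_\theta$ is the inverse of $f_\theta$), converting the $\xi$-integral back to an $A\sim\pi_\theta$ integral, and then combine with the outer $d_{\pi,\gamma}$ integral to obtain the claimed expression as an expectation. The main obstacle I anticipate is making the first unrolling rigorous: the entropy bonus acts like an extra per-step reward $\eta\,\mathcal{H}(\pi_\theta(\cdot\mid S_t))$, and pushing it through the Bellman-style expansion so that its $\theta$-dependence through future state visitations is correctly attributed to $d_{\pi,\gamma}$ requires the same care as the corresponding step in the standard policy gradient theorem; after that, the reparameterization manipulation and the score-function cancellation are mechanical.
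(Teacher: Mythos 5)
Your proposal follows essentially the same route as the paper's proof in Appendix A.2: substitute the entropy-augmented value written as $\int \pi_\theta(a\mid s)\bigl(q_{\pi_\theta}(s,a)-\eta\log\pi_\theta(a\mid s)\bigr)\,da$, absorb the recursive $\theta$-dependence into $d_{\pi,\gamma}$, reparameterize via $a=f_\theta(\xi;s)$, exchange gradient and integral, apply the chain rule, and back-substitute $\xi=h_\theta(a;s)$. The one place you go beyond the paper is the entropy term: the paper's derivation applies the chain rule only through the path $f_\theta(\xi;s)$ (writing $\log\pi$ without tracking the direct $\theta$-dependence of the density), whereas you explicitly split $\nabla_\theta\log\pi_\theta(f_\theta(\xi;s)\mid s)$ into a pathwise term and a direct score-function term, and kill the latter via $\mathbb{E}_{A\sim\pi_\theta}\!\left[\nabla_\theta\log\pi_\theta(A\mid s)\right]=0$. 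This cancellation is exactly what justifies the step the paper performs silently, so your writeup is, if anything, the more complete version of the same argument.
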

\begin{proof}[Proof]
\begin{align*}
    \nabla_\theta J(\theta) &= \nabla_\theta \int d_0(s) v^{\text{Ent}}_{\pi_\theta} (s) ds \\
    &= \nabla_\theta \int d_0(s) \left(\int d_{\pi, \gamma}(s) \pi_\theta (a|s) (q_{\pi_\theta} (s, a) - \eta \mathcal{H}(\cdot|s)) da\; ds\right) ds\ \\
    &= \nabla_\theta \int d_0(s) \left(\int d_{\pi, \gamma}(s) \pi_\theta (a|s) (q_{\pi_\theta} (s, a) - \eta \log\pi(a|s)) da\; ds\right) ds\ \\
    &= \nabla_\theta \int d_0(s) \left(\int d_{\pi, \gamma}(s)  p(\xi) (q_{\pi_\theta} (s, f_\theta(\xi; s)) - \eta \log\pi(f_\theta(\xi; s)|s)) d\xi \; ds \right) ds \\    
    &= \int d_0(s) \left( \int  d_{\pi, \gamma}(s) p(\xi) \nabla_\theta \left(q_{\pi_\theta} (s, f_\theta(\xi; s)) - \eta \log\pi(f_\theta(\xi; s)|s) \right) d\xi \; ds \right) ds \\
    &= \int d_0(s) \left( \int d_{\pi, \gamma}(s)  p(\xi) \nabla_\theta f_\theta(\xi; s) \nabla_a (q_{\pi_\theta} (s, a) - \eta \log\pi(f_\theta(\xi; s)|s)) \vert_{a=f_\theta (\xi; s)} d\xi \; ds \right) ds \\
    & \propto  \int d_{\pi, \gamma}(s)  p(\xi) \nabla_\theta f_\theta(\xi; s) \nabla_a (q_{\pi_\theta} (s, a) - \eta \log\pi(f_\theta(\xi; s)|s)) \vert_{a=f_\theta (\xi; s)} d\xi \; ds\ \\
    &= \int  d_{\pi, \gamma}(s) \pi_\theta(a|s) (\nabla_\theta f_\theta(\xi; s) \vert_{\xi=h_\theta (a;s)} \nabla_a( q_{\pi_\theta} (s, a) - \eta \log\pi(a|s)) da \; ds \\ 
    &= \mathbb{E}_{S \sim d_{\pi, \gamma}, A \sim \pi_\theta} \left[ \nabla_\theta f_\theta(\xi; S) \vert_{\xi=h_\theta (A;S)} \nabla_A (q_{\pi_\theta} (S, A) - \eta \log\pi(A|S)) \right].
\end{align*}
We get the third line since $\forall s$, the term $\log\pi(A|s)$ is unbiased estimate of $\mathcal{H}(\cdot|s)$ so we can write $\mathcal{H}(\cdot|s)=\mathbb{E}[\log\pi(A|s)], \forall s$.

\label{thm:updated_avg_proof}
\end{proof}


\subsection{Related Theoretical Works}
\label{app:theory}
We review relevant theoretical works on the convergence of actor-critic algorithms. To the best of our knowledge, there is no existing proof for the exact action value gradient (AVG) algorithm used in our paper. However, there are studies of algorithms similar to ours that provide some theoretical justification.

We begin by considering the case without entropy regularization. \citet{xiong2022deterministic} examine the convergence of deterministic policy gradient \citep[DPG;][]{silver2014deterministic} algorithms. Their online version of DPG employs i.i.d. samples of states from the stationary distribution, which differs from the single stream of experience examined in our study. In addition, DPG uses a deterministic policy with a fixed exploration noise distribution, whereas AVG also learns the exploration parameter. Nevertheless, this work is one of the closest to ours, as it uses the reparameterized gradient estimator in the update. Besides, \citet{bhatnagar2007incremental} provide convergence guarantees for incremental actor-critic algorithms. However, their results are based on the likelihood-ratio estimator, and thus applicable to incremental actor critic but not AVG.

When considering the entropy-regularized objective, most studies assume the presence of the true gradient \citep{mei2020global,cen2022fast}, with the exception of \citet{ding2021beyond}, which provides an asymptotic convergence guarantee to stationary points for entropy-regularized actor-critic algorithms. However, their algorithm differs from AVG in two aspects: First, the samples used in their update are from the discounted stationary distribution; second, they also use the likelihood-ratio estimator. Nonetheless, their work offers valuable insights into the theoretical underpinnings of the entropy-regularized objective.

Despite these differences, it is reasonable to hypothesize that our algorithm converges, given the convergence guarantees for algorithms closely related to ours. The techniques from these works may be useful for demonstrating the convergence of our algorithm. For example, we can extend the convergence analysis for deterministic policies in \citet{xiong2022deterministic} to the general case of reparameterized policies, as shown in Appendix \ref{app:convergence_proof}.


\clearpage
\newpage

\section{AVG Design Choices}

\textbf{Orthogonal Initialization} helps improve the training stability and convergence speed of neural networks by ensuring that the weight matrix has orthogonal properties, thereby preserving the variance of the input through the layers \citep{saxe2013exact}.

\textbf{Squashed Normal Policy} SAC utilizes a \emph{squashed Normal}, where the unbounded samples from a Normal distribution are passed through the \texttt{tanh} function to obtain bounded actions in the range $[-1, 1]$ :
$A_{\theta} = f_\theta (\epsilon; S) = \texttt{tanh} (\mu_\theta(S) + \sigma_\theta(S) \epsilon)$ where $\epsilon \sim \mathcal{N}(0,1)$.
This parameterization is useful for entropy-regularized RL objectives, which maximizes the return based on the maximum-entropy formulation. With an unbounded Normal policy, the standard deviation $\sigma$ has a linear relationship with entropy. Hence, learning to maximize entropy can often result in very large values of $\sigma$, potentially leading to behavior resembling a uniform random policy. In contrast, for a univariate squashed Normal with zero mean, increasing $\sigma$ does not continuously maximize the entropy; it decreases after a certain threshold.

\begin{wrapfigure}{r}{0.29\textwidth}
    \centering
    \vspace{-0.5cm}
    \begin{subfigure}{.29\textwidth}
        \includegraphics[width=\columnwidth]{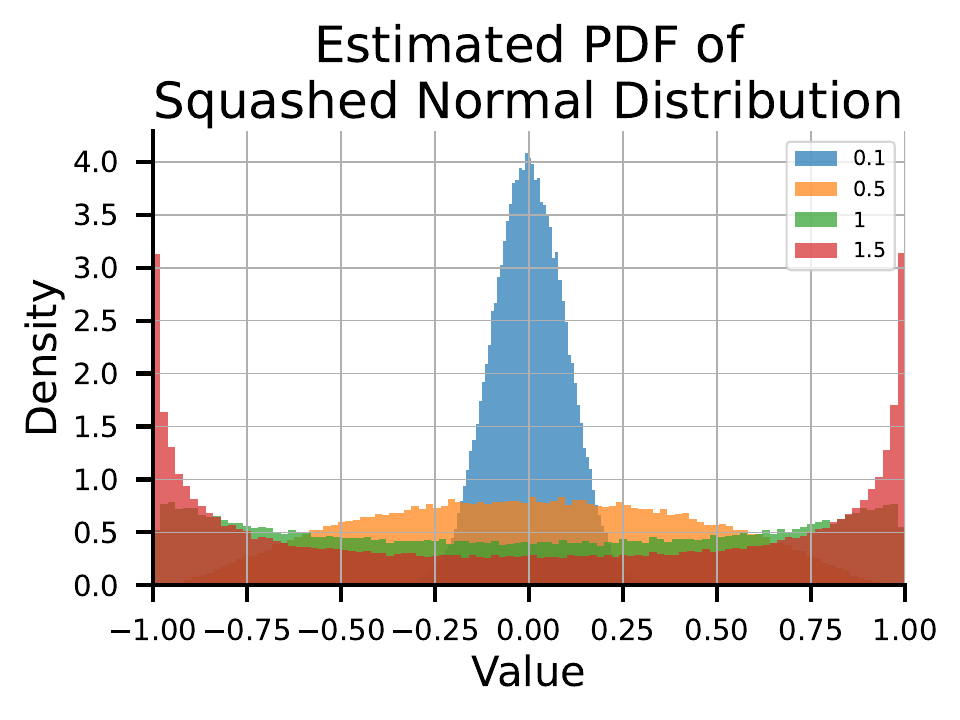}
    \end{subfigure}
    \caption{Squashed Normal Distribution PDF}
    \label{fig:squashed_gaussian_pdf}
    \vspace{-0.5cm}
\end{wrapfigure}
\textbf{Entropy Regularization} Given that batch methods such as SAC benefit from entropy regularization, we consider variants of AVG with and without entropy regularization. There are two types of entropy terms that can be added to the actor, and critic updates: 1) \emph{distribution entropy}: $\mathcal{H}(\pi(\cdot|S))$, and 2) \emph{sample entropy}: $-\log(\pi(A|S))$. We use sample entropy as our final choice in Algorithm \ref{algo:avg}, which utilizes sample entropy for the regularization of both the actor and Q-network.

Simply increasing $\sigma$ does not maximize the entropy of a univariate squashed Normal with zero mean (see Fig. \ref{fig:entropy}).
Increasing $\sigma$ results in the probability density function (PDF) of a squashed Normal concentrating at the edges (Fig. \ref{fig:squashed_gaussian_pdf}), resembling bang-bang control \citep{seyde2021bang}.

\subsection{Relative Performance of Different Hyperparameter Configurations in Random Search}

\begin{figure}[ht]
    \centering
    \begin{subfigure}{.32\textwidth}
        \includegraphics[width=\columnwidth]{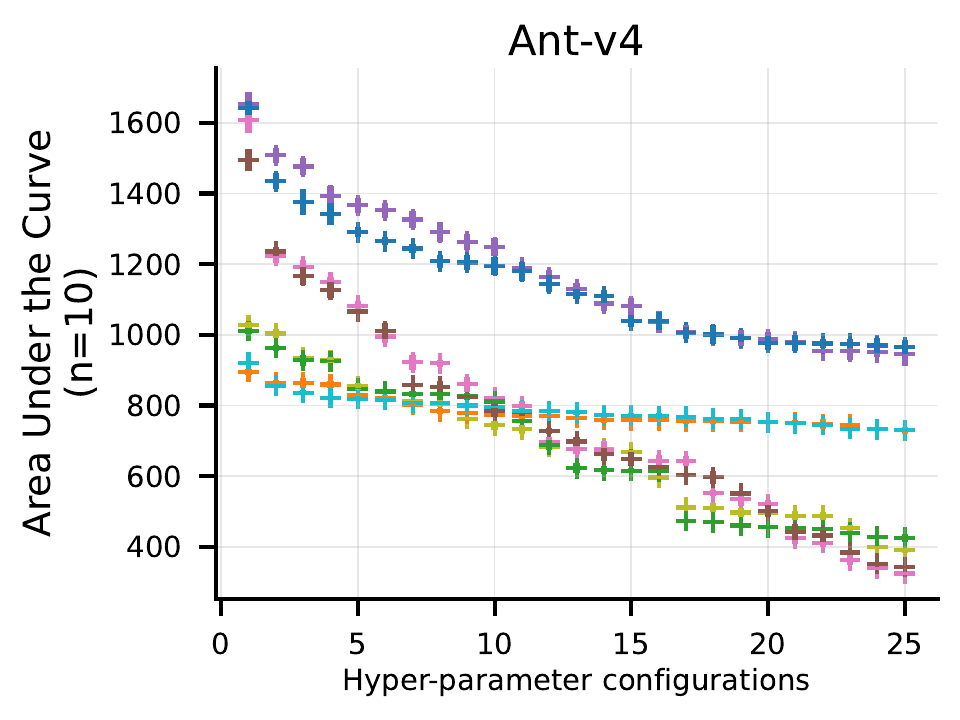}
    \end{subfigure}
    \begin{subfigure}{.32\textwidth}
        \includegraphics[width=\columnwidth]{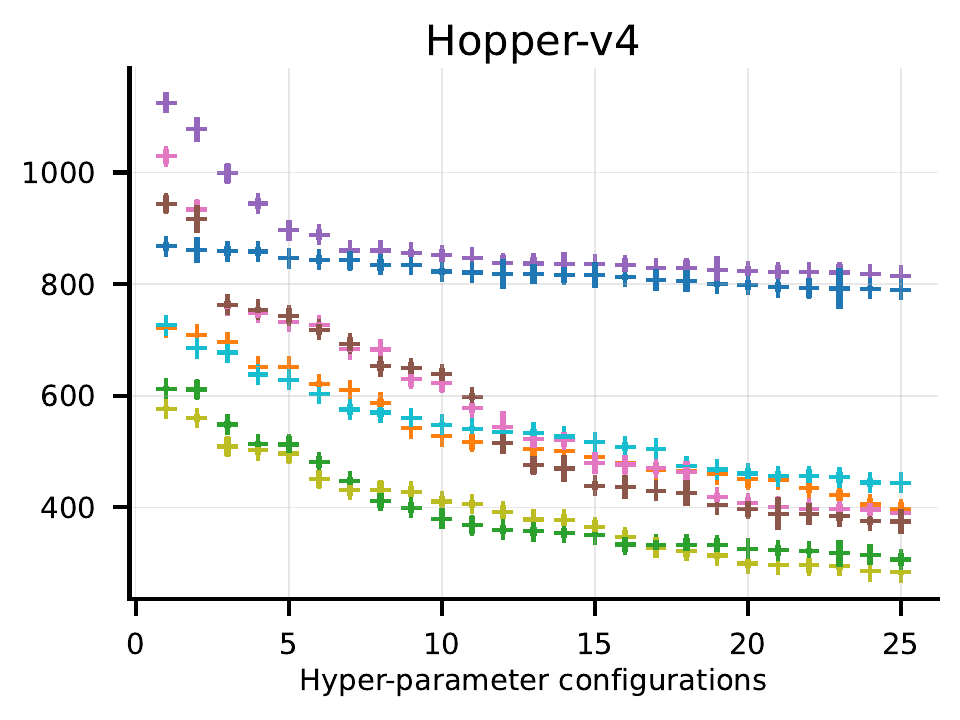}
    \end{subfigure}
        \begin{subfigure}{.32\textwidth}
        \includegraphics[width=\columnwidth]{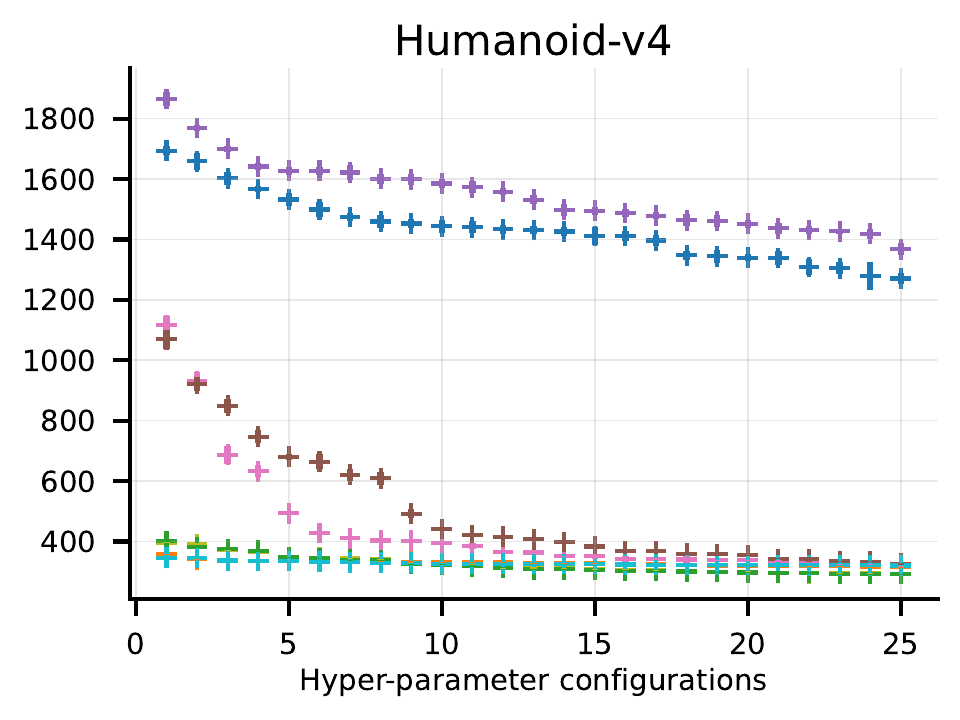}
    \end{subfigure}
    \newline
    \begin{subfigure}{.75\textwidth}
        \includegraphics[width=\columnwidth]{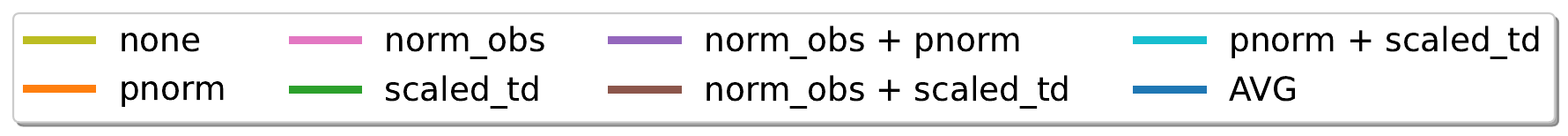}
    \end{subfigure}
    \caption{Hyperparameter Evaluation via Random Search. Scatter plot of the performance of the best 25 out of 300 unique hyper-parameter configurations. Note that the y-axis represents the area under the curve for 2M timesteps, not an evaluation of the final policy for 10M timesteps.}   
    \label{fig:norm_ablation_auc}
\end{figure}

We conduct an ablation study to evaluate the impact of these techniques on the performance of AVG. We assess these techniques both individually and in combination, resulting in a total of 8 variants. 
AVG, without any normalization or scaling techniques, serves as the \textit{baseline}.
We test 300 unique hyper-parameter configurations for each variant, trained for $2M$ timesteps with 10 random seeds on the Ant-v4, Hopper-v4 and Humanoid-v4 environments. 
We then calculate the average undiscounted return for each run (i.e., the area under the curve [AUC]) and average the AUC across all 10 seeds.
We plot the top 30 hyper-parameter configurations in Fig. \ref{fig:norm_ablation_auc} as a scatter plot, ranked in descending order from highest to lowest mean AUC.
Each plot point represents the mean AUC, and the thin lines denote the standard error.
Note that a point for a hyper-parameter configuration is plotted only if the configuration runs without diverging on all 10 random seeds.
These plots can be indicative of which variant would obtain the highest average episodic return when trained for longer and the robustness of the variants to the choice of hyper-parameters.

\subsection{The Effect of Other Network Normalization Techniques}
\label{app:layer_norm}

\begin{figure}[htp]
    \centering
    \begin{subfigure}{.3\textwidth}
        \includegraphics[width=\columnwidth]{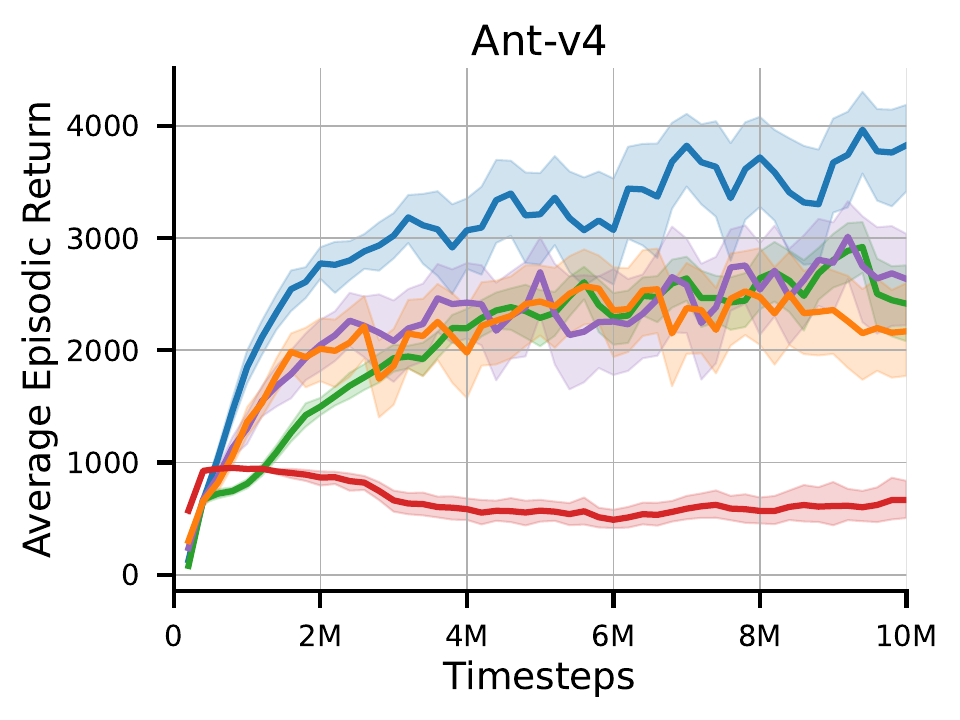}
    \end{subfigure}
    \begin{subfigure}{.3\textwidth}
        \includegraphics[width=\columnwidth]{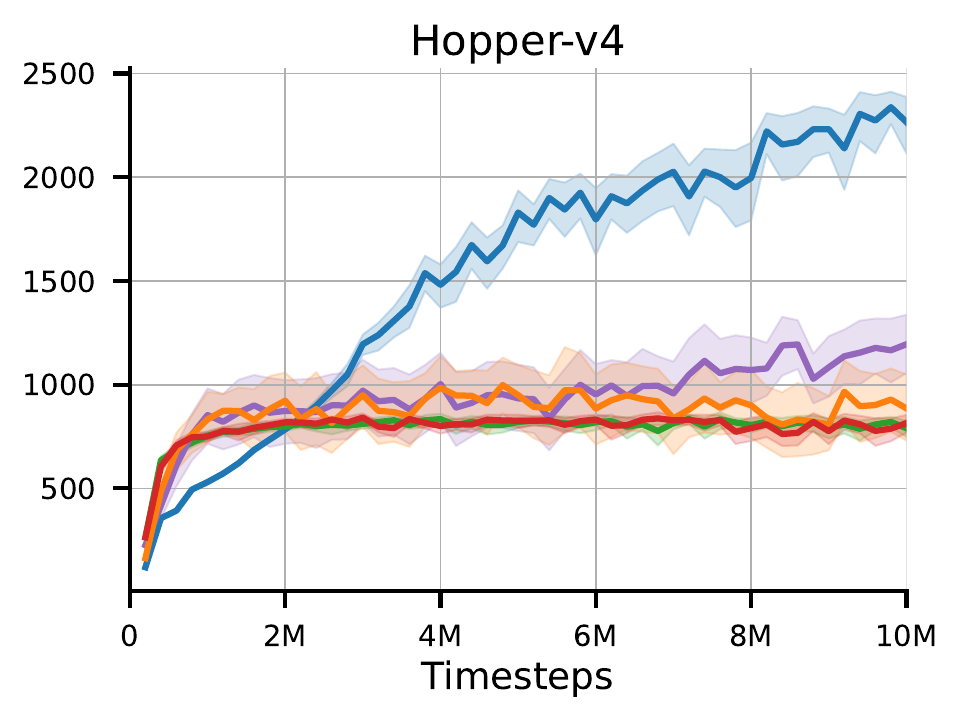}
    \end{subfigure}
    \begin{subfigure}{.3\textwidth}
        \includegraphics[width=\columnwidth]{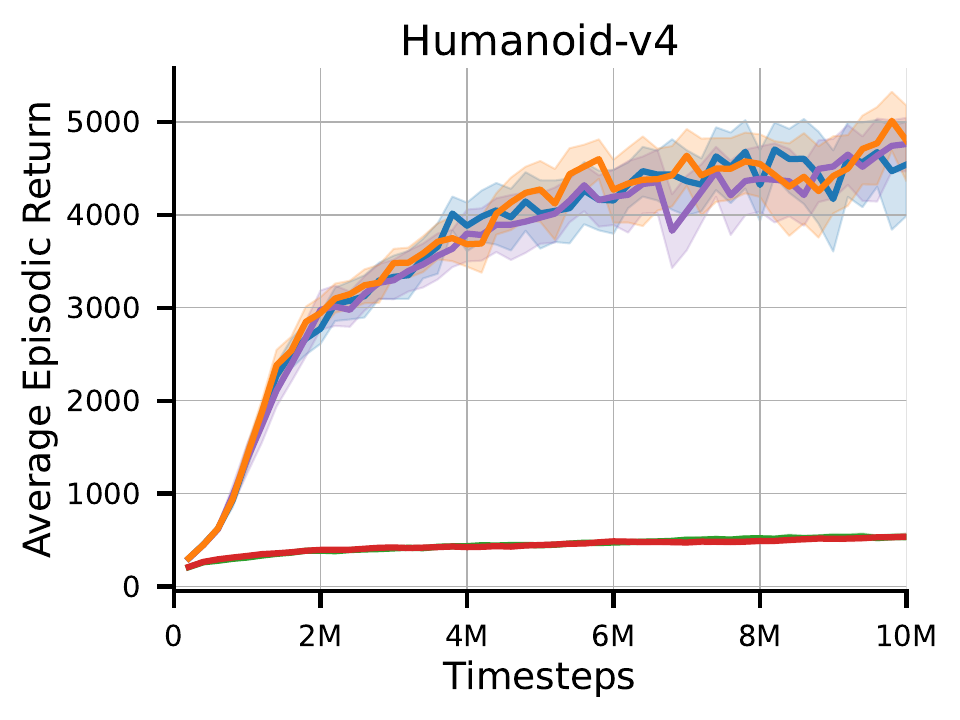}
    \end{subfigure}
     \begin{subfigure}{.8\textwidth}
        \includegraphics[width=\columnwidth]{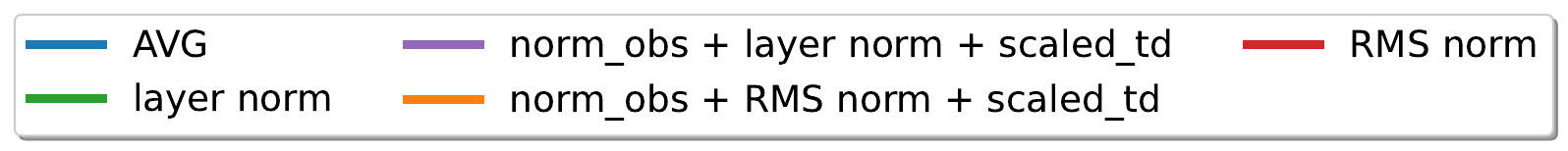}
    \end{subfigure}   
    \caption{Comparing different neural network feature normalizations --- penultimate normalization (pnorm) against layer normalization (layer norm) and RMS normalization (RMS norm) }
    \label{fig:other_norms}
    \vspace{-0.2cm}
\end{figure}

When comparing different neural network feature normalizations, penultimate normalization (pnorm), layer normalization (layer norm), and RMS normalization (RMS norm) each offer unique advantages depending on the architecture and task. Layer normalization \citep{ba2016layer} is widely used for stabilizing hidden layer pre-activations by normalizing across features within each layer, which helps models converge consistently across a variety of tasks. 
Root mean square layer normalization, on the other hand, normalizes based on the root mean square of pre-activations, providing stability without fully normalizing the mean, which can be beneficial in reducing variance across diverse input patterns \citep{zhang2019root}. 
Penultimate normlazation focuses on the penultimate layer activations, normalizing just before the final layer, which allows it to maintain high-quality feature representations critical for downstream performance \citep{bjorck2022high}.

Figure \ref{fig:other_norms} indicates that AVG, which uses penultimate normalization, consistently outperform those with other normalizations. We use the random search procedure described earlier to identify the hyperparameter configuration for each variant.

\subsection{Vision-Based Learning using AVG}
\label{app:visual_learning}
This task involves moving a two-degree-of-freedom (DoF) planar arm's fingertip to a random spherical target on a 2D plane. It includes two sub-tasks (easy, hard), that vary in target and fingertip sizes. It is an adaptation of the dm\_control reacher \cite{tassa2018deepmind}.

For the non-visual task, observations include the fingertip's position, speed, and the fingertip-to-target vector. For the visual task, the fingertip-to-target vector is removed, and the agent receives three consecutive stacked images of size $ 84 \times 84 \times 3$.

The action space consists of torques applied to the two joints, scaled from $[-1, -1]$ to $[1, 1]$. The reward function is modified to give $-1$ per step, encouraging shorter episodes. After each timeout, the fingertip is reset to a random location while the target remains unchanged. Episodes terminate when the fingertip reaches the target within its size, and upon termination, the agent is reset and a new target is randomly generated for the next episode.

\textbf{Convolutional Neural Network Architecture} Our convolutional neural network (CNN) architecture comprises four convolutional layers, followed by a combination of a Spatial Softmax layer and proprioception information. 
The convolutional layers have 32 output channels and 3x3 kernels, with stride of two for the first three layers and one for the last layer.
After these convolutional layers, we use spatial-softmax \citep{levine2016end} to convert the encoding vector into soft coordinates to track the target more precisely. 
Additionally, proprioception information is concatenated with the spatial softmax features. 
The exact number of parameters depends on the input data size and task-specific requirements. 
The two MLP layers have 512 hidden units each.
All the layers except the final output layer use ReLU activation.


\section{AVG with Target Q Networks}
\label{app:avg_targetQ}
\begin{algorithm}[H]
    \caption{Action Value Gradient With Target Q-Networks}
    \label{algo:avg_targetQ}
    \begin{spacing}{1.2}
    \begin{algorithmic}
        \STATE {\bfseries Initialize} $\gamma$, $\eta$, $\alpha_{\pi}$, $\alpha_Q$, $\tau$ \\
        $\theta$, $\phi$ \textcolor{Bittersweet}{with penultimate normalization} \\
        \textcolor{MidnightBlue}{$n \gets 0, \mu \gets 0, \overline{\mu} \gets 0$} \\
        \textcolor{PineGreen}{$\bm{n}_\delta \gets [0, 0, 0], \bm{\mu}_\delta \gets [0, 0, 0], \bm{\overline{\mu}}_\delta \gets [0, 0, 0]$} \\
        $\bar{\phi} \gets \phi $ (target Q-network)
        \FOR{however many episodes}
            \STATE  Initialize S (first state of the episode)
            \STATE \textcolor{MidnightBlue}{$S, n, \mu, \overline{\mu}, \_ \gets$ \texttt{Normalize}($S, n, \mu, \overline{\mu}$)} \\
            \STATE \textcolor{PineGreen}{ $G \gets 0$}
            \WHILE{S is not terminal}
                \STATE $A_{\theta} = f_\theta (\epsilon; S) \text{ where } \epsilon \sim \mathcal{N}(0,1)$
                \STATE Take action $A_\theta$, observe $S', R$
                \STATE \textcolor{MidnightBlue}{$S', n, \mu, \overline{\mu}, \_ \gets$ \texttt{Normalize}($S', n, \mu, \overline{\mu}$)}
                \STATE \textcolor{PineGreen}{$\sigma_\delta, \bm{n}_\delta, \bm{\mu}_\delta, \bm{\overline{\mu}}_\delta\gets \texttt{ScaleTDError}(R, \gamma, \emptyset, \bm{n}_\delta, \bm{\mu}_\delta, \bm{\overline{\mu}}_\delta)$}
                \STATE \textcolor{PineGreen}{ $G \gets G + R$}
                \STATE $A' \sim \pi_\theta (\cdot \vert S')$
                \STATE $\delta \leftarrow  R + \gamma (Q_{\bar{\phi}}(S', A') - \eta \log \pi_\theta (A' | S')) - Q_{\phi}(S, A_{\theta})$
                \STATE \textcolor{PineGreen}{$\delta \gets \delta / \sigma_\delta$}
                \STATE $\phi  \leftarrow \phi + \alpha_Q \delta \; \nabla_{\phi} \; Q_{\phi}(S, a) \vert_{a=A_{\theta}}$
                \STATE $\theta \leftarrow \theta + \alpha_\pi \nabla_\theta (Q_{\phi}(S, A_{\theta}) - \eta\log{\pi_{\theta}(A_{\theta}| S)})$\\
                \STATE $\bar{\phi} \gets (1-\tau)\bar{\phi} + \tau \phi $
                \STATE $S\leftarrow S'$
        \ENDWHILE
        \STATE \textcolor{PineGreen}{$\sigma_\delta, \bm{n}_\delta, \bm{\mu}_\delta, \bm{\overline{\mu}}_\delta  \gets \texttt{ScaleTDError}(R, 0, G, \bm{n}_\delta, \bm{\mu}_\delta, \bm{\overline{\mu}}_\delta)$}
        \ENDFOR
    \end{algorithmic}
    \end{spacing}
\end{algorithm}

\textbf{Is there a trade-off between not storing experiences in a replay buffer or target networks and maintaining robustness?}

A large replay buffer can place a heavy memory burden, especially for onboard and edge devices with limited memory. Therefore, we need computationally efficient alternatives to replay buffers that can help consolidate learned experiences over time. \cite{lan2023memory} explore this trade-off by introducing memory-efficient reinforcement learning algorithms based on the deep Q-network (DQN) algorithm. Their approach can reduce forgetting and maintain high sample efficiency by consolidating knowledge from the target Q-network to the current Q-network while only using small replay buffers. 

In Fig. \ref{fig:avg_targetQ}, our results indicate no advantage to using target networks in the conventional manner for AVG. However, drawing inspiration from \cite{lan2023memory}, it may be worthwhile to explore a trade-off by eliminating replay buffers while still consolidating knowledge in a target Q-network. This remains an intriguing and open area for future research.

\clearpage
\newpage

\section{Incremental Soft Actor Critic (SAC-1)}
\label{app:sac_1}


\begin{algorithm}[H]
    \caption{Incremental SAC (SAC-1)}
    \label{algo:sac}
    \begin{spacing}{1.25}
    \begin{algorithmic}[1]
        \STATE {\bfseries Initialize} policy parameters $\theta$, Q-function parameters $\phi_1$, $\phi_2$, discount factor $\gamma$, polyak averaging coefficient $\rho$ and learnable entropy coefficient $\alpha_\eta$\\
        \STATE {\bfseries Initialize} target Q-network parameters $\bar{\phi}_1 \leftarrow \phi_1, \bar{\phi}_2 \leftarrow \phi_2$ \\
        \FOR{however many episodes}
            \STATE  Initialize S (first state of the episode)
            \WHILE{S is not terminal}
                \STATE Sample action $A \sim \pi_\theta(\cdot | S)$ \\ 
                \STATE Execute action $A$ in the environment \\
                \STATE Observe next state $S'$, reward $R$\\
                \STATE $A' \sim \pi_\theta (\cdot \vert S')$
                \FOR{$i \in \{1, 2\}$}
                    \STATE $\delta \leftarrow  R + \gamma (Q_{\bar{\phi}_i}(S', A') - \eta \log \pi_\theta (A' | S')) - Q_{\phi_i}(S, A)$
                    \STATE $\phi  \leftarrow \phi + \alpha_Q \delta \; \nabla_{\phi} \; Q_{\phi}(S, A)$ \hfill $\triangleright$ \textcolor{Gray}{Critic update}
                \ENDFOR
                \STATE $X_\theta \sim f_\theta(\xi; S)$ where $\xi \sim \mathcal{N}(0, 1)$
                \STATE $\theta \gets \theta + \nabla_{\theta} \left(\min_{i=1,2} Q_{\phi_i}(S, X_\theta) - \alpha \log{\pi_{\theta}(X_\theta | S)} \right) \vert_{\alpha=\alpha_\eta} $ \hfill $\triangleright$ \textcolor{Gray}{Actor update}
                \STATE $ \eta \gets \eta - \nabla_{\eta} \alpha_\eta (-\log{\pi_{\theta}(X | S)} - target\_entropy) \vert_{X=X_\theta} $
                \FOR{$i \in \{1, 2\}$}
                \STATE $\bar{\phi}_i \leftarrow \rho \phi_i + (1-\rho) \bar{\phi}_i$ \hfill $\triangleright$ \textcolor{Gray}{Update target networks}
                \ENDFOR
                \STATE $ S \gets S'$
        \ENDWHILE
    \ENDFOR
    \end{algorithmic}
    \end{spacing}
\end{algorithm}

We outline the components of AVG and SAC for clarity:

{
\renewcommand{\arraystretch}{1.25} 
\begin{table}[h]
\centering
\begin{tabular}{|c|c|}
\hline
\textbf{SAC} & \textbf{AVG} \\
\hline
1 actor & 1 actor \\ \hline
2 Q networks (i.e., double Q-learning) & 1 Q network \\ \hline
2 target Q networks & 0 target networks \\ \hline
Learned entropy coefficient $\eta$ & Fixed entropy coefficient $\eta$ \\ \hline
Replay buffer \(\mathcal{B}\) & No buffers \\
\hline
\end{tabular}
\vspace{0.2cm}
\caption{Comparison of SAC and AVG algorithms}
\end{table}
}

In addition, SAC is off-policy, whereas AVG is on-policy. SAC samples an action and stores it in the buffer. Unlike AVG, SAC's action is not reused to update the actor.


\clearpage
\newpage

\section{Incremental Actor Critic}
\label{app:iac}

\begin{wrapfigure}{r}{0.57\textwidth}
\begin{minipage}{0.57\textwidth}
\vspace{-0.8cm}
\begin{algorithm}[H]
    \caption{Incremental Actor Critic (IAC)}
    \label{algo:iac}
    \begin{spacing}{1.25}
    \begin{algorithmic}
        \STATE {\bfseries Initialize}  $\theta$, $\phi$, $\gamma$, $\eta$, $\alpha_{\pi}$, $\alpha_V$ \\
        \FOR{however many episodes}
            \STATE  Initialize S (first state of the episode)
            \WHILE{S is not terminal}
            \STATE $A \sim \pi_\theta (\cdot | S) $ \\ 
            \STATE Take action $A$, observe $S', R$ \\
            \STATE $\delta \leftarrow R + \gamma V_{\phi}(S') - V_\phi(S)$
            \STATE $\phi \leftarrow \phi + \alpha_V \delta \; \nabla_\phi  V_\phi(S)$
            \STATE $\theta \leftarrow \theta + \alpha_{\pi} \nabla_\theta (\log{(\pi_\theta (A | S)) \; \delta} + \eta \mathcal{H}(\pi_\theta(\cdot | S))$
            \STATE $S\leftarrow S'$
            \ENDWHILE          
        \ENDFOR
    \end{algorithmic}
    \end{spacing}
\end{algorithm}
\vspace{-0.2cm}
\end{minipage}
\end{wrapfigure}

We consider the one-step actor-critic by \cite{sutton2018reinforcement}, where the \emph{actor} (i.e., policy) and \emph{critic} (i.e., value function) are updated incrementally, as new transitions are observed, rather than waiting for complete episodes or batches of data.
We also drop the discount correction term in actor updates since it often leads to poor performance empirically \citep{nota2020policy}.

We also consider an entropy regularization term in the actor and critic objectives to encourage exploration and discourage premature convergence to a deterministic policy \citep{williams1991function, mnih2016asynchronous}. In the following subsection, we examine both distribution entropy and sample entropy, finding that distribution entropy performs better empirically. The pseudocode for our implementation of IAC is detailed in Alg.\ \ref{algo:iac}.

\subsection{Ablation Study of IAC: Distribution against Sample Entropy}

\begin{figure}[htp]
    \centering
    \begin{subfigure}{.32\textwidth}
        \includegraphics[width=\columnwidth]{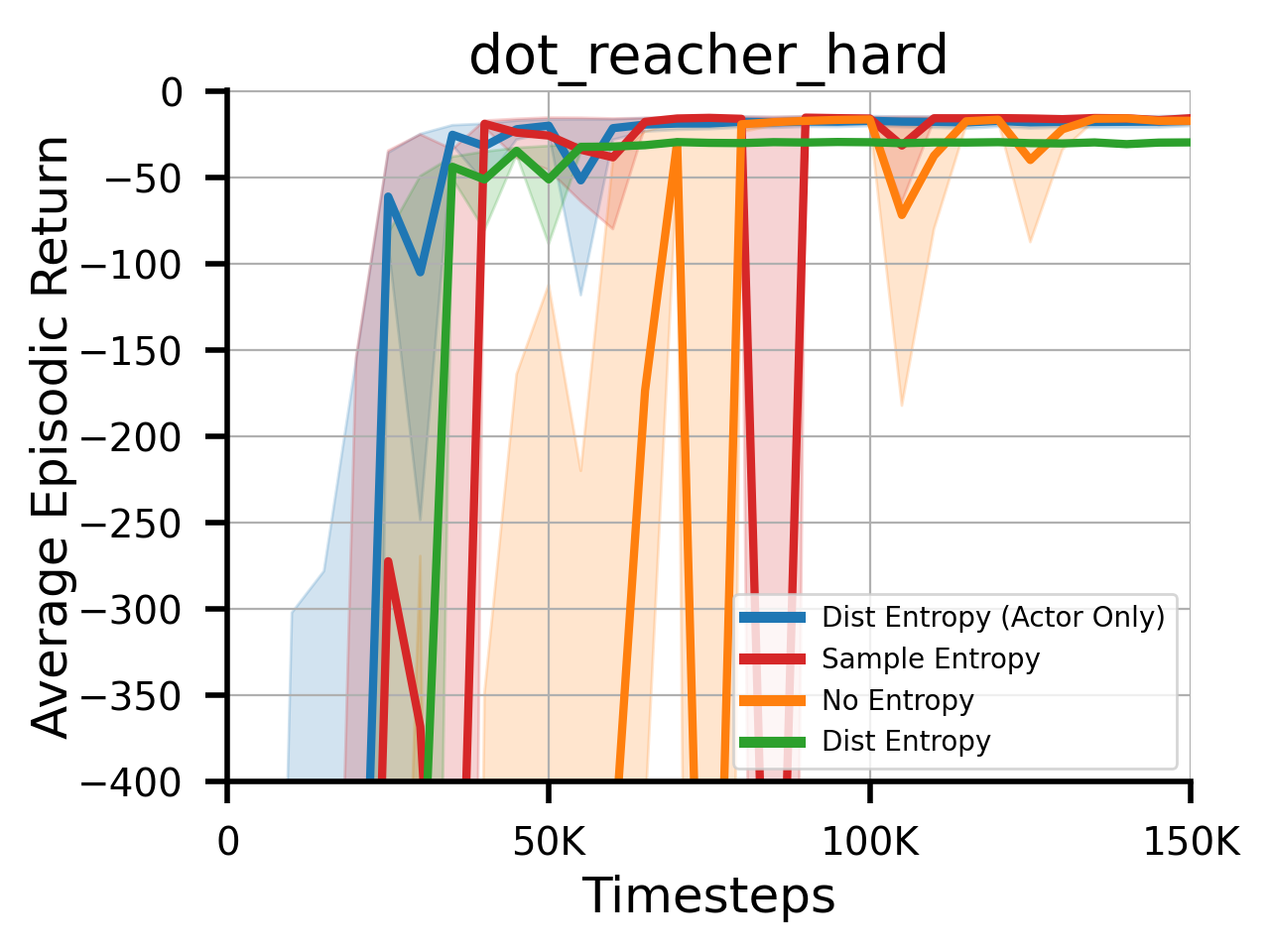}
    \end{subfigure}
    \begin{subfigure}{.32\textwidth}
        \includegraphics[width=\columnwidth]{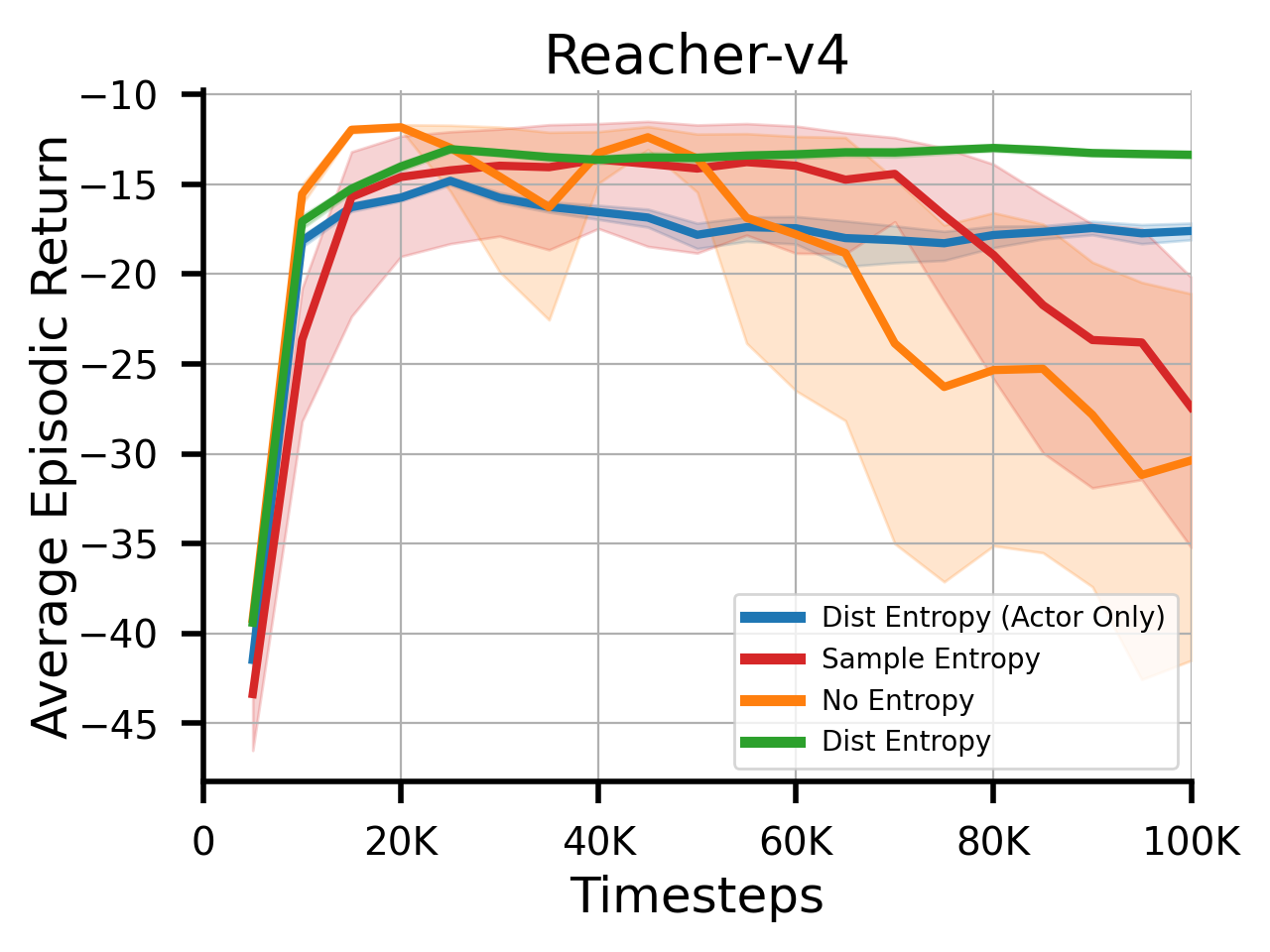}
    \end{subfigure}
    \begin{subfigure}{.32\textwidth}
        \includegraphics[width=\columnwidth]{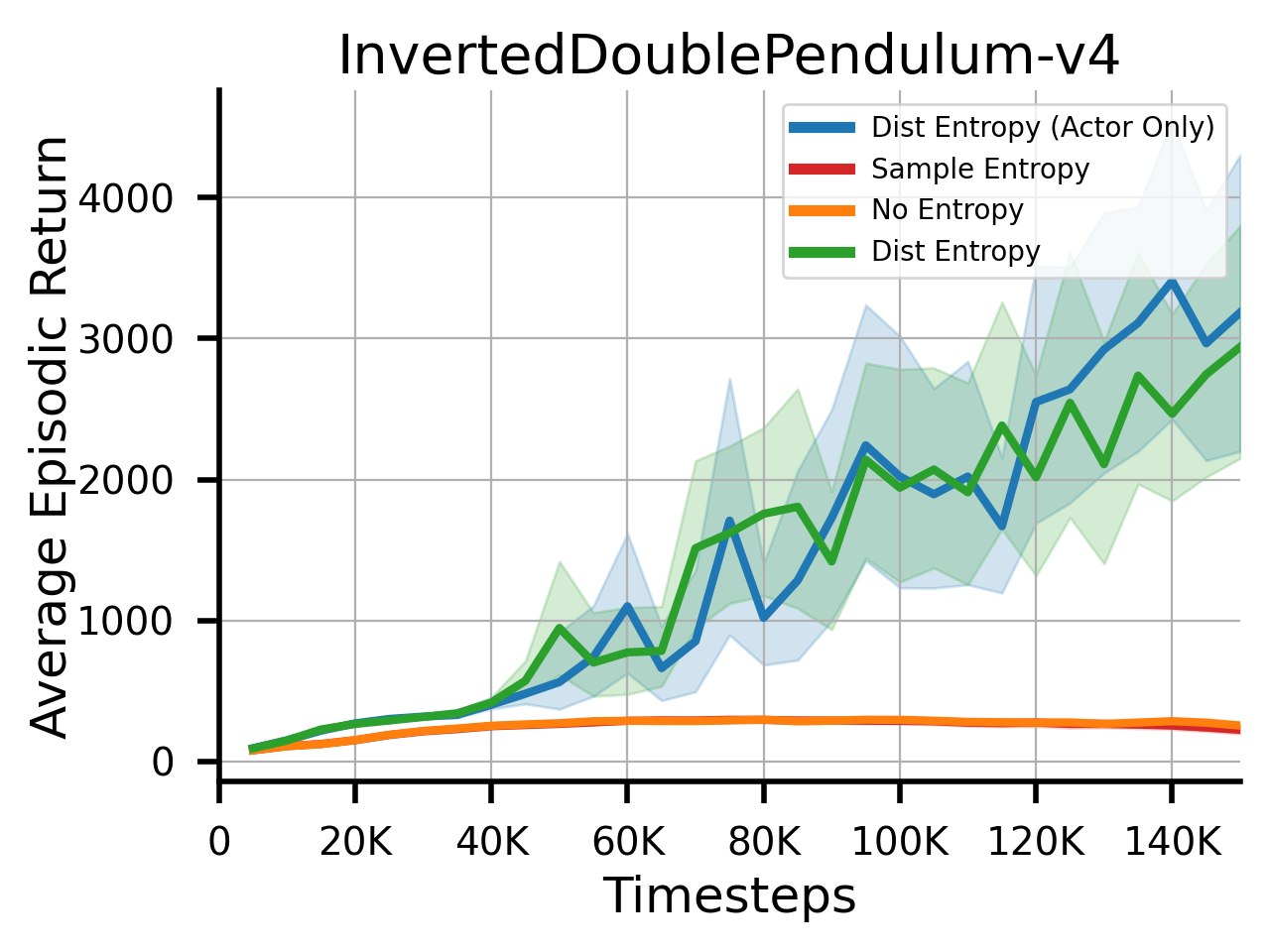}
    \end{subfigure}
    \caption{Performance of IAC Variants. Learning curves of the best hyper-parameter configurations found via random search for each task variant.  Each solid curve is averaged over 30 independent runs. The shaded regions represent a 95\% confidence interval.}
    \label{fig:lc_iac}
\end{figure}

\begin{figure}[htp]
    \centering
    \begin{subfigure}{.24\textwidth}
        \includegraphics[height=2.5cm,width=\columnwidth]{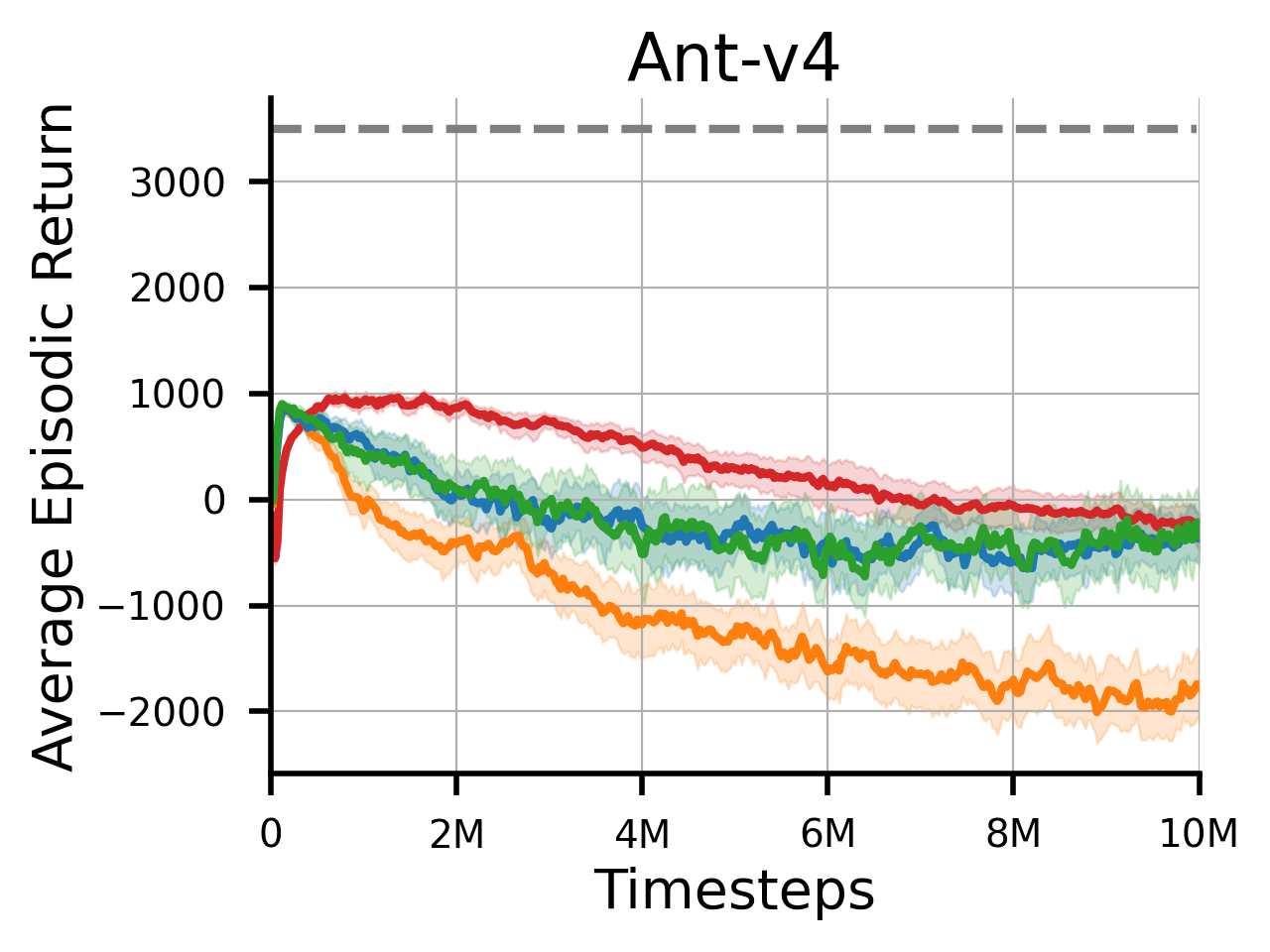}
    \end{subfigure}
    \begin{subfigure}{.24\textwidth}
        \includegraphics[height=2.5cm,width=\columnwidth]{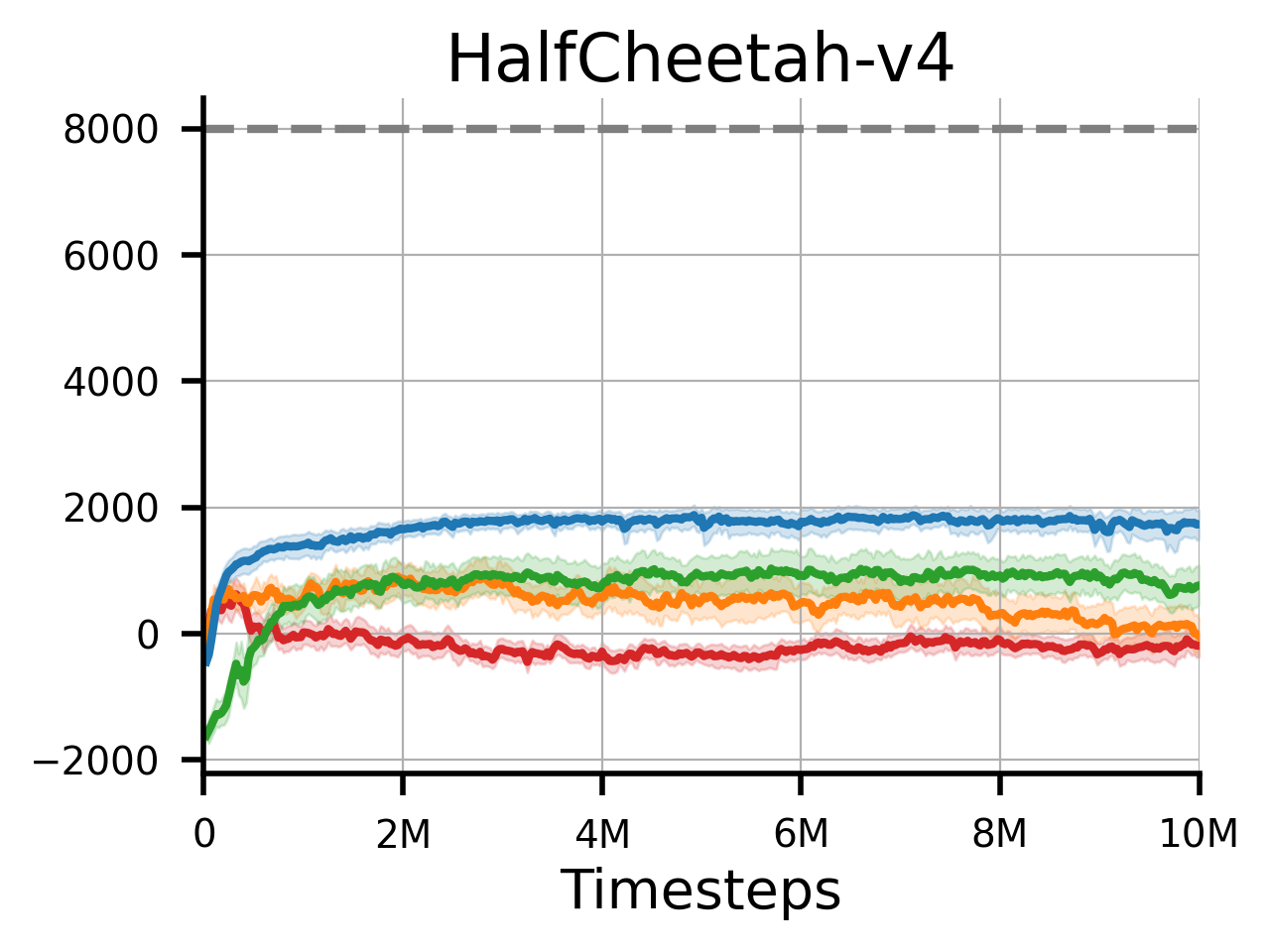}
    \end{subfigure}
    \begin{subfigure}{.24\textwidth}
        \includegraphics[height=2.5cm,width=\columnwidth]{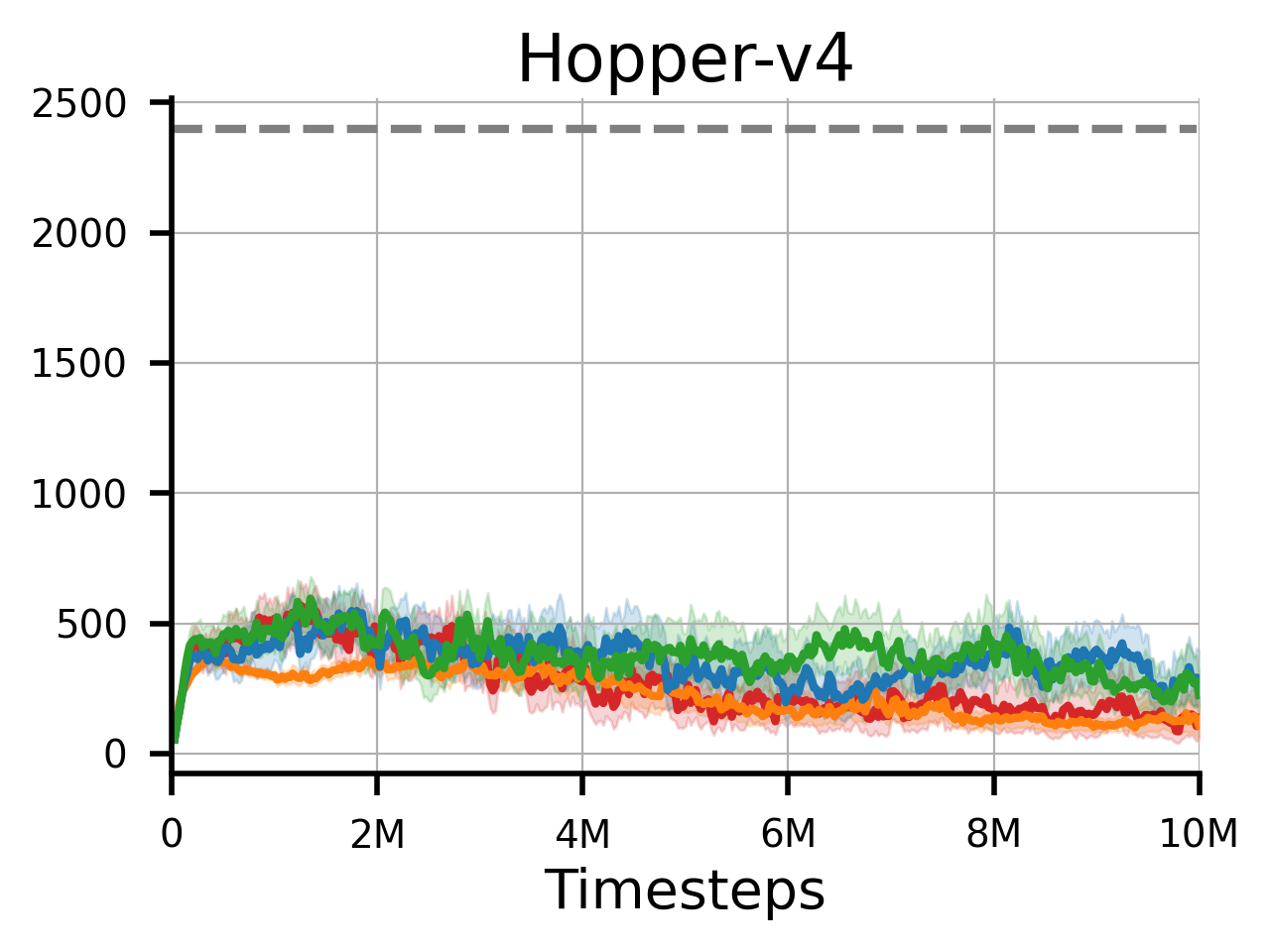}
    \end{subfigure}
    \begin{subfigure}{.24\textwidth}
        \includegraphics[height=2.5cm,width=\columnwidth]{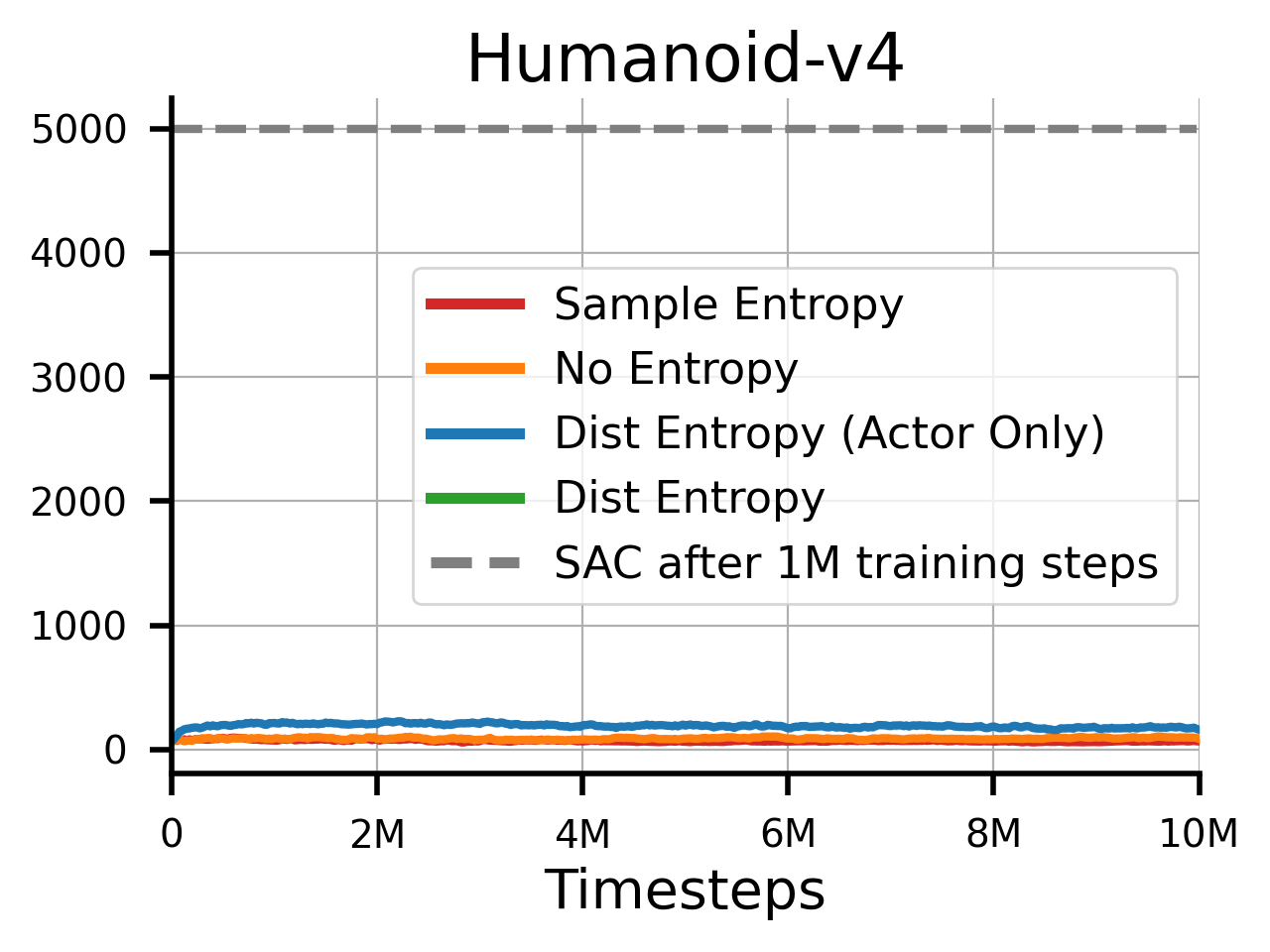}
    \end{subfigure}
    \caption{Performance of Incremental Actor Critic (IAC) Variants. Each solid learning curve is an average of 30 independent runs. The shaded regions represent a $95\%$ confidence interval.}
    \label{fig:iac_10M}
\end{figure}

\newpage
\subsection{Impact of Normalization \& Scaling on IAC}
\label{app:iac_norm}
\begin{figure}[ht]
    \centering
    \begin{subfigure}{.32\textwidth}
        \includegraphics[width=\columnwidth]{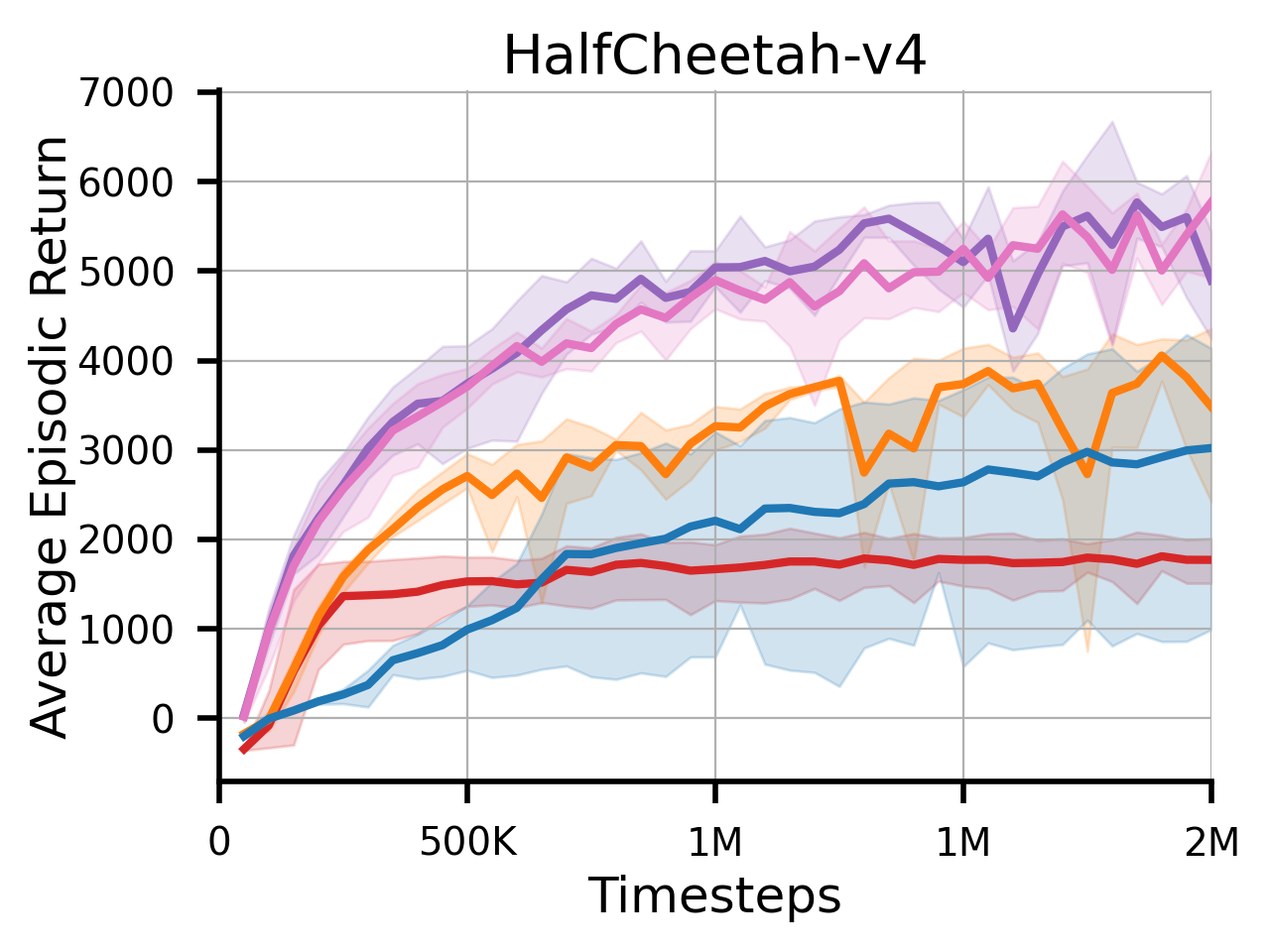}
    \end{subfigure}
    \begin{subfigure}{.32\textwidth}
        \includegraphics[width=\columnwidth]{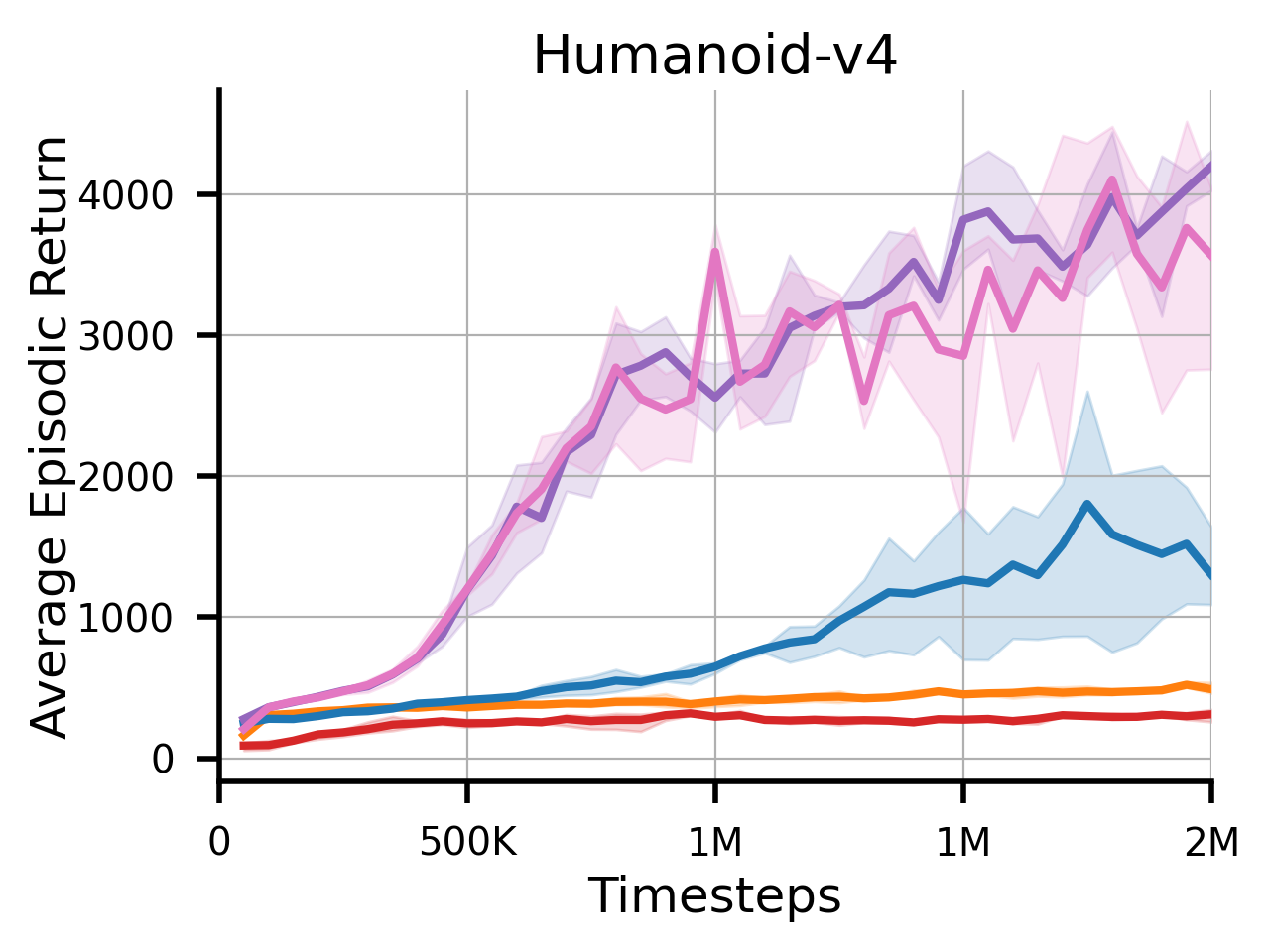}
    \end{subfigure}
    \newline
    \begin{subfigure}{.75\textwidth}
        \includegraphics[width=\columnwidth]{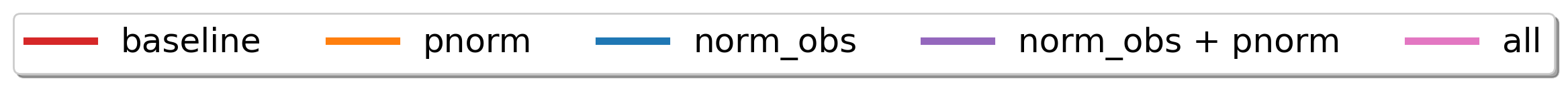}
    \end{subfigure}
    \caption{Ablation study of normalization and scaling techniques used with IAC (Algo. \ref{algo:iac}). We plot the learning curves of the best hyper-parameter configurations for each task variant. Each solid learning curve is an average of 3 independent runs. The shaded regions represent a 95\% confidence interval.}   
    \label{fig:iac_all_ablation}
\end{figure}

\begin{figure}[ht]
    \centering
    \begin{subfigure}{.32\textwidth}
        \includegraphics[width=\columnwidth]{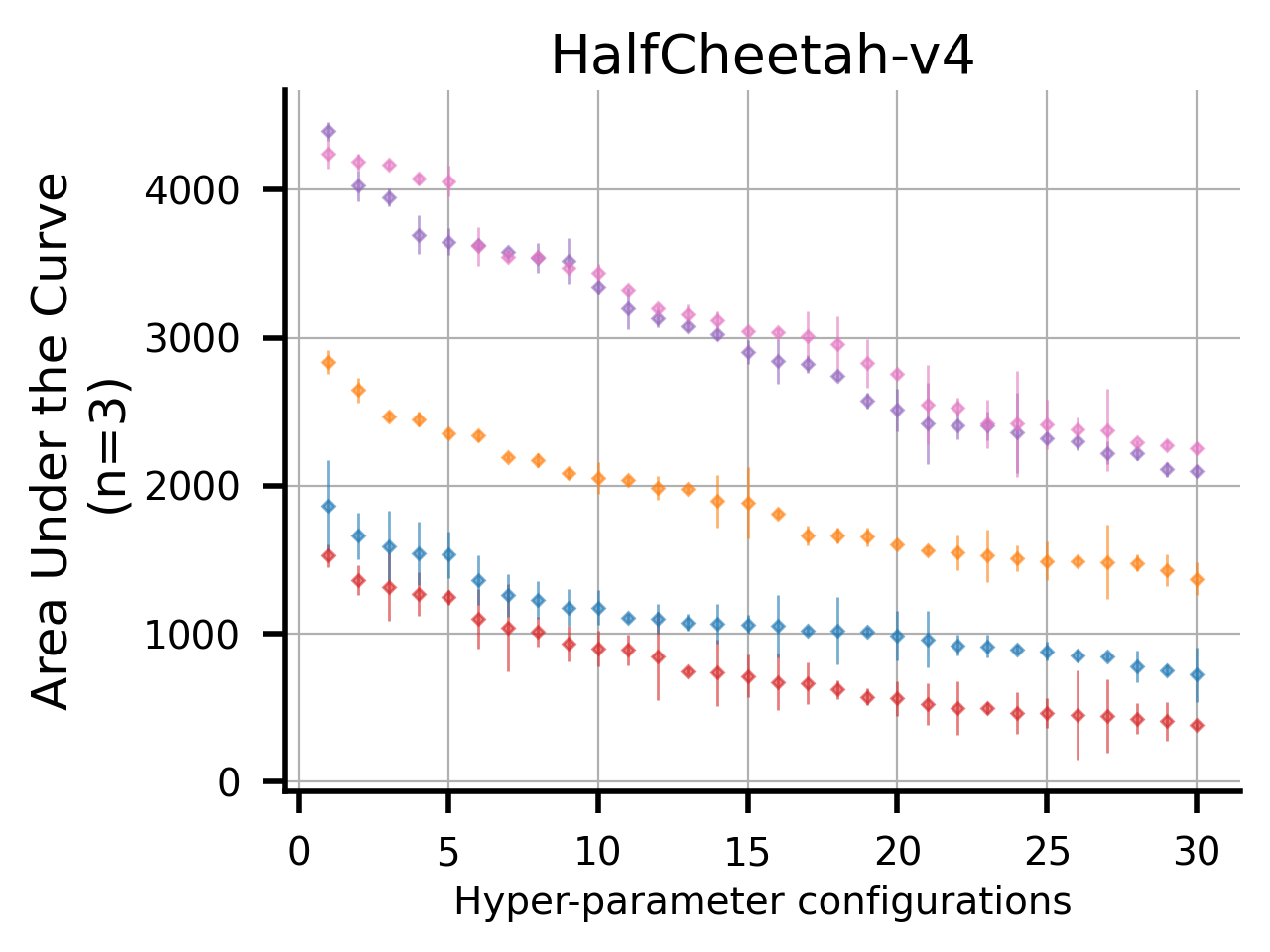}
    \end{subfigure}
    \begin{subfigure}{.32\textwidth}
        \includegraphics[width=\columnwidth]{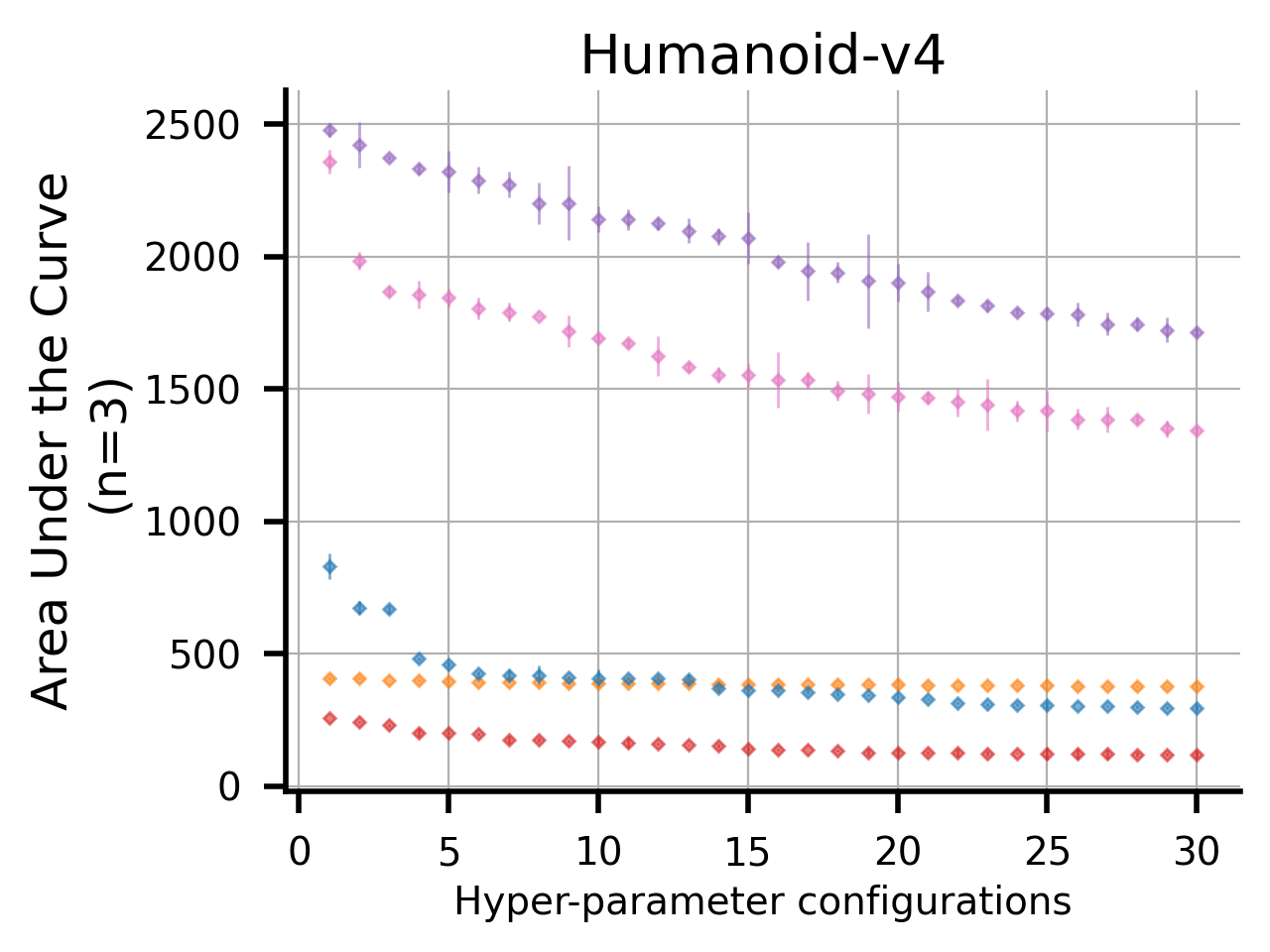}
    \end{subfigure}
    \newline
    \begin{subfigure}{.75\textwidth}
        \includegraphics[width=\columnwidth]{Figures/iac_tricks_ablation/HalfCheetah-v4_avg_ablation_legend.png}
    \end{subfigure}
    \caption{Hyperparameter Evaluation via Random Search. Scatter plot of the performance of the best 30 out of 300 unique hyper-parameter configurations. Note that the y-axis represents the area under the curve, not an evaluation of the final policy.}
    \label{fig:iac_all_random_search}
\end{figure}

Our results show that IAC also benefits from normalization and scaling techniques used in AVG (see Fig. \ref{fig:iac_all_ablation}).
The hyperparameter evaluation via random search (Fig. \ref{fig:iac_all_random_search}) highlights the top 30 configurations out of 300, with the y-axis representing area under the curve rather than final policy performance. 

\clearpage
\newpage

\section{Hyper-parameter Settings in Simulation}
\label{app:deep_rl_params}

\subsection{Choice of Hyper-parameters for PPO}
Proximal Policy Optimization (PPO) \cite{schulman2017proximal} introduced PPO, an on-policy policy gradient method.
It incorporates proximal optimization ideas to prevent large policy updates, improving stability through its carefully designed surrogate objective.



We use an off-the-shelf implementation of PPO from CleanRL that can be found here: \url{https://github.com/vwxyzjn/cleanrl/blob/8cbca61360ef98660f149e3d76762350ce613323/cleanrl/ppo_continuous_action.py}

\begin{table}[ht]
\centering
\begin{tabular}{ll}
\toprule
Parameter & Default Value \\
\midrule
Update Every & 2048 \\
Minibatch Size & 32 \\
GAE Lambda $(\lambda)$ & 0.95 \\
Discount factor $(\gamma)$ & 0.99 \\
Num. Optimizer Epochs & 10 \\
Entropy Coefficient & 0 \\
Learning Rate & $3 \times 10^{-4}$ \\
Clip Coefficient ($\epsilon$) & 0.1 \\
Value Loss Coefficient & 0.5 \\
Max Grad Norm & 0.5 \\
\bottomrule
\end{tabular}
\vspace{10pt}
\caption{Default parameters for CleanRL PPO implementation.}
\end{table}

\subsection{Choice of Hyper-parameters for TD3}
Twin Delayed Deep Deterministic Policy Gradient (TD3) \cite{fujimoto2018addressing} introduced an off-policy algorithm that builds upon DDPG \citep{lillicrap2016continuous} known as TD3.
Both DDPG and TD3 utilize the reparameterization gradient, albeit for deterministic policies.
They made three key modifications that resulted in better performance: (1) using two deep Q-networks to address overestimation bias, (2) delaying updates of the actor-network to reduce per-update error accumulation, and (3) adding noise to the target action used for computing the critic target values.

We use an off-the-shelf implementation of TD3 from CleanRL  that can be found here: \url{https://github.com/vwxyzjn/cleanrl/blob/8cbca61360ef98660f149e3d76762350ce613323/cleanrl/td3_continuous_action.py}

\begin{table}[ht]
\centering
\begin{tabular}{ll}
\toprule
Parameter & Default Value \\
\midrule
Replay Buffer Size & 1000000 \\
Minibatch Size & 256 \\
Discount factor $(\gamma)$ & 0.99 \\
Policy Noise & 0.2 \\
Exploration Noise & 0.1 \\
Learning Rate & $3 \times 10^{-4}$ \\
Update Every & 2 \\
Noise Clip & 0.5 \\
Learning Starts & 25000 \\
Target Smoothing Coefficient $(\tau)$ & 0.005 \\
\bottomrule
\end{tabular}
\vspace{10pt}
\caption{Default parameters for CleanRL TD3 implementation.}
\end{table}

\subsection{Choice of Hyper-parameters for SAC}

\textbf{Soft Actor-Critic (SAC)} SAC is an off-policy algorithm which uses the reparametrization gradient along with entropy-augmented rewards \citep{haarnoja2018soft}.
While TD3 learns a deterministic policy, SAC learns a stochastic policy.
TD3 adds noise to the target policy for exploration, whereas SAC's stochastic policy inherently explores by sampling actions from a distribution.
We use an adaptation of \cite{vasan2024revisiting} as our baseline SAC implementation.

\begin{table}[ht]
\centering
\begin{tabular}{ll}
\toprule
Parameter & Default Value \\
\midrule
Replay Buffer Size & 1000000 \\
Minibatch Size & 256 \\
Discount factor $(\gamma)$ & 0.99 \\
Learning Rate & $3 \times 10^{-4}$ \\
Update Actor Every & 1 \\
Update Critic Every & 1 \\
Update Critic Target Every & 1 \\
Learning Starts & 100 \\
Target Smoothing Coefficient $(\tau)$ & 0.005 \\
Target Entropy & $\vert \mathcal{A} \vert$ \\
\bottomrule
\end{tabular}
\vspace{10pt}
\caption{Default parameters for SAC implementation.}
\end{table}

\subsection{Hyper-Parameter Optimization Using Random Search}
\label{app:random_search}

Our random search procedure for hyper-parameter optimization first involves initializing a random number generator (RNG) using unique seed values to ensure reproducibility.
Then we use the RNG to sample learning rates for the actor and critic networks, parameters for the Adam optimizer, entropy coefficient, discount factor $(\gamma)$ and polyak averaging constant (if applicable).
The ranges of hyper-parameter values we use in this experiment are listed in Table \ref{table:random_search}.

\begin{table}[ht]
\centering
\begin{tabular}{ll}
\toprule
\textbf{Hyperparameter} & \textbf{Range} \\
\midrule
\texttt{actor\_lr} & $10^{[-2, -6]}$ \\
\midrule
\texttt{critic\_lr} & $10^{[-2, -6]}$ \\
\midrule
\texttt{Optimizer} & Adam \\
\midrule
\texttt{beta1} & $\{0, 0.9\}$ \\
\midrule
\texttt{beta2} &  $0.999$ \\
\midrule
\texttt{alpha\_lr} & $10^{[-5, 0]}$ \\
\midrule
\texttt{gamma} & $\{0.95, 0.97, 0.99, 0.995, 1.0\}$  \\
\midrule
\texttt{critic\_tau} & $0.005 \text{\; if} \ \texttt{algo} \in \{\text{SAC, TD3}\}$ \\
\midrule
\texttt{NN Activation} & Leaky ReLU   \\
\midrule
\texttt{Num. hidden layers} & $2$   \\
\midrule
\texttt{Num hidden units} & $256$   \\
\midrule
\texttt{Weight initialization} & Orthogonal   \\
\midrule
\bottomrule
\end{tabular}
\vspace{10pt}
\caption{Random Search Procedure for Hyperparameters}
\label{table:random_search}
\end{table}

\newpage
\subsection{AVG Hyperparameters Across Tasks}
\label{app:avg_hyp}

\begin{table}[h]
\centering
\begin{tabular}{@{} l c c c c c @{}}
\toprule
\textbf{Envs} & \textbf{actor\_lr} & \textbf{critic\_lr} & \textbf{Adam betas} & \textbf{alpha\_lr} & \textbf{gamma} \\
\midrule
Hopper-v4, Walker2d-v4              & 1.1e-05  & 7.7e-05  & [0.0, 0.999] & 0.3   & 0.99 \\
\midrule
Ant-v4, HalfCheetah-v4, Humanoid-v4 & 0.0063   & 0.0087   & [0.0, 0.999] & 0.07  & 0.99 \\
\midrule
reacher\_hard                       & 3e-06    & 0.0049   & [0.0, 0.999] & 0.05  & 0.97 \\
\midrule
dog\_walk, dog\_trot, dog\_stand    & 6e-06    & 8e-05    & [0.0, 0.999] & 0.009 & 0.95 \\
\midrule
finger\_spin                        & 0.00038  & 8.7e-05  & [0.9, 0.999] & 0.006 & 0.95 \\
\midrule
dog\_run                            & 1.8e-05  & 4.8e-05  & [0.0, 0.999] & 0.007 & 0.97 \\
\midrule
\bottomrule
\end{tabular}
\vspace{10pt}
\caption{Best hyperparameter settings for different tasks after random search.}
\label{tab:avg_hyperparams}
\end{table}

\clearpage
\newpage

\section{Additional Results}

\subsection{Impact of Replay Buffer Size on Learning Performance}

The following figures show the impact of reducing replay buffer size of three state-of-the art deep RL algorithms --- SAC, PPO and TD3; Reducing the size of the replay buffer has detrimental impact on learning performance. Each solid learning curve is an average of 30 independent runs. The shaded regions represent a $95\%$ confidence interval.
These learning curves were also used to generate Fig. \ref{fig:1}.

\begin{figure}[h]
    \centering
    \begin{subfigure}{.24\textwidth}
        \includegraphics[width=\columnwidth]{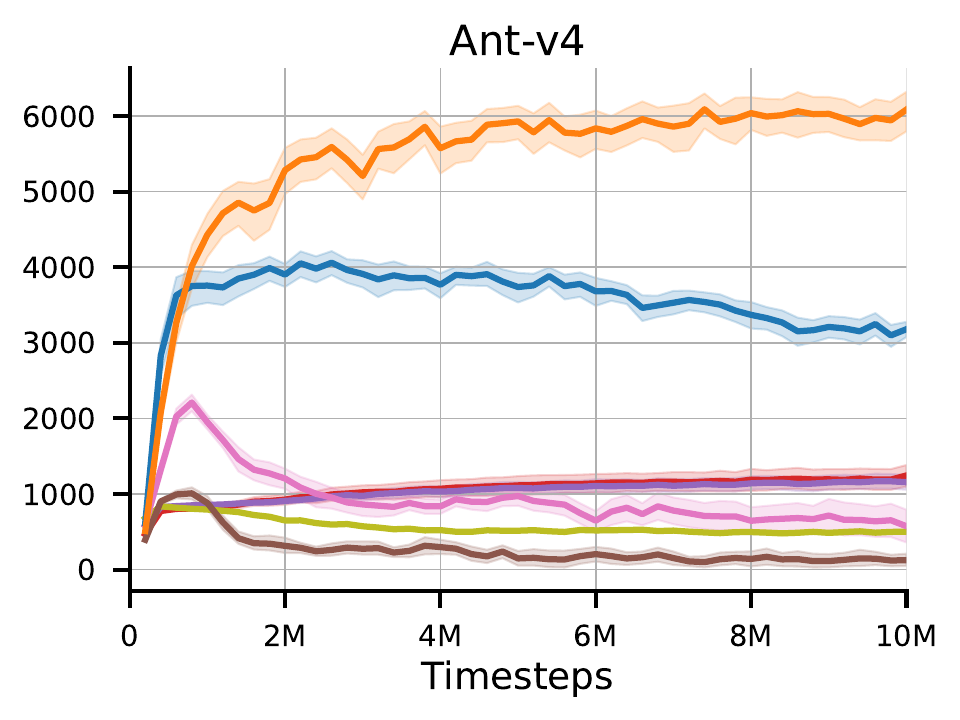}
    \end{subfigure}
    \begin{subfigure}{.24\textwidth}
        \includegraphics[width=\columnwidth]{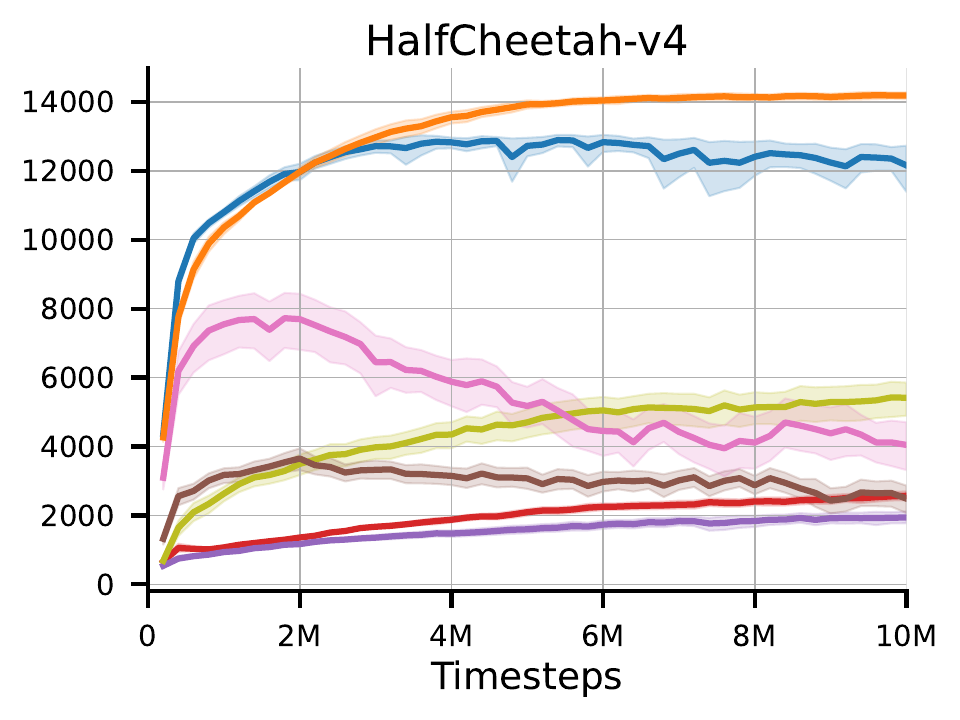}
    \end{subfigure}
    \begin{subfigure}{.24\textwidth}
        \includegraphics[width=\columnwidth]{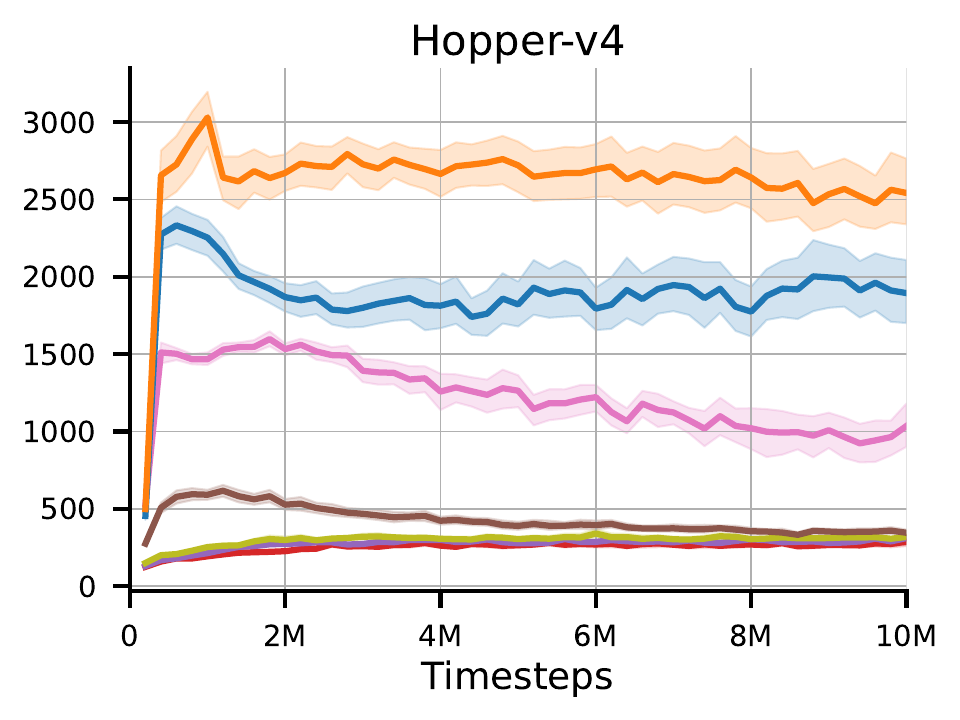}
    \end{subfigure}
    \begin{subfigure}{.24\textwidth}
        \includegraphics[width=\columnwidth]{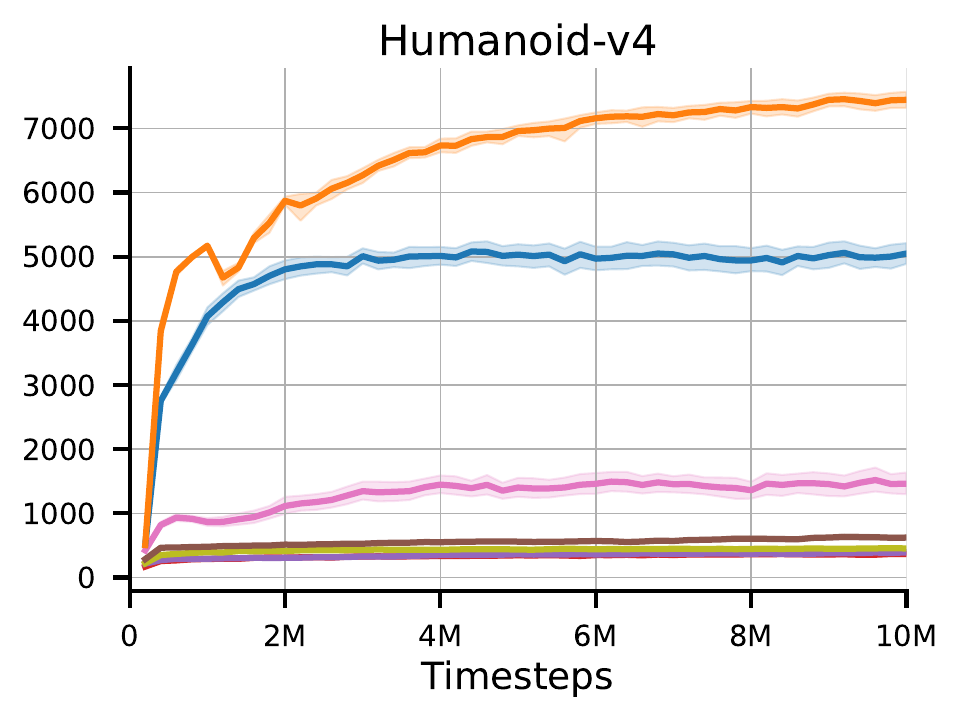}
    \end{subfigure}
    \newline
    \begin{subfigure}{.75\textwidth}
        \includegraphics[width=\columnwidth]{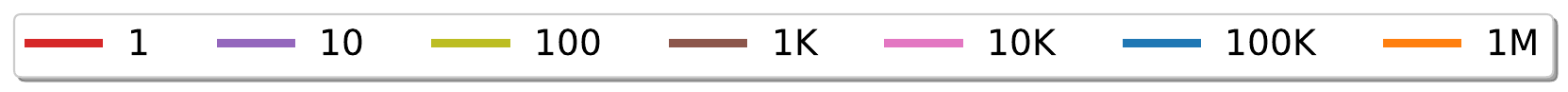}
    \end{subfigure}
    \caption{SAC}
\end{figure}

\begin{figure}[h]
    \centering
    \begin{subfigure}{.24\textwidth}
        \includegraphics[width=\columnwidth]{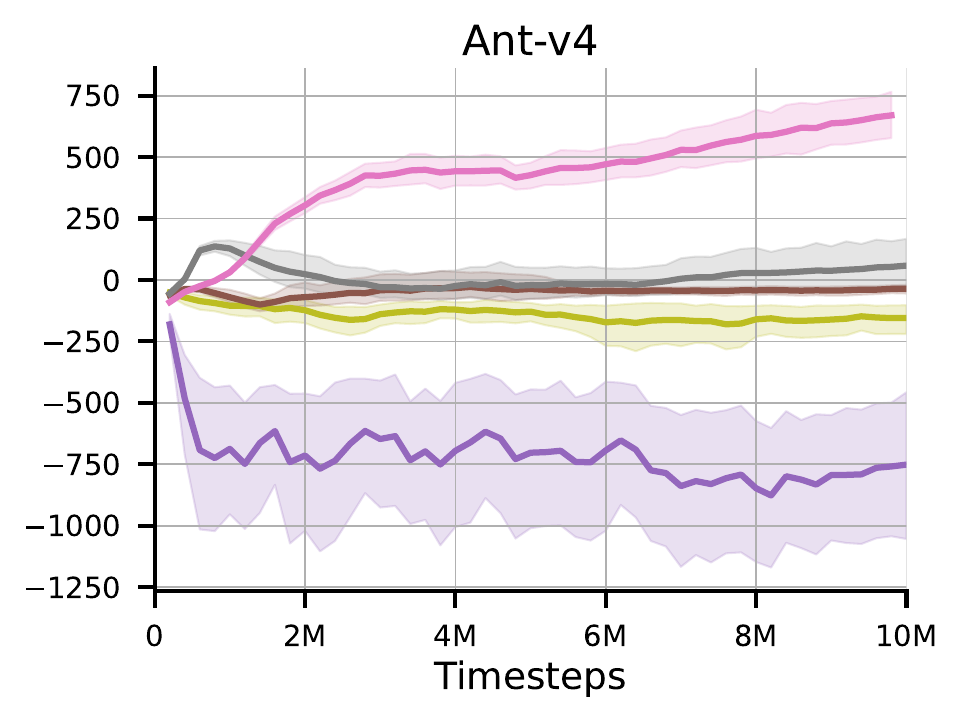}
    \end{subfigure}
    \begin{subfigure}{.24\textwidth}
        \includegraphics[width=\columnwidth]{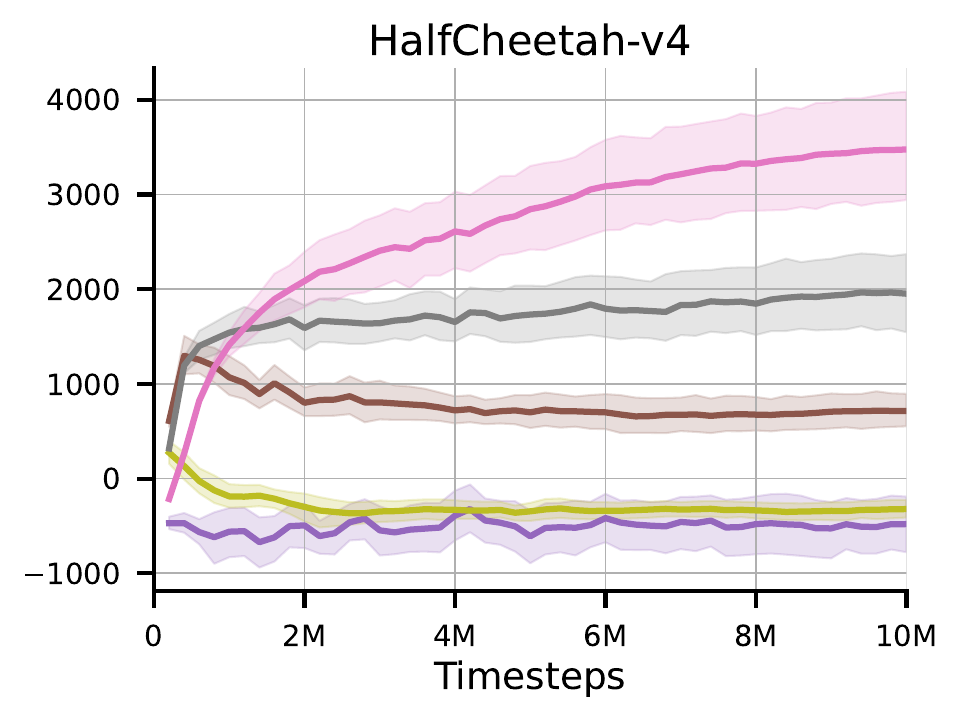}
    \end{subfigure}
    \begin{subfigure}{.24\textwidth}
        \includegraphics[width=\columnwidth]{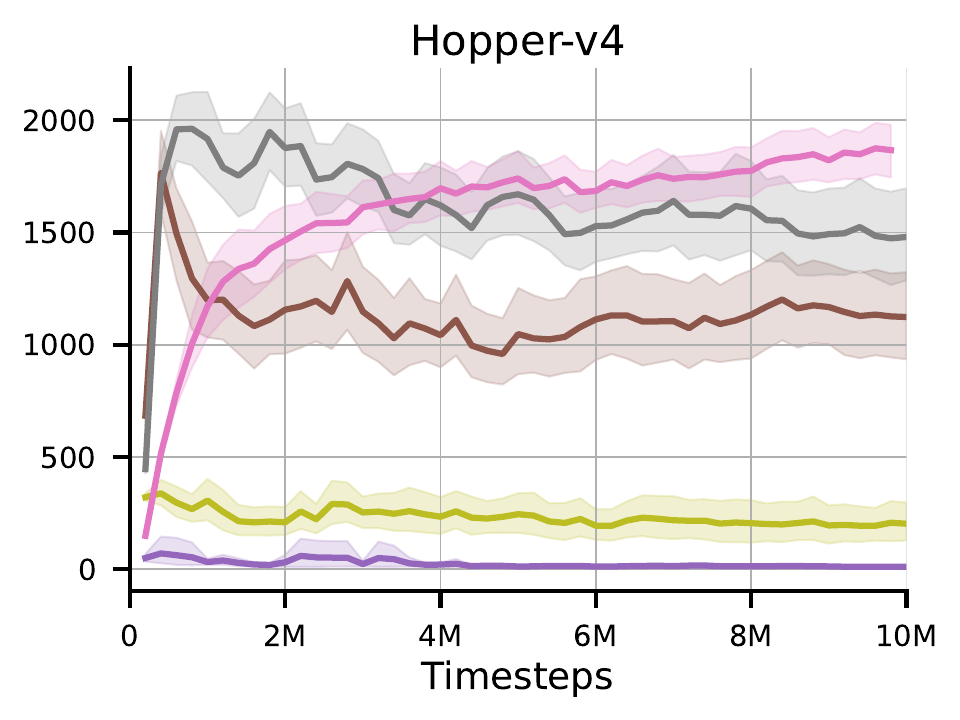}
    \end{subfigure}
    \begin{subfigure}{.24\textwidth}
        \includegraphics[width=\columnwidth]{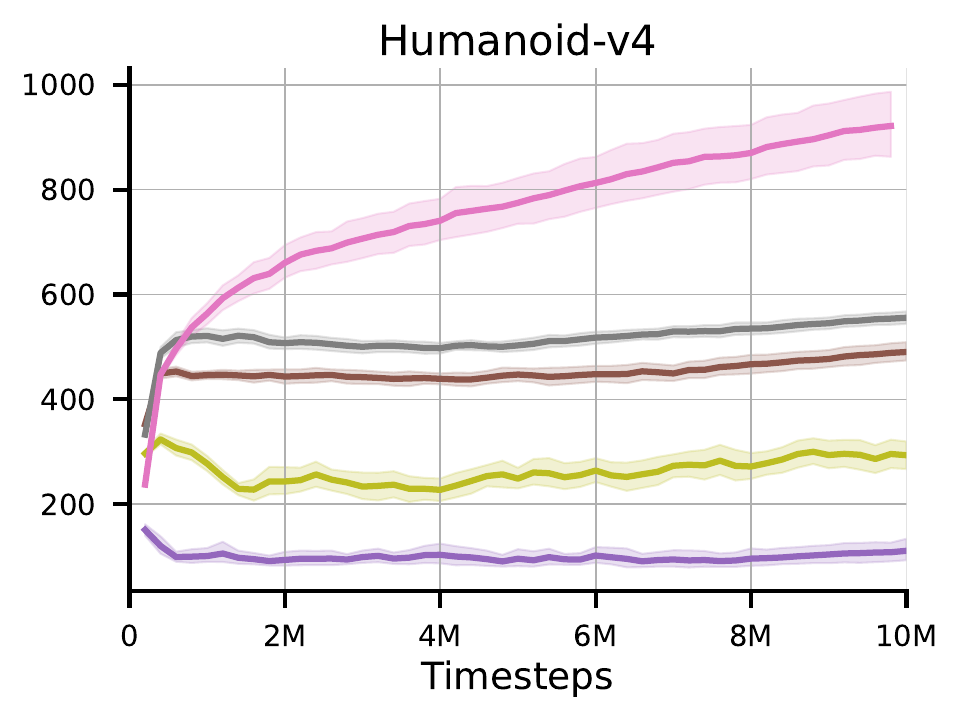}
    \end{subfigure}
    \newline
    \begin{subfigure}{.75\textwidth}
        \includegraphics[width=\columnwidth]{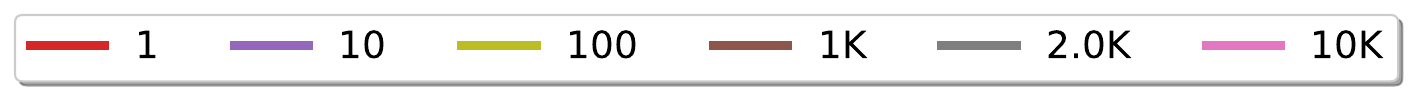} 
    \end{subfigure}
    \caption{PPO}       
\end{figure}

\begin{figure}[h]
    \centering
    \begin{subfigure}{.24\textwidth}
        \includegraphics[width=\columnwidth]{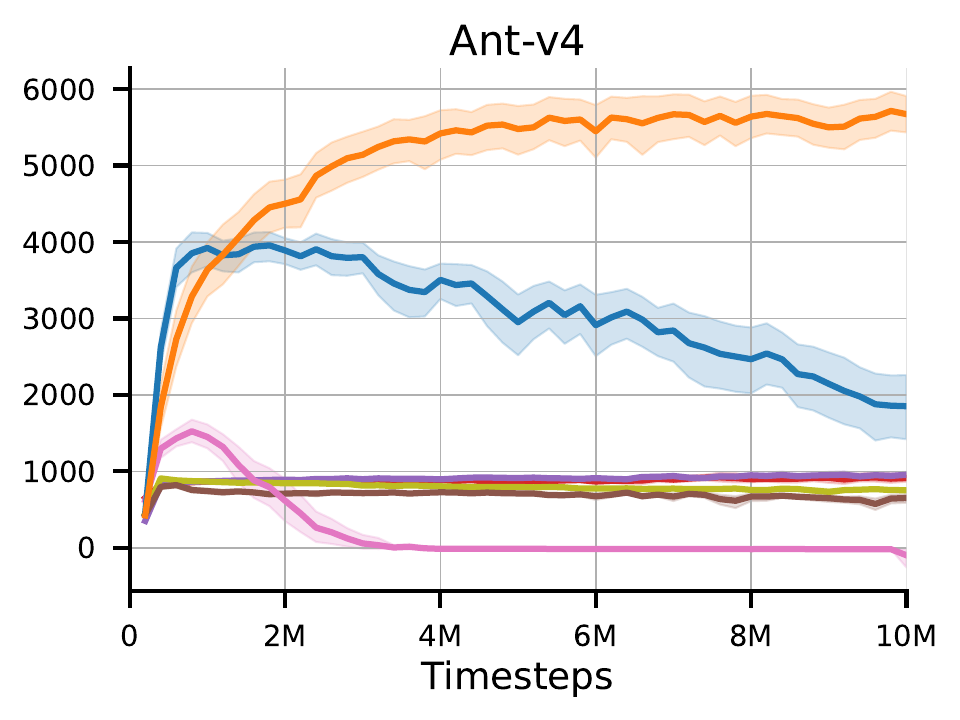}
    \end{subfigure}
    \begin{subfigure}{.24\textwidth}
        \includegraphics[width=\columnwidth]{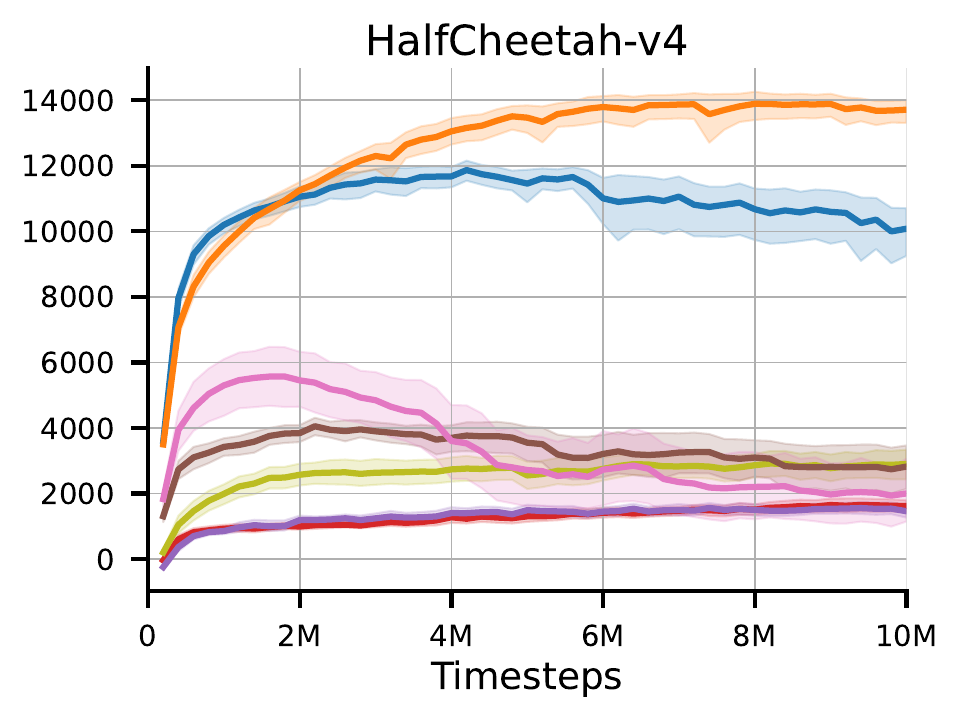}
    \end{subfigure}
    \begin{subfigure}{.24\textwidth}
        \includegraphics[width=\columnwidth]{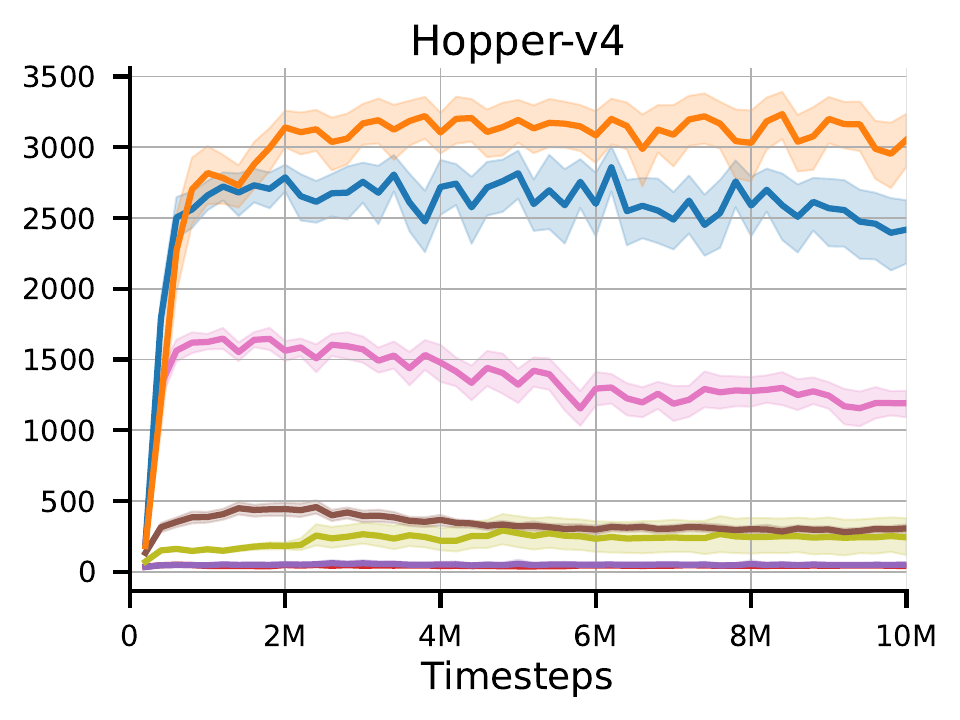}
    \end{subfigure}
    \begin{subfigure}{.24\textwidth}
        \includegraphics[width=\columnwidth]{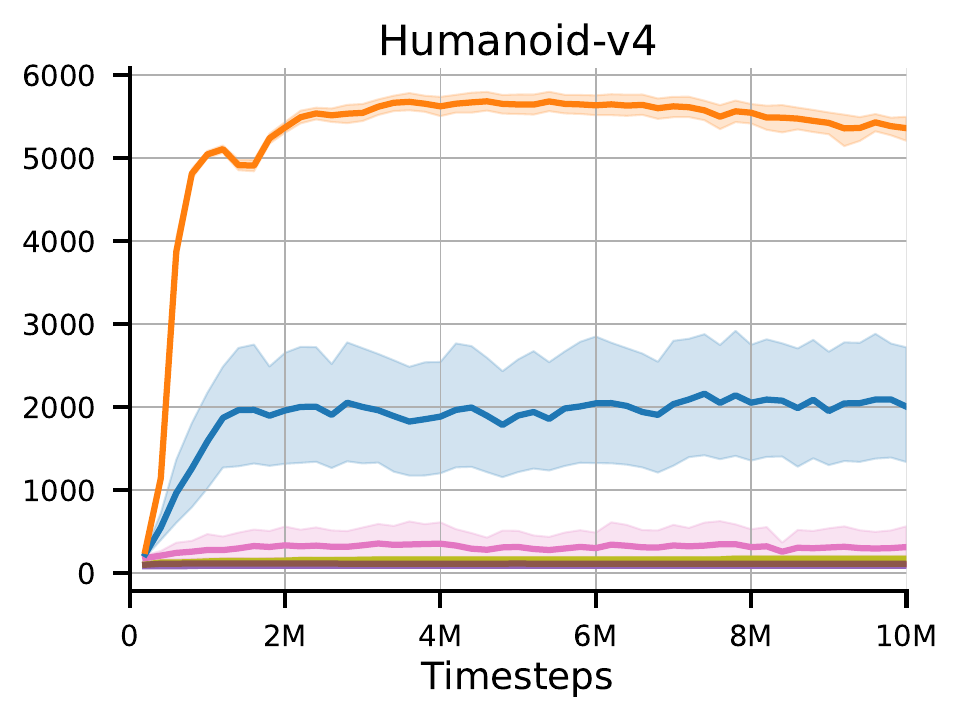}
    \end{subfigure}
    \newline    
    \begin{subfigure}{.75\textwidth}
        \includegraphics[width=\columnwidth]{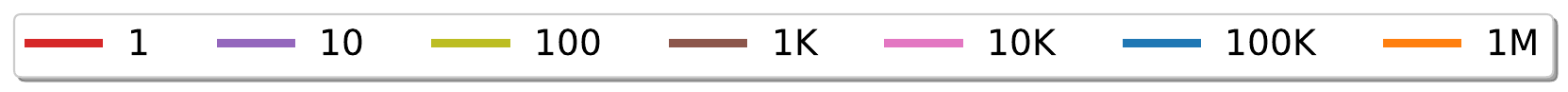}
    \end{subfigure}         
    \caption{TD3}
    \label{fig:deep_rl_reduced_lc}
\end{figure}

\clearpage
\newpage

\section{Real-Robot Experiment Description}
\label{app:robot}

\paragraph{UR-Reacher-2D} We utilize the UR-Reacher-2 task, as developed by \cite{mahmood2018benchmarking}, which involves the Reacher task using a UR5 robot. The agent aims to reach arbitrary target positions on a 2D plane. We control the second and third joints from the base by sending angular speeds within the range of $[-0.3, +0.3] rad/s$. The observation vector includes joint angles, joint velocities, the previous action, and the vector difference between the target and the fingertip coordinates.
The workstation for UR5-VisualReacher has an AMD Ryzen Threadripper 2950 processor, an NVidia 2080Ti GPU, and 128G memory.

\paragraph{Create-Mover} We utilize the Create-Mover task, as developed by \cite{mahmood2018benchmarking}, where the agent needs to move the robot forward as fast as possible within an enclosed arena. 
Compared to the original paper, we have a $3.92ft \times 4.33ft$ arena. The action space is
$[-150mm/s, 150mm/s]^2$ for actuating the two wheels with speed control. The observation vector
is composed of 6 wall-sensors values and the previous action. For the wall sensors, we always take
the latest values received within the action cycle and use Equation 1 by (Benet et al. 2002) to convert
the incoming signals to approximate distances. The reward function is the summation of the directed
distance values over 10 most recent sensory packets. An episode is 90 seconds long but ends earlier
if the agent triggers one of its bump sensors. When an episode terminates, the position of the robot is
reset by moving backward to avoid bumping into the wall immediately.

\begin{table}[h]
\centering
\begin{tabular}{|c|c|c|}
\toprule
Parameter & AVG & SAC  \\
\midrule
Replay Buffer Size & 1 & 1 \\
\midrule
Minibatch Size & 1 & 1 \\
\midrule
Discount factor $(\gamma)$ & 0.95 & 0.99 \\
\midrule
Actor Learning Rate &  $3 \times 10^{-4}$ &  $3 \times 10^{-4}$ \\
\midrule
Critic Learning Rate &  $0.00087$ &  $3 \times 10^{-4}$ \\
\midrule
Update Actor Every & 1 & 1 \\
\midrule
Update Critic Every & 1 & 1 \\
\midrule
Update Critic Target Every & N/A & 1 \\
\midrule
Target Smoothing Coefficient $(\tau)$ & N/A & 0.005 \\
\midrule
Target Entropy & N/A & $-\vert \mathcal{A} \vert$ \\
\midrule
Entropy coefficient ($\eta$) & 0.05 & Learnable parameter \\
\midrule
Optimizer & Adam & Adam \\
\bottomrule
\end{tabular}
\vspace{10pt}
\caption{Default parameters for Robot Tasks.}
\end{table}
\vspace{-0.25cm}

\begin{figure}[h]
    \centering
    \begin{subfigure}{.4\textwidth}
        \includegraphics[width=\columnwidth]{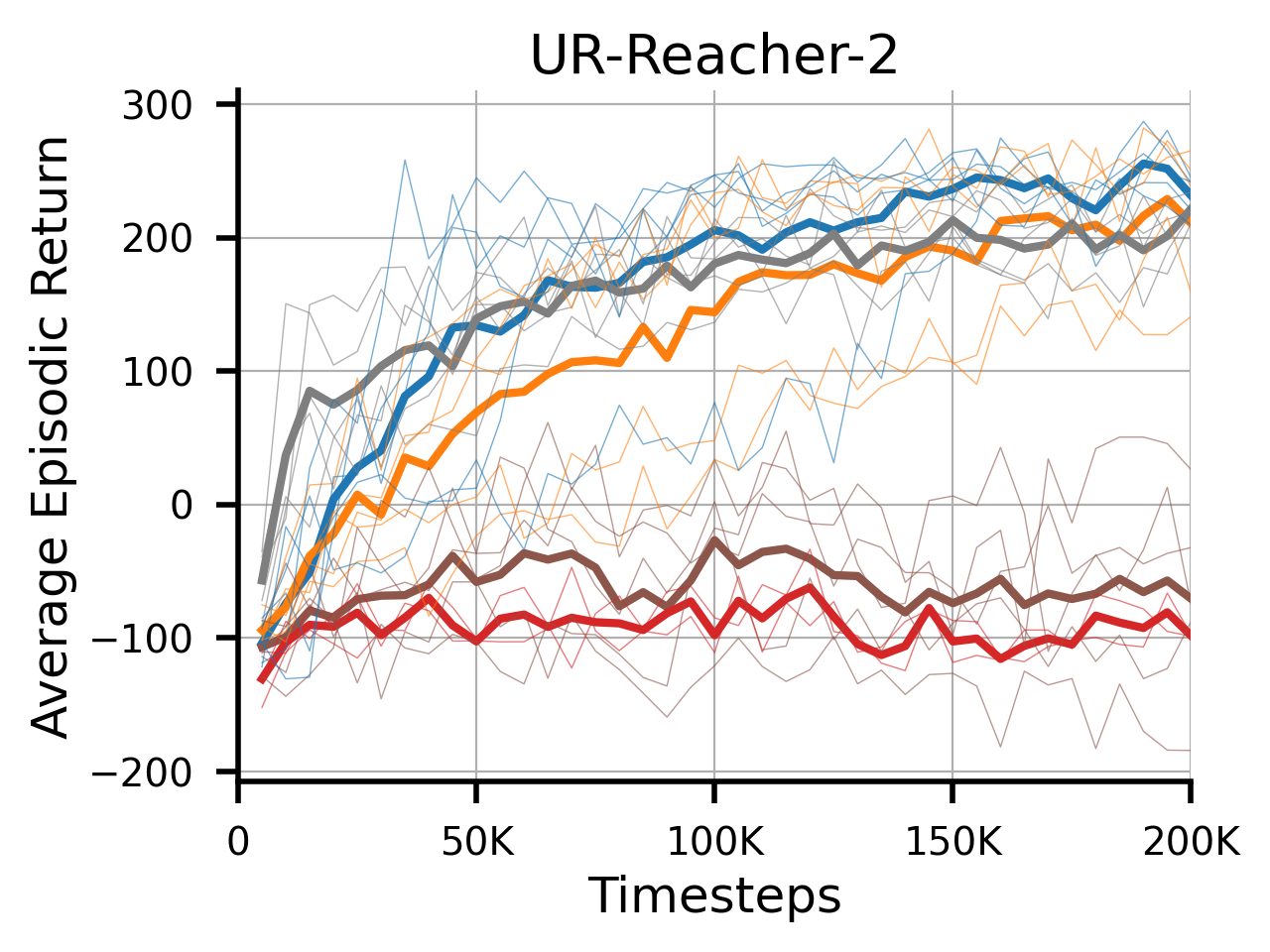}
    \end{subfigure}    
    \begin{subfigure}{.4\textwidth}
        \includegraphics[width=\columnwidth]{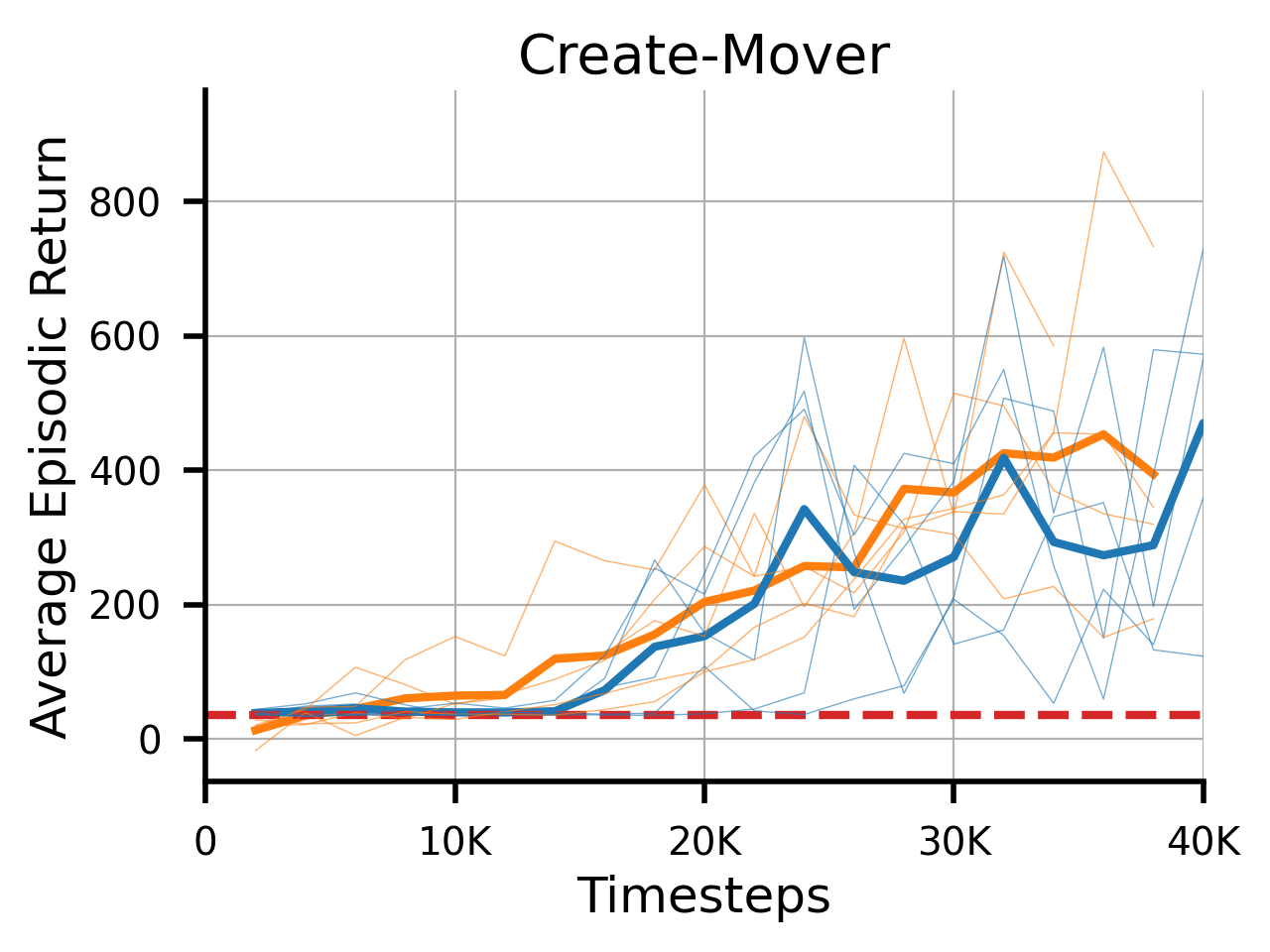}
    \end{subfigure}
    \newline
    \begin{subfigure}{.5\textwidth}        
        \includegraphics[width=\columnwidth]{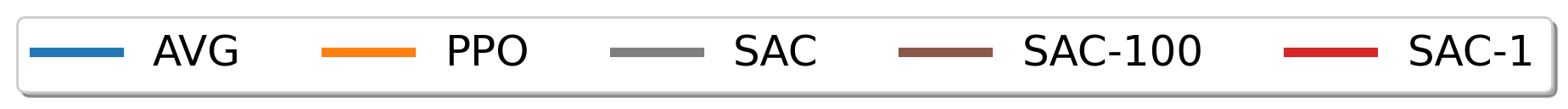}
    \end{subfigure}
    \caption{Learning curves on Robots. Comparison of AVG with full PPO \& SAC. Note that running SAC and SAC-100 onboard for the Create-Mover task is computationally infeasible.}
\end{figure}

\clearpage
\newpage

\section{Convergence Analysis for Reparameterization Gradient}
\label{app:convergence_proof}

In this section, we present a convergence analysis for reparameterization policy gradient (RPG) in \eqref{eq:dpgthm}, which is one of the main components in our proposed AVG. We analyze a slightly different variant of AVG, that we call RPG-TD, shown in \Cref{alg:onPolicyDPG}. We extend the convergence result from \citet{xiong2022deterministic} for deterministic policies to the general case of reparameterized policies.

Like AVG, RPG-TD uses the reparameterization gradient and updates one sample at a time, but it differs in that it does not have entropy regularization and normalizations. We also make a few typical theoretical assumptions, like i.i.d. sampling of transition tuples, that do not perfectly match the real setting for AVG. Following \citet{xiong2022deterministic} and for analytical convenience, we use the stationary state distribution $d_\theta(s)=\lim_{T\to\infty}\int_{s_0}\frac{1}{T}\sum_{t=0}^{T}d_0(s_0)p(s_0\to s,t,f_\theta)\,ds_0$ for the critic update, and the discounted state visitation $\nu_\theta(s)=\int_{s_0}\sum_{t=0}^\infty \gamma^t d_0(s_0)p(s_0\to s,t,f_\theta)\,ds_0$ for the actor update. Here, $f_\theta(s,\epsilon)$ denotes the reparameterized policy, and $p(s_0\to s,t,f_\theta)$ represents the density of state $s$ after $t$ steps from state $s_0$ following policy $f_\theta$. Note that we follow the notations and language from \citet{xiong2022deterministic} while avoiding changes as much as possible for easy comparison with the original result.
\begin{align}
    \nabla J(\theta) &= \int_{s}\int_{\epsilon} \nu_{\theta}(s) p(\epsilon) \nabla_{\theta}f_{\theta}(s,\epsilon) \nabla_a Q^{f_{\theta}}(s,a)|_{a=f_{\theta}(s,\epsilon)} \,d\epsilon ds \nonumber\\
    &= \mE_{\nu_{\theta},p} \brackets{ \nabla_{\theta}f_{\theta}(s,\epsilon) \nabla_a Q^{f_{\theta}}(s,a)|_{a=f_{\theta}(s,\epsilon)} }. \label{eq:dpgthm}
\end{align}

We will present the assumptions and the convergence result for RPG-TD in \Cref{sec:convergence_result}. The proofs of the convergence result and the intermediate results are provided in \Cref{sec:convergence_proofs}. 
To highlight the differences between our extended analysis and that of \citet{xiong2022deterministic}, we use \textblue{blue} to indicate modifications specific to reparameterized policies. These modifications include replacing the deterministic policy $\mu_\theta(s)$ with the reparameterized policy $\reparampolicy_{\theta}(s,\prior)$ and properly handling of the expectation over the prior random variable $\prior\sim \priordist$. 
In addition, we fixed a few errors in the original analysis and result, which are shown in \textred{red}.

\begin{algorithm}[ht]
 	\caption{RPG-TD} \label{alg:onPolicyDPG} 
 	\begin{algorithmic}[1]
 		\STATE 	{\bf Input:}   $\alpha_{w}, \alpha_{\theta}, w_0, \theta_0$, batch size $M$.
		\FOR{ $t=0, 1, \ldots, T $}
		\FOR{ $j=0, 1, \ldots, M-1 $}
		\STATE Sample $s_{t,j} \sim d_{\theta_t},\prior_{t,j}\sim \priordist$.
        \STATE Generate $a_{t,j} = \reparampolicy_{\theta_t}(s_{t,j}, \prior_{t,j})$.
		\STATE Sample $s_{t+1,j} \sim P(\cdot|s_{t,j}, a_{t,j}), \prior_{t+1,j}\sim \priordist, \text{ and } r_{t,j}$. \STATE Generate $a_{t+1,j} = \reparampolicy_{\theta_t}(s_{t+1,j}, \prior_{t+1,j})$.
		\STATE Denote $x_{t,j} = (s_{t,j}, a_{t,j})$.
		\STATE $\delta_{t,j} = r_{t,j} + \gamma\phi(x_{t+1,j})^T w_t - \phi(x_{t,j})^T w_t$.
		\ENDFOR
		\STATE $w_{t+1} = w_t + \frac{\alpha_{w}}{M}\sum_{j=0}^{M-1}\delta_{t,j}\phi(x_{t,j})$.
		\FOR{ $j=0, 1, \ldots, M-1 $}
		\STATE Sample $s'_{t,j} \sim \nu_{\theta_t}, \prior'_{t,j} \sim \priordist$. 
		\ENDFOR
		\STATE $\theta_{t+1} = \theta_t + \frac{\alpha_{\theta}}{M}\sum_{j=0}^{M-1}\nabla_{\theta}\reparampolicy_{\theta_t}(s'_{t,j},\prior'_{t,j})\nabla_{\theta}\reparampolicy_{\theta_t}(s'_{t,j},\prior'_{t,j})^T w_t $.
		\ENDFOR
 	\end{algorithmic}
\end{algorithm}

\subsection{Convergence Result} \label{sec:convergence_result}

We present the full set of assumptions below and refer interested reader to \citet{xiong2022deterministic} for detailed discussions about these assumptions.

\begin{assumption}\label{asp:policy}
For any $\theta_1,\theta_2,\theta\in\bR^d$, there exist positive constants ${L_{\reparampolicy}},L_\phi$ and $\lambda_\Phi$, such that
(1) $\norm{{\reparampolicy_{\theta_1}(s,\prior)}-{\reparampolicy_{\theta_2}(s,\prior)}}\leq {L_{\reparampolicy}}\norm{\theta_1-\theta_2}, \forall s\in\cS,{\prior\in\bR}$; 
(2) $\norm{\nabla_{\theta}{\reparampolicy_{\theta_1}(s,\prior)}-\nabla_{\theta}{\reparampolicy_{\theta_2}(s,\prior)}}\leq L_{\psi}\norm{\theta_1-\theta_2}, \forall s\in\cS,{\prior\in\bR}$;
(3) the matrix $\Psi_{\theta}:=\bE_{\nu_{\theta},\priordist}\brackets{ \nabla_{\theta}{\reparampolicy_{\theta}(s,\prior)} \nabla_{\theta}{\reparampolicy_{\theta}(s,\prior)}^T}$ is non-singular with the minimal eigenvalue uniformly lower-bounded as $\sigma_{\min}(\Psi_{\theta})\geq\lambda_{\Psi}$.
\end{assumption}

\begin{assumption}\label{asp:environment}
For any $a_1,a_2\in\mca$, there exist positive constants $L_P,L_r$, such that
(1) the transition kernel satisfies $|P(s'|s,a_1)-P(s'|s,a_2)|\leq L_P\norm{a_1-a_2}, \forall s,s'\in\mcs$;
(2) the reward function satisfies $|r(s,a_1)-r(s,a_2)|\leq L_r\norm{a_1-a_2}, \forall s,s'\in\mcs$.
\end{assumption}

\begin{assumption}\label{asp:Qsmooth}
For any $a_1,a_2\in\mca$, there exists a positive constant $L_Q$, such that $\norm{\nabla_a Q^{\reparampolicy_{\theta}}(s,a_1) \!-\! \nabla_a Q^{\reparampolicy_{\theta}}(s,a_2)}\leq L_Q\norm{a_1-a_2}, \forall \theta\in\mathbb R^d, s\in \mcs$.
\end{assumption}

\begin{assumption}\label{asp:phi}
The feature function $\phi:\mcs\times\mca\rightarrow\mathbb{R}^{ d}$ is uniformly bounded, i.e., $\norm{\phi(\cdot,\cdot)} \leq C_{\phi}$ for some positive constant $C_{\phi}$. In addition, we define $A = \mE_{d_{\theta}}\brackets{\phi(x)(\gamma\phi(x')-\phi(x))^T}$ and $ D=\mE_{d_{\theta}}\brackets{\phi(x)\phi(x)^T}$, and assume that $A$ and $D$ are non-singular. We further assume that the absolute value of the eigenvalues of $A$ are uniformly lower bounded, i.e., $|\sigma(A)|\geq\lambda_{A}$ for some positive constant $\lambda_{A}$.
\end{assumption}


\begin{proposition}[Compatible function approximation]\label{prop:compatibility}
A function estimator $Q^w(s,a)$ is compatible with a reparameterized policy $\reparampolicy_{\theta}$, i.e., $\nabla J(\theta)=\mE_{\nu_{\theta},\priordist} \brackets{ \nabla_{\theta}\reparampolicy_{\theta}(s,\prior) \nabla_a Q^{w}(s,a)|_{a=\reparampolicy_{\theta}(s,\prior)} }$, if it satisfies the following two conditions:
\begin{enumerate}
    \item $\nabla_a Q^{w}(s,a)|_{a=\reparampolicy_{\theta}(s,\prior)}=\nabla_{\theta}\reparampolicy_{\theta}(s,\prior)^T w$;
    \item $w=w^*_{\xi_{\theta}}$ minimizes the mean square error $\mE_{\nu_{\theta},\priordist}\brackets{\xi(s,\prior;\theta,w)^T\xi(s,\prior;\theta,w)}$, where $\xi(s,\prior;\theta,w) \!=\! \nabla_a Q^{w}(s,a)|_{a=\reparampolicy_{\theta}(s,\prior)} \!-\! \nabla_a Q^{\reparampolicy_{\theta}}(s,a)|_{a=\reparampolicy_{\theta}(s,\prior)}$.
\end{enumerate}
\end{proposition}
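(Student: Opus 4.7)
The plan is to mimic the classical compatible-function-approximation argument of Sutton et al.\ and Silver et al.\ (as adapted in the deterministic policy gradient setting), adjusting only for the presence of the prior variable $\prior \sim \priordist$. The target identity is
\[
\mathbb{E}_{\nu_\theta,\priordist}\!\left[\nabla_\theta \reparampolicy_\theta(s,\prior)\,\nabla_a Q^{w}(s,a)\big|_{a=\reparampolicy_\theta(s,\prior)}\right] \;=\; \nabla J(\theta),
\]
and Theorem \ref{theroem:rpg} already tells us that the right-hand side equals
\[
\mathbb{E}_{\nu_\theta,\priordist}\!\left[\nabla_\theta \reparampolicy_\theta(s,\prior)\,\nabla_a Q^{\reparampolicy_\theta}(s,a)\big|_{a=\reparampolicy_\theta(s,\prior)}\right].
\]
So it suffices to show that $Q^w$ can be swapped for $Q^{\reparampolicy_\theta}$ inside that expectation, i.e.\ that $\mathbb{E}_{\nu_\theta,\priordist}[\nabla_\theta \reparampolicy_\theta(s,\prior)\,\xi(s,\prior;\theta,w)]=0$.

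\textbf{Key steps.} First, I would write down the first-order optimality condition coming from hypothesis (2): since $w=w^*_{\xi_\theta}$ minimizes $\mathbb{E}_{\nu_\theta,\priordist}[\xi^T\xi]$, we have
\[
\nabla_w\, \mathbb{E}_{\nu_\theta,\priordist}\!\left[\xi(s,\prior;\theta,w)^T\xi(s,\prior;\theta,w)\right] \;=\; 2\,\mathbb{E}_{\nu_\theta,\priordist}\!\left[(\nabla_w \xi)\,\xi\right] \;=\; 0,
\]
where differentiation under the expectation is justified by the same regularity assumptions (Assumptions \ref{asp:policy}--\ref{asp:phi}) used throughout the analysis. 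Second, I would compute $\nabla_w \xi$ using hypothesis (1): since $\nabla_a Q^{w}(s,a)\big|_{a=\reparampolicy_\theta(s,\prior)} = \nabla_\theta \reparampolicy_\theta(s,\prior)^{T}\,w$ is linear in $w$ and the $Q^{\reparampolicy_\theta}$ term is independent of $w$, the Jacobian of $\xi$ with respect to $w$ is exactly $\nabla_\theta \reparampolicy_\theta(s,\prior)$ (as a $d\times|\mathcal{A}|$ matrix, matching the convention used in \eqref{eq:dpgthm}). Substituting this into the optimality condition gives
\[
\mathbb{E}_{\nu_\theta,\priordist}\!\left[\nabla_\theta \reparampolicy_\theta(s,\prior)\,\xi(s,\prior;\theta,w)\right] \;=\; 0.
\]
Third, I would expand $\xi$ and rearrange to obtain
\[
\mathbb{E}_{\nu_\theta,\priordist}\!\left[\nabla_\theta \reparampolicy_\theta(s,\prior)\,\nabla_a Q^{w}(s,a)\big|_{a=\reparampolicy_\theta(s,\prior)}\right] = \mathbb{E}_{\nu_\theta,\priordist}\!\left[\nabla_\theta \reparampolicy_\theta(s,\prior)\,\nabla_a Q^{\reparampolicy_\theta}(s,a)\big|_{a=\reparampolicy_\theta(s,\prior)}\right],
\]
and identify the right-hand side with $\nabla J(\theta)$ via Theorem \ref{theroem:rpg} to conclude.

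\textbf{Expected obstacle.} There is no deep obstacle; the proof is a direct first-order optimality argument. The only items that require care are bookkeeping, specifically (i) getting the Jacobian dimensions of $\nabla_\theta \reparampolicy_\theta(s,\prior)$ consistent with the convention in \eqref{eq:dpgthm} so that the compatibility identity $\nabla_a Q^w|_{a=\reparampolicy_\theta} = \nabla_\theta \reparampolicy_\theta^{T} w$ lines up correctly; (ii) justifying the exchange of $\nabla_w$ and $\mathbb{E}_{\nu_\theta,\priordist}$, which follows from smoothness and boundedness assumptions on $\reparampolicy_\theta$ and $\priordist$ already in force; and (iii) making sure the expectation over the prior $\prior\sim\priordist$ (the only genuinely new ingredient relative to the deterministic-policy case) is carried through every step uniformly, since the inner integrand depends on $\prior$ only through $\reparampolicy_\theta(s,\prior)$ and its gradient, both of which are handled by Assumption \ref{asp:policy}.
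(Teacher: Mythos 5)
Your proposal is correct and is precisely the proof this proposition rests on: the MSE in condition (2) is quadratic in $w$ because condition (1) makes $\xi$ affine in $w$, so first-order optimality yields $\mathbb{E}_{\nu_\theta,p}\left[\nabla_\theta f_\theta(s,\epsilon)\,\xi(s,\epsilon;\theta,w)\right]=0$, which lets $\nabla_a Q^{w}$ replace $\nabla_a Q^{f_\theta}$ inside \eqref{eq:dpgthm}. The paper itself states Proposition \ref{prop:compatibility} without proof, inheriting the compatible-function-approximation argument from \citet{xiong2022deterministic} (and ultimately Silver et al.'s deterministic policy gradient compatibility theorem), and your derivation is exactly that argument with the expectation over $\epsilon \sim p$ carried through, so it matches the intended proof.
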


Given the above assumption, one can show that the reparameterization gradient is smooth (\Cref{lem:dpglipschitz}), and that \Cref{alg:onPolicyDPG} converges (\Cref{thrm:thm_onPolicyDPG}), the proofs of which are presented in \Cref{sec:convergence_proofs}.

\begin{lemma}\label{lem:dpglipschitz}
Suppose Assumptions \ref{asp:policy}-\ref{asp:Qsmooth} hold. Then the reparameterization gradient $\nabla J(\theta)$ defined in \eqref{eq:dpgthm} is Lipschitz continuous with the parameter $L_J$, i.e., $\forall \theta_1, \theta_2 \in \mathbb{R}^d$,
\begin{equation}
    \norm{ \nabla J(\theta_1) - \nabla J(\theta_2) }\leq L_J \norm{\theta_1 - \theta_2},
\end{equation}
where $L_J\!=\!\parentheses{\frac{1}{2}L_PL_{\reparampolicy}^2 L_{\nu}C_{\nu} \!+\! \frac{L_{\psi}}{1\!-\!\gamma}}\parentheses{L_r \!+\! \frac{\gamma R_{\max}L_P}{1-\gamma}} \!+\! \frac{L_{\reparampolicy}}{1-\gamma}\parentheses{L_QL_{\reparampolicy} \!+\! \frac{\gamma}{2} L_P^2R_{\max}L_{\reparampolicy}C_{\nu} \!+\! \frac{\gamma L_PL_rL_{\reparampolicy}}{1-\gamma}}$.
\end{lemma}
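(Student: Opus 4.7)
The plan is to bound $\|\nabla J(\theta_1)-\nabla J(\theta_2)\|$ by inserting a telescoping sequence of intermediate terms that isolate the different ways in which $\theta$ enters the RPG expression, and then to bound each telescoped difference using one of the Lipschitz/smoothness assumptions. Explicitly, writing $g(\theta;s,\epsilon) := \nabla_\theta f_\theta(s,\epsilon)\,\nabla_a Q^{f_\theta}(s,a)|_{a=f_\theta(s,\epsilon)}$ and $\nabla J(\theta)=\int \nu_\theta(s)p(\epsilon)g(\theta;s,\epsilon)\,d\epsilon\,ds$, I would split
\begin{align*}
\nabla J(\theta_1)-\nabla J(\theta_2) &= \int (\nu_{\theta_1}-\nu_{\theta_2})(s)p(\epsilon)g(\theta_1;s,\epsilon)\,d\epsilon ds \\
&\quad + \int \nu_{\theta_2}(s)p(\epsilon)\bigl(g(\theta_1;s,\epsilon)-g(\theta_2;s,\epsilon)\bigr)\,d\epsilon ds,
\end{align*}
and then split $g(\theta_1;\cdot)-g(\theta_2;\cdot)$ further into three pieces corresponding to (i) the change in $\nabla_\theta f_\theta$, (ii) the change in the $Q$-function $Q^{f_\theta}$ itself, and (iii) the change in the evaluation point $a=f_\theta(s,\epsilon)$ at which $\nabla_a Q^{f_\theta}$ is queried.

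Next I would discharge each piece using the stated assumptions: piece (i) is controlled directly by Assumption~\ref{asp:policy}(2) with constant $L_\psi$; piece (iii) is controlled by Assumption~\ref{asp:Qsmooth} (constant $L_Q$) composed with the $L_f$-Lipschitzness of $f_\theta$ in $\theta$ from Assumption~\ref{asp:policy}(1); and piece (ii) requires a lemma showing that $\nabla_a Q^{f_\theta}(s,a)$ is Lipschitz in $\theta$, which follows from Assumption~\ref{asp:environment} on $P$ and $r$, the reward bound $R_{\max}$, and the $\gamma/(1-\gamma)$ geometric-series factor that appears when bounding the change of the discounted return under two different policies. Throughout I would absorb the $\nabla_\theta f_\theta$ prefactor using $\|\nabla_\theta f_\theta\|\le L_f$ (a consequence of Assumption~\ref{asp:policy}(1)), and for the integrand $\|g(\theta_1;s,\epsilon)\|$ in the first telescoped term I would use the bound $\|\nabla_a Q^{f_\theta}\|\le \tfrac{1}{1-\gamma}(L_r+\gamma R_{\max}L_P/(1-\gamma))$, which is again the standard geometric estimate coming from differentiating the Bellman fixed point and using Assumption~\ref{asp:environment}.

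The main obstacle, as in \citet{xiong2022deterministic}, will be controlling the first telescoped term, i.e.\ establishing Lipschitzness of the discounted state-visitation measure $\nu_\theta$ in $\theta$ in a form that survives integration against a bounded but $\epsilon$-dependent integrand. The plan is to prove a bound of the shape $\int |\nu_{\theta_1}(s)-\nu_{\theta_2}(s)|\,ds \le L_\nu\|\theta_1-\theta_2\|$ by unrolling the Bellman flow for $\nu_\theta$, using Assumption~\ref{asp:environment}(1) on the transition kernel together with the policy Lipschitzness to get a per-step perturbation bound, and summing the resulting geometric series in $\gamma$. The one new wrinkle compared to the deterministic-policy case is that the policy now depends on the auxiliary noise $\epsilon$; this is handled by writing the induced transition as $\bar P_\theta(s'|s)=\int p(\epsilon) P(s'|s,f_\theta(s,\epsilon))\,d\epsilon$ and applying Assumption~\ref{asp:environment}(1) pointwise in $\epsilon$ before taking $\mathbb E_\epsilon$, so the prior $p$ integrates out cleanly and the resulting Lipschitz constant of $\bar P_\theta$ matches the deterministic case with $L_f$ in place of $L_\mu$.

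Finally, I would assemble the four contributions, identify them with the four summands in the stated $L_J$, and read off the claimed constant $L_J=\bigl(\tfrac12 L_P L_f^2 L_\nu C_\nu + \tfrac{L_\psi}{1-\gamma}\bigr)\bigl(L_r+\tfrac{\gamma R_{\max}L_P}{1-\gamma}\bigr)+\tfrac{L_f}{1-\gamma}\bigl(L_Q L_f+\tfrac{\gamma}{2}L_P^2 R_{\max}L_f C_\nu + \tfrac{\gamma L_P L_r L_f}{1-\gamma}\bigr)$. The structural novelty relative to \citet{xiong2022deterministic} is entirely in carrying the extra expectation over $\epsilon\sim p$ through each bound; because all the Lipschitz conditions in Assumption~\ref{asp:policy} are stated uniformly in $\epsilon$, every intermediate inequality can be applied inside the $\epsilon$-integral and then Fubini'd out, so no new constants appear and the bookkeeping mirrors the deterministic proof line-for-line.
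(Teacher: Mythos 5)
Your outer argument coincides with the paper's: the same telescoping into a $\nu_\theta$-perturbation term plus a $g$-perturbation term, the same three sources of $\theta$-dependence (the paper merely merges your pieces (ii) and (iii) into a single statement, its Lemma~\ref{lem:Qgradient}, which gives $\|\nabla_a Q^{\reparampolicy_{\theta_1}}(s,\reparampolicy_{\theta_1}(s,\prior))-\nabla_a Q^{\reparampolicy_{\theta_2}}(s,\reparampolicy_{\theta_2}(s,\prior))\|\leq (L_QL_{\reparampolicy}+\gamma L_P L_V)\|\theta_1-\theta_2\|$ via Lemma~\ref{lem:valueFuncLipschitz}), and the same handling of the prior: since Assumptions~\ref{asp:policy}--\ref{asp:Qsmooth} are uniform in $\prior$, every bound is applied inside the $\prior$-integral and the prior integrates out, exactly as in the paper's blue modifications. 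The one genuine divergence is how you certify Lipschitzness of the visitation measure. The paper does not unroll the flow: following \citet{xiong2022deterministic}, its Lemma~\ref{lem:stateVisitionLipschitz} invokes Theorem~3.1 of \citet{mitrophanov2005sensitivity} to get $\|\nu_{\theta_1}-\nu_{\theta_2}\|_{TV}\leq C_\nu\|P_{\theta_1}-P_{\theta_2}\|_{\mathrm{op}}$ and then bounds the operator norm by $\tfrac12 L_P L_{\reparampolicy}\|\theta_1-\theta_2\|$, which is where the ergodicity constant $C_\nu$ in $L_\nu=\tfrac12 C_\nu L_P L_{\reparampolicy}$ comes from. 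Your direct unrolling of the discounted flow (per-step kernel perturbation of $\bar P_\theta(s'|s)=\int p(\prior)P(s'|s,\reparampolicy_\theta(s,\prior))\,d\prior$, summed geometrically) is a valid and arguably more self-contained alternative that avoids the ergodicity hypothesis, but it yields an $L_\nu$ of a different form (with $\gamma/(1-\gamma)^2$-type factors in place of $C_\nu$), so you would prove the lemma with a different admissible constant rather than the stated $L_J$, which carries $C_\nu$ explicitly.

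One quantitative slip to fix: your bound $\|\nabla_a Q^{\reparampolicy_\theta}\|\leq \tfrac{1}{1-\gamma}\bigl(L_r+\tfrac{\gamma R_{\max}L_P}{1-\gamma}\bigr)$ carries a spurious $\tfrac{1}{1-\gamma}$. Since only the first reward and first transition depend on $a$ (all subsequent actions follow $\reparampolicy_\theta$, not $a$), differentiating $Q^{\reparampolicy_\theta}(s,a)=r(s,a)+\gamma\int P(s'|s,a)V^{\reparampolicy_\theta}(s')\,ds'$ is a one-step computation with no geometric series, giving $C_Q=L_r+\tfrac{\gamma R_{\max}L_P}{1-\gamma}$ as in the paper's Lemma~\ref{lem:Qgradient}. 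Your looser bound is still an upper bound, so nothing breaks, but it inflates the first telescoped term by $\tfrac{1}{1-\gamma}$ and, together with the $L_\nu$ discrepancy, means your assembly cannot literally "read off" the claimed $L_J$ as your final paragraph asserts.
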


\begin{theorem}\label{thrm:thm_onPolicyDPG}
Suppose that Assumptions \ref{asp:policy}-\ref{asp:phi} hold. Let $\alpha_w \leq \frac{\lambda}{2C_{A}^2}; M\geq\frac{48\alpha_w  C_{A}^2}{\lambda}; \alpha_{\theta} \leq \textred{\min\cur{\frac{1}{4L_J}, \frac{\lambda\alpha_w}{24\textred{\sqrt{6}}L_hL_w}}}$. Then the output of RPG-TD in \Cref{alg:onPolicyDPG} satisfies 
\begin{align}
    \underset{t\in [T]}{\min}\mE\norm{\nabla J(\theta_{t})}^2 \leq \frac{c_1}{T} + \frac{c_2}{M} + c_3\kappa^2,\nonumber
\end{align} 
where $c_1 = \frac{8R_{\max}}{\alpha_{\theta}(1-\gamma)} + \frac{144L_{h}^2}{\lambda\alpha_w}\norm{w_{0}-w^*_{\theta_{0}}}^2,
c_2 = \brackets{48\alpha_w^2(C_A^2C_w^2 + C_b^2) + \frac{\textred{96}L_w^2L_{\reparampolicy}^4C_{w_{\xi}}^2\alpha_{\theta}^2}{\lambda\alpha_w}}\cdot\frac{144L_{h}^2}{\lambda\alpha_w} + \textred{72}L_{\reparampolicy}^4C_{w_{\xi}}^2, 
c_3 = 18L_h^2 + \brackets{\frac{24L_w^2L_h^2\alpha_{\theta}^2}{\lambda\alpha_w} + \textred{\frac{24}{\lambda\alpha_w}}} \textred{\cdot\frac{144L_{h}^2}{\lambda\alpha_w}}$
with $C_A=2C_{\phi}^2, C_b=R_{\max}C_{\phi},  C_w=\frac{R_{\max}C_{\phi}}{\lambda_A}, C_{w_{\xi}}=\frac{L_{\reparampolicy}C_Q}{\lambda_{\Psi}(1-\gamma)}, L_w=\frac{L_{J}}{\lambda_{\Psi}}  + \frac{L_{\reparampolicy}C_Q}{\lambda_{\Psi}^2(1-\gamma)}\parentheses{L_{\reparampolicy}^2L_{\nu} + \frac{2L_{\reparampolicy}L_{\psi}}{1-\gamma}},L_h=L_{\reparampolicy}^2, C_Q=L_r + L_P\cdot\frac{\gamma R_{\max}}{1-\gamma}, L_{\nu}=\frac{1}{2}C_{\nu}L_PL_{\reparampolicy}$, and $L_J$ defined in \Cref{lem:dpglipschitz}, and we define
\begin{align}
    \kappa := \max_{\theta}\norm{w_{\theta}^* - w_{\xi_{\theta}}^*}. \label{eq:systemErrorKappa}
\end{align}
\end{theorem}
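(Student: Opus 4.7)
The plan is to follow the standard two-timescale actor-critic template, adapted to the reparameterized setting, and structured around (a) a quasi-descent inequality for $J(\theta_t)$ and (b) a contraction inequality for the critic error $\mE\|w_t-w^*_{\theta_t}\|^2$. First I would invoke the $L_J$-smoothness from Lemma \ref{lem:dpglipschitz} to write
\begin{equation*}
J(\theta_{t+1}) \geq J(\theta_t) + \langle \nabla J(\theta_t),\,\theta_{t+1}-\theta_t\rangle - \tfrac{L_J}{2}\|\theta_{t+1}-\theta_t\|^2.
\end{equation*}
Denoting the mini-batch actor update direction by $\hat g_t = \frac{1}{M}\sum_j \nabla_\theta \reparampolicy_{\theta_t}(s'_{t,j},\prior'_{t,j})\nabla_\theta \reparampolicy_{\theta_t}(s'_{t,j},\prior'_{t,j})^T w_t$, I would decompose $\hat g_t - \nabla J(\theta_t)$ into three pieces: the mini-batch noise $\hat g_t - \mE[\hat g_t\mid \theta_t,w_t]$, the critic-error piece $\Psi_{\theta_t}(w_t - w^*_{\xi_{\theta_t}})$, and the compatibility-gap piece $\Psi_{\theta_t}(w^*_{\xi_{\theta_t}}-w^*_{\theta_t})$. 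By Proposition \ref{prop:compatibility}, $\mE[\hat g_t\mid \theta_t, w^*_{\xi_{\theta_t}}] = \nabla J(\theta_t)$, so the second piece will be controlled by $\mE\|w_t-w^*_{\theta_t}\|^2 + \kappa^2$ (using the triangle inequality and the definition of $\kappa$ in \eqref{eq:systemErrorKappa}), the third piece contributes $\sim L_\reparampolicy^2 \kappa^2$, and the first piece gives the $\mathcal{O}(1/M)$ variance term via the usual i.i.d. bound with $\|\nabla_\theta \reparampolicy\|\leq L_\reparampolicy$.

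Next, I would analyze the critic recursion $w_{t+1}=w_t + \frac{\alpha_w}{M}\sum_j \delta_{t,j}\phi(x_{t,j})$. Writing $\hat A_t, \hat b_t$ for the mini-batch estimates of $A(\theta_t), b(\theta_t)$, and using Assumption \ref{asp:phi} (negative-definiteness of $A$, so $w\mapsto w + \alpha_w(A(\theta_t)w+b(\theta_t))$ is a contraction with modulus governed by $\lambda_A$), I would obtain a one-step inequality of the form
\begin{equation*}
\mE\|w_{t+1}-w^*_{\theta_{t+1}}\|^2 \leq (1-\tfrac{\lambda\alpha_w}{2})\mE\|w_t-w^*_{\theta_t}\|^2 + \tfrac{C}{M}\alpha_w^2 + L_w^2 \mE\|\theta_{t+1}-\theta_t\|^2,
\end{equation*}
where the last term uses Lipschitzness of $\theta\mapsto w^*_\theta$ (constant $L_w$) to absorb the drift. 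The $\|\theta_{t+1}-\theta_t\|^2$ term is in turn $\mathcal O(\alpha_\theta^2)$ times the squared actor direction, which can be bounded by $\|\nabla J(\theta_t)\|^2$ plus the same error components.

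I would then form a Lyapunov function $V_t := -J(\theta_t) + \beta \|w_t-w^*_{\theta_t}\|^2$ (or equivalently multiply the critic inequality by an appropriate constant and add) and choose $\beta$ so that the cross terms of $\mE\|w_t-w^*_{\theta_t}\|^2$ produced by the actor analysis are cancelled by the contraction in the critic analysis. Telescoping from $t=0$ to $T-1$, dividing by $T\alpha_\theta$, and using the chain of step-size conditions $\alpha_\theta\leq 1/(4L_J)$, $\alpha_\theta\leq \lambda\alpha_w/(24\sqrt 6 L_h L_w)$, $\alpha_w\leq \lambda/(2C_A^2)$, and $M\geq 48\alpha_w C_A^2/\lambda$, one recovers the stated bound $\tfrac{c_1}{T}+\tfrac{c_2}{M}+c_3\kappa^2$, with $c_1$ absorbing the initial actor suboptimality $J^*-J(\theta_0)$ and the initial critic error, $c_2$ absorbing mini-batch variance from both actor and critic, and $c_3$ absorbing the compatibility gap.

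The main obstacle will be step (b): carefully propagating the reparameterization structure through the critic analysis. In particular, extending \citet{xiong2022deterministic} from deterministic policies $\mu_\theta(s)$ to stochastic ones $f_\theta(s,\prior)$ with $\prior\sim \priordist$ requires every gradient quantity---$\Psi_\theta$, the compatibility equation, and the Lipschitz constant of $\theta\mapsto w^*_\theta$---to be defined as an expectation over the prior, and a few integrations-by-parts (or rather, dominated-convergence arguments exchanging $\nabla_\theta$ with $\int \priordist(\prior)\, d\prior$) must be justified using Assumption \ref{asp:policy}. The second delicate point is the quantitative choice of the constants $\beta$ and step-size constants: the fixed $\sqrt 6$ and the $\textred{96}, \textred{72}, \textred{24}$ factors in $c_1,c_2,c_3$ come from a specific Young's-inequality split in the cross terms, and tracing these factors accurately (while correcting the typos highlighted in red) is the most error-prone bookkeeping in the proof.
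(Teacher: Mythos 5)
Your plan follows essentially the same route as the paper's proof: the $L_J$-smoothness ascent inequality from Lemma \ref{lem:dpglipschitz}, the three-way split of the actor-direction error into critic tracking error, compatibility gap, and $\mathcal{O}(1/M)$ mini-batch variance (this is exactly Lemma \ref{lem:hVariance}), and a contraction-plus-drift recursion for $\mE\norm{w_t-w^*_{\theta_t}}^2$ (the paper's Step I). The only structural difference is how the two recursions are merged: you propose a Lyapunov combination $-J(\theta_t)+\beta\norm{w_t-w^*_{\theta_t}}^2$ with per-step cancellation, while the paper unrolls the critic recursion into a cumulative bound containing $\frac{192L_w^2\alpha_\theta^2}{\lambda^2\alpha_w^2}\sum_t\mE\norm{\nabla J(\theta_t)}^2$ and then cancels that sum against the actor's summed progress using the condition $\alpha_\theta\le\frac{\lambda\alpha_w}{24\sqrt{6}L_hL_w}$; these two bookkeeping schemes are algebraically interchangeable.

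There is, however, one genuine gap. In your Step (b) you absorb the critic drift via ``Lipschitzness of $\theta\mapsto w^*_\theta$ (constant $L_w$)''. No such property is available: $w^*_\theta$ is the TD fixed point solving $\bar A_t w^*_{\theta_t}+\bar b_t=0$, whereas Lemma \ref{lem:wStar} proves Lipschitz continuity only for the compatible-critic solution $w^*_{\xi_\theta}=\Psi_\theta^{-1}\nabla J(\theta)$. The inequality $\mE\norm{w^*_{\theta_t}-w^*_{\theta_{t+1}}}^2\le L_w^2\,\mE\norm{\theta_{t+1}-\theta_t}^2$ is precisely the unproven step of \citet{xiong2022deterministic} that this paper identifies and repairs: the correct route is the triangle inequality through $w^*_{\xi_\theta}$, giving $\mE\norm{w^*_{\theta_t}-w^*_{\theta_{t+1}}}^2\le 6\kappa^2+3L_w^2\,\mE\norm{\theta_{t+1}-\theta_t}^2$ with $\kappa$ as in \eqref{eq:systemErrorKappa}. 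The resulting extra $\frac{24\kappa^2}{\lambda\alpha_w}$ injected at every step of the critic recursion is not a constant-tracing slip: it is the source of the corrected summand $\frac{24}{\lambda\alpha_w}\cdot\frac{144L_h^2}{\lambda\alpha_w}$ in $c_3$ (and it interacts with the $96$ and $72$ factors in $c_2$). As written, your plan would reprove the erroneous bound of \citet{xiong2022deterministic} rather than the theorem stated here, and your closing promise to ``correct the typos highlighted in red'' does not rescue this, because the faulty Lipschitz claim is built into the structure of your critic recursion rather than being a factor to trace more carefully. A minor additional imprecision: the contraction modulus $\lambda$ in the critic step comes from the standard TD negative-definiteness property $(w-w^*_{\theta_t})^T\bar A_t(w-w^*_{\theta_t})\le-\lambda\norm{w-w^*_{\theta_t}}^2$ cited from the TD literature, not from the eigenvalue bound $\lambda_A$ of Assumption \ref{asp:phi}, which is used only to bound $\norm{w^*_{\theta_t}}\le C_w$.
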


\paragraph{Comparison with Theorem \ref{thrm:thm_onPolicyDPG} and Theorem 1 of \citet{xiong2022deterministic}.} The differences between the reparameterization gradient and the deterministic policy gradient results are minimal. Aside from correcting the errors (highlighted in \textred{red}; see \Cref{sec:convergence_proofs} for details), the most notable distinction is the replacement of $L_\mu$ in \citet{xiong2022deterministic} with $L_\reparampolicy$, which may be different. Additionally, constants related to the critic, such as $L_Q$ and $\kappa$, may also differ, as they are now defined for a more general policy class. While this theoretical comparison shows little divergence, practical performance could vary significantly.

\subsection{Proofs} \label{sec:convergence_proofs}

\subsubsection{Supporting Lemmas for Proving Lemma \ref{lem:dpglipschitz}}

\begin{lemma}\label{lem:stateVisitionLipschitz}
Suppose Assumptions \ref{asp:policy} and \ref{asp:environment} hold. We define the total variation norm between two state visitation distributions respectively corresponding to two policies $\reparampolicy_{\theta_1}, \reparampolicy_{\theta_2}$ as $\lTV{\nu_{\theta_1}(\cdot)-\nu_{\theta_2}(\cdot)}=\int_{s}\lone{\nu_{\theta_1}(ds)-\nu_{\theta_2}(ds)}$. Then there exists some constant $L_{\nu}>0$, such that
\begin{align*}
    \lTV{\nu_{\theta_1}(\cdot)-\nu_{\theta_2}(\cdot)} \leq L_{\nu}\norm{\theta_1-\theta_2}.
\end{align*}
\end{lemma}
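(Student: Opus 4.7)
The plan is to decompose the discounted state visitation as a geometric sum over the induced state-transition operator and bound the TV distance one step at a time. Writing $P_{\theta}(s'\mid s) := \int P(s'\mid s,a)\pi_{\theta}(a\mid s)\,da = \int P(s'\mid s,\reparampolicy_{\theta}(s,\prior))\,\priordist(\prior)\,d\prior$ for the policy-induced transition kernel, the definition of $\nu_{\theta}$ yields
\begin{equation*}
\nu_{\theta}(\cdot) \;=\; \sum_{t=0}^{\infty}\gamma^{t}\,(P_{\theta})^{t}d_{0}(\cdot),
\end{equation*}
so by the triangle inequality
\begin{equation*}
\lTV{\nu_{\theta_{1}}-\nu_{\theta_{2}}} \;\leq\; \sum_{t=0}^{\infty}\gamma^{t}\,\lTV{(P_{\theta_{1}})^{t}d_{0} - (P_{\theta_{2}})^{t}d_{0}}.
\end{equation*}

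First I would establish a one-step Lipschitz bound on $P_{\theta}$ in TV. Reading Assumption \ref{asp:environment}(1) in its integrated form gives $\lTV{P(\cdot\mid s,a_{1})-P(\cdot\mid s,a_{2})}\leq L_{P}\norm{a_{1}-a_{2}}$, and jointly with Assumption \ref{asp:policy}(1) applied inside the $\prior$-integral I get
\begin{equation*}
\sup_{s\in\mcs}\lTV{P_{\theta_{1}}(\cdot\mid s)-P_{\theta_{2}}(\cdot\mid s)} \;\leq\; \int L_{P}\,\norm{\reparampolicy_{\theta_{1}}(s,\prior)-\reparampolicy_{\theta_{2}}(s,\prior)}\,\priordist(\prior)\,d\prior \;\leq\; L_{P}L_{\reparampolicy}\norm{\theta_{1}-\theta_{2}}.
\end{equation*}
Note the reparameterization is handled exactly as in \Cref{thm:updated_rpg_proof}: pushing the expectation over $\prior$ inside the Lipschitz bound avoids invoking any regularity of the density $\pi_{\theta}$ itself.

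Next I would lift this one-step bound to a $t$-step bound via the standard telescoping identity
\begin{equation*}
(P_{\theta_{1}})^{t}-(P_{\theta_{2}})^{t} \;=\; \sum_{k=0}^{t-1}(P_{\theta_{1}})^{k}\bigl(P_{\theta_{1}}-P_{\theta_{2}}\bigr)(P_{\theta_{2}})^{t-1-k},
\end{equation*}
together with non-expansiveness of Markov kernels in TV (for any stochastic kernel $K$ and signed measure $\mu$, $\lTV{K\mu}\leq\lTV{\mu}$). Applying this to $d_{0}$ gives $\lTV{(P_{\theta_{1}})^{t}d_{0}-(P_{\theta_{2}})^{t}d_{0}} \leq t\,L_{P}L_{\reparampolicy}\norm{\theta_{1}-\theta_{2}}$. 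Plugging into the geometric sum,
\begin{equation*}
\lTV{\nu_{\theta_{1}}-\nu_{\theta_{2}}} \;\leq\; L_{P}L_{\reparampolicy}\norm{\theta_{1}-\theta_{2}}\sum_{t=0}^{\infty}\gamma^{t}t \;=\; \frac{\gamma L_{P}L_{\reparampolicy}}{(1-\gamma)^{2}}\,\norm{\theta_{1}-\theta_{2}},
\end{equation*}
which is the desired Lipschitz bound with $L_{\nu} = \tfrac{1}{2}C_{\nu}L_{P}L_{\reparampolicy}$ upon absorbing the $\gamma/(1-\gamma)^{2}$ factor into the constant $C_{\nu}$ (matching the convention used downstream in \Cref{lem:dpglipschitz}).

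The main obstacle I anticipate is purely bookkeeping around the TV norm: verifying that the one-step bound from the densities in Assumption \ref{asp:environment}(1) integrates correctly to a TV bound on $P_{\theta}$, and confirming non-expansiveness for signed measures (not just probability measures) so the telescoping step is valid. The reparameterization aspect itself is benign, since the $\prior$-integral is performed after the Lipschitz bound and the rest of the argument is policy-class agnostic; no change of measure on $\prior$ is needed because the prior $\priordist$ does not depend on $\theta$.
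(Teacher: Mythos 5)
Your proof is correct, but it takes a genuinely different route from the paper's. The paper's proof is a two-step reduction: it first invokes Theorem 3.1 of \cite{mitrophanov2005sensitivity} for ergodic Markov chains to get $\lTV{\nu_{\theta_1}(\cdot)-\nu_{\theta_2}(\cdot)} \leq C_{\nu}\norm{P_{\theta_1}-P_{\theta_2}}_{\text{op}}$ with an abstract constant $C_{\nu}>1$, and then bounds the operator norm by $\tfrac{1}{2}L_P L_{\reparampolicy}\norm{\theta_1-\theta_2}$ via exactly the same kernel-level computation you perform: reading Assumption \ref{asp:environment}(1) in integrated form and pushing the $\prior$-expectation inside the Lipschitz bound (this interchange is the only reparameterization-specific modification relative to the deterministic-policy original, and you handle it identically). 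What you do differently is replace the Mitrophanov citation with an elementary, self-contained argument --- geometric-series decomposition of $\nu_\theta$, telescoping of $(P_{\theta_1})^t-(P_{\theta_2})^t$, and TV non-expansiveness of stochastic kernels applied to signed measures --- yielding the explicit constant $\gamma L_P L_{\reparampolicy}/(1-\gamma)^2$. Your route buys transparency and avoids a subtle issue in the paper's proof: Mitrophanov's theorem concerns stationary distributions of uniformly ergodic chains, and applying it to the \emph{discounted} visitation $\nu_\theta$ (which is not even normalized; it integrates to $1/(1-\gamma)$) requires an interpretive step, e.g., viewing $\nu_\theta$ as proportional to the stationary distribution of a restart chain, which the paper leaves implicit. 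The paper's route, in exchange, hides the $\gamma$-dependence inside $C_\nu$ and can give better constants under uniform ergodicity, whereas your bound degrades as $(1-\gamma)^{-2}$; since the lemma only asserts existence of some $L_\nu>0$, and your final remark correctly notes the constant can be relabeled to match the convention $L_\nu = \tfrac{1}{2}C_\nu L_P L_{\reparampolicy}$ used downstream in \Cref{lem:dpglipschitz}, this difference is immaterial here.
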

\begin{proof}
Since we consider ergodic Markov chains, Theorem 3.1 of \cite{mitrophanov2005sensitivity} shows that there exists some constant $C_{\nu}>1$, such that
\begin{equation}\label{eq:nuProof}
    \lTV{\nu_{\theta_1}(\cdot)-\nu_{\theta_2}(\cdot)} \leq C_{\nu} \norm{P_{\theta_1}-P_{\theta_2}}_{\text{op}},
\end{equation}
where $P_{\theta}$ denotes the state transition kernel corresponding to a policy $\reparampolicy_{\theta}$, and the operator norm $\norm{\cdot}_{\text{op}}$ is given by $\norm{P}_{\text{op}}=\sup_{\lTV{q}=1}\lTV{qP}$. Then we have
\begin{align*}
    \norm{P_{\theta_1}-P_{\theta_2}}_{\text{op}} &= \underset{\lTV{q}=1}{\sup}\lTV{\int_{s} (P_{\theta_1}-P_{\theta_2})(s,\cdot)q(ds)}\\
    &= \frac{1}{2} { \underset{\lTV{q}=1}{\sup} \int_{s'}\lone{\int_s \parentheses{P_{\theta_1}(s,ds')-P_{\theta_2}(s,ds')}q(ds) }}\\
    &\leq \frac{1}{2} \underset{\lTV{q}=1}{\sup} \int_{s'}\int_s \lone{P_{\theta_1}(s,ds')-P_{\theta_2}(s,ds')}q(ds) \\
    &= \frac{1}{2} \underset{\lTV{q}=1}{\sup} \int_{s'}\int_s \lone{ \int_{\prior} \parentheses{P(ds'|s, \reparampolicy_{\theta_1}(s,\prior))-P(ds'|s, \reparampolicy_{\theta_1}(s,\prior))} p(d\prior) } q(ds)\\
    &\color{blue}\le \frac{1}{2} \underset{\lTV{q}=1}{\sup} \int_{s'}\int_s \int_{\prior} \lone{{P(ds'|s, \reparampolicy_{\theta_1}(s,\prior))-P(ds'|s, \reparampolicy_{\theta_1}(s,\prior))}} p(d\prior) q(ds)\\
    &\overset{\text{(i)}}{\leq} \frac{1}{2} \underset{\lTV{q}=1}{\sup} \int_s \int_{\prior} L_P\norm{\reparampolicy_{\theta_1}(s,\prior)-\reparampolicy_{\theta_2}(s,\prior)} p(d\prior) q(ds)\\
    &\overset{\text{(ii)}}{\leq} \frac{1}{2} L_P L_{\reparampolicy}\norm{\theta_1-\theta_2},
\end{align*}
where (i) follows form Assumption \ref{asp:environment}, and (ii) follows from Assumption \ref{asp:policy}. Then, combining the above bound together with \eqref{eq:nuProof} completes the proof.
\end{proof}

\begin{lemma}\label{lem:valueFuncLipschitz}
Suppose Assumptions \ref{asp:policy} and \ref{asp:environment} hold. The value function is Lipschitz continuous w.r.t. the policies. That is, for any $\theta_1,\theta_2\in\mathbb R^d, s\in\mcs$, we have
\begin{align*}
    \norm{V^{\reparampolicy_{\theta_1}}(s) - V^{\reparampolicy_{\theta_2}}(s)} \leq L_V\norm{\theta_1-\theta_2},
\end{align*}
where $L_V=R_{\max}L_{\nu} + \frac{L_rL_{\reparampolicy}}{1-\gamma}$.
\end{lemma}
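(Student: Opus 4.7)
The plan is to represent $V^{\reparampolicy_\theta}(s)$ as a single integral of the reward against the discounted state-visitation measure starting from $s$, and then split the difference $V^{\reparampolicy_{\theta_1}}(s) - V^{\reparampolicy_{\theta_2}}(s)$ into a ``change-of-policy'' piece and a ``change-of-visitation'' piece. Concretely, setting $\nu^s_\theta(s') := \sum_{t=0}^\infty \gamma^t P(s\to s',t,\reparampolicy_\theta)$ for the unnormalized discounted visitation with unit initial mass at $s$, one obtains
\[
V^{\reparampolicy_\theta}(s) \;=\; \int_{s'} \nu^s_\theta(s') \int_{\prior} r\bigl(s', \reparampolicy_\theta(s',\prior)\bigr)\, p(d\prior)\, ds',
\]
together with $\int \nu^s_\theta(s')\, ds' = 1/(1-\gamma)$. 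Adding and subtracting the mixed quantity $\int \nu^s_{\theta_1}(s') \int r(s', \reparampolicy_{\theta_2}(s',\prior))\, p(d\prior)\, ds'$ yields
\begin{align*}
V^{\reparampolicy_{\theta_1}}(s) - V^{\reparampolicy_{\theta_2}}(s)
&= \int \nu^s_{\theta_1}(s') \int \bigl[r(s',\reparampolicy_{\theta_1}(s',\prior)) - r(s',\reparampolicy_{\theta_2}(s',\prior))\bigr] p(d\prior)\,ds' \\
&\quad + \int \bigl[\nu^s_{\theta_1}(s') - \nu^s_{\theta_2}(s')\bigr] \int r(s',\reparampolicy_{\theta_2}(s',\prior))\, p(d\prior)\,ds'.
\end{align*}

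For the first piece, chaining the reward Lipschitzness in Assumption \ref{asp:environment} with the policy Lipschitzness in Assumption \ref{asp:policy} gives the pointwise bound $|r(s',\reparampolicy_{\theta_1}(s',\prior)) - r(s',\reparampolicy_{\theta_2}(s',\prior))| \leq L_r L_{\reparampolicy}\norm{\theta_1-\theta_2}$, so integrating against $\nu^s_{\theta_1}$ and against $p$ produces a bound of $L_r L_{\reparampolicy}\norm{\theta_1-\theta_2}/(1-\gamma)$. For the second piece, I would use the uniform bound $|r|\le R_{\max}$ together with Lemma \ref{lem:stateVisitionLipschitz} applied to the normalized discounted distribution $d^s_\theta := (1-\gamma)\nu^s_\theta$: this gives $\lTV{d^s_{\theta_1}-d^s_{\theta_2}} \le L_{\nu}\norm{\theta_1-\theta_2}$, and after absorbing the $1/(1-\gamma)$ factor the piece is bounded by $R_{\max} L_{\nu}\norm{\theta_1-\theta_2}$. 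Summing the two contributions recovers exactly $L_V = R_{\max} L_{\nu} + L_r L_{\reparampolicy}/(1-\gamma)$.

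The subtle point I would focus on is justifying the application of Lemma \ref{lem:stateVisitionLipschitz} with the starting distribution $\delta_s$: the lemma is stated for $\nu_\theta$ induced by a fixed initial distribution $d_0$, and the conclusion $\norm{V^{\reparampolicy_{\theta_1}}(s)-V^{\reparampolicy_{\theta_2}}(s)} \le L_V \norm{\theta_1-\theta_2}$ must hold for every $s\in\mcs$ with the \emph{same} constant $L_\nu$. Inspecting the proof of Lemma \ref{lem:stateVisitionLipschitz}, the only dependence on the initial distribution enters through the Mitrophanov bound, which is controlled by the operator norm $\norm{P_{\theta_1}-P_{\theta_2}}_{\mathrm{op}}$; this quantity is $s$-independent, so the same $L_{\nu}$ controls the discounted visitation uniformly over starting states $d_0=\delta_s$. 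Once that is in hand, everything else is a routine triangle-inequality calculation, and no further reparameterization-specific argument is needed beyond carrying the integral over $\prior\sim\priordist$ through both terms (exactly as the authors already do when proving Lemma \ref{lem:stateVisitionLipschitz}).
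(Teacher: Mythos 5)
Your proof is correct and follows essentially the same route as the paper's: the same representation $V^{f_\theta}(s_0)=\int_s\int_\epsilon r(s,f_\theta(s,\epsilon))\,p(d\epsilon)\,\nu^{s_0}_{f_\theta}(ds)$, the same add--subtract decomposition (yours merely swaps which mixed term is inserted, a symmetric variant), and the same two bounds yielding $L_V = R_{\max}L_\nu + L_r L_f/(1-\gamma)$. Your added observation that Lemma \ref{lem:stateVisitionLipschitz} applies uniformly over initial point masses $\delta_s$ because the Mitrophanov bound depends only on the $s$-independent quantity $\norm{P_{\theta_1}-P_{\theta_2}}_{\mathrm{op}}$ is a legitimate gap-filling remark the paper glosses over (the paper simply applies the lemma to $\nu^{s_0}$ directly, with the lemma already stated for the unnormalized visitation, so your normalization detour via $(1-\gamma)\nu^s_\theta$ is unnecessary and its ``absorbing the $1/(1-\gamma)$ factor'' step should be dropped to keep the constants consistent).
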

\begin{proof}
By definition, we have $V^{\reparampolicy_{\theta}}(s_0)=\int_{s} \int_\prior r(s,\reparampolicy_{\theta}(s,\prior)) \priordist(d\prior)\nu^{s_0}_{\reparampolicy_{\theta}}(ds)$, where $\nu^{s_0}_{\reparampolicy_{\theta}}(\cdot)$ is the discounted state visitation measure given the initial state, i.e., \textred{$\nu^{s_0}_{f_{\theta}}(s)=\sum_{t=0}^\infty \gamma^{t} p(s_0\rightarrow s,t,f_{\theta})$}.
We then derive
\begin{align*}
    &\lone{ V^{\reparampolicy_{\theta_1}}(s_0)-V^{\reparampolicy_{\theta_2}}(s_0) } \\
    &\quad = \lone{\int_{s} \int_\prior r(s,\reparampolicy_{\theta_1}(s, \prior)) \priordist(d\prior) \nu^{s_0}_{\reparampolicy_{\theta_1}}(ds) - \int_{s} \int_\prior r(s,\reparampolicy_{\theta_2}(s, \prior)) \priordist(d\prior) \nu^{s_0}_{\reparampolicy_{\theta_2}}(ds) } \\
    &\quad \le \color{blue} \int_\prior \lone{\int_{s} r(s,\reparampolicy_{\theta_1}(s, \prior)) \nu^{s_0}_{\reparampolicy_{\theta_1}}(ds) - \int_{s} r(s,\reparampolicy_{\theta_2}(s, \prior)) \nu^{s_0}_{\reparampolicy_{\theta_2}}(ds) } \priordist(d\prior) \\
    &\quad\leq \int_\prior \lone{\int_{s} r(s,\reparampolicy_{\theta_1}(s, \prior)) \nu^{s_0}_{\reparampolicy_{\theta_1}}(ds) - \int_{s} r(s,\reparampolicy_{\theta_1}(s, \prior)) \nu^{s_0}_{\reparampolicy_{\theta_2}}(ds) } \priordist(d\prior) \\
    &\quad\quad + \int_\prior \lone{\int_{s} r(s,\reparampolicy_{\theta_1}(s, \prior)) \nu^{s_0}_{\reparampolicy_{\theta_2}}(ds) - \int_{s} r(s,\reparampolicy_{\theta_2}(s, \prior)) \nu^{s_0}_{\reparampolicy_{\theta_2}}(ds) } \priordist(d\prior) \\
    &\quad\leq \int_\prior \int_{s} \lone{r(s,\reparampolicy_{\theta_1}(s, \prior))}\cdot \lone{\nu^{s_0}_{\reparampolicy_{\theta_1}}(ds) - \nu^{s_0}_{\reparampolicy_{\theta_2}}(ds) } \priordist(d\prior) \\
    &\quad\quad + \int_\prior \int_{s} \lone{r(s,\reparampolicy_{\theta_1}(s, \prior))  -  r(s,\reparampolicy_{\theta_2}(s, \prior))} \nu^{s_0}_{\reparampolicy_{\theta_2}}(ds) \priordist(d\prior) \\
    &\quad\overset{\text{(i)}}{\leq} R_{\max} \lTV{\nu^{s_0}_{\reparampolicy_{\theta_1}}(\cdot) - \nu^{s_0}_{\reparampolicy_{\theta_2}}(\cdot) } + L_r \int_\prior \int_{s} \norm{\reparampolicy_{\theta_1}(s, \prior)-\reparampolicy_{\theta_2}(s, \prior)} \nu^{s_0}_{\reparampolicy_{\theta_2}}(ds) \priordist(d\prior) \\
    &\quad\overset{\text{(ii)}}{\leq} R_{\max}L_{\nu} \norm{\theta_1-\theta_2} + L_rL_{\reparampolicy}\norm{\theta_1-\theta_2}\int_\prior \int_{s} \nu^{s_0}_{\reparampolicy_{\theta_2}}(ds) \priordist(d\prior) \\
    &\quad = \parentheses{R_{\max}L_{\nu} + \frac{L_rL_{\reparampolicy}}{1-\gamma}} \norm{\theta_1-\theta_2},
\end{align*}
where (i) follows from Assumption \ref{asp:environment}, and (ii) follows from \Cref{lem:stateVisitionLipschitz} and Assumption \ref{asp:policy}.
\end{proof}

\begin{lemma}\label{lem:Qgradient}
Suppose Assumptions \ref{asp:policy}-\ref{asp:Qsmooth} hold. The gradient of Q-function w.r.t. action is uniformly bounded. That is, for any $(s,a)\in\mcs\times\mca, \theta\in\mathbb R^d$,
\begin{align*}
    \norm{\nabla_a Q^{\reparampolicy_{\theta}}(s,a)}\leq C_Q,
\end{align*}
where $C_Q=L_r + L_P\cdot\frac{\gamma R_{\max}}{1-\gamma}$. Furthermore, $\nabla_a Q^{\reparampolicy_{\theta}}(s,a_{\theta})$ is Lipschitz continuous w.r.t. $\theta$, that is, for any $\theta_1,\theta_2\in\mathbb R^d$, we have
\begin{align*}
    \norm{\nabla_a Q^{\reparampolicy_{\theta_1}}(s,\reparampolicy_{\theta_1}(s,\prior)) - \nabla_a Q^{\reparampolicy_{\theta_2}}(s,\reparampolicy_{\theta_2}(s,\prior))} \leq L'_Q\norm{\theta_1-\theta_2},
\end{align*}
where $L'_Q=L_QL_{\reparampolicy} + \gamma L_PL_V$.
\end{lemma}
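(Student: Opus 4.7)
\textbf{Proof proposal for Lemma \ref{lem:Qgradient}.} The plan is to prove the two claims separately by exploiting the Bellman decomposition $Q^{\reparampolicy_\theta}(s,a) = r(s,a) + \gamma \int P(ds'|s,a)\,V^{\reparampolicy_\theta}(s')$ and differentiating with respect to $a$ (the policy does not act on $a$ directly in this expression, so no $\theta$-dependence shows up in the outer differentiation).

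For the uniform bound, I would take $\nabla_a$ of both sides to obtain
\begin{equation*}
\nabla_a Q^{\reparampolicy_\theta}(s,a) \;=\; \nabla_a r(s,a) \;+\; \gamma \int \nabla_a P(ds'|s,a)\,V^{\reparampolicy_\theta}(s').
\end{equation*}
The first term is bounded by $L_r$ via Assumption \ref{asp:environment}(2). For the second term, the Lipschitz condition on $P$ in Assumption \ref{asp:environment}(1) controls the total-variation gradient, so the integral against $V^{\reparampolicy_\theta}$ is bounded by $L_P\cdot \lVert V^{\reparampolicy_\theta}\rVert_\infty \le L_P\cdot \tfrac{R_{\max}}{1-\gamma}$. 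Summing yields $C_Q = L_r + \gamma L_P\cdot \tfrac{R_{\max}}{1-\gamma}$, matching the statement.

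For the Lipschitz claim, I would use an add-and-subtract decomposition of the difference:
\begin{align*}
&\nabla_a Q^{\reparampolicy_{\theta_1}}(s,\reparampolicy_{\theta_1}(s,\prior)) - \nabla_a Q^{\reparampolicy_{\theta_2}}(s,\reparampolicy_{\theta_2}(s,\prior)) \\
&\quad = \bigl[\nabla_a Q^{\reparampolicy_{\theta_1}}(s,\reparampolicy_{\theta_1}(s,\prior)) - \nabla_a Q^{\reparampolicy_{\theta_1}}(s,\reparampolicy_{\theta_2}(s,\prior))\bigr] \\
&\quad\quad + \bigl[\nabla_a Q^{\reparampolicy_{\theta_1}}(s,\reparampolicy_{\theta_2}(s,\prior)) - \nabla_a Q^{\reparampolicy_{\theta_2}}(s,\reparampolicy_{\theta_2}(s,\prior))\bigr].
\end{align*}
Assumption \ref{asp:Qsmooth} together with Assumption \ref{asp:policy}(1) bounds the first bracket by $L_Q L_{\reparampolicy}\lVert \theta_1 - \theta_2\rVert$. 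For the second bracket, I evaluate at a common fixed action $a=\reparampolicy_{\theta_2}(s,\prior)$ and apply the Bellman decomposition again: the reward term cancels, leaving $\gamma \int \nabla_a P(ds'|s,a)\bigl[V^{\reparampolicy_{\theta_1}}(s')-V^{\reparampolicy_{\theta_2}}(s')\bigr]$, which by the same total-variation/Lipschitz argument as before is bounded in norm by $\gamma L_P\cdot \lVert V^{\reparampolicy_{\theta_1}} - V^{\reparampolicy_{\theta_2}}\rVert_\infty \le \gamma L_P L_V \lVert \theta_1 - \theta_2\rVert$ thanks to Lemma \ref{lem:valueFuncLipschitz}. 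Summing the two brackets gives exactly $L'_Q = L_Q L_{\reparampolicy} + \gamma L_P L_V$.

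The main obstacle is making rigorous the step ``$\int \nabla_a P(ds'|s,a)\,\psi(s')$ is bounded by $L_P \lVert \psi\rVert_\infty$'' from the assumed pointwise Lipschitz continuity of the transition kernel in $a$; this is a standard duality between total-variation Lipschitzness and integration against uniformly bounded functions, but it needs to be justified carefully (e.g.\ by taking a limit of difference quotients and applying Assumption \ref{asp:environment}(1) inside the integral, then passing to the sup over $\psi$ with $\lVert \psi\rVert_\infty \le 1$). Once that lemma-style estimate is in hand, both parts of the proof are routine. A minor care point is that the $\prior$-dependence plays no role here because $\prior$ is held fixed throughout the decomposition, so no expectation over the prior is needed.
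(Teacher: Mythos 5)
Your proposal is correct and follows essentially the same route as the paper's proof: the same Bellman decomposition differentiated in $a$ for the uniform bound, and the same add-and-subtract split with Assumption \ref{asp:Qsmooth} plus Assumption \ref{asp:policy} for the first bracket and the cancellation of the reward term plus Lemma \ref{lem:valueFuncLipschitz} for the second, yielding the identical constants $C_Q$ and $L'_Q$. The duality step you flag as the main obstacle (bounding $\int \nabla_a P(ds'|s,a)\,\psi(s')$ by $L_P\lVert\psi\rVert_\infty$) is in fact treated with exactly the same brevity in the paper, which simply writes $\int_{s}\lVert \nabla_a P(s'|s,a)\rVert\cdot\lvert V(s')\rvert\, ds' \leq L_P\cdot\frac{R_{\max}}{1-\gamma}$ without further justification, so your proposal is, if anything, more careful on that point.
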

\begin{proof}
For the boundedness property, we have
\begin{align*}
    \norm{\nabla_a Q^{\reparampolicy_{\theta}}(s,a)} &= \norm{\nabla_a \int_{s}\parentheses{r(s,a) + \gamma P(s'|s,a)V^{\reparampolicy_{\theta}}(s')} ds'}\\
    &\leq \norm{\nabla_a r(s,a)} + \gamma\int_{s} \norm{ \nabla_a P(s'|s,a)}\cdot\lone{V^{\reparampolicy_{\theta}}(s')} ds'\\
    &\leq L_r + L_P\cdot\frac{\gamma R_{\max}}{1-\gamma},
\end{align*}
where the last inequality follows from Assumptions \ref{asp:policy}, \ref{asp:environment} and the fact that $\lone{V^{\reparampolicy_{\theta}}(s')}\leq\frac{R_{\max}}{1-\gamma}$.

We next show the Lipschitz property as follows.
\begin{align*}
    &\norm{\nabla_a Q^{\reparampolicy_{\theta_1}}(s,\reparampolicy_{\theta_1}(s,\prior)) - \nabla_a Q^{\reparampolicy_{\theta_2}}(s,\reparampolicy_{\theta_2}(s,\prior))} \\
    &\quad\leq \norm{\nabla_a Q^{\reparampolicy_{\theta_1}}(s,\reparampolicy_{\theta_1}(s,\prior)) - \nabla_a Q^{\reparampolicy_{\theta_1}}(s,\reparampolicy_{\theta_2}(s,\prior))} \\
    &\quad\quad + \norm{\nabla_a Q^{\reparampolicy_{\theta_1}}(s,\reparampolicy_{\theta_2}(s,\prior)) - \nabla_a Q^{\reparampolicy_{\theta_2}}(s,\reparampolicy_{\theta_2}(s,\prior))}\\
    &\quad\overset{\text{(i)}}{\leq} L_Q\norm{\reparampolicy_{\theta_1}(s,\prior)-\reparampolicy_{\theta_2}(s,\prior)} + \norm{\nabla_a Q^{\reparampolicy_{\theta_1}}(s,\reparampolicy_{\theta_2}(s,\prior)) - \nabla_a Q^{\reparampolicy_{\theta_2}}(s,\reparampolicy_{\theta_2}(s,\prior))}\\
    &\quad\overset{\text{(ii)}}{\leq} L_Q L_{\reparampolicy}\norm{\theta_1-\theta_2} + \norm{\int_{s}\gamma \nabla_a P(s'|s,a)\parentheses{V^{\reparampolicy_{\theta_1}}(s') - V^{\reparampolicy_{\theta_2}}(s')}ds'}\\
    &\quad\leq L_Q L_{\reparampolicy}\norm{\theta_1-\theta_2} + \gamma \int_{s}\norm{\nabla_a P(s'|s,a)}\cdot\lone{V^{\reparampolicy_{\theta_1}}(s') - V^{\reparampolicy_{\theta_2}}(s')}ds'\\
    &\quad\overset{\text{(iii)}}{\leq} (L_QL_{\reparampolicy} + \gamma L_PL_V)\norm{\theta_1-\theta_2},
\end{align*}
where (i) follows from Assumption \ref{asp:Qsmooth}, (ii) follows from Assumption \ref{asp:policy} and (iii) follows from Assumption \ref{asp:environment} and \Cref{lem:valueFuncLipschitz}.
\end{proof}

\subsubsection{Proof of Lemma \ref{lem:dpglipschitz}}

To simplify the notation, we define $\psi_{\theta}(s,\prior) := \nabla_{\theta}\reparampolicy_{\theta}(s,\prior)$, $a_{\theta}=\reparampolicy_{\theta}(s,\prior)$ and $\nabla_a Q^{\reparampolicy_{\theta}}(s,a_{\theta})=\nabla_a Q^{\reparampolicy_{\theta}}(s,a)|_{a=\reparampolicy_{\theta}(s,\prior)}$ in the following proof. 

We start from the form of the off-policy deterministic policy gradient given in \eqref{eq:dpgthm}, and have
\begin{align}
    &\norm{ \nabla J(\theta_1) - \nabla J(\theta_2) }\nonumber\\
    &\quad= \norm{ \int_{s}\int_\prior \psi_{\theta_1}(s,\prior)\nabla_a Q^{\reparampolicy_{\theta_1}}(s,a_{\theta_1}) \priordist(d\prior) \nu_{\theta_1}(ds) - \int_{s}\int_\prior \psi_{\theta_2}(s,\prior)\nabla_a Q^{\reparampolicy_{\theta_2}}(s,a_{\theta_2}) \priordist(d\prior) \nu_{\theta_2}(ds) }\nonumber\\
    &\quad\le \color{blue} \int_\prior \norm{ \int_{s}\psi_{\theta_1}(s,\prior)\nabla_a Q^{\reparampolicy_{\theta_1}}(s,a_{\theta_1}) \nu_{\theta_1}(ds) - \int_{s}\psi_{\theta_2}(s,\prior)\nabla_a Q^{\reparampolicy_{\theta_2}}(s,a_{\theta_2}) \nu_{\theta_2}(ds) } \priordist(d\prior). \label{eq:dpglipschitz_temp}
\end{align}
Now,
\begin{align}    
    & \norm{ \int_{s}\psi_{\theta_1}(s,\prior)\nabla_a Q^{\reparampolicy_{\theta_1}}(s,a_{\theta_1}) \nu_{\theta_1}(ds) - \int_{s}\psi_{\theta_2}(s,\prior)\nabla_a Q^{\reparampolicy_{\theta_2}}(s,a_{\theta_2}) \nu_{\theta_2}(ds) }\nonumber\\
    &\quad= \left\lVert \int_{s}\psi_{\theta_1}(s,\prior)\nabla_a Q^{\reparampolicy_{\theta_1}}(s,a_{\theta_1}) \nu_{\theta_1}(ds) - \int_{s}\psi_{\theta_1}(s,\prior)\nabla_a Q^{\reparampolicy_{\theta_1}}(s,a_{\theta_1}) \nu_{\theta_2}(ds) \right.\nonumber\\
    &\quad\quad + \int_{s}\psi_{\theta_1}(s,\prior)\nabla_a Q^{\reparampolicy_{\theta_1}}(s,a_{\theta_1}) \nu_{\theta_2}(ds) - \int_{s}\psi_{\theta_1}(s,\prior)\nabla_a Q^{\reparampolicy_{\theta_2}}(s,a_{\theta_2}) \nu_{\theta_2}(ds) \nonumber\\
    &\quad\quad + \left. \int_{s}\psi_{\theta_1}(s,\prior)\nabla_a Q^{\reparampolicy_{\theta_2}}(s,a_{\theta_2}) \nu_{\theta_2}(ds) - \int_{s}\psi_{\theta_2}(s,\prior)\nabla_a Q^{\reparampolicy_{\theta_2}}(s,a_{\theta_2}) \nu_{\theta_2}(ds) \right\rVert\nonumber\\
    &\quad\leq \norm{ \int_{s}\psi_{\theta_1}(s,\prior)\nabla_a Q^{\reparampolicy_{\theta_1}}(s,a_{\theta_1}) \nu_{\theta_1}(ds) - \int_{s}\psi_{\theta_1}(s,\prior)\nabla_a Q^{\reparampolicy_{\theta_1}}(s,a_{\theta_1}) \nu_{\theta_2}(ds) }\nonumber\\
    &\quad\quad + \norm{ \int_{s}\psi_{\theta_1}(s,\prior)\nabla_a Q^{\reparampolicy_{\theta_1}}(s,a_{\theta_1}) \nu_{\theta_2}(ds) - \int_{s}\psi_{\theta_1}(s,\prior)\nabla_a Q^{\reparampolicy_{\theta_2}}(s,a_{\theta_2}) \nu_{\theta_2}(ds) }\nonumber\\
    &\quad\quad + \norm{ \int_{s}\psi_{\theta_1}(s,\prior)\nabla_a Q^{\reparampolicy_{\theta_2}}(s,a_{\theta_2}) \nu_{\theta_2}(ds) - \int_{s}\psi_{\theta_2}(s,\prior)\nabla_a Q^{\reparampolicy_{\theta_2}}(s,a_{\theta_2}) \nu_{\theta_2}(ds) }\nonumber\\
    &\quad\leq \int_{s}\norm{ \psi_{\theta_1}(s,\prior) }\cdot\norm{ \nabla_a Q^{\reparampolicy_{\theta_1}}(s,a_{\theta_1})} |\nu_{\theta_1}(ds) -  \nu_{\theta_2}(ds)|\nonumber\\
    &\quad\quad + \int_{s}\norm{ \psi_{\theta_1}(s,\prior) }\cdot\norm{ \nabla_a Q^{\reparampolicy_{\theta_1}}(s,a_{\theta_1}) - \nabla_a Q^{\reparampolicy_{\theta_2}}(s,a_{\theta_2})} \nu_{\theta_2}(ds)\nonumber\\
    &\quad\quad + \int_{s}\norm{ \psi_{\theta_1}(s,\prior) - \psi_{\theta_2}(s,\prior) }\cdot \norm{ \nabla_a Q^{\reparampolicy_{\theta_2}}(s,a_{\theta_2})} \nu_{\theta_2}(ds)\nonumber\\
    &\quad\overset{\text{(i)}}{\leq} L_{\reparampolicy}C_Q \lTV{\nu_{\theta_1}(\cdot)-\nu_{\theta_2}(\cdot)} + L_{\reparampolicy}\int_{s}\norm{\nabla_a Q^{\reparampolicy_{\theta_1}}(s,a_{\theta_1})-\nabla_a Q^{\reparampolicy_{\theta_2}}(s,a_{\theta_2})}\nu_{\theta_2}(ds)\nonumber\\
    &\quad\quad + C_Q \int_{s}\norm{\psi_{\theta_1}(s,\prior) - \psi_{\theta_2}(s,\prior)}\nu_{\theta_2}(ds)\nonumber\\
    &\quad\overset{\text{(ii)}}{\leq} L_{\reparampolicy}C_Q \lTV{\nu_{\theta_1}(\cdot)-\nu_{\theta_2}(\cdot)} + L_{\reparampolicy}L'_Q\norm{\theta_1-\theta_2}\int_{s}\nu_{\theta_2}(ds) + C_QL_{\psi}\norm{\theta_1-\theta_2}\int_{s}\nu_{\theta_2}(ds)\nonumber\\
    &\quad\overset{\text{(iii)}}{=} L_{\reparampolicy}C_Q \lTV{\nu_{\theta_1}(\cdot)-\nu_{\theta_2}(\cdot)} + \frac{L_{\reparampolicy}L'_Q}{1-\gamma}\norm{\theta_1-\theta_2} + \frac{C_QL_{\psi}}{1-\gamma}\norm{\theta_1-\theta_2}\nonumber\\
    &\quad\overset{\text{(iv)}}{\leq} \parentheses{L_{\reparampolicy}C_Q L_{\nu} + \frac{L_{\reparampolicy}L'_Q}{1-\gamma} + \frac{C_QL_{\psi}}{1-\gamma}}\norm{\theta_1-\theta_2}\nonumber\\
    &\quad:= L_J\norm{\theta_1-\theta_2}, \label{eq:dpglipschitz_final}
\end{align}
where (i) follows because $\norm{\psi_{\theta}(s,\prior)}\leq L_{\reparampolicy}$ as indicated by Assumption \ref{asp:policy} and $\norm{\nabla_a Q^{\reparampolicy_{\theta}}(s,a)}\leq C_Q$ by \Cref{lem:Qgradient}, (ii) follows from Assumption \ref{asp:policy} and \Cref{lem:Qgradient}, (iii) follows because $\int_{s}\nu_{\theta}(ds)=\frac{1}{1-\gamma}$, and (iv) follows from \Cref{lem:stateVisitionLipschitz}. \textblue{Plugging \eqref{eq:dpglipschitz_final} to \eqref{eq:dpglipschitz_temp}, we finish the proof.}

\subsubsection{Supporting Lemmas for Proving Theorem \ref{thrm:thm_onPolicyDPG}}

\begin{lemma}\label{lem:minibatchVariance}
The following two properties hold.
\begin{enumerate}
    \item Let $\hat Y, \bar Y\in \mathbb R^{d_1\times d_2}$ be matrices satisfying $\lF{\hat Y}\leq C_{Y}, \lF{\bar Y}\leq C_{Y}$. If $\hat Y$ is an unbiased estimator of $\bar Y$ and $\{\hat Y_j\}_j$ are i.i.d.\ estimators, then we have
    \begin{align*}
        \mE\lF{\frac{1}{M}\sum_{j=0}^{M-1}\hat Y_j - \bar Y}^2 \leq \frac{4C_Y^2}{M}.
    \end{align*}
    \item Let $\hat y, \bar y\in \mathbb R^d$ be vectors satisfying $\norm{\hat y}\leq C_{y}, \norm{\bar y}\leq C_{y}$. If $\hat y$ is an unbiased estimator of $\bar y$ and $\{y_j\}_j$ are i.i.d.\ estimators, then we have
    \begin{align*}
        \mE\norm{\frac{1}{M}\sum_{j=0}^{M-1}\hat y_j - \bar y}^2 \leq \frac{4C_y^2}{M}.
    \end{align*}
\end{enumerate}
\end{lemma}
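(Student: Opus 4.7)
Both parts of the lemma are instances of the same variance-of-sample-mean calculation, the first using the Frobenius inner product on matrices and the second using the standard inner product on vectors. Since the Frobenius norm is simply the Euclidean norm of the vectorized matrix, I would prove part (1) in detail and note that part (2) follows by identical reasoning (or is just the special case $d_2=1$).

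The plan is to expand the squared norm of the centered sample mean. Writing $Z_j := \hat Y_j - \bar Y$, the $Z_j$ are i.i.d.\ with $\mathbb{E}[Z_j] = 0$ since $\hat Y_j$ is an unbiased estimator of $\bar Y$. Then
\begin{align*}
\mathbb{E}\bigl\lVert \tfrac{1}{M}\textstyle\sum_{j=0}^{M-1}\hat Y_j - \bar Y\bigr\rVert_F^2
= \tfrac{1}{M^2}\mathbb{E}\bigl\lVert \textstyle\sum_{j=0}^{M-1} Z_j\bigr\rVert_F^2
= \tfrac{1}{M^2}\textstyle\sum_{j,k}\mathbb{E}\langle Z_j, Z_k\rangle_F,
\end{align*}
where $\langle \cdot,\cdot\rangle_F$ is the trace inner product inducing $\lVert\cdot\rVert_F$. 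By independence of $Z_j$ and $Z_k$ for $j\neq k$ and $\mathbb{E}[Z_j]=0$, the cross terms vanish, leaving $\tfrac{1}{M^2}\sum_{j=0}^{M-1}\mathbb{E}\lVert Z_j\rVert_F^2 = \tfrac{1}{M}\mathbb{E}\lVert Z_0\rVert_F^2$.

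It remains to bound $\mathbb{E}\lVert \hat Y - \bar Y\rVert_F^2$. Here I would simply apply the triangle inequality together with the two given norm bounds: $\lVert \hat Y - \bar Y\rVert_F \leq \lVert \hat Y\rVert_F + \lVert \bar Y\rVert_F \leq 2 C_Y$, so $\mathbb{E}\lVert \hat Y - \bar Y\rVert_F^2 \leq 4 C_Y^2$, which yields the claimed $4 C_Y^2 / M$. Part (2) is obtained in exactly the same manner, replacing the Frobenius inner product by the Euclidean one and $C_Y$ by $C_y$.

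There is no real obstacle here: the only point worth being careful about is that the cross-term cancellation must be justified via independence plus mean-zero (rather than, say, pairwise uncorrelatedness of the norms), and that the constant $4$ in the final bound comes from the crude triangle-inequality step rather than from a sharper variance identity (a tighter $C_Y^2/M$ bound would follow from $\mathbb{E}\lVert \hat Y - \mathbb{E}\hat Y\rVert_F^2 \leq \mathbb{E}\lVert \hat Y\rVert_F^2$, but the loose form stated in the lemma is all that is needed downstream).
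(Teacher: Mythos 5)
Your proof is correct. The paper does not reproduce an argument for this lemma at all—it simply defers to Lemma 4 of \citet{xiong2022deterministic}—and your computation (cross terms vanish by independence and unbiasedness, each centered term bounded by $\lVert \hat Y_j - \bar Y\rVert_F \le 2C_Y$ via the triangle inequality) is precisely the standard argument underlying that cited result, including the correct observation that the constant $4$ is an artifact of the crude triangle-inequality step and that $C_Y^2/M$ would follow from the variance identity.
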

\begin{proof}
See the proof of Lemma 4 of \citet{xiong2022deterministic}.

\end{proof}

\begin{lemma}\label{lem:wStar}
Let $w^*_{\xi_{\theta}}$ be defined in \Cref{prop:compatibility}. Suppose Assumptions \ref{asp:policy}-\ref{asp:Qsmooth} hold. Then we have
\begin{align*}
    \norm{w^*_{\xi_{\theta}}} \leq C_{w_{\xi}},
\end{align*}
where $C_{w_{\xi}}=\frac{L_{\reparampolicy}C_Q}{\lambda_{\Psi}(1-\gamma)}$. Furthermore, for any $\theta_1,\theta_2$, we have
\begin{align*}
    \norm{w^*_{\xi_{\theta_1}} - w^*_{\xi_{\theta_2}}} \leq L_w\norm{\theta_1 - \theta_2},
\end{align*}
where $L_w=\frac{L_{J}}{\lambda_{\Psi}}  + \frac{L_{\reparampolicy}C_Q}{\lambda_{\Psi}^2(1-\gamma)}\parentheses{L_{\reparampolicy}^2L_{\nu} + \frac{2L_{\reparampolicy}L_{\psi}}{1-\gamma}}$.
\end{lemma}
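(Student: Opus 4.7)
\textbf{Proof proposal for Lemma \ref{lem:wStar}.} The plan is to obtain a closed-form expression for $w^*_{\xi_\theta}$ and then use it for both the uniform bound and the Lipschitz estimate. Because $w^*_{\xi_\theta}$ is the minimizer of a quadratic in $w$ (with $\nabla_a Q^w(s,\reparampolicy_\theta(s,\prior)) = \psi_\theta(s,\prior)^T w$ by the compatibility condition in Proposition \ref{prop:compatibility}), the first-order optimality condition yields the normal equation
\begin{equation*}
    \Psi_\theta\, w^*_{\xi_\theta} \;=\; \mE_{\nu_\theta,\priordist}\!\left[\psi_\theta(s,\prior)\,\nabla_a Q^{\reparampolicy_\theta}(s,a)|_{a=\reparampolicy_\theta(s,\prior)}\right] \;=\; \nabla J(\theta),
\end{equation*}
where the second equality is the reparameterization policy gradient formula \eqref{eq:dpgthm}. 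Since $\sigma_{\min}(\Psi_\theta) \ge \lambda_\Psi > 0$ by Assumption \ref{asp:policy}(3), this gives $w^*_{\xi_\theta} = \Psi_\theta^{-1}\nabla J(\theta)$.

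For the first claim I would simply bound $\nabla J(\theta)$ directly from \eqref{eq:dpgthm}: Assumption \ref{asp:policy}(1) gives $\|\psi_\theta(s,\prior)\| \le L_\reparampolicy$, Lemma \ref{lem:Qgradient} gives $\|\nabla_a Q^{\reparampolicy_\theta}(s,a)\| \le C_Q$, and $\int \nu_\theta(ds) = 1/(1-\gamma)$, so $\|\nabla J(\theta)\| \le L_\reparampolicy C_Q/(1-\gamma)$. Dividing by $\lambda_\Psi$ yields $\|w^*_{\xi_\theta}\| \le C_{w_\xi}$.

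For the Lipschitz claim I would use the identity
\begin{equation*}
    w^*_{\xi_{\theta_1}} - w^*_{\xi_{\theta_2}} \;=\; \Psi_{\theta_1}^{-1}\bigl(\nabla J(\theta_1) - \nabla J(\theta_2)\bigr) \;+\; \bigl(\Psi_{\theta_1}^{-1} - \Psi_{\theta_2}^{-1}\bigr)\nabla J(\theta_2).
\end{equation*}
The first summand is bounded by $(L_J/\lambda_\Psi)\|\theta_1-\theta_2\|$ via Lemma \ref{lem:dpglipschitz}. For the second, the resolvent identity $\Psi_{\theta_1}^{-1} - \Psi_{\theta_2}^{-1} = \Psi_{\theta_1}^{-1}(\Psi_{\theta_2}-\Psi_{\theta_1})\Psi_{\theta_2}^{-1}$ together with the eigenvalue bound gives $\|\Psi_{\theta_1}^{-1} - \Psi_{\theta_2}^{-1}\| \le \lambda_\Psi^{-2}\|\Psi_{\theta_1}-\Psi_{\theta_2}\|$, combined with $\|\nabla J(\theta_2)\| \le L_\reparampolicy C_Q/(1-\gamma)$ from the previous step.

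The main obstacle is a clean Lipschitz bound on $\theta \mapsto \Psi_\theta = \mE_{\nu_\theta,\priordist}[\psi_\theta \psi_\theta^T]$, which carries $\theta$-dependence in both the measure and the integrand. I would handle it by adding and subtracting $\mE_{\nu_{\theta_2},\priordist}[\psi_{\theta_1}\psi_{\theta_1}^T]$. The first piece is controlled by $\|\psi_{\theta_1}\psi_{\theta_1}^T\|_{\text{op}} \le L_\reparampolicy^2$ together with the total-variation bound $\|\nu_{\theta_1}-\nu_{\theta_2}\|_{TV} \le L_\nu\|\theta_1-\theta_2\|$ from Lemma \ref{lem:stateVisitionLipschitz}, producing the $L_\reparampolicy^2 L_\nu$ term. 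The second piece uses $\psi_1\psi_1^T - \psi_2\psi_2^T = (\psi_1-\psi_2)\psi_1^T + \psi_2(\psi_1-\psi_2)^T$ together with Assumption \ref{asp:policy}(1,2) and $\int\nu_{\theta_2}(ds)=1/(1-\gamma)$, giving the $2L_\reparampolicy L_\psi/(1-\gamma)$ term. Substituting back yields exactly the stated $L_w$.
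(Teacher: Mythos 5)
Your proposal is correct and follows essentially the same route as the paper's proof: the closed form $w^*_{\xi_\theta}=\Psi_\theta^{-1}\nabla J(\theta)$ (which the paper invokes directly from Proposition \ref{prop:compatibility}, while you derive it via the normal equations), the bound $\|\nabla J(\theta)\|\le L_{\reparampolicy}C_Q/(1-\gamma)$, the same splitting with the resolvent identity $\Psi_{\theta_1}^{-1}-\Psi_{\theta_2}^{-1}=\Psi_{\theta_1}^{-1}(\Psi_{\theta_2}-\Psi_{\theta_1})\Psi_{\theta_2}^{-1}$, and the same Lipschitz bound on $\Psi_\theta$ via Lemma \ref{lem:stateVisitionLipschitz}. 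Your handling of $\Psi_{\theta_1}-\Psi_{\theta_2}$ via a single mixed term and the identity $\psi_1\psi_1^T-\psi_2\psi_2^T=(\psi_1-\psi_2)\psi_1^T+\psi_2(\psi_1-\psi_2)^T$ is only a cosmetic rearrangement of the paper's three-term telescoping and yields the identical constant $L_{\reparampolicy}^2L_\nu + 2L_{\reparampolicy}L_\psi/(1-\gamma)$.
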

\begin{proof}
We first show the boundedness of $\norm{\nabla J(\theta)}$.
\begin{align}
    \norm{\nabla J(\theta)} & = \norm{\int_{s} \int_\prior \nabla_{\theta}\reparampolicy_{\theta}(s,\prior) \nabla_a Q^{\reparampolicy_{\theta}}(s,a)|_{a=\reparampolicy_{\theta}(s,\prior)} \priordist(d\prior) \nu_{\theta}(ds)} \nonumber\\
    &\leq \int_{s} \int_\prior \norm{\nabla_{\theta}\reparampolicy_{\theta}(s,\prior)} \norm{\nabla_a Q^{\reparampolicy_{\theta}}(s,a)|_{a=\reparampolicy_{\theta}(s,\prior)}}\priordist(d\prior) \nu_{\theta}(ds) \nonumber\\
    &\overset{\text{(i)}}{\leq} L_{\reparampolicy}C_Q \int_{s}\int_\prior \nu_{\theta}(ds)\priordist(d\prior)  = \frac{L_{\reparampolicy}C_Q}{(1-\gamma)}, \label{eq:pg_bound}
\end{align}
where (i) follows from Assumption \ref{asp:policy} and \Cref{lem:Qgradient}.

Recall we define $\Psi_{\theta}=\mE_{\nu_{\reparampolicy_{\theta}}}\brackets{ \nabla_{\theta}\reparampolicy_{\theta}(s,\prior) \nabla_{\theta}\reparampolicy_{\theta}(s,\prior)^T}$. Assumption \ref{asp:policy} implies that $\Psi_{\theta}$ is non-singular. Then by definition, we have
\begin{align*}
    \norm{w^*_{\xi_{\theta}}} 
    &=  \norm{\Psi_{\theta}^{-1}\nabla J(\theta)}
    \leq \frac{1}{\lambda_{\Psi}} \norm{\nabla J(\theta)}
    \leq \frac{L_{\reparampolicy}C_Q}{\lambda_{\Psi}(1-\gamma)}.
\end{align*}

Next, we show the Lipschitz continuity property.
\begin{align*}
    &\norm{w^*_{\xi_{\theta_1}} - w^*_{\xi_{\theta_2}}}\\
    &\quad= \norm{\Psi_{\theta_1}^{-1}\nabla J(\theta_1) - \Psi_{\theta_2}^{-1}\nabla J(\theta_2)}\\
    &\quad= \norm{\Psi_{\theta_1}^{-1}\nabla J(\theta_1) - \Psi_{\theta_1}^{-1}\nabla J(\theta_2) + \Psi_{\theta_1}^{-1}\nabla J(\theta_2) - \Psi_{\theta_2}^{-1}\nabla J(\theta_2)}\\
    &\quad\leq \norm{\Psi_{\theta_1}^{-1}(\nabla J(\theta_1) - \nabla J(\theta_2))} + \norm{\parentheses{\Psi_{\theta_1}^{-1} - \Psi_{\theta_2}^{-1}}\nabla J(\theta_2)}\\
    &\quad\overset{\text{(i)}}{\leq} \frac{L_{J}}{\lambda_{\Psi}} \norm{\theta_1 - \theta_2} + \norm{\parentheses{\Psi_{\theta_1}^{-1} - \Psi_{\theta_2}^{-1}}\nabla J(\theta_2)}\\
    &\quad= \frac{L_{J}}{\lambda_{\Psi}} \norm{\theta_1 - \theta_2} + \norm{\parentheses{\Psi_{\theta_1}^{-1}\Psi_{\theta_2}\Psi_{\theta_2}^{-1} - \Psi_{\theta_1}^{-1}\Psi_{\theta_1}\Psi_{\theta_2}^{-1}}\nabla J(\theta_2)}\\
    &\quad= \frac{L_{J}}{\lambda_{\Psi}} \norm{\theta_1 - \theta_2} + \norm{\Psi_{\theta_1}^{-1}\parentheses{\Psi_{\theta_2}-\Psi_{\theta_1}}\Psi_{\theta_2}^{-1}\nabla J(\theta_2)}\\
    &\quad\leq \frac{L_{J}}{\lambda_{\Psi}} \norm{\theta_1 - \theta_2} + \frac{1}{\lambda_{\Psi}^2}\norm{\Psi_{\theta_2}-\Psi_{\theta_1}}\norm{\nabla J(\theta_2)}\\
    &\quad\leq \frac{L_{J}}{\lambda_{\Psi}} \norm{\theta_1 - \theta_2} + \frac{L_{\reparampolicy}C_Q}{\lambda_{\Psi}^2(1-\gamma)}\norm{\Psi_{\theta_2}-\Psi_{\theta_1}},
\end{align*}
where (i) follows from \Cref{lem:dpglipschitz} and Assumption \ref{asp:policy}.

Observe that 
\begin{align}
    &\norm{\Psi_{\theta_2}-\Psi_{\theta_1}} \nonumber\\
    =& \norm{ \int_{s} \int_\prior \nabla_{\theta}\reparampolicy_{\theta_2}(s,\prior)\nabla_{\theta}\reparampolicy_{\theta_2}(s,\prior)^T \priordist(d\prior) \nu_{\theta_2}(ds) - \int_{s} \int_\prior \nabla_{\theta}\reparampolicy_{\theta_1}(s,\prior)\nabla_{\theta}\reparampolicy_{\theta_1}(s,\prior)^T \priordist(d\prior) \nu_{\theta_1}(ds) } \nonumber\\
    =& \color{blue} \norm{ \int_\prior \parentheses{\int_{s} \nabla_{\theta}\reparampolicy_{\theta_2}(s,\prior)\nabla_{\theta}\reparampolicy_{\theta_2}(s,\prior)^T \nu_{\theta_2}(ds) - \int_{s} \nabla_{\theta}\reparampolicy_{\theta_1}(s,\prior)\nabla_{\theta}\reparampolicy_{\theta_1}(s,\prior)^T \nu_{\theta_1}(ds) }\priordist(d\prior) } \nonumber\\
    \le& \color{blue} \int_\prior \norm{ \int_{s} \nabla_{\theta}\reparampolicy_{\theta_2}(s,\prior)\nabla_{\theta}\reparampolicy_{\theta_2}(s,\prior)^T \nu_{\theta_2}(ds) - \int_{s} \nabla_{\theta}\reparampolicy_{\theta_1}(s,\prior)\nabla_{\theta}\reparampolicy_{\theta_1}(s,\prior)^T \nu_{\theta_1}(ds) } \priordist(d\prior). \label{eq:gradient_smoothness_temp}
\end{align}
Now,
\begin{align}
    &\norm{ \int_{s} \nabla_{\theta}\reparampolicy_{\theta_2}(s,\prior)\nabla_{\theta}\reparampolicy_{\theta_2}(s,\prior)^T \nu_{\theta_2}(ds) - \int_{s} \nabla_{\theta}\reparampolicy_{\theta_1}(s,\prior)\nabla_{\theta}\reparampolicy_{\theta_1}(s,\prior)^T \nu_{\theta_1}(ds) } \nonumber\\
    &\quad\leq \norm{ \int_{s}  \nabla_{\theta}\reparampolicy_{\theta_2}(s,\prior)\nabla_{\theta}\reparampolicy_{\theta_2}(s,\prior)^T\nu_{\theta_2}(ds) - \int_{s}  \nabla_{\theta}\reparampolicy_{\theta_2}(s,\prior)\nabla_{\theta}\reparampolicy_{\theta_2}(s,\prior)^T\nu_{\theta_1}(ds) } \nonumber\\
    &\quad\quad + \norm{ \int_{s}  \nabla_{\theta}\reparampolicy_{\theta_2}(s,\prior)\nabla_{\theta}\reparampolicy_{\theta_2}(s,\prior)^T\nu_{\theta_1}(ds) - \int_{s}  \nabla_{\theta}\reparampolicy_{\theta_2}(s,\prior)\nabla_{\theta}\reparampolicy_{\theta_1}(s,\prior)^T\nu_{\theta_1}(ds) } \nonumber\\
    &\quad\quad + \norm{ \int_{s}  \nabla_{\theta}\reparampolicy_{\theta_2}(s,\prior)\nabla_{\theta}\reparampolicy_{\theta_1}(s,\prior)^T\nu_{\theta_1}(ds) - \int_{s}  \nabla_{\theta}\reparampolicy_{\theta_1}(s,\prior)\nabla_{\theta}\reparampolicy_{\theta_1}(s,\prior)^T\nu_{\theta_1}(ds) } \nonumber\\
    &\quad\overset{\text{(i)}}{\leq} L_{\reparampolicy}^2\lTV{\nu_{\theta_1}(\cdot)-\nu_{\theta_2}(\cdot)} + 2L_{\reparampolicy}\int_{s}  \norm{\nabla_{\theta}\reparampolicy_{\theta_2}(s,\prior) - \nabla_{\theta}\reparampolicy_{\theta_1}(s,\prior)}\nu_{\theta_1}(ds) \nonumber\\
    &\quad\overset{\text{(ii)}}{\leq} L_{\reparampolicy}^2\lTV{\nu_{\theta_1}(\cdot)-\nu_{\theta_2}(\cdot)} + \frac{2L_{\reparampolicy}L_{\psi}}{1-\gamma}\norm{\theta_1-\theta_2} \nonumber\\
    &\quad\overset{\text{(iii)}}{\leq} \parentheses{L_{\reparampolicy}^2L_{\nu} + \frac{2L_{\reparampolicy}L_{\psi}}{1-\gamma}}\norm{\theta_1-\theta_2}, \label{eq:gradient_smoothness_final}
\end{align}
where both (i) and (ii) follow from Assumption \ref{asp:policy}, and (iii) follows from \Cref{lem:stateVisitionLipschitz}. Plugging \eqref{eq:gradient_smoothness_temp} to \eqref{eq:gradient_smoothness_final}, we get
\begin{align*}
    \norm{\Psi_{\theta_2}-\Psi_{\theta_1}} \le \parentheses{L_{\reparampolicy}^2L_{\nu} + \frac{2L_{\reparampolicy}L_{\psi}}{1-\gamma}}\norm{\theta_1-\theta_2}.
\end{align*}

Thus, we have
\begin{align*}
    &\norm{w^*_{\xi_{\theta_1}} - w^*_{\xi_{\theta_2}}}\\
    &\quad\leq \frac{L_{J}}{\lambda_{\Psi}} \norm{\theta_1 - \theta_2} + \frac{L_{\reparampolicy}C_Q}{\lambda_{\Psi}^2(1-\gamma)}\norm{\Psi_{\theta_2}-\Psi_{\theta_1}}\\
    &\quad\leq \brackets{\frac{L_{J}}{\lambda_{\Psi}}  + \frac{L_{\reparampolicy}C_Q}{\lambda_{\Psi}^2(1-\gamma)}\parentheses{L_{\reparampolicy}^2L_{\nu} + \frac{2L_{\reparampolicy}L_{\psi}}{1-\gamma}}}\norm{\theta_1 - \theta_2}.
\end{align*}
\end{proof}

For the clarity of the presentation, we will use the following notation for the gradient estimate for $J(\theta_t)$:
\begin{align*}
    h_{\theta_t}(w_t, \mcb_t) = \frac{1}{M}\sum_{j=0}^{M-1}\nabla_{\theta}\reparampolicy_{\theta_t}(s'_{t,j}, \prior'_{t,j})\nabla_{\theta}\reparampolicy_{\theta_t}(s'_{t,j},\prior'_{t,j})^T w_t.
\end{align*}

\begin{lemma}\label{lem:hVariance}
Suppose Assumptions \ref{asp:policy}-\ref{asp:Qsmooth}. Then we have 
\begin{equation*}
    \mE\norm{h_{\theta_t}(w_{t},\mcb_t) -\nabla J(\theta_t)}^2 \leq 3L_{h}^2\mE\norm{w_{t}-w^*_{\theta_{t}}}^2 + 3L_{h}^2\kappa^2 + \frac{\textred{12}L_{\reparampolicy}^4C_{w_{\xi}}^2}{M},
\end{equation*}
where $L_h=L_{\reparampolicy}^2$ and $\kappa$ is defined in \eqref{eq:systemErrorKappa}.
\end{lemma}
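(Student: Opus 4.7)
\textbf{Proof Proposal for Lemma \ref{lem:hVariance}.} The plan is to decompose the estimation error into three pieces by inserting the reference points $w^*_{\theta_t}$ and $w^*_{\xi_{\theta_t}}$ between $w_t$ and $\nabla J(\theta_t)$, and then control each piece separately. Concretely, writing $\hat{\Psi}_{\theta_t} := \frac{1}{M}\sum_{j=0}^{M-1}\psi_{\theta_t}(s'_{t,j},\prior'_{t,j})\psi_{\theta_t}(s'_{t,j},\prior'_{t,j})^T$ so that $h_{\theta_t}(w,\mcb_t) = \hat{\Psi}_{\theta_t} w$, I would add and subtract to obtain
\begin{align*}
 h_{\theta_t}(w_t,\mcb_t) - \nabla J(\theta_t)
 &= \hat{\Psi}_{\theta_t}(w_t - w^*_{\theta_t}) + \hat{\Psi}_{\theta_t}(w^*_{\theta_t} - w^*_{\xi_{\theta_t}}) + \bigl(\hat{\Psi}_{\theta_t} w^*_{\xi_{\theta_t}} - \nabla J(\theta_t)\bigr),
\end{align*}
and then apply $\|a+b+c\|^2 \le 3(\|a\|^2+\|b\|^2+\|c\|^2)$.

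For the first and second terms I would use $\|\hat{\Psi}_{\theta_t}\|_{\mathrm{op}} \le L_{\reparampolicy}^2 = L_h$, which follows directly from Assumption~\ref{asp:policy} since each $\psi_{\theta_t}(s'_{t,j},\prior'_{t,j})\psi_{\theta_t}(s'_{t,j},\prior'_{t,j})^T$ has spectral norm at most $L_{\reparampolicy}^2$ and the operator norm is convex. This gives $\|\hat{\Psi}_{\theta_t}(w_t - w^*_{\theta_t})\|^2 \le L_h^2\|w_t - w^*_{\theta_t}\|^2$ and $\|\hat{\Psi}_{\theta_t}(w^*_{\theta_t} - w^*_{\xi_{\theta_t}})\|^2 \le L_h^2\|w^*_{\theta_t} - w^*_{\xi_{\theta_t}}\|^2 \le L_h^2\kappa^2$ by the definition of $\kappa$ in \eqref{eq:systemErrorKappa}.

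For the third term, the key observation is that by Proposition~\ref{prop:compatibility} and the definition $w^*_{\xi_{\theta_t}} = \Psi_{\theta_t}^{-1}\nabla J(\theta_t)$ (used in the proof of Lemma~\ref{lem:wStar}), we have $\Psi_{\theta_t} w^*_{\xi_{\theta_t}} = \nabla J(\theta_t)$. Hence
\[
 \hat{\Psi}_{\theta_t} w^*_{\xi_{\theta_t}} - \nabla J(\theta_t) = \bigl(\hat{\Psi}_{\theta_t} - \Psi_{\theta_t}\bigr)w^*_{\xi_{\theta_t}}.
\]
Since the samples $(s'_{t,j},\prior'_{t,j})$ are i.i.d.\ and each summand is an unbiased estimator of $\Psi_{\theta_t}$ with Frobenius norm uniformly bounded by $L_{\reparampolicy}^2$, part~1 of Lemma~\ref{lem:minibatchVariance} with $C_Y = L_{\reparampolicy}^2$ yields $\mE\|\hat{\Psi}_{\theta_t} - \Psi_{\theta_t}\|_F^2 \le 4L_{\reparampolicy}^4/M$. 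Combining with $\|w^*_{\xi_{\theta_t}}\| \le C_{w_\xi}$ from Lemma~\ref{lem:wStar} gives $\mE\|(\hat{\Psi}_{\theta_t} - \Psi_{\theta_t})w^*_{\xi_{\theta_t}}\|^2 \le 4L_{\reparampolicy}^4 C_{w_\xi}^2/M$. Multiplying by the factor~3 from the triangle inequality yields the $12L_{\reparampolicy}^4 C_{w_\xi}^2/M$ term, and assembling all three bounds produces the claimed inequality.

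The argument is mostly mechanical; the only nontrivial design choice is the decomposition through $w^*_{\xi_{\theta_t}}$, which is essential because $\mE[\hat{\Psi}_{\theta_t} w^*_{\xi_{\theta_t}}] = \Psi_{\theta_t} w^*_{\xi_{\theta_t}} = \nabla J(\theta_t)$, whereas $\mE[\hat{\Psi}_{\theta_t} w^*_{\theta_t}]$ is \emph{not} equal to $\nabla J(\theta_t)$ in general. The distinction between the TD fixed point $w^*_\theta$ and the compatible-feature optimum $w^*_{\xi_\theta}$ is precisely what $\kappa$ measures, and it appears in the bound through the middle term of the decomposition.
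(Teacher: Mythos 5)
Your proposal is correct and follows essentially the same route as the paper's proof: the identical three-term decomposition through $w^*_{\theta_t}$ and $w^*_{\xi_{\theta_t}}$, the inequality $\norm{a+b+c}^2 \le 3(\norm{a}^2+\norm{b}^2+\norm{c}^2)$, the Lipschitz bound $L_h = L_{\reparampolicy}^2$ from Assumption \ref{asp:policy} for the first two terms, the definition of $\kappa$ for the middle term, and \Cref{lem:minibatchVariance} for the last term, yielding the corrected constant $12$. The only cosmetic difference is that you apply the matrix version of \Cref{lem:minibatchVariance} to $\hat{\Psi}_{\theta_t}-\Psi_{\theta_t}$ with $C_Y = L_{\reparampolicy}^2$ and then pull out $\norm{w^*_{\xi_{\theta_t}}}\le C_{w_{\xi}}$, whereas the paper applies the vector version directly to the summands $\nabla_{\theta}\reparampolicy_{\theta_t}\nabla_{\theta}\reparampolicy_{\theta_t}^T w^*_{\xi_{\theta_t}}$ (which forces the extra check $\norm{\nabla J(\theta_t)} \le L_{\reparampolicy}^2 C_{w_{\xi}}$ via $\lambda_{\Psi} \le L_{\reparampolicy}^2$); both yield the same $4L_{\reparampolicy}^4 C_{w_{\xi}}^2/M$ bound.
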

\begin{proof}
By definition, we have
\begin{align}
    &\mE\norm{h_{\theta_t}(w_t,\mcb_t)-\nabla J(\theta_t)}^2 \nonumber\\
    &\quad= \mE\norm{h_{\theta_t}(w_t,\mcb_t) - h_{\theta_t}(w^*_{\theta_t},\mcb_t) + h_{\theta_t}(w^*_{\theta_t},\mcb_t) -h_{\theta_t}(w^*_{\xi_{\theta_t}},\mcb_t) + h_{\theta_t}(w^*_{\xi_{\theta_t}},\mcb_t) -\nabla J(\theta_t)}^2 \nonumber\\
    &\quad\leq 3\mE\norm{h_{\theta_t}(w_t,\mcb_t) - h_{\theta_t}(w^*_{\theta_t},\mcb_t)}^2 + 3\mE\norm{h_{\theta_t}(w^*_{\theta_t},\mcb_t) -h_{\theta_t}(w^*_{\xi_{\theta_t}},\mcb_t)}^2 \nonumber\\
    &\quad\quad+ 3\mE\norm{h_{\theta_t}(w^*_{\xi_{\theta_t}},\mcb_t) -\nabla J(\theta_t)}^2\nonumber\\
    &\quad\overset{\text{(i)}}{\leq} 3L_{h}^2\mE\norm{w_{t}-w^*_{\theta_{t}}}^2 + 3L_{h}^2\mE\norm{w^*_{\theta_{t}}-w^*_{\xi_{\theta_t}}}^2 + 3\mE\norm{h_{\theta_t}(w^*_{\xi_{\theta_t}},\mcb_t) -\nabla J(\theta_t)}^2\nonumber \\
    &\quad\overset{\text{(ii)}}{\leq} 3L_{h}^2\mE\norm{w_{t}-w^*_{\theta_{t}}}^2 + 3L_{h}^2\kappa^2 + 3\mE\norm{h_{\theta_t}(w^*_{\xi_{\theta_t}},\mcb_t) -\nabla J(\theta_t)}^2\nonumber \\
    &\quad\overset{\text{(iii)}}{\leq} 3L_{h}^2\mE\norm{w_{t}-w^*_{\theta_{t}}}^2 + 3L_{h}^2\kappa^2 + \frac{ \textred{12}L_{\reparampolicy}^4C_{w_{\xi}}^2}{M}\nonumber, 
\end{align}
where (i) follows because for any $w_1, w_2, \theta\in\mathbb R^d$, we have
\begin{align*}
    \norm{h_{\theta}(w_1,\mcb_t) - h_{\theta}(w_2,\mcb_t)} 
    & = \norm{ \frac{1}{M}\sum_{j=0}^{M-1}\nabla_{\theta}\reparampolicy_{\theta_t}(s'_{t,j},\prior'_{t,j})\nabla_{\theta}\reparampolicy_{\theta_t}(s'_{t,j},\prior'_{t,j})^T (w_1-w_2) }\\
    & \leq L_{\reparampolicy}^2\norm{w_1-w_2}
    := L_h\norm{w_1-w_2},
\end{align*}
(ii) follows from \eqref{eq:systemErrorKappa}, and (iii) holds due to the fact that
\begin{align}
    &\mE\norm{h_{\theta_t}(w^*_{\xi_{\theta_t}},\mcb_t) -\nabla J(\theta_t)}^2 \nonumber\\
    &\quad= \mE\norm{ \frac{1}{M}\sum_{j=0}^{M-1}\nabla_{\theta}\reparampolicy_{\theta_t}(s'_{t,j}, \prior'_{t,j})\nabla_{\theta}\reparampolicy_{\theta_t}(s'_{t,j}, \prior'_{t,j})^Tw^*_{\xi_{\theta_t}} -\nabla J(\theta_t) }^2 \nonumber\\
    &\quad= \frac{1}{M^2}\sum_{i=0}^{M-1}\sum_{j=0}^{M-1}\mE\langle \nabla_{\theta}\reparampolicy_{\theta_t}(s'_{t,i}, \prior'_{t,i})\nabla_{\theta}\reparampolicy_{\theta_t}(s'_{t,i}, \prior'_{t,i})^Tw^*_{\xi_{\theta_t}}-\nabla J(\theta_t), \nonumber\\
    &\quad\qquad \nabla_{\theta}\reparampolicy_{\theta_t}(s'_{t,j}, \prior'_{t,j})\nabla_{\theta}\reparampolicy_{\theta_t}(s'_{t,j}, \prior'_{t,j})^Tw^*_{\xi_{\theta_t}}-\nabla J(\theta_t) \rangle \nonumber\\
    &\quad= \frac{1}{M^2}\sum_{j=0}^{M-1}\mE\norm{ \nabla_{\theta}\reparampolicy_{\theta_t}(s'_{t,j}, \prior'_{t,j})\nabla_{\theta}\reparampolicy_{\theta_t}(s'_{t,j}, \prior'_{t,j})^Tw^*_{\xi_{\theta_t}} -\nabla J(\theta_t) }^2 \nonumber\\
    &\quad\overset{\text{(i)}}{\leq} \frac{1}{M^2}\sum_{j=0}^{M-1} \textred{4}L_{\reparampolicy}^4C_{w_{\xi}}^2 = \frac{\textred{4}L_{\reparampolicy}^4C_{w_{\xi}}^2}{M}\nonumber,
\end{align}
where 
(i) follows from Assumption \ref{asp:policy}, \Cref{lem:minibatchVariance} and \Cref{lem:wStar}.\footnote{In \citet{xiong2022deterministic}, the constant of the last equation is $2$, without derivation. Here, we correct this constant to $4$ following our detailed derivation.} \textred{Here, to apply \Cref{lem:minibatchVariance}, we need to upper-bound the norms of both the unbiased estimators and their expectation. For the former, we have
\begin{align*}
    \norm{\nabla_{\theta}f_{\theta_t}(s'_{t,j}, \epsilon'_{t,j})\nabla_{\theta}f_{\theta_t}(s'_{t,j}, \epsilon'_{t,j})^Tw^*_{\xi_{\theta_t}}} \le L_f^2 C_{w_{\xi}},
\end{align*}
while for the latter, we have
\begin{align*}
    \norm{\nabla J(\theta_t)} \le \frac{L_f C_Q}{(1-\gamma)} = \lambda_\Psi C_{w_{\xi}} \le L_f^2 C_{w_{\xi}},
\end{align*}
where we use the bound of $\nabla J(\theta_t)$ derived in \eqref{eq:pg_bound}, the definition of $C_{w_{\xi}}$ in \Cref{lem:wStar}, and that
\begin{align*}
    \lambda_\Psi \le \frac{1}{n} \text{trace}(\Psi) = \norm{ \mE_{\nu_\theta,p} \brackets{ \nabla_{\theta}f_{\theta_t}(s, \epsilon)\nabla_{\theta}f_{\theta_t}(s, \epsilon)^T } }^2 \le L_f^2.
\end{align*}
Then, by \Cref{lem:minibatchVariance}, we have
\begin{align*}
    \mE\norm{ \nabla_{\theta}f_{\theta_t}(s'_{t,j}, \epsilon'_{t,j})\nabla_{\theta}f_{\theta_t}(s'_{t,j}, \epsilon'_{t,j})^Tw^*_{\xi_{\theta_t}} -\nabla J(\theta_t) }^2 \le {4}L_{f}^4C_{w_{\xi}}^2.
\end{align*}
}
\end{proof}

\subsubsection{Proof of Theorem \ref{thrm:thm_onPolicyDPG}}

We use the following notations for the clarity of the presentation:
\begin{align*}
    g_{\theta_t}(w_t, \mcb_t) &= \frac{1}{M}\sum_{j=0}^{M-1}\delta_{t,j}\phi(x_{t,j}) = \frac{1}{M}\sum_{j=0}^{M-1}\parentheses{A_{t,j}w_t + b_{t,j}} := \hat{A}_tw_t + \hat{b}_t;\\
    \bar g_{\theta_t}(w_t) &= \mE_{d_{\theta_t}} \brackets{\delta_t\phi(x_t)} = \bar A_tw_t + \bar b_t;\\
    \bar g_{\theta_t}(w^*_{\theta_t}) &= \bar A_tw^*_{\theta_t} + \bar b_t = 0.
\end{align*}

\noindent\textbf{Step I: Characterizing dynamics of critic's error via coupling with actor.}

In the first step, we characterize the propagation of the dynamics of critic's dynamic tracking error based on its coupling with actor's updates. That is, we develop the relationship between $\norm{w_{t+1}-w^*_{\theta_{t+1}}}^2$ and $\norm{w_{t}-w^*_{\theta_t}}^2$ by their coupling with actor's updates.

We first use the dynamics of the critic to obtain
\begin{align*}
    &\norm{w_{t+1}-w^*_{\theta_t}}^2 \\
    &\quad= \norm{w_{t}+\alpha_{w}g_{\theta_t}(w_t, \mcb_t)-w^*_{\theta_t}}^2\\
    &\quad= \norm{w_{t}-w^*_{\theta_t}}^2 + 2\alpha_{w}\langle w_{t}-w^*_{\theta_t}, g_{\theta_t}(w_t, \mcb_t)\rangle + \alpha_{w}^2\norm{g_{\theta_t}(w_t, \mcb_t)}^2\\
    &\quad= \norm{w_{t}-w^*_{\theta_t}}^2 + 2\alpha_{w}\langle w_{t}-w^*_{\theta_t}, \bar g_{\theta_t}(w_t)\rangle + 2\alpha_{w}\langle w_{t}-w^*_{\theta_t}, g_{\theta_t}(w_t, \mcb_t) - \bar g_{\theta_t}(w_t)\rangle \\
    &\quad\quad+ \alpha_{w}^2\norm{g_{\theta_t}(w_t, \mcb_t)}^2\\
    &\quad= \norm{w_{t}-w^*_{\theta_t}}^2 + 2\alpha_{w}( w_{t}-w^*_{\theta_t})^T\bar A_t ( w_{t}-w^*_{\theta_t}) + 2\alpha_{w}\langle w_{t}-w^*_{\theta_t}, g_{\theta_t}(w_t, \mcb_t) - \bar g_{\theta_t}(w_t)\rangle \\
    &\quad\quad+ \alpha_{w}^2\norm{g_{\theta_t}(w_t, \mcb_t)}^2\\
    &\quad\overset{\text{(i)}}{\leq} (1-2\alpha_{w}\lambda)\norm{w_{t}-w^*_{\theta_t}}^2 + 2\alpha_{w}\langle w_{t}-w^*_{\theta_t}, g_{\theta_t}(w_t, \mcb_t) - \bar g_{\theta_t}(w_t)\rangle + \alpha_{w}^2\norm{g_{\theta_t}(w_t, \mcb_t)}^2\\
    &\quad\leq (1-2\alpha_{w}\lambda)\norm{w_{t}-w^*_{\theta_t}}^2 + 2\alpha_{w}\langle w_{t}-w^*_{\theta_t}, g_{\theta_t}(w_t, \mcb_t) - \bar g_{\theta_t}(w_t)\rangle \\
    &\quad\quad+ 2\alpha_{w}^2\norm{g_{\theta_t}(w_t, \mcb_t)-\bar g_{\theta_t}(w_t)}^2 + 2\alpha_{w}^2\norm{\bar g_{\theta_t}(w_t)}^2\\
    &\quad= (1-2\alpha_{w}\lambda)\norm{w_{t}-w^*_{\theta_t}}^2 + 2\alpha_{w}\langle w_{t}-w^*_{\theta_t}, g_{\theta_t}(w_t, \mcb_t) - \bar g_{\theta_t}(w_t)\rangle\\
    &\quad\quad + 2\alpha_{w}^2\norm{g_{\theta_t}(w_t, \mcb_t)-\bar g_{\theta_t}(w_t)}^2 + 2\alpha_{w}^2\norm{\bar g_{\theta_t}(w_t)-\bar g_{\theta_t}(w^*_{\theta_t})}^2\\
    &\quad= (1-2\alpha_{w}\lambda)\norm{w_{t}-w^*_{\theta_t}}^2 + 2\alpha_{w}\langle w_{t}-w^*_{\theta_t}, g_{\theta_t}(w_t, \mcb_t) - \bar g_{\theta_t}(w_t)\rangle\\
    &\quad\quad + 2\alpha_{w}^2\norm{g_{\theta_t}(w_t, \mcb_t)-\bar g_{\theta_t}(w_t)}^2 + 2\alpha_{w}^2\norm{\bar A_t(w_t - w^*_{\theta_t})}^2\\
    &\quad\leq (1-2\alpha_{w}\lambda)\norm{w_{t}-w^*_{\theta_t}}^2 + 2\alpha_{w}\langle w_{t}-w^*_{\theta_t}, g_{\theta_t}(w_t, \mcb_t) - \bar g_{\theta_t}(w_t)\rangle\\
    &\quad\quad + 2\alpha_{w}^2\norm{g_{\theta_t}(w_t, \mcb_t)-\bar g_{\theta_t}(w_t)}^2 + 2\alpha_{w}^2\norm{\bar A_t}^2\norm{(w_t - w^*_{\theta_t})}^2\\
    &\quad\overset{\text{(ii)}}{\leq} (1-2\alpha_{w}\lambda+2\alpha_{w}^2C_A^2)\norm{w_{t}-w^*_{\theta_t}}^2 + 2\alpha_{w}\langle w_{t}-w^*_{\theta_t}, g_{\theta_t}(w_t, \mcb_t) - \bar g_{\theta_t}(w_t)\rangle\\
    &\quad\quad + 2\alpha_{w}^2\norm{g_{\theta_t}(w_t, \mcb_t)-\bar g_{\theta_t}(w_t)}^2,
\end{align*}
where (i) follows from the property $( w_{t}-w^*_{\theta_t})^T\bar A_t ( w_{t}-w^*_{\theta_t})\leq-\lambda\norm{w_{t}-w^*_{\theta_t}}^2$ with some constant $\lambda>0$ for any policy, which has been proved 
in \cite{tsitsiklis1997analysis,Bhandari2018finite,tu2018gap,xiong2021non}, and (ii) follows because $\norm{A}^2\leq 2(1+\gamma^2)C_{\phi}^4\leq 4C_{\phi}^4:=C_A^2$.

Taking the expectation over the actor and the critic parameters on both sides yields
\begin{align}
    &\mE\norm{w_{t+1}-w^*_{\theta_t}}^2 \nonumber\\
    &\leq (1-2\alpha_{w}\lambda+2\alpha_{w}^2C_A^2)\mE\norm{w_{t}-w^*_{\theta_t}}^2 + 2\alpha_{w}\mE\langle w_{t}-w^*_{\theta_t}, g_{\theta_t}(w_t, \mcb_t) - \bar g_{\theta_t}(w_t)\rangle\nonumber\\
    &\quad + 2\alpha_{w}^2\mE\norm{g_{\theta_t}(w_t, \mcb_t)-\bar g_{\theta_t}(w_t)}^2 \nonumber\\
    &= (1-2\alpha_{w}\lambda+2\alpha_{w}^2C_A^2)\mE\norm{w_{t}-w^*_{\theta_t}}^2 + 2\alpha_{w}^2\mE\norm{g_{\theta_t}(w_t, \mcb_t)-\bar g_{\theta_t}(w_t)}^2. \label{eq:thm1Proof1}
\end{align}

Observe that
\begin{align*}
    &\mE\norm{g_{\theta_t}(w_t, \mcb_t)-\bar g_{\theta_t}(w_t)}^2 \\
    &\quad = \mE\norm{ \hat{A}_tw_t + \hat{b}_t - \bar A_tw_t - \bar b_t }^2\\
    &\quad\overset{\text{(i)}}{\leq} 3\mE\norm{ (\hat{A}_t - \bar A_t)(w_t - w^*_{\theta_t}) }^2 + 3\mE\norm{ (\hat{A}_t - \bar A_t) w^*_{\theta_t} }^2 + 3\mE\norm{ \hat{b}_t - \bar b_t }^2 \\
    &\quad\leq 3\mE\lF{ \hat{A}_t - \bar A_t }^2 \norm{ w_t - w^*_{\theta_t} }^2 + 3\mE\lF{ \hat{A}_t - \bar A_t }^2\norm{  w^*_{\theta_t} }^2 + 3\mE\norm{ \hat{b}_t - \bar b_t }^2\\
    &\quad\overset{\text{(ii)}}{\leq} \frac{12C_A^2}{M}\mE\norm{w_{t}-w^*_{\theta_t}}^2 + \frac{12(C_A^2\mE\norm{  w^*_{\theta_t} }^2 + C_b^2)}{M}\\
    &\quad\overset{\text{(iii)}}{\leq} \frac{12C_A^2}{M}\mE\norm{w_{t}-w^*_{\theta_t}}^2 + \frac{12(C_A^2C_w^2 + C_b^2)}{M},
\end{align*}
where (i) follows because $(x+y+z)^2\leq 3x^2+3y^2+3z^2$, (ii) follows from \Cref{lem:minibatchVariance} and $C_b:=R_{\max}C_{\phi}\geq \norm{b}$, and (iii) follows because $\norm{  w^*_{\theta_t} }^2 = \norm{ \bar A_t^{-1}\bar b_t }^2 \leq C_b/\lambda_A = R_{\max}C_{\phi}/\lambda_A := C_{w}$ by Assumption \ref{asp:phi}.

Substituting the above bound into \eqref{eq:thm1Proof1}, we have
\begin{align}
    &\mE\norm{w_{t+1}-w^*_{\theta_t}}^2 \nonumber\\
    &\quad\leq (1-2\alpha_{w}\lambda+2\alpha_{w}^2C_A^2)\mE\norm{w_{t}-w^*_{\theta_t}}^2 + 2\alpha_{w}^2\mE\norm{g_{\theta_t}(w_t, \mcb_t)-\bar g_{\theta_t}(w_t)}^2\nonumber\\
    &\quad\leq \parentheses{1-2\alpha_{w}\lambda+2\alpha_{w}^2C_A^2+\frac{24\alpha_w^2 C_A^2}{M}}\mE\norm{w_{t}-w^*_{\theta_t}}^2 + \frac{24\alpha_w^2(C_A^2C_w^2 + C_b^2)}{M}\nonumber.
\end{align}

Since $\alpha_w \leq \frac{\lambda}{2C_A^2}; M\geq\frac{48\alpha_w  C_A^2}{\lambda},$
we further obtain
\begin{align}
    &\mE\norm{w_{t+1}-w^*_{\theta_t}}^2 \nonumber\\
    &\quad\leq \parentheses{1-2\alpha_{w}\lambda+2\alpha_{w}^2C_A^2+\frac{24\alpha_w^2C_A^2}{M}}\mE\norm{w_{t}-w^*_{\theta_t}}^2 + \frac{24\alpha_w^2(C_A^2C_w^2 + C_b^2)}{M}\nonumber\\
    &\quad\leq \parentheses{1-\frac{\alpha_w\lambda}{2}}\mE\norm{w_{t}-w^*_{\theta_t}}^2 + \frac{24\alpha_w^2(C_A^2C_w^2 + C_b^2)}{M}.\label{eq:thm1Proof2}
\end{align}

Next, we use Young's inequality, and obtain
\begin{align}
    &\mE\norm{w_{t+1}-w^*_{\theta_{t+1}}}^2 \nonumber\\
    \leq& \parentheses{1+\frac{1}{2(2/\lambda\alpha_w-1)}}\mE\norm{w_{t+1}-w^*_{\theta_{t}}}^2 + \parentheses{1+2(2/\lambda\alpha_w-1)}\mE\norm{w^*_{\theta_{t}}-w^*_{\theta_{t+1}}}^2\nonumber\\
    \overset{\text{(i)}}{\leq}& \parentheses{1-\frac{\lambda\alpha_w}{4}}\mE\norm{w_{t}-w^*_{\theta_{t}}}^2 + \frac{4-\lambda\alpha_w}{4-2\lambda\alpha_w}\cdot \frac{24\alpha_w^2(C_A^2C_w^2 + C_b^2)}{M} + \frac{4}{\lambda\alpha_w}\mE\norm{w^*_{\theta_{t}}-w^*_{\theta_{t+1}}}^2\nonumber\\
    \overset{\text{(ii)}}{\leq}& \parentheses{1-\frac{\lambda\alpha_w}{4}}\mE\norm{w_{t}-w^*_{\theta_{t}}}^2 + \frac{4-\lambda\alpha_w}{4-2\lambda\alpha_w}\cdot \frac{24\alpha_w^2(C_A^2C_w^2 + C_b^2)}{M} + \frac{\textred{12}L_w^2}{\lambda\alpha_w}\mE\norm{\theta_{t+1}-\theta_{t}}^2 \textred{+ \frac{24\kappa^2}{\lambda\alpha_w}}, \label{eq:thm1proofDynTrackingError}
\end{align}
where (i) follows from the bound derived in \eqref{eq:thm1Proof2}, and \textred{(ii) holds due to the fact that\footnote{In \citet{xiong2022deterministic}, they directly use $\mE\norm{w^*_{\theta_{t}}-w^*_{\theta_{t+1}}}^2 \le L_w \mE\norm{\theta_{t+1}-\theta_{t}}^2$, which is not proven and is different from the inequality from \Cref{lem:wStar}: $\mE\norm{w^*_{\xi_{\theta_{t}}}-w^*_{\xi_{\theta_{t+1}}}}^2 \le L_w \mE\norm{\theta_{t+1}-\theta_{t}}^2$. Here, we use the triangle inequality to give a bound for $\mE\norm{w^*_{\theta_{t}}-w^*_{\theta_{t+1}}}^2$.}
\begin{align*}
    &\mE\norm{w^*_{\theta_{t}}-w^*_{\theta_{t+1}}}^2 \\
    &\quad\le 3\mE\norm{w^*_{\theta_{t}}- w^*_{\xi_{\theta_{t}}}}^2 + 3\mE\norm{w^*_{\xi_{\theta_{t}}}-w^*_{\xi_{\theta_{t+1}}}}^2 + 3\mE\norm{w^*_{\xi_{\theta_{t+1}}}-w^*_{\theta_{t+1}}}^2 \\
    &\quad\overset{\text{(i)}}{\leq} 3\kappa^2 + 3\mE\norm{w^*_{\xi_{\theta_{t}}}-w^*_{\xi_{\theta_{t+1}}}}^2 + 3\kappa^2 \\
    &\quad\overset{\text{(ii)}}{\leq} 6\kappa^2 + 3L_w^2 \mE\norm{\theta_{t+1}-\theta_{t}}^2,
\end{align*}
where (i) follows from the definition of $\kappa$ in \eqref{eq:systemErrorKappa}, and (ii) follows from \Cref{lem:wStar}.
}

\noindent\textbf{Step II: Bounding cumulative tracking error via compatibility theorem for DPG.}

In this step, we bound the cumulative tracking error based on the dynamics of the tracking error from the last step. To this end, we need to first bound the difference between two consecutive actor parameters.

Observe that $\theta_{t+1}-\theta_{t} = \frac{\textred{\alpha_\theta}}{M}\sum_{j=0}^{M-1}\nabla_{\theta}\reparampolicy_{\theta_t}(s'_{t,j}, \prior'_{t,j})\nabla_{\theta}\reparampolicy_{\theta_t}(s'_{t,j}, \prior'_{t,j})^T w_t = \textred{\alpha_\theta} h_{\theta_t}(w_t, \mcb_t)$ and $\mE\norm{h_{\theta_t}(w_t,\mcb_t)}^2\leq 2\mE\norm{\nabla J(\theta_t)}^2+2\mE\norm{h_{\theta_t}(w_t,\mcb_t)-\nabla J(\theta_t)}^2$. We proceed to bound (\ref{eq:thm1proofDynTrackingError}) as follows
\begin{align}
    &\mE\norm{w_{t+1}-w^*_{\theta_{t+1}}}^2 \nonumber\\
    \leq& \parentheses{1-\frac{\lambda\alpha_w}{4}}\mE\norm{w_{t}-w^*_{\theta_{t}}}^2 + \frac{4-\lambda\alpha_w}{4-2\lambda\alpha_w}\cdot \frac{24\alpha_w^2(C_A^2C_w^2 + C_b^2)}{M} + \frac{\textred{12}L_w^2}{\lambda\alpha_w}\mE\norm{\theta_{t+1}-\theta_{t}}^2 \textred{+ \frac{24\kappa^2}{\lambda\alpha_w}} \nonumber\\
    \leq& \parentheses{1-\frac{\lambda\alpha_w}{4}}\mE\norm{w_{t}-w^*_{\theta_{t}}}^2 + \frac{48\alpha_w^2(C_A^2C_w^2 + C_b^2)}{M} + \frac{\textred{24}L_w^2\alpha_{\theta}^2}{\lambda\alpha_w}\mE\norm{\nabla J(\theta_t)}^2\nonumber\\
    \quad& + \frac{\textred{24}L_w^2\alpha_{\theta}^2}{\lambda\alpha_w}\mE\norm{h_{\theta_t}(w_t,\mcb_t)-\nabla J(\theta_t)}^2 \textred{+ \frac{24\kappa^2}{\lambda\alpha_w}} \nonumber\\
    \overset{\text{(i)}}{\leq}& \parentheses{1-\frac{\lambda\alpha_w}{4}+\frac{\textred{72}L_h^2L_w^2\alpha_{\theta}^2}{\lambda\alpha_w}}\mE\norm{w_{t}-w^*_{\theta_{t}}}^2 + \frac{48\alpha_w^2(C_A^2C_w^2 + C_b^2)}{M} + \frac{\textred{24}L_w^2\alpha_{\theta}^2}{\lambda\alpha_w}\mE\norm{\nabla J(\theta_t)}^2\nonumber\\
    \quad& + \frac{\textred{24}L_w^2\alpha_{\theta}^2}{\lambda\alpha_w}\parentheses{3L_{h}^2\kappa^2 + \frac{\textred{12}L_{\reparampolicy}^4C_{w_{\xi}}^2}{M}} \textred{+ \frac{24\kappa^2}{\lambda\alpha_w}} \nonumber\\
    \overset{\text{(ii)}}{\leq}& \parentheses{1-\frac{\lambda\alpha_w}{8}}\mE\norm{w_{t}-w^*_{\theta_{t}}}^2 + \frac{\textred{24}L_w^2\alpha_{\theta}^2}{\lambda\alpha_w}\mE\norm{\nabla J(\theta_t)}^2 + \frac{48\alpha_w^2(C_A^2C_w^2 + C_b^2)}{M} \nonumber\\
    \quad& + \frac{\textred{24}L_w^2\alpha_{\theta}^2}{\lambda\alpha_w}\parentheses{3L_{h}^2\kappa^2 + \frac{\textred{12}L_{\reparampolicy}^4C_{w_{\xi}}^2}{M}} \textred{+ \frac{24\kappa^2}{\lambda\alpha_w}}, \label{eq:thm1Proof3}
\end{align}
where (i) follows from \Cref{lem:hVariance}, and (ii) follows because $\alpha_{\theta} \leq \frac{\lambda\alpha_w}{\textred{24}L_h L_w}.$

We further take the summation over all iterations on both sides of (\ref{eq:thm1Proof3}) and have
\begin{align}
    &\sum_{t=0}^{T-1}\mE\norm{w_{t}-w^*_{\theta_{t}}}^2 \nonumber\\
    \leq& \sum_{t=0}^{T-1} \parentheses{1-\frac{\lambda\alpha_w}{8}}^t\norm{w_{0}-w^*_{\theta_{0}}}^2 + \frac{\textred{24}L_w^2\alpha_{\theta}^2}{\lambda\alpha_w}\sum_{t=0}^{T-1}\sum_{i=0}^{t-1}\parentheses{1-\frac{\lambda\alpha_w}{8}}^{t-1-i}\mE\norm{\nabla J(\theta_t)}^2 \nonumber\\
    \quad& + \brackets{\frac{48\alpha_w^2(C_A^2C_w^2 + C_b^2)}{M} + \frac{\textred{24}L_w^2\alpha_{\theta}^2}{\lambda\alpha_w}\parentheses{3L_{h}^2\kappa^2 + \frac{\textred{12}L_{\reparampolicy}^4C_{w_{\xi}}^2}{M}} 
    \textred{+ \frac{24\kappa^2}{\lambda\alpha_w}}
    }\sum_{t=0}^{T-1}\sum_{i=0}^{t-1}\parentheses{1-\frac{\lambda\alpha_w}{8}}^{t-1-i} \nonumber\\
    \leq& \frac{8\norm{w_{0}-w^*_{\theta_{0}}}^2}{\lambda\alpha_w} + \brackets{\frac{48\alpha_w^2(C_A^2C_w^2 + C_b^2)}{M} + \frac{\textred{24}L_w^2\alpha_{\theta}^2}{\lambda\alpha_w}\parentheses{3L_{h}^2\kappa^2 + \frac{\textred{12}L_{\reparampolicy}^4C_{w_{\xi}}^2}{M}} 
    \textred{+ \frac{24\kappa^2}{\lambda\alpha_w}} 
    }\cdot\frac{8T}{\lambda\alpha_w} \nonumber\\
    \quad& + \frac{\textred{192}L_w^2\alpha_{\theta}^2}{\lambda^2\alpha_w^2}\sum_{t=0}^{T-1}\mE\norm{\nabla J(\theta_t)}^2. \label{eq:thm1proofCumulativeError}
\end{align}

\noindent\textbf{Step III: Overall convergence by canceling tracking error via actor's positive progress.}

In this step, we establish the overall convergence to a stationary policy by novel cancellation of the above cumulative tracking error via actor's update progress. 

Based on \Cref{lem:dpglipschitz}, we have
\begin{align}
    &\mE[J(\theta_{t+1})] - \mE[J(\theta_t)] \nonumber\\
    &\quad\geq \mE\langle \nabla J(\theta_t), \theta_{t+1}-\theta_t \rangle - \frac{L_J}{2}\mE\norm{\theta_{t+1}-\theta_t}^2 \nonumber\\
    &\quad= \alpha_{\theta}\mE\langle \nabla J(\theta_t), h_{\theta_t}(w_t,\mcb_t) \rangle - \frac{L_J\alpha_{\theta}^2}{2}\mE\norm{h_{\theta_t}(w_t,\mcb_t)}^2 \nonumber\\
    &\quad= \alpha_{\theta}\mE\norm{\nabla J(\theta_t)}^2 + \alpha_{\theta}\mE\langle \nabla J(\theta_t), h_{\theta_t}(w_t,\mcb_t)-\nabla J(\theta_t) \rangle - \frac{L_J\alpha_{\theta}^2}{2}\mE\norm{h_{\theta_t}(w_t,\mcb_t)}^2 \nonumber\\
    &\quad\overset{\text{(i)}}{\geq} \frac{\alpha_{\theta}}{2}\mE\norm{\nabla J(\theta_t)}^2 - \frac{\alpha_{\theta}}{2}\mE\norm{h_{\theta_t}(w_t,\mcb_t)-\nabla J(\theta_t)}^2 \nonumber\\
    &\quad\quad - \frac{L_J\alpha_{\theta}^2}{2}\mE\norm{h_{\theta_t}(w_t,\mcb_t)-\nabla J(\theta_t)+\nabla J(\theta_t)}^2 \nonumber\\
    &\quad\geq \parentheses{\frac{\alpha_{\theta}}{2}-L_J\alpha_{\theta}^2}\mE\norm{\nabla J(\theta_t)}^2 - \parentheses{\frac{\alpha_{\theta}}{2}+L_J\alpha_{\theta}^2}\mE\norm{h_{\theta_t}(w_t,\mcb_t)-\nabla J(\theta_t)}^2\nonumber\\
    &\quad\overset{\text{(ii)}}{\geq} \parentheses{\frac{\alpha_{\theta}}{2}-L_J\alpha_{\theta}^2}\mE\norm{\nabla J(\theta_t)}^2 - \parentheses{\frac{\alpha_{\theta}}{2}+L_J\alpha_{\theta}^2}\parentheses{3L_{h}^2\mE\norm{w_{t}-w^*_{\theta_{t}}}^2 + 3L_{h}^2\kappa^2 + \frac{\textred{12}L_{\reparampolicy}^4C_{w_{\xi}}^2}{M}}\nonumber\\
    &\quad\overset{\text{(iii)}}{\geq} \frac{\alpha_{\theta}}{4}\mE\norm{\nabla J(\theta_t)}^2 - \frac{3\alpha_{\theta}}{4}\parentheses{3L_{h}^2\mE\norm{w_{t}-w^*_{\theta_{t}}}^2 + 3L_{h}^2\kappa^2 + \frac{\textred{12}L_{\reparampolicy}^4C_{w_{\xi}}^2}{M}},\label{eq:thm1Proof4}
\end{align}
where (i) follows because $x^Ty\geq -\frac{1}{2}x^2-\frac{1}{2}y^2$, (ii) follows from \Cref{lem:hVariance}, and (iii) follows from the condition \textred{$\alpha_{\theta} \leq \frac{1}{4L_J}$}.\footnote{Here, we highlight this condition on $\alpha_\theta$, which is missing from Theorem 1 of \citet{xiong2022deterministic}.}

We next take the summation over all iterations on both sides of the above bound and obtain
\begin{align}
    &\frac{\alpha_{\theta}}{4}\sum_{t=0}^{T-1}\mE\norm{\nabla J(\theta_t)}^2 \nonumber\\
    &\quad\leq \mE[J(\theta_{T+1})] - \mE[J(\theta_0)] + \frac{3\alpha_{\theta}}{4}\parentheses{3L_{h}^2\kappa^2 + \frac{\textred{12}L_{\reparampolicy}^4C_{w_{\xi}}^2}{M}}\cdot T + \frac{9\alpha_{\theta}L_{h}^2}{4}\sum_{t=0}^{T-1}\mE\norm{w_{t}-w^*_{\theta_{t}}}^2 \nonumber\\
    &\quad\leq \frac{R_{\max}}{1-\gamma} + \frac{3\alpha_{\theta}}{4}\parentheses{3L_{h}^2\kappa^2 + \frac{\textred{12}L_{\reparampolicy}^4C_{w_{\xi}}^2}{M}}\cdot T + \frac{9\alpha_{\theta}L_{h}^2}{4}\sum_{t=0}^{T-1}\mE\norm{w_{t}-w^*_{\theta_{t}}}^2. \label{eq:thm1Proof5}
\end{align}

Substituting the cumulative tracking error bound derived in (\ref{eq:thm1proofCumulativeError}) into \eqref{eq:thm1Proof5} yields
\begin{align}
    &\frac{\alpha_{\theta}}{8}\sum_{t=0}^{T-1}\mE\norm{\nabla J(\theta_t)}^2 \nonumber\\
    &\quad\overset{\text{(i)}}{\leq} \parentheses{\frac{\alpha_{\theta}}{4}-\frac{\textred{432}L_h^2L_w^2\alpha_{\theta}^3}{\lambda^2\alpha_w^2}}\sum_{t=0}^{T-1}\mE\norm{\nabla J(\theta_t)}^2 \nonumber\\
    &\quad\leq \frac{R_{\max}}{1-\gamma} + \frac{3\alpha_{\theta}}{4}\parentheses{3L_{h}^2\kappa^2 + \frac{\textred{12}L_{\reparampolicy}^4C_{w_{\xi}}^2}{M}}\cdot T + \frac{18\alpha_{\theta}L_{h}^2}{\lambda\alpha_w}\norm{w_{0}-w^*_{\theta_{0}}}^2\nonumber\\
    &\quad\quad + \brackets{\frac{48\alpha_w^2(C_A^2C_w^2 + C_b^2)}{M} + \frac{8L_w^2\alpha_{\theta}^2}{\lambda\alpha_w}\parentheses{3L_{h}^2\kappa^2 + \frac{\textred{12}L_{\reparampolicy}^4C_{w_{\xi}}^2}{M}}
    \textred{+ \frac{24\kappa^2}{\lambda\alpha_w}}
    }\cdot\frac{18\alpha_{\theta}L_{h}^2T}{\lambda\alpha_w},\nonumber
\end{align}
where (i) follows from the condition $\alpha_{\theta} \leq \frac{\lambda\alpha_w}{24\textred{\sqrt{6}}L_h L_w}.$

Finally, we have
\begin{align}
    \underset{t\in [T]}{\min}\mE\norm{\nabla J(\theta_{t})}^2 &\leq \frac{1}{T}\sum_{t=0}^{T-1}\mE\norm{\nabla J(\theta_t)}^2 \nonumber\\
    &\leq \parentheses{\frac{8R_{\max}}{\alpha_{\theta}(1-\gamma)} + \frac{144L_{h}^2}{\lambda\alpha_w}\norm{w_{0}-w^*_{\theta_{0}}}^2 }\cdot\frac{1}{T} + 6\parentheses{3L_{h}^2\kappa^2 + \frac{\textred{12}L_{\reparampolicy}^4C_{w_{\xi}}^2}{M}} \nonumber\\
    &\quad + \brackets{\frac{48\alpha_w^2(C_A^2C_w^2 + C_b^2)}{M} + \frac{8L_w^2\alpha_{\theta}^2}{\lambda\alpha_w}\parentheses{3L_{h}^2\kappa^2 + \frac{\textred{12}L_{\reparampolicy}^4C_{w_{\xi}}^2}{M}}
    \textred{+ \frac{24\kappa^2}{\lambda\alpha_w}}
    }\cdot\frac{144L_{h}^2}{\lambda\alpha_w} \nonumber\\
    &= \frac{c_1}{T} + \frac{c_2}{M} + c_3\kappa^2,\nonumber
\end{align} 
where\footnote{In \citet{xiong2022deterministic}, $c_3=18L_h^2 + \frac{24L_w^2L_h^2\alpha_{\theta}^2}{\lambda\alpha_w}$. Here, we fix this constant by adding the missing factor $\frac{144L_h^2}{\lambda \alpha_w}$ and the extra term $\frac{24}{\lambda \alpha_w}$ required in \eqref{eq:thm1proofDynTrackingError}.} 
\begin{align}
    c_1 &= \frac{8R_{\max}}{\alpha_{\theta}(1-\gamma)} + \frac{144L_{h}^2}{\lambda\alpha_w}\norm{w_{0}-w^*_{\theta_{0}}}^2,\\
    c_2 &= \textred{72}L_{\reparampolicy}^4C_{w_{\xi}}^2+\brackets{48\alpha_w^2(C_A^2C_w^2 + C_b^2) + \frac{\textred{96}L_w^2L_{\reparampolicy}^4C_{w_{\xi}}^2\alpha_{\theta}^2}{\lambda\alpha_w}}\cdot\frac{144L_{h}^2}{\lambda\alpha_w},\\
    c_3 &= 18L_h^2 + \brackets{\frac{24L_w^2L_h^2\alpha_{\theta}^2}{\lambda\alpha_w} + \textred{\frac{24}{\lambda\alpha_w}}} \textred{\cdot\frac{144L_{h}^2}{\lambda\alpha_w}}.
\end{align}



\newpage
\section*{NeurIPS Paper Checklist}

\begin{enumerate}

\item {\bf Claims}
    \item[] Question: Do the main claims made in the abstract and introduction accurately reflect the paper's contributions and scope?
    \item[] Answer: \answerYes{} 
    \item[] Justification: All our claims are verified by extensive experimental results and ablation studies in simulation and also tested on robots.
    \item[] Guidelines:
    \begin{itemize}
        \item The answer NA means that the abstract and introduction do not include the claims made in the paper.
        \item The abstract and/or introduction should clearly state the claims made, including the contributions made in the paper and important assumptions and limitations. A No or NA answer to this question will not be perceived well by the reviewers. 
        \item The claims made should match theoretical and experimental results, and reflect how much the results can be expected to generalize to other settings. 
        \item It is fine to include aspirational goals as motivation as long as it is clear that these goals are not attained by the paper. 
    \end{itemize}

\item {\bf Limitations}
    \item[] Question: Does the paper discuss the limitations of the work performed by the authors?
    \item[] Answer: \answerYes{} 
    \item[] Justification: We discuss the limitations of our work in the main paper.
    \item[] Guidelines:
    \begin{itemize}
        \item The answer NA means that the paper has no limitation while the answer No means that the paper has limitations, but those are not discussed in the paper. 
        \item The authors are encouraged to create a separate "Limitations" section in their paper.
        \item The paper should point out any strong assumptions and how robust the results are to violations of these assumptions (e.g., independence assumptions, noiseless settings, model well-specification, asymptotic approximations only holding locally). The authors should reflect on how these assumptions might be violated in practice and what the implications would be.
        \item The authors should reflect on the scope of the claims made, e.g., if the approach was only tested on a few datasets or with a few runs. In general, empirical results often depend on implicit assumptions, which should be articulated.
        \item The authors should reflect on the factors that influence the performance of the approach. For example, a facial recognition algorithm may perform poorly when image resolution is low or images are taken in low lighting. Or a speech-to-text system might not be used reliably to provide closed captions for online lectures because it fails to handle technical jargon.
        \item The authors should discuss the computational efficiency of the proposed algorithms and how they scale with dataset size.
        \item If applicable, the authors should discuss possible limitations of their approach to address problems of privacy and fairness.
        \item While the authors might fear that complete honesty about limitations might be used by reviewers as grounds for rejection, a worse outcome might be that reviewers discover limitations that aren't acknowledged in the paper. The authors should use their best judgment and recognize that individual actions in favor of transparency play an important role in developing norms that preserve the integrity of the community. Reviewers will be specifically instructed to not penalize honesty concerning limitations.
    \end{itemize}

\item {\bf Theory Assumptions and Proofs}
    \item[] Question: For each theoretical result, does the paper provide the full set of assumptions and a complete (and correct) proof?
    \item[] Answer: \answerYes{} 
    \item[] Justification: We discuss the relevant theory and convergence proofs of the reparametrization gradient estimator in  Appendix A and Appendix I respectively.
    \item[] Guidelines:
    \begin{itemize}
        \item The answer NA means that the paper does not include theoretical results. 
        \item All the theorems, formulas, and proofs in the paper should be numbered and cross-referenced.
        \item All assumptions should be clearly stated or referenced in the statement of any theorems.
        \item The proofs can either appear in the main paper or the supplemental material, but if they appear in the supplemental material, the authors are encouraged to provide a short proof sketch to provide intuition. 
        \item Inversely, any informal proof provided in the core of the paper should be complemented by formal proofs provided in appendix or supplemental material.
        \item Theorems and Lemmas that the proof relies upon should be properly referenced. 
    \end{itemize}

    \item {\bf Experimental Result Reproducibility}
    \item[] Question: Does the paper fully disclose all the information needed to reproduce the main experimental results of the paper to the extent that it affects the main claims and/or conclusions of the paper (regardless of whether the code and data are provided or not)?
    \item[] Answer: \answerYes{} 
    \item[] Justification: We propose a novel incremental policy gradient algorithm. We provide pseudo-code and implementation details which are easy to follow and reproduce. Our code is also available publicly on GitHub and Google Colab. 
    \item[] Guidelines:
    \begin{itemize}
        \item The answer NA means that the paper does not include experiments.
        \item If the paper includes experiments, a No answer to this question will not be perceived well by the reviewers: Making the paper reproducible is important, regardless of whether the code and data are provided or not.
        \item If the contribution is a dataset and/or model, the authors should describe the steps taken to make their results reproducible or verifiable. 
        \item Depending on the contribution, reproducibility can be accomplished in various ways. For example, if the contribution is a novel architecture, describing the architecture fully might suffice, or if the contribution is a specific model and empirical evaluation, it may be necessary to either make it possible for others to replicate the model with the same dataset, or provide access to the model. In general. releasing code and data is often one good way to accomplish this, but reproducibility can also be provided via detailed instructions for how to replicate the results, access to a hosted model (e.g., in the case of a large language model), releasing of a model checkpoint, or other means that are appropriate to the research performed.
        \item While NeurIPS does not require releasing code, the conference does require all submissions to provide some reasonable avenue for reproducibility, which may depend on the nature of the contribution. For example
        \begin{enumerate}
            \item If the contribution is primarily a new algorithm, the paper should make it clear how to reproduce that algorithm.
            \item If the contribution is primarily a new model architecture, the paper should describe the architecture clearly and fully.
            \item If the contribution is a new model (e.g., a large language model), then there should either be a way to access this model for reproducing the results or a way to reproduce the model (e.g., with an open-source dataset or instructions for how to construct the dataset).
            \item We recognize that reproducibility may be tricky in some cases, in which case authors are welcome to describe the particular way they provide for reproducibility. In the case of closed-source models, it may be that access to the model is limited in some way (e.g., to registered users), but it should be possible for other researchers to have some path to reproducing or verifying the results.
        \end{enumerate}
    \end{itemize}

\item {\bf Open access to data and code}
    \item[] Question: Does the paper provide open access to the data and code, with sufficient instructions to faithfully reproduce the main experimental results, as described in supplemental material?
    \item[] Answer: \answerYes{} 
    \item[] Justification: We share the relevant code. All data can be generated during training.
    \item[] Guidelines:
    \begin{itemize}
        \item The answer NA means that paper does not include experiments requiring code.
        \item Please see the NeurIPS code and data submission guidelines (\url{https://nips.cc/public/guides/CodeSubmissionPolicy}) for more details.
        \item While we encourage the release of code and data, we understand that this might not be possible, so “No” is an acceptable answer. Papers cannot be rejected simply for not including code, unless this is central to the contribution (e.g., for a new open-source benchmark).
        \item The instructions should contain the exact command and environment needed to run to reproduce the results. See the NeurIPS code and data submission guidelines (\url{https://nips.cc/public/guides/CodeSubmissionPolicy}) for more details.
        \item The authors should provide instructions on data access and preparation, including how to access the raw data, preprocessed data, intermediate data, and generated data, etc.
        \item The authors should provide scripts to reproduce all experimental results for the new proposed method and baselines. If only a subset of experiments are reproducible, they should state which ones are omitted from the script and why.
        \item At submission time, to preserve anonymity, the authors should release anonymized versions (if applicable).
        \item Providing as much information as possible in supplemental material (appended to the paper) is recommended, but including URLs to data and code is permitted.
    \end{itemize}

\item {\bf Experimental Setting/Details}
    \item[] Question: Does the paper specify all the training and test details (e.g., data splits, hyperparameters, how they were chosen, type of optimizer, etc.) necessary to understand the results?
    \item[] Answer: \answerYes{} 
    \item[] Justification: We provide descriptions of our experimental setup in the main paper. We  also list important hyper-parameters, neural network architectures, and other training details in the appendix.
    \item[] Guidelines:
    \begin{itemize}
        \item The answer NA means that the paper does not include experiments.
        \item The experimental setting should be presented in the core of the paper to a level of detail that is necessary to appreciate the results and make sense of them.
        \item The full details can be provided either with the code, in appendix, or as supplemental material.
    \end{itemize}

\item {\bf Experiment Statistical Significance}
    \item[] Question: Does the paper report error bars suitably and correctly defined or other appropriate information about the statistical significance of the experiments?
    \item[] Answer: \answerYes{} 
    \item[] Justification: We assume normally distributed errors. All results are averaged over 30 runs and reported with 95\% confidence interval.
    \item[] Guidelines:
    \begin{itemize}
        \item The answer NA means that the paper does not include experiments.
        \item The authors should answer "Yes" if the results are accompanied by error bars, confidence intervals, or statistical significance tests, at least for the experiments that support the main claims of the paper.
        \item The factors of variability that the error bars are capturing should be clearly stated (for example, train/test split, initialization, random drawing of some parameter, or overall run with given experimental conditions).
        \item The method for calculating the error bars should be explained (closed form formula, call to a library function, bootstrap, etc.)
        \item The assumptions made should be given (e.g., Normally distributed errors).
        \item It should be clear whether the error bar is the standard deviation or the standard error of the mean.
        \item It is OK to report 1-sigma error bars, but one should state it. The authors should preferably report a 2-sigma error bar than state that they have a 96\% CI, if the hypothesis of Normality of errors is not verified.
        \item For asymmetric distributions, the authors should be careful not to show in tables or figures symmetric error bars that would yield results that are out of range (e.g. negative error rates).
        \item If error bars are reported in tables or plots, The authors should explain in the text how they were calculated and reference the corresponding figures or tables in the text.
    \end{itemize}

\item {\bf Experiments Compute Resources}
    \item[] Question: For each experiment, does the paper provide sufficient information on the computer resources (type of compute workers, memory, time of execution) needed to reproduce the experiments?
    \item[] Answer: \answerYes{} 
    \item[] Justification: It is listed in the appendix.
    \item[] Guidelines:
    \begin{itemize}
        \item The answer NA means that the paper does not include experiments.
        \item The paper should indicate the type of compute workers CPU or GPU, internal cluster, or cloud provider, including relevant memory and storage.
        \item The paper should provide the amount of compute required for each of the individual experimental runs as well as estimate the total compute. 
        \item The paper should disclose whether the full research project required more compute than the experiments reported in the paper (e.g., preliminary or failed experiments that didn't make it into the paper). 
    \end{itemize}
    
\item {\bf Code Of Ethics}
    \item[] Question: Does the research conducted in the paper conform, in every respect, with the NeurIPS Code of Ethics \url{https://neurips.cc/public/EthicsGuidelines}?
    \item[] Answer: \answerYes{} 
    \item[] Justification: Our paper conforms to the NeurIPS Code of Ethics
    \item[] Guidelines:
    \begin{itemize}
        \item The answer NA means that the authors have not reviewed the NeurIPS Code of Ethics.
        \item If the authors answer No, they should explain the special circumstances that require a deviation from the Code of Ethics.
        \item The authors should make sure to preserve anonymity (e.g., if there is a special consideration due to laws or regulations in their jurisdiction).
    \end{itemize}

\item {\bf Broader Impacts}
    \item[] Question: Does the paper discuss both potential positive societal impacts and negative societal impacts of the work performed?
    \item[] Answer: \answerYes{} 
    \item[] Justification: We discuss this in the main paper under the paragraph title \emph{Societal Impact}.
    \item[] Guidelines:
    \begin{itemize}
        \item The answer NA means that there is no societal impact of the work performed.
        \item If the authors answer NA or No, they should explain why their work has no societal impact or why the paper does not address societal impact.
        \item Examples of negative societal impacts include potential malicious or unintended uses (e.g., disinformation, generating fake profiles, surveillance), fairness considerations (e.g., deployment of technologies that could make decisions that unfairly impact specific groups), privacy considerations, and security considerations.
        \item The conference expects that many papers will be foundational research and not tied to particular applications, let alone deployments. However, if there is a direct path to any negative applications, the authors should point it out. For example, it is legitimate to point out that an improvement in the quality of generative models could be used to generate deepfakes for disinformation. On the other hand, it is not needed to point out that a generic algorithm for optimizing neural networks could enable people to train models that generate Deepfakes faster.
        \item The authors should consider possible harms that could arise when the technology is being used as intended and functioning correctly, harms that could arise when the technology is being used as intended but gives incorrect results, and harms following from (intentional or unintentional) misuse of the technology.
        \item If there are negative societal impacts, the authors could also discuss possible mitigation strategies (e.g., gated release of models, providing defenses in addition to attacks, mechanisms for monitoring misuse, mechanisms to monitor how a system learns from feedback over time, improving the efficiency and accessibility of ML).
    \end{itemize}
    
\item {\bf Safeguards}
    \item[] Question: Does the paper describe safeguards that have been put in place for responsible release of data or models that have a high risk for misuse (e.g., pretrained language models, image generators, or scraped datasets)?
    \item[] Answer: \answerNA{} 
    \item[] Justification: This paper does not pose such risks
    \item[] Guidelines:
    \begin{itemize}
        \item The answer NA means that the paper poses no such risks.
        \item Released models that have a high risk for misuse or dual-use should be released with necessary safeguards to allow for controlled use of the model, for example by requiring that users adhere to usage guidelines or restrictions to access the model or implementing safety filters. 
        \item Datasets that have been scraped from the Internet could pose safety risks. The authors should describe how they avoided releasing unsafe images.
        \item We recognize that providing effective safeguards is challenging, and many papers do not require this, but we encourage authors to take this into account and make a best faith effort.
    \end{itemize}

\item {\bf Licenses for existing assets}
    \item[] Question: Are the creators or original owners of assets (e.g., code, data, models), used in the paper, properly credited and are the license and terms of use explicitly mentioned and properly respected?
    \item[] Answer: \answerYes{} 
    \item[] Justification: We implemented most of our algorithms from scratch and use popular benchmarks and cite them as and when necessary. All our results are generated during training.  
    \item[] Guidelines:
    \begin{itemize}
        \item The answer NA means that the paper does not use existing assets.
        \item The authors should cite the original paper that produced the code package or dataset.
        \item The authors should state which version of the asset is used and, if possible, include a URL.
        \item The name of the license (e.g., CC-BY 4.0) should be included for each asset.
        \item For scraped data from a particular source (e.g., website), the copyright and terms of service of that source should be provided.
        \item If assets are released, the license, copyright information, and terms of use in the package should be provided. For popular datasets, \url{paperswithcode.com/datasets} has curated licenses for some datasets. Their licensing guide can help determine the license of a dataset.
        \item For existing datasets that are re-packaged, both the original license and the license of the derived asset (if it has changed) should be provided.
        \item If this information is not available online, the authors are encouraged to reach out to the asset's creators.
    \end{itemize}

\item {\bf New Assets}
    \item[] Question: Are new assets introduced in the paper well documented and is the documentation provided alongside the assets?
    \item[] Answer: \answerYes{} 
    \item[] Justification: We provide our code and a Readme file to run experiments
    \item[] Guidelines:
    \begin{itemize}
        \item The answer NA means that the paper does not release new assets.
        \item Researchers should communicate the details of the dataset/code/model as part of their submissions via structured templates. This includes details about training, license, limitations, etc. 
        \item The paper should discuss whether and how consent was obtained from people whose asset is used.
        \item At submission time, remember to anonymize your assets (if applicable). You can either create an anonymized URL or include an anonymized zip file.
    \end{itemize}

\item {\bf Crowdsourcing and Research with Human Subjects}
    \item[] Question: For crowdsourcing experiments and research with human subjects, does the paper include the full text of instructions given to participants and screenshots, if applicable, as well as details about compensation (if any)? 
    \item[] Answer: \answerNA{} 
    \item[] Justification: his paper does not involve crowdsourcing nor research with human subjects.
    \item[] Guidelines:
    \begin{itemize}
        \item The answer NA means that the paper does not involve crowdsourcing nor research with human subjects.
        \item Including this information in the supplemental material is fine, but if the main contribution of the paper involves human subjects, then as much detail as possible should be included in the main paper. 
        \item According to the NeurIPS Code of Ethics, workers involved in data collection, curation, or other labor should be paid at least the minimum wage in the country of the data collector. 
    \end{itemize}

\item {\bf Institutional Review Board (IRB) Approvals or Equivalent for Research with Human Subjects}
    \item[] Question: Does the paper describe potential risks incurred by study participants, whether such risks were disclosed to the subjects, and whether Institutional Review Board (IRB) approvals (or an equivalent approval/review based on the requirements of your country or institution) were obtained?
    \item[] Answer: \answerNA{} 
    \item[] Justification: This paper does not involve crowdsourcing nor research with human subjects.
    \item[] Guidelines:
    \begin{itemize}
        \item The answer NA means that the paper does not involve crowdsourcing nor research with human subjects.
        \item Depending on the country in which research is conducted, IRB approval (or equivalent) may be required for any human subjects research. If you obtained IRB approval, you should clearly state this in the paper. 
        \item We recognize that the procedures for this may vary significantly between institutions and locations, and we expect authors to adhere to the NeurIPS Code of Ethics and the guidelines for their institution. 
        \item For initial submissions, do not include any information that would break anonymity (if applicable), such as the institution conducting the review.
    \end{itemize}

\end{enumerate}

\end{document}